\documentclass[twoside,11pt]{article}

\usepackage[preprint]{jmlr2e}

\usepackage{lastpage}
\jmlrheading{26}{2025}{1-\pageref{LastPage}}{1/24; Revised
	5/25}{6/25}{24-0059}{Ilyas Fatkhullin, Igor Sokolov, Eduard Gorbunov, Zhize Li, Peter Richtárik}


\ShortHeadings{EF21 with Bells \& Whistles:  \\Six Algorithmic Extensions of Modern Error Feedback}{Fatkhullin, Sokolov, Gorbunov, Li and Richtárik}
\firstpageno{1}

\usepackage{microtype}
\usepackage{graphicx}
\usepackage{subcaption}
\usepackage{booktabs} 

\usepackage{hyperref}




\usepackage{amsmath}
\usepackage{amssymb}
\usepackage{mathtools}

\usepackage[capitalize,noabbrev]{cleveref}





\usepackage[utf8]{inputenc} 
\usepackage[T1]{fontenc}    
\usepackage{hyperref}       
\usepackage{url}            
\usepackage{booktabs}       
\usepackage{amsfonts}       
\usepackage{nicefrac}       
\usepackage{microtype}      
\usepackage{xcolor}         

\usepackage{xspace}
\definecolor{mygreen}{HTML}{006B3C}
\newcommand{\algname}[1]{{{\sf \footnotesize \color{mygreen} #1}}\xspace}

\usepackage[flushleft]{threeparttable} 
\usepackage{caption}
\usepackage{multirow}
\usepackage{colortbl}
\definecolor{bgcolor}{rgb}{0.8,1,1}
\definecolor{bgcolor2}{rgb}{0.8,1,0.8}

\definecolor{myblue}{rgb}{0,0,0}
\definecolor{mylightblue}{rgb}{0,0.5,10.5}

\input{preamble.tex}

%


\begin{document}

\title{EF21 with Bells \& Whistles:  \\Six Algorithmic Extensions of Modern Error Feedback \thanks{The work was done when E.~Gorbunov was a PhD student at MIPT, and I.~Fatkhullin was a Master student at TU Munich and a summer intern at KAUST. \\ This is the extended version of a paper accepted for publication in JMLR. The extended version includes additional proofs and experimental results. } }

\author{\name Ilyas Fatkhullin \email  ilyas.fatkhullin@ai.ethz.ch \\
	\addr 
 Technical University of Munich, Germany\\
    King Abdullah University of Science and Technology, Saudi Arabia	\\
	ETH Zurich \& ETH AI Center, Switzerland
	\AND
	\name Igor Sokolov \email igor.sokolov.1@kaust.edu.sa \\
	\addr 
 King Abdullah University of Science and Technology, Saudi Arabia
	\AND
	\name Eduard Gorbunov \email eduard.gorbunov@mbzuai.ac.ae \\
	\addr Mohamed bin Zayed University of Artificial Intelligence, United Arab Emirates\\ Moscow Institute of Physics and Technology, Russia
		\AND
	\name Zhize Li \email zhizeli@smu.edu.sg \\
    \addr King Abdullah University of Science and Technology, Saudi Arabia\\
    Singapore Management University, Singapore
	\AND
	\name Peter Richtárik \email peter.richtarik@kaust.edu.sa \\
	\addr 
 King Abdullah University of Science and Technology, Saudi Arabia
	}


\editor{Zaid Harchaoui}

\maketitle

\begin{abstract}
	First proposed by \citet{Seide2014} as a heuristic, error feedback (\algname{EF}) is a very popular mechanism for enforcing convergence of distributed gradient-based optimization methods enhanced with communication compression strategies based on the application of contractive compression operators. However, existing theory of \algname{EF} relies on very strong assumptions (e.g., bounded gradients), and provides pessimistic convergence rates (e.g., while the best known rate for \algname{EF} in the smooth nonconvex regime, and when full gradients are compressed, is $O(1/T^{2/3})$, the rate of gradient descent in the same regime is $O(1/T)$). Recently, \citet{EF21} proposed a new error feedback mechanism, \algname{EF21}, based on the construction of a Markov compressor induced by a contractive compressor. \algname{EF21} removes the aforementioned theoretical deficiencies of \algname{EF} and at the same time works better in practice. In this work we propose six practical extensions of \algname{EF21}, all supported by strong convergence theory: partial participation, stochastic approximation, variance reduction, proximal setting, momentum and bidirectional compression. To the best of our knowledge, several of these techniques have not been previously analyzed in combination with \algname{EF}, and in cases where prior analysis exists—such as for bidirectional compression—our theoretical convergence guarantees significantly improve upon existing results.
\end{abstract}

\begin{keywords}
	distributed computing, compressed communication, error feedback.
\end{keywords}


\section{Introduction}

In this paper, we consider the nonconvex distributed optimization problem of the form
\begin{eqnarray}\label{eq:finit_sum}
	\textstyle	\min \limits_{x \in \mathbb{R}^{d}}\left\{f(x)\eqdef \frac{1}{n} \sum \limits_{i=1}^{n} f_{i}(x)\right\},
\end{eqnarray}
where $n$ denotes the number of clients/nodes connected with a server/master and client $i$ has an access to the local loss function $f_{i}$ only. 
The local loss of each client is allowed to have the online/expectation form
\begin{equation}\label{eq:online_case}
	f_{i}(x)=\Expu{\xi_{i} \sim \cD_i}{f_{\xi_{i}}(x)},
\end{equation}
or the finite-sum form 
\begin{equation}\label{eq:finite_sum_case}
	\textstyle	f_{i}(x)=\frac{1}{m} \sum \limits_{j=1}^{m} f_{i j}(x).
\end{equation}
A notable application for problems with such structure is federated learning \citep{FEDLEARN, FL-big}, where training is performed directly on the clients' devices. In a quest for state-of-the-art performance, machine learning practitioners develop elaborate model architectures and  train their models on large data sets. Naturally, in order to make the training at this scale tractable, one needs to rely on distributed computing \citep{goyal2017accurate, You2020Large}. Moreover, massively over-parameterized models have recently shown a remarkable empirical success \citep{ACH-overparameterized-2018}. However, the application of these models puts an additional complication on the communication links during training. In order to address this issue, recent research activity and practice focuses on developing distributed optimization methods and systems capitalizing on (deterministic or randomized) \emph{lossy communication compression} techniques to reduce the amount of  communication traffic.

A compression mechanism is typically formalized as an operator $\cC: \R^d \mapsto \R^d$ mapping hard-to-communicate (e.g., dense) input messages into easy-to-communicate (e.g., sparse) output messages. The operator is allowed to be randomized, and typically operates on models~\citep{GDCI} or on gradients~\citep{alistarh2017qsgd,beznosikov2023biased}, both of which can be described as vectors in $\R^d$. Besides sparsification~\citep{Alistarh-EF-NIPS2018}, typical examples of useful compression mechanisms include quantization~\citep{alistarh2017qsgd, horvoth2022natural} and low-rank approximation~\citep{PowerSGD,safaryan2022fednl}.  

There are two large classes of compression operators often studied in the literature: i) \emph{unbiased} compression operators $\cC$, meaning that there exists $\omega\geq 0$ such that for all $x\in \R^d$
\begin{eqnarray}\label{eq:ub_compressor}
	\Exp{\cC(x)}=x, \quad  \Exp{\|\cC(x) - x\|^{2}}&\leq& \omega \|x\|^{2}; 
\end{eqnarray}
and ii) \emph{biased} compression operators $\cC$, meaning that there exists $0<\alpha\leq 1$ such that for all $x\in \R^d$
\begin{eqnarray}\label{eq:b_compressor_0}
	\Exp{\|\cC(x) - x\|^{2}} \leq \rb{1 - \alpha} \|x\|^{2}.
\end{eqnarray}
Note that the latter ``biased'' class contains the former one, i.e., if $\cC$ satisfies \eqref{eq:ub_compressor} with $\omega$, then a scaled version $(1+\omega)^{-1}\cC$ satisfies \eqref{eq:b_compressor_0} with $\alpha = \nfr{1}{(1+\omega)}$. {\color{myblue} Beyond this inclusion, if used appropriately, biased compressors (such as Top-$k$ sparsifier) often perform better than unbiased ones (such as Rand-$k$) \citep{beznosikov2023biased}.}
While distributed optimization methods with unbiased compressors \eqref{eq:ub_compressor} are  well understood \citep{alistarh2017qsgd, DCGD, mishchenko2024distributed, DIANA2, ADIANA, CANITA, Nonconvex-sigma_k, islamov2021distributed, MARINA},
\emph{biased} compressors \eqref{eq:b_compressor_0} are significantly harder to analyze.  One of the main reasons behind this is rooted in the observation that when deployed  within distributed gradient descent in a naive way, biased compressors may lead to (even exponential) divergence \citep{Karimireddy_SignSGD, beznosikov2023biased}.
\emph{Error Feedback} (\algname{EF}) (or \emph{Error Compensation} (\algname{EC}))---a technique originally proposed by \citet{Seide2014}---emerged as an empirical fix of  this problem. However, this technique remained poorly understood  until very recently.

Although several theoretical results were obtained supporting the \algname{EF} framework in recent years  \citep{Stich-EF-NIPS2018,Alistarh-EF-NIPS2018,beznosikov2023biased,Lin_EC_SGD,EC-Katyusha,DoubleSqueeze,Koloskova2019DecentralizedDL,stich2020error}, they use strong assumptions (e.g., convexity, bounded gradients, bounded dissimilarity), and do not get $\cO(\nicefrac{1}{\alpha T})$ convergence rates in the smooth nonconvex regime. Very recently, \citet{EF21} proposed a new \algname{EF}  mechanism called \algname{EF21}, which uses standard smoothness assumptions only, and also enjoys the desirable $O(\nicefrac{1}{\alpha T})$ convergence rate for the nonconvex case (in terms of number of communication rounds $T$ this matches the best-known rate $\cO(\nicefrac{(1+\nicefrac{\omega}{\sqrt{n}})}{T})$ obtained by \citet{MARINA} using unbiased compressors), improving the previous $O(\nicefrac{1}{(\alpha T)^{2/3}})$ rate of the standard \algname{EF}  mechanism \citep{Koloskova2019DecentralizedDL}. 

\section{Contributions}
While \citet{EF21} propose a new error feedback method,  the authors only study their \algname{EF21}  mechanism in a pure form, without any additional ``bells and whistles'' which are important in practice. Therefore, it remains elusive whether \algname{EF21} method is a standalone technique or it can be enhanced with other related techniques to benefit its potential use in practice. In this paper, we aim to push the \algname{EF21} framework beyond its pure form by extending it in several directions of high theoretical and practical importance. In particular, we further enhance the  \algname{EF21}  mechanism  with the following six useful and practical algorithmic extensions: {\em stochastic approximation}, {\em variance reduction}, {\em partial participation}, {\em bidirectional compression}, {\em momentum}, and {\em proximal (regularization)}. We do not stop at merely proposing these algorithmic enhancements: we derive {\em strong convergence results for all of these extensions}. Several of these techniques were never analyzed in conjunction with the original \algname{EF} mechanism before. This fact reveals the challenges in the analysis of \algname{EF}-based methods and neccesitates the development of novel analysis techniques. Moreover, in the cases when the mentioned techniques were analyzed with \algname{EF}-based methods, we obtain new results that are superior in several aspects. See Table~\ref{tab:comparison} for an overview of our results. In summary, our results constitute the new algorithmic and theoretical state-of-the-art in the area of error feedback.

\begin{table*}[t]
	\centering
	
	\caption{ \footnotesize Summary of the state-of-the-art complexity results for finding an \textbf{$\varepsilon$-stationary point} using error-feedback type methods,  where $\varepsilon>0$ is an accuracy level. That is we aim to find a point $\hat x$ such that $\Exp{\|\nabla f(\hat x)\|^2} \le \varepsilon^2$, for generally non-convex functions and an \textbf{$\varepsilon$-solution}, i.e., such a point $\hat x$ that $\Exp{f(\hat x) - f(x^*)} \le \varepsilon$, for functions satisfying P{\L}-condition. By (computation) complexity we mean the average number of (stochastic) first-order oracle calls needed to find an $\varepsilon$-stationary point (``Compl.\ (NC)'') or $\varepsilon$-solution (``Compl.\ (P\L)''). Removing the terms colored {\color{blue}in blue} from the complexity bounds shown in the table, one can get communication complexity bounds, i.e., the total number of communication rounds needed to find an $\varepsilon$-stationary point (``Compl.\ (NC)'') or $\varepsilon$-solution (``Compl.\ (P\L)''). Dependences on the numerical constants, ``quality'' of the starting point, and smoothness constants are omitted in the complexity bounds. Moreover, dependencies on $\log(\nicefrac{1}{\varepsilon})$ are also omitted in the column ``Compl.\ (P\L)''. Abbreviations: ``BC'' = bidirectional compression, ``PP'' = partial participation; ``Mom.'' = momentum; $T$ = the number of communications rounds needed to find an $\varepsilon$-stationary point; $\overline{\#\text{grads}}$ = the number of (stochastic) first-order oracle calls needed to find an $\varepsilon$-stationary point. Notation: $\alpha$ = the compression parameter, $\alpha_w$ and $\alpha_M$ = the compression parameters of  worker and master nodes respectively for \algname{EF21-BC}, $\sigma^2 = \frac{1}{n}\sum_{i=1}^n\sigma_i^2$ (see Example~\ref{ex:UBV_case}), $\Delta^{\inf} = f^{\inf} - \frac{1}{n}\sum_{i=1}^n\frac{1}{m}\sum_{j=1}^{m} f_{ij}^{\inf}$ (see Example~\ref{ex:subsampling_case}), $p$ = probability of sampling the client in \algname{EF21-PP}, $\eta$ = momentum parameter. To the best of our knowledge, combinations of error feedback with partial participation (\algname{EF21-PP}) and proximal versions of error feedback (\algname{EF21-Prox}) were never analyzed in the literature.}
	\label{tab:comparison}    
	\begin{threeparttable}
		\begin{tabular}{|c|c|c c c c|}
			\hline
			Setup & Method & Citation & Compl.\ (NC) & Compl.\ (P{\L}) & Comment\\ 
			\hline\hline
			\makecell{Full\\ grads} &\algname{EF21} &\scriptsize \citep{EF21}  & $\frac{1}{\alpha\varepsilon^2}$ & $\frac{1}{\alpha\mu}$ & \\    
			\hline\hline
			\multirow{5.5}{0.7cm}{\centering Stoch.\ grads}&\algname{Choco-SGD} &\scriptsize\citep{Koloskova2019DecentralizedDL} & $\frac{1}{\varepsilon^2} + \frac{G}{\alpha\varepsilon^3} + \frac{\sigma^2}{n\varepsilon^4}$ &  N/A & $\|\nabla f_i(x)\| \leq G$\\
			&\algname{EF21-SGD} &\scriptsize \citep{EF21} & $\frac{1}{\alpha\varepsilon^2} + {\color{blue}\frac{\sigma^2}{\alpha^3\varepsilon^4}}$& $\frac{1}{\alpha\mu} + {\color{blue}\frac{\sigma^2}{\mu^2\alpha^3\varepsilon}}$ & UBV (Ex.~\ref{ex:UBV_case})\\
			&\algname{EF21-SGD} &{(this work)} & $\frac{1}{\alpha\varepsilon^2} + {\color{blue}\frac{1+\Delta^{\inf}}{\alpha^3\varepsilon^4}}$ & $\frac{1}{\alpha\mu} + {\color{blue}\frac{1 + \Delta^{\inf}}{\mu^2\alpha^3\varepsilon}}$ & IS (Ex.~\ref{ex:subsampling_case})\\
			&\algname{EF21-PAGE} &{(this work)} & $\frac{\sqrt{m} + \nicefrac{1}{\alpha}}{\varepsilon^2} + {\color{blue} m}$ & $\frac{\sqrt{m} + \nicefrac{1}{\alpha}}{\mu} + {\color{blue} m}$ & \makecell{Finite sum \\ form \eqref{eq:finite_sum_case}} 
			\\
			\hline\hline
			PP &\algname{EF21-PP} &{(this work)} & ${\color{red}\frac{1}{p\alpha\varepsilon^2}}\tnote{{\color{blue}(1)}}\;\; + {\color{blue}\frac{1}{\alpha\varepsilon^2}}$ & ${\color{red}\frac{1}{p\alpha\mu}}\tnote{{\color{blue}(1)}}\;\; + {\color{blue}\frac{1}{\alpha\mu}}$  & Full grads\\    
			\hline\hline
			\multirow{2}{0.7cm}{\centering BC}&\algname{DoubleSqueeze} &\scriptsize\citep{DoubleSqueeze} & $\frac{1}{\varepsilon^2} + \frac{\Delta}{\varepsilon^3} + \frac{\sigma^2}{n\varepsilon^4}$ &  N/A & \small $\mathbb E \|\cC(x) - x\| \leq \Delta$\\
			&\algname{EF21-BC} &{(this work)} & $\frac{1}{\alpha_w \alpha_M \varepsilon^2}$ & $\frac{1}{\alpha_w \alpha_M \mu}$  & Full grads\\			
			\hline\hline
			\multirow{2}{0.7cm}{\centering Mom.}&\algname{M-CSER} &\scriptsize\citep{CSER}\tnote{{\color{blue}(2)}} & $\frac{1}{\varepsilon^2} + \frac{G}{(1-\eta)\alpha\varepsilon^3}$ &  N/A & \small $\|\nabla f_i(x)\| \leq G$\\
			&\algname{EF21-HB} &{(this work)} & $\frac{1}{\varepsilon^2}\rb{\frac{1}{1-\eta}+\frac{1}{\alpha}}$ & N/A & Full grads\\
			\hline\hline
			\makecell{Prox} &\algname{EF21-Prox} &{(this work)} & $\frac{1}{\alpha\varepsilon^2}$ &  $\frac{1}{\alpha\mu}$\tnote{{\color{blue}(3)}} & Full grads \\    
			\hline
		\end{tabular}
		\begin{tablenotes}
			{\scriptsize
				\item [{\color{blue}(1)}]  Red term = number of communication rounds, blue term = expected number of gradient computations per client.				
				\item [{\color{blue}(2)}]   \citet{CSER} consider Nesterov's momentum. Moreover, they analyzed the version with stochastic gradients, bidirectional compression and local steps. However, the derived result is not better than state-of-the-art ones with either stochastic gradients or bidirectional compression. Therefore, to maintain the table compact, we do not include the results of \citet{CSER} in the other parts of the table.
				\item [{\color{blue}(3)}] This result is obtained under the generalized P{\L}-condition for composite optimization, see Appendix I.2. in \citep{EF21BW_2021}.
			}
		\end{tablenotes}
	\end{threeparttable}
\end{table*}

We now briefly comment on each extension proposed in this paper:

{\bf $\diamond$ Stochastic approximation.} 
Vanilla \algname{EF21} method requires all clients to compute the exact/full gradient in each round.\footnote{While \citet{EF21} do consider a stochastic extension of \algname{EF21} in their Appendix F, they do not formalize their result, and only consider the simplistic scenario of uniformly bounded variance, which does not in general hold for stochasticity coming from subsampling \citep{khaled2020better}.}  However, exact gradients are not available in  the stochastic/online setting \eqref{eq:online_case}, and in the finite-sum setting \eqref{eq:finite_sum_case}  it is more efficient in practice to use subsampling and work with stochastic gradients  instead. In our paper, we extend \algname{EF21} to a more general stochastic approximation framework than the simplistic full gradient setting considered in the original paper. 

{\bf $\diamond$ Variance reduction.} 
As mentioned above, \algname{EF21} relies on full gradient computations at all clients. 
This incurs a high or unaffordable computational cost, especially when local clients hold  large training sets, i.e., if $m$ is very large in \eqref{eq:finite_sum_case}. One important technique for accelerating convergence is to incorporate a \textit{variance reduction} mechanism, which makes use of stochastic gradient estimates obtained in the previous iterations. To the best of our knowledge, it is an open question whether any \algname{EF}-type mechanism can be enhanced with variance reduction for non-convex objectives. We answer this question in this work by proposing  \algname{EF21-PAGE} method and developing an analysis based on a new Lyapunov function. 

{\bf $\diamond$ Partial participation.} 
Pure \algname{EF21} method requires  \emph{full participation} of clients for solving problem \eqref{eq:finit_sum}, i.e., in each round, the server needs to communicate with all $n$ clients.  However, full participation is usually  impractical or very hard to achieve in massively distributed (e.g., federated) learning problems \citep{FEDLEARN, Power_of_Choice, FL-big, ZeroSARAH, FedPAGE}. To remedy this situation, we propose a \emph{partial participation} (PP) variant of \algname{EF21}, \algname{EF21-PP} (Algorithm~\ref{alg:PP-EF21}), which allows to sample only a random subset of clients at each iteration.

{\bf $\diamond$ Bidirectional compression.}  
In many distributed computing systems the {\em upstream}  of communication of messages is the main bottleneck. However, in other architechtures, the {\em downstream } communication is also costly \citep{horvoth2022natural, DoubleSqueeze, Artemis2020} or even has a fixed bandwidth, which can significantly slow down training. 
In order to address this issue, we further enhance \algname{EF21} method by backward compression and propose \algname{EF21-BC} (Algorithm~\ref{alg:EF21-BC}). Our biderectional compression method carefully employs the Markov compressor based on \algname{EF21} on the master and client nodes simultaneously. Moreover, we design a novel analysis for the proposed algorithm, which is reminiscent of the our analysis of \algname{EF21-PAGE}.

{\bf $\diamond$ Momentum.}  
A very successful and popular technique for enhancing both optimization and generalization is momentum/acceleration \citep{Heavy-ball,nesterov1,lan2018optimal,allen2017katyusha,Varag,li2021anita,SMOMENTUM}. Moreover, momentum is a key building block behind the widely-used \algname{Adam} method~\citep{Adam}. However, in the context of error feedback, acceleration is notoriously difficult to analyze. For instance, in convex regime, additional full vector communication is needed for the analysis \citep{EC-Katyusha}. In non-convex case, the best-known complexity is acheived by \algname{M-CSER} method \citep{CSER}, which is clearly suboptimal in terms $\varepsilon$, $\eta$ and $\alpha$, see Table~\ref{tab:comparison}. In this work, we overcome this difficulty by carefully incorporating momentum into \algname{EF21}. We name the resulting method \algname{EF21-HB} (Algorithm~\ref{alg:EF21_HB}) and offer a simple intuitive proof with improved convergence guarantees.

{\bf $\diamond$ Proximal setting.}  
It is common practice to solve {\em regularized} versions of empirical risk minimization problems instead of their vanilla variants \citep{shai_book}.  Thus we consider the regularized (proximal/composite) problem
\begin{equation}\label{eq:composite_optimization_0}
	\textstyle	\min \limits_{x\in \R^d} \left\{ \Phi(x) \eqdef \frac{1}{n} \sum \limits_{i=1}^n f_i(x) + r(x) \right\},
\end{equation}
where $r(x):\R^d \to \R\cup \{+\infty\}$ is a regularizer, e.g., $\ell_1$ regularizer $\|x\|_1$ or $\ell_2$ regularizer $\|x\|_2^2$. To broaden the applicability of error feedback to such problems, we propose a proximal variant of \algname{EF21} to solve the more general composite problems~\eqref{eq:composite_optimization_0}, which leads to our \algname{EF21-Prox} method (Algorithm~\ref{alg:Prox-ef21}). Again, we are not aware of any method, which can provably solve problem \eqref{eq:composite_optimization_0} using the Top-$k$ sparsifier in distributed non-convex setting.

Our theoretical complexity results are summarized in Table~\ref{tab:comparison}. {\color{myblue} We describe each algorithm in detail in Section~\ref{sec:methods} and present the main results of the convergence analysis for all extensions in Section~\ref{sec:theory}. Proof sketches and formal proofs are deferred to their respective sections in the Appendix.} In addition, we also analyze \algname{EF21-SGD}, \algname{EF21-PAGE}, \algname{EF21-PP}, \algname{EF21-BC} under Polyak-{\L}ojasiewicz (P\L) condition \citep{polyak1963gradient, lojasiewicz1963topological} and \algname{EF21-Prox} under the generalized P{\L}-condition \citep{li2018simple} for composite optimization problems. Due to space limitations, we defer all the details about the analysis under the P{\L}-condition to the extended version of this work \citep{EF21BW_2021} and provide only simplified rates in Table~\ref{tab:comparison}. We comment on some preliminary  experimental results in Section~\ref{sec:exp}. More experiments including deep learning experiments are presented in Appendix~\ref{sec:exp_extra}.

{\color{myblue} 
\section{Notations} We adopt the common conventions $[n] = \cb{1, \ldots, n}$ for a set of indicies and  $\Prob(\mathcal A) $ for a probability of event $\mathcal A$. Throughout the paper, $\norm{\cdot}$ denotes the Euclidean norm $\norm{\cdot}_2$ unless otherwise stated. For algorithmic notations, we refer to specific algorithms in the next section. In Appendix~\ref{sec:notations_methods}, we summarize all notations used in our theoretical analysis. 
}

\section{Methods: Six Algorithmic Extensions}\label{sec:methods}

The proposed methods are extensions of \algname{EF21}, thus they share some features, and are presented in a unified way in Table~\ref{tab:methods}. For all methods, at each iteration, worker $i$ computes the compressed vector $c_i^t$ and sends it to the master. The methods \algname{EF21-SGD}, \algname{EF21-PAGE}, \algname{EF21-PP} differ in how the compressed vectors $c_i^t$ are computed, while the aggregation and parameter update rules are the same:
\begin{eqnarray}
	\textstyle	x^{t+1} = x^t - \gamma g^t,\quad g_i^{t+1} = g_i^t + c_i^t, \notag \\ g^{t+1} = \frac{1}{n}\sum\limits_{i=1}^n g_i^{t+1} = g^t + \frac{1}{n}\sum\limits_{i=1}^n c_i^t.\label{eq:EF21_update}
\end{eqnarray}
The methods \algname{EF21-BC}, \algname{EF21-HB}, \algname{EF21-Prox} compute the compressed vectors via $c_i^t = \cC(\nabla f_i(x^{t+1}) - g_i^t)$, while the aggregation rule and parameter updates are specific to each method. The pseudocodes of the algorithms are given below and important distinct parts are highlighted in {\color{mylightblue} light blue}. Below we briefly describe each method.\footnote{Note that in this work, we study the effect of each of the $6$ proposed extensions separately for pedagogical and clarity reasons. However, in practice, it can be desirable to combine more enhancements with EF21 simultaneously to achieve state-of-the-art performance. In fact, due to the flexibility of our analysis, several of the proposed extensions can be easily combined into one method.
}

{\bf $\diamond$ \algname{EF21-SGD}: Error feedback and \algname{SGD}.} \algname{EF21-SGD} is \algname{EF21} with  full gradients $\nabla f_i(x^{t+1})$ being replaced by their stochastic estimates $\hat g_i(x^{t+1})$  at each node. The pseudocode is given in Algorithm~\ref{alg:EF21-online}. Each client computes $c_i^t = \cC(\hat g_i(x^{t+1}) - g_i^t)$ and sends this sparsified vector to the server. Despite the simplicity of this extension, it is important for various applications of machine learning and statistics where exact gradients are either unavailable or prohibitively expensive to compute.

\begin{algorithm}[h]
	\centering
	\caption{\algname{EF21-SGD} }\label{alg:EF21-online}
	\begin{algorithmic}[1]
		\STATE \textbf{Input:} starting point $x^{0} \in \R^d$;  $g_i^0 \in \R^d$ (known by nodes); $g^0 = \frac{1}{n}\sum_{i=1}^n g_i^0$ (known by master); learning rate $\gamma>0$
		\FOR{$t=0,1, 2, \dots , T-1 $}
		\STATE Master computes $x^{t+1} = x^t - \gamma g^t$ and broadcasts $x^{t+1}$ to all nodes
		\FOR{{\bf all nodes $i =1,\dots, n$ in parallel}}
		\STATE {\color{mylightblue} Compute a stochastic gradient $\hgitpo  = \fr{1}{\tau} \sum_{j=1}^{\tau}\nabla f_{\xi_{i j}^{t}}(\xtpo)$}
		\STATE Compress $c_i^t = \cC({\color{mylightblue} \hgitpo } - g_i^t)$ and send $c_i^t $ to the master
		\STATE Update local state $g_i^{t+1} = g_i^t + c_i^t $
		\ENDFOR
		\STATE Master computes $g^{t+1} = \frac{1}{n} \sum_{i=1}^n  g_i^{t+1}$ via  $g^{t+1} = g^t + \frac{1}{n} \sum_{i=1}^n c_i^t $
		\ENDFOR
	\end{algorithmic}
\end{algorithm}

{\bf $\diamond$ \algname{EF21-PAGE}: Error feedback and variance reduction.} In the finite-sum setting \eqref{eq:finite_sum_case}, it is well known that variance reduced methods have better theoretical guarantees and often perform better than vanilla \algname{SGD} \citep{gower2020variance}. Therefore, we enhance \algname{EF21} with variance reduction technique aiming to acheive a stronger combined effect of variance reduction and compressed communication. Specifically, we replace $\nabla f_i(x^{t+1})$ in the formula for $c_i^t$ with the \algname{PAGE} estimator~$v_i^{t+1}$. With (typically small) probability $p$ this estimator equals the full gradient $v_i^{t+1} = \nabla f_i(x^{t+1})$, and with probability $1-p$ it is set to
	$\textstyle	v_i^{t+1} = v_i^{t}+ \fr{1}{\tau_i} \sum \limits_{j\in I_{i}^{t}} \rb{\nabla f_{ij}(x^{t+1}) - \nabla f_{ij}(\xt)},$
where $I_i^t$ is a minibatch of size $\tau_i$. Typically, the number of data points $m$ owned by each client is large, and $p \leq \nfr{1}{m}$ when $\tau_i \equiv 1$. As a result, computation of full gradients rarely happens during the optimization procedure: on average,  once in every $m$ iterations only. Although it is possible to use other variance-reduced estimators like in \algname{SVRG} or \algname{SAGA}, we use the \algname{PAGE}-estimator: unlike \algname{SVRG} or \algname{SAGA}, \algname{PAGE} is optimal for smooth nonconvex optimization, and therefore gives the best theoretical guarantees.\footnote{We have obtained results for both \algname{SVRG} and \algname{SAGA} and indeed, they are worse, and hence we do not include them.}

Notice that unlike \algname{VR-MARINA} \citep{MARINA}, which is a state-of-the-art distributed optimization method designed specifically for unbiased compressors and which {\em also} uses the \algname{PAGE}-estimator, by design our \algname{EF21-PAGE} does not require the communication of full (non-compressed) vectors at all. This is an important property of the algorithm since, in some distributed networks, and especially when $d$ is very large, as is the case in modern over-parameterized deep learning, full vector communication is prohibitive. 

\begin{algorithm}[H]
	\centering
	\caption{\algname{EF21-PAGE}}\label{alg:EF21-PAGE}
	\begin{algorithmic}[1]
		\STATE \textbf{Input:} starting point $x^{0} \in \R^d$; $g_i^0 $, $v_i^0 \in \R^d$ for $i=1,\dots, n$ (known by nodes); $g^0 = \fr{1}{n}\sum_{i=1}^n g_i^0$ (known by master); learning rate $\gamma>0$; probabilities $p_i \in (0,1]$; batch-sizes $1\leq\tau_i \leq m$
		\FOR{$t=0,1, 2, \dots , T-1 $}
		\STATE Master computes $x^{t+1} = x^t - \gamma g^t$  
		\FOR{{\bf all nodes $i =1,\dots, n$ in parallel}}
		\STATE {\color{mylightblue} Sample $ b_i^{t} \sim \operatorname{Be}(p_i)$}
		\STATE {\color{mylightblue} If $b_i^t = 0$, sample a minibatch of data samples $I_{i}^{t}$ with $|I_i^{t}|=\tau_i$}
		\STATE {\color{mylightblue} $v_i^{t+1} = \begin{cases}
			\nabla f_i(x^{t+1}) &\text{if } b_i^t  = 1,\\
			v_i^{t}+ \fr{1}{\tau_i} \sum \limits_{j\in I_i^{t}} \rb{\nabla f_{ij}(x^{t+1}) - \nabla f_{ij}(\xt)}&\text{if } b_i^t  = 0
		\end{cases}$  }
		\STATE Compress $c_i^t = \cC(v_i^{t+1} - g_i^t)$ and send $c_i^t $ to the master
		\STATE Update local state $g_i^{t+1} = g_i^t + c_i^t $
		
		\ENDFOR
		
		\STATE Master computes $g^{t+1} = \fr{1}{n} \sum_{i=1}^n  g_i^{t+1}$ via  $g^{t+1} = g^t + \fr{1}{n} \sum_{i=1}^n c_i^t $
		\ENDFOR
	\end{algorithmic}
\end{algorithm}

{\bf $\diamond$ \algname{EF21-PP}: Error feedback and partial participation.} In this setting, we assume that only a subset of clients is available for computation/communication at each round. We model such situation as follows. First, we select a subset of clients $S_t\subseteq\{1,\dots,n\}$ randomly such that $\Prob(i \in S_t) = p_i > 0 $ for all $i = 1, \dots, n$, where $\cb{p_i}_{i=1}^{n}$ are unknown probabilities. {\color{myblue} We allow for an arbitrary sampling  strategy of a subset $S_t$ at the master node. The only requirement is that $p_i > 0 $ for all $i = 1, \dots, n$, which is often referred to as a \textit{proper arbitrary} sampling.\footnote{It is natural to focus on \textit{proper} samplings only since otherwise there is a node $i$, which never communicaties. This would be a critical issue when trying to minimize \eqref{eq:finit_sum} as we do not assume any similarity between $f_i(\cdot)$.} Many popular sampling procedures fell into this setting, for instance, independent sampling with/without replacement, $\tau$-nice sampling.\footnote{We do not discuss particular sampling strategies here, more details on specific sampling procedures can be found, e.g., in \citep{qu2016coordinate}.}} After we select a subset $S_t$, each client from $S_t$ computes $c_i^t = \cC(\nfixtpo - g_i^t)$ and communicates this information to the server, while other clients do not participate in the round, which is mathematically equivalent to setting $c_i = 0$. Finally, the server aggregates the communicated vectors and forms a gradient estimator by setting $g_i^{t+1} = g_i^t + c_i^t$ for the clients in $S_t$ and reusing the previous estimate $g_i^{t+1} = g_i^t $ for those nodes which did not take part in this round.

{\color{myblue} The modified method (Algorithm~\ref{alg:PP-EF21}) is called
\algname{EF21-PP}. Note, that all other clients (nodes) $i \notin S_t$ participate neither in the computation nor in communication at iteration $t$, which can save additional computational effort. }

\begin{algorithm}[H]
	\centering
	\caption{\algname{EF21-PP} (EF21 with partial participation)}\label{alg:PP-EF21}
	\begin{algorithmic}[1]
		\STATE \textbf{Input:} starting point $x^{0} \in \R^d$;  $g_i^0 \in \R^d $ for $i=1,\dots, n$ (known by nodes); $g^0 = \fr{1}{n}\sum_{i=1}^n g_i^0$ (known by master); learning rate $\gamma>0$ 
		\FOR{$t=0,1, 2, \dots , T-1 $}
		\STATE Master computes $x^{t+1} = x^t - \gamma g^t$  
		\STATE {\color{mylightblue} Master samples a subset $S_t$ of nodes ($|S_t| \leq n$) such that $\Prob\rb{i \in S_t} = p_i$ }
		\STATE {\color{mylightblue} Master broadcasts $x^{t+1}$ to the nodes with $i\in S_t$}
		\FOR{{\bf all nodes $i =1,\dots, n$ in parallel}}
		\IF{$i\in S_t$}
		\STATE Compress $c_i^t = \cC(\nabla f_i(x^{t+1}) - g_i^t)$ and send $c_i^t $ to the master
		\STATE Update local state $g_i^{t+1} = g_i^t + c_i^t $
		\ENDIF
	{\color{mylightblue} 	\IF{$i\notin S_t$}
		\STATE  Do not change local state $g_i^{t+1} = g_i^t$ 
		\ENDIF }
		\ENDFOR
		{\color{mylightblue} \STATE Master updates $g_i^{t+1} = g_i^t$, $c_i^t = 0$ for $i\notin S_t$}
		\STATE Master computes $g^{t+1} = \fr{1}{n} \sum_{i=1}^n  g_i^{t+1}$ via  $g^{t+1} = g^t + \fr{1}{n} \sum_{i=1}^n c_i^t $
		\ENDFOR
	\end{algorithmic}
\end{algorithm}

{\bf $\diamond$ \algname{EF21-BC}: Error feedback and bidirectional compression.} We extend \algname{EF21} to the case when it is desirable to obtain efficient communication between the clients and the server in \textit{both directions}. We present the formal pseudocode of the method in Algorithm~\ref{alg:EF21-BC}. Note that $\cC_M$ and $\cC_w$ stand for contractive compressors of the type \eqref{def:contractive_compressor} of master and workers respectively. In general, different $\al_M$ and $\al_w$ are accepted. At each iteration of \algname{EF21-BC}, clients compute and send to the master node $c_i^t = \cC_w(\nabla f_i(x^{t+1}) - \wg_i^t)$ and update $\wg_i^{t+1} = \wg_i^t + c_i^t$ in the usual way, i.e., clients apply \algname{EF21} mechanism. The key enhancement in \algname{EF21-BC} is that the master node in \algname{EF21-BC} also follows a similar procedure: it computes and broadcasts to clients the compressed vector $b^{t+1} = \cC_M( \wg^{t+1} - \gt)$ and updates $\gtpo = \gt + b^{t+1}$, where  $\wg^{t+1} = \frac{1}{n} \sum_{i=1}^n  \wg_i^{t+1}$. Vector $\gt$ is  maintained by the master {\em and}  clients. Therefore, the clients are able to update it via  $\gtpo = \gt + b^{t+1}$ and compute $x^{t+1} = x^t - \gamma g^t$ once they receive $b^{t+1}$. 

\begin{algorithm}[H]
	\centering
	\caption{\algname{EF21-BC} (EF21 with bidirectional biased compression)}\label{alg:EF21-BC}
	\begin{algorithmic}[1]
		\STATE \textbf{Input:} starting point $x^{0} \in \R^d$; $g^0$, $b^0$, $\wg_i^0 \in\R^d$ for $i=1,\dots, n$ (known by nodes); $\wg^0 = \frac{1}{n}\sum_{i=1}^n \wg_i^0$ (known by master) ; learning rate $\gamma>0$
		\FOR{$t=0,1, 2, \dots , T-1 $}
		\STATE Master updates $x^{t+1} = x^t - \gamma g^t$
		\FOR{{\bf all nodes $i =1,\dots, n$ in parallel}}
		\STATE Update $x^{t+1} = x^t - \gamma g^t$, $\gtpo = \gt + b^{t}$
		\STATE compress $c_i^t = \cC_w(\nabla f_i(x^{t+1}) - \wg_i^t)$, send $c_i^t $ to the master, and \label{alg_line:BD-EF21_cit_def}
		\STATE update local state $\wg_i^{t+1} = \wg_i^t + c_i^t $
		\ENDFOR
	{\color{mylightblue} 	\STATE Master computes $\wg^{t+1} = \frac{1}{n} \sum_{i=1}^n  \wg_i^{t+1}$ via  $\wg^{t+1} = \wg^t + \frac{1}{n} \sum_{i=1}^n c_i^t $  \label{alg_line:BD-EF21_gtpo_update}
		\STATE compreses $b^{t+1}= \cC_M(\wgtpo - \gt)$, broadcast $b^{t+1} $ to workers, and
		\STATE updates $\gtpo = \gt + b^{t+1}$ } \label{alg_line:BD-EF21_gtpo_update_C}
		\ENDFOR
	\end{algorithmic}
\end{algorithm}

{\bf $\diamond$ \algname{EF21-HB}: Error feedback with momentum.} We design a momentum \citep{Heavy-ball} variant of \algname{EF21} by computing a moving average estimator based on the vector~$g^t$ formed by \algname{EF21}:
\begin{eqnarray}
	\textstyle	x^{t+1} &=& x^t - \gamma v^t,\quad v^{t+1} = \eta v^t + g^{t+1},\notag\\ g_i^{t+1} &=& g_i^t + c_i^t, \quad
	 g^{t+1} = \frac{1}{n}\sum\limits_{i=1}^n g_i^{t+1} = g^t + \frac{1}{n}\sum\limits_{i=1}^n c_i^t. \notag
\end{eqnarray}
The resulting method obtains an improved iteration complexity compared to the current state-of-the-art momentum based method \algname{M-CSER} in terms of the dependence on $\varepsilon$, $\eta$ and $\al$, see Table~\ref{tab:comparison}. Compared to \algname{EF21}, its momentum variant  \algname{EF21-HB} has the same complexity  (in terms of  $\varepsilon$ and $\alpha$), i.e., momentum does not  provably improve the convergence rate.\footnote{Unfortunately, this is a common issue for a wide range of results for momentum methods \cite{SMOMENTUM}. However, it is important to theoretically analyze momentum-extensions such as \algname{EF21-HB} due to their importance in practice and generalization behaviour.}

\begin{algorithm}[H]
	\centering
	\caption{\algname{EF21-HB}}\label{alg:EF21_HB}
	\begin{algorithmic}[1]
		\STATE \textbf{Input:} starting point $x^{0} \in \R^d$;  $g_i^0 \in \R^d$ for $i=1,\dots, n$ (known by nodes); $v^0 = g^0 = \frac{1}{n}\sum_{i=1}^n g_i^0$ (known by master); learning rate $\gamma>0$; momentum parameter $0 \leq \eta < 1$
		\FOR{$t=0,1, 2, \dots , T-1 $}
		\STATE Master computes $x^{t+1} = x^t - \gamma v^t$ and broadcasts $x^{t+1}$ to all nodes \label{line:EF21_HB_master_step}
		\FOR{{\bf all nodes $i =1,\dots, n$ in parallel}}
		\STATE Compress $c_i^t = \cC(\nabla f_i(x^{t+1}) - g_i^t)$ and send $c_i^t $ to the master
		\STATE Update local state $g_i^{t+1} = g_i^t + c_i^t $
		\ENDFOR
		\STATE Master computes $g^{t+1} = \frac{1}{n} \sum_{i=1}^n  g_i^{t+1}$ via  $g^{t+1} = g^t + \frac{1}{n} \sum_{i=1}^n c_i^t $, and {\color{mylightblue} $v^{t+1} = \eta v^t + g^{t+1}$} \label{line:EF21_HB_momentum_upd}
		\ENDFOR
	\end{algorithmic}
\end{algorithm}

{\bf $\diamond$ \algname{EF21-Prox}: Error feedback for composite problems.} Finally, we make \algname{EF21} applicable to the composite optimization problems \eqref{eq:composite_optimization_0} by simply taking the prox-operator from the right-hand side of the $x^{t+1}$ update rule \eqref{eq:EF21_update}: $$x^{t+1} = \operatorname{prox}_{\gamma r} \rb{x^t - \gamma g^t} \eqdef \argmin_{x\in \R^d}\cb{\gamma r(x) + \frac{1}{2}\|x - x^t + \gamma g^t\|^2  }.$$ This modification is simple, but, surprisingly, \algname{EF21-Prox} is the first distributed method with error-feedback that provably converges for composite problems \eqref{eq:composite_optimization_0}. The technical reason for this is that the perturbed iterate analysis of the original \algname{EF} \citep{Stich-EF-NIPS2018,stich2020error} is difficult to extend to the composite/constrainted setting due to additional bias of the proximal operator.

\begin{algorithm}[H]
	\centering
	\caption{\algname{EF21-Prox}}\label{alg:Prox-ef21}
	\begin{algorithmic}[1]
		\STATE \textbf{Input:} starting point $x^{0} \in \R^d$; $g_i^0 \in \R^d$  for $i=1,\dots, n$ (known by nodes); $g^0 = \fr{1}{n}\sum_{i=1}^n g_i^0$ (known by master); learning rate $\gamma>0$
		\FOR{$t=0,1, 2, \dots , T-1 $}
		{\color{mylightblue} \STATE Master computes $x^{t+1} = \operatorname{prox}_{\gamma r} \rb{x^t - \gamma g^t}$  }
		\FOR{{\bf all nodes $i =1,\dots, n$ in parallel}}
		\STATE Compress $c_i^t = \cC(\nabla f_i(x^{t+1}) - g_i^t)$ and send $c_i^t $ to the master
		\STATE Update local state $g_i^{t+1} = g_i^t + c_i^t $
		\ENDFOR
		\STATE Master computes $g^{t+1} = \frac{1}{n} \sum_{i=1}^n  g_i^{t+1}$ via  $g^{t+1} = g^t + \frac{1}{n} \sum_{i=1}^n c_i^t $
		\ENDFOR
	\end{algorithmic}
\end{algorithm}

\section{Theoretical Convergence Results}\label{sec:theory}

In this section, we formulate a single corollary derived from the main convergence theorems for our six enhancements of \algname{EF21}, and formulate the assumptions that we use in the analysis. The complete statements of the theorems and their proofs are provided in the appendices. In Table~\ref{tab:comparison} we compare our new findings with existing results.

\subsection{Assumptions}

In this subsection, we list and discuss the assumptions that we use in the analysis. 

\subsubsection{General assumptions} 

To derive our convergence results, we invoke the following standard smoothness assumption.

\begin{assumption}[Smoothness and lower boundedness]\label{as:main}	 
	Every $f_i$ has $L_i$-Lipschitz gradient, i.e., \\$\norm{\nabla f_i(x) - \nabla f_i(y)} \le L_i\norm{x - y}$ for all $i\in [n], x, y\in \R^d$, and $\finf \eqdef \inf_{x\in \R^d} f(x)>-\infty $.
\end{assumption}

We also assume that the compression operators used by all algorithms satisfy the following property.

\begin{definition}[Contractive compressors]\label{def:contractive_compressor}
	We say that a (possibly randomized) map $\cC: \R^{d} \rightarrow \R^{d}$ is a {\em contractive compression operator}, or simply {\em contractive compressor}, if   there exists a constant $0<\alpha\leq 1$ such that	
		\begin{eqnarray}\label{eq:b_compressor}
		\Exp{\|\cC(x) - x\|^{2}} \leq \rb{1 - \alpha} \|x\|^{2}, \qquad \forall x\in \R^d.
	\end{eqnarray}
\end{definition}

We emphasize that we do {\em not} assume  $\cC$ to be unbiased. Hence, in particular, our theory works with the popular greedy Top-$k$ sparsifer~\citep{Alistarh-EF-NIPS2018}, which selects the largest $k$ coordinates of the compressed vector in the magnitude.

\subsubsection{Addtional assumptions for \algname{EF21-SGD}}

We analyze \algname{EF21-SGD} under the assumption that local stochastic gradients $\nabla f_{\xi_{i j}^{t}}(\xt)$ satisfy the following inequality (see Assumption~2 of \citet{khaled2020better}).

\begin{assumption}[General assumption for stochastic gradients]\label{as:general_as_for_stoch_gradients}
	We assume that for all $i = 1,\ldots,n$ and $j\geq 1$, we have $\Exp{\nabla f_{\xi_{i j}^{t}}(\xt) \mid x^t} = \nabla f_i(\xt)$, and  there exist parameters $A_i, C_i \ge 0$, $B_i \ge 1$ such that
	\begin{eqnarray}
		\Exp{\|\nabla f_{\xi_{i j}^{t}}(\xt)\|^2\mid x^t} &\le& 2A_i\left(f_i(x^t) - f_i^{\inf}\right)   \label{eq:general_second_mom_upp_bound}  + B_i\|\nabla f_i(x^t)\|^2 + C_i,  
	\end{eqnarray}
	where\footnote{When $A_i = 0$ one can ignore the first term in the right-hand side of \eqref{eq:general_second_mom_upp_bound}, i.e., assumption $\inf_{x\in\R^d}f_i(x) > -\infty$ is not required in this case.} $f_i^{\inf} = \inf_{x\in\R^d}f_i(x) > -\infty$.
\end{assumption}
{\color{myblue}
Stochastic gradient $\hgit$ is computed using a mini-batch of $\tau_i$ independent samples satisfying \eqref{eq:general_second_mom_upp_bound}: 
\begin{equation*}
	\textstyle \hgit \eqdef \fr{1}{\tau_i} \sum_{j=1}^{\tau_i}\nabla f_{\xi_{i j}^{t}}(\xt).
\end{equation*}
}
Below we provide two examples of stochastic gradients fitting this assumption (for more detail, see \citep{khaled2020better}).
\begin{example}\label{ex:UBV_case}
	Consider $\nabla f_{\xi_{i j}^{t}}(\xt)$ such that
	\begin{eqnarray*}
		&&\Exp{\nabla f_{\xi_{i j}^{t}}(\xt)\mid \xt} = \nabla f_{i}(\xt) \quad \text{and}  \qquad   \Exp{\sqnorm{\nabla f_{\xi_{i j}^{t}}(\xt) - \nabla f_{i}(\xt)}\mid \xt} \leq \sigma_i^2
	\end{eqnarray*}		
	for some $\sigma_i \geq 0$. Then, due to variance decomposition,\eqref{eq:general_second_mom_upp_bound} holds with $A_i = 0$, $B_i = 0$, $C_i = \sigma_i^2$.
\end{example}

\begin{example}\label{ex:subsampling_case}
	Let $f_i(x) = \frac{1}{m}\sum_{j=1}^{m} f_{ij}(x)$, $f_{ij}$ be $L_{ij}$-smooth and $f_{ij}^{\inf} = \inf_{x\in\R^d}f_{ij}(x) > -\infty$. Following \citet{gower2019sgd}, we consider a stochastic reformulation
	\begin{eqnarray}
		\textstyle	f_i(x) &=& \Expu{v_i\sim\cD_i}{f_{v_i}(x)} = \Expu{v_i\sim\cD_i}{\frac{1}{m}\sum\limits_{j=1}^{m}f_{v_{ij}}(x)}, \notag 
	\end{eqnarray}
	where $\Expu{v_i\sim\cD_i}{v_{ij}} = 1$. One can show (see Proposition 2 of \citet{khaled2020better}) that under the assumption that $\Expu{v_i\sim\cD_i}{v_{ij}^2}$ is finite for all $j$ stochastic gradient $\nabla f_{\xi_{i j}^{t}}(\xt) = \nabla f_{v_i^t}(\xt)$ with $v_i^t$ sampled from $\cD_i$ satisfies \eqref{eq:general_second_mom_upp_bound} with $A_i = \max_j L_{ij} \Expu{v_i\sim\cD_i}{v_{ij}^2}$, $B_i = 1$, $C_i = 2A_i\Delta_i^{\inf}$, where $\Delta_i^{\inf} = \frac{1}{m}\sum_{j=1}^{m} (f_i^{\inf} - f_{ij}^{\inf})$. In particular, if $\Prob(\nabla f_{\xi_{i j}^{t}}(\xt) = \nabla f_{ij}(\xt)) = \frac{L_{\blue{i}j}}{\sum_{l=1}^{m}L_{il}}$, then $A_i = \overline{L}_i = \frac{1}{m}\sum_{j=1}^{m}L_{ij}$, $B_i = 1$, and $C_i = 2A_i\Delta_i^{\inf}$.
\end{example}

\subsubsection{Additional assumptions for \algname{EF21-PAGE}}

In the analysis of  \algname{EF21-PAGE}, we rely on the following assumption.

\begin{assumption}[Average $\cL$-smoothness]\label{as:avg_smoothness_page}
	Let every $f_i$ have the form \eqref{eq:finite_sum_case}. Assume that for all $t\ge 0$, $i = 1, \dots, n$, and batch $I_{i}^{t}$ (of size $\tau_i$), the minibatch stochastic gradients difference $\widetilde{\Delta}_i^t \eqdef \frac{1}{\tau_i}\sum_{j\in I_{i}^{t}}(\nabla f_{ij}(x^{t+1}) - \nabla f_{ij}(x^t))$ computed on the node $i$, satisfies $\Exp{\widetilde{\Delta}_i^t\mid x^t,x^{t+1}} = \Delta_i^t$ and
	\begin{equation}
		\textstyle	\Exp{\left\|\widetilde{\Delta}_i^t - \Delta_i^t\right\|^2\mid x^t, x^{t+1}} \le \frac{\cL_i^2}{\tau_i}\|x^{t+1}-x^t\|^2\label{eq:avg_smoothness_page}
	\end{equation}
	with some $\cL_i \ge 0$, where $\Delta_i^t \eqdef \nabla f_i(x^{t+1}) - \nabla f_i(x^t)$. We also define $\wcL^2 \eqdef \suminn \fr{(1-p_i)\mathcal{L}_{i}^2}{\tau_i}$.
\end{assumption}

This assumption is satisfied for many standard/popular sampling strategies. For example, if $I_{i}^{t}$ is a full batch, then $\cL_i = 0$. Another example is \textit{uniform sampling} on $\cb{1, \dots, m}$, and each $f_{i j }$ is $L_{i j }$-smooth. In this regime, one may verify that $\cL_i \leq \max_{1\leq j \leq m} L_{i j }$. 

\subsubsection{Additional assumptions for global convergence}

We now introduce an additional assumption, which enables us to obtain (global) convergence results for the function value.
\begin{assumption}[Polyak-{\L}ojasiewicz] \label{ass:PL} There exists $\mu>0$ such that $f(x) - f(x^\star) \leq  \frac{1}{2 \mu}\sqnorm{\nabla f(x)}$ for all $x\in \R^d$, where $x^\star  \in \arg\min_{x\in\R^d} f(x) \neq \emptyset$.
\end{assumption}
The results under this assumption (and its generalization to composite case) are briefly summarized in Table~\ref{tab:comparison} in the column "Compl. (P{\L})". The detailed statements of the results are deferred to the extended version of this work \citep{EF21BW_2021}. 

\subsection{Main results}

Below, we formulate the corollary establishing the complexities for each method. The complete version of this result is formulated and rigorously derived for each method in the appendix. {\color{myblue} We also include the proof sketch for each result in the corresponding sections.}

\begin{corollary}\label{cor:monster_corollary}
	Suppose that Assumption~\ref{as:main} holds. Then, there exist appropriate choices of parameters for \algname{EF21-PP}, \algname{EF21-BC}, \algname{EF21-HB}, \algname{EF21-Prox} such that the number of communication rounds $T$ and the (expected) number of gradient computations at each node $\# \text{grad}$ for these methods to find an $\varepsilon$-stationary point, i.e., a point $\hat{x}^{T}$ such that $\mathbb{E}[\|\nabla f(\hat{x}^{T})\|^2] \leq \varepsilon^2$ for \algname{EF21-PP}, \algname{EF21-BC}, \algname{EF21-HB} and $\mathbb{E}[\|\mathcal{G}_{\g}(\hat{x}^{T})\|^2] \leq \varepsilon^2$ for \algname{EF21-Prox}, where $\mathcal{G}_{\g}(x)= \nicefrac{1}{\g} \rb{ x-\opn{prox}_{\g r}(x-\g \nabla f(x)) }$, are 
	\begin{eqnarray*}
		\textstyle	\text{\algname{EF21-PP}:}& & \textstyle T = \cO\rb{    \fr{   \wL \delta^0}{ p \alpha \varepsilon^2} },\quad \# \text{grad} =  \cO\rb{    \fr{   \wL \delta^0}{\alpha \varepsilon^2} }\\
		\textstyle	\text{\algname{EF21-BC}:}& & \textstyle T = \# \text{grad} =   \cO\rb{    \fr{   \wL \delta^0}{ \alpha_w \alpha_M \varepsilon^2} }\\
		\textstyle	\text{\algname{EF21-HB}:}& & \textstyle T = \# \text{grad} =   \cO\rb{    \fr{   \wL \delta^0}{  \varepsilon^2}  \rb{\fr{1}{\al} + \fr{1}{1-\eta}}}\\
		\textstyle		\text{\algname{EF21-Prox}:}& & \textstyle T = \# \text{grad} =   \cO\rb{    \fr{   \wL \delta^0}{ \alpha \varepsilon^2} },
	\end{eqnarray*}
	where $\wL\eqdef \sqrt{\frac{1}{n}\sum_{i=1}^n L_i^2}$, $\delta_0\eqdef f(x^0) - \finf$ (for \algname{EF21-Prox} $\delta^0 = \Phi(x^0) - \Phi^{inf}$), $p$ is the probability of sampling the client in \algname{EF21-PP}, $\alpha_w$ and $\alpha_M$ are contraction factors for compressors applied on the workers' and the master's sides respectively in \algname{EF21-BC}, and $\eta \in [0,1)$ is the momentum parameter in \algname{EF21-HB}. 
	
	If Assumptions~\ref{as:main}~and~\ref{as:general_as_for_stoch_gradients} in the setup from Example~\ref{ex:UBV_case} hold, then there exist appropriate choices of parameters for \algname{EF21-SGD} such that the corresponding $T$ and the averaged number of gradient computations at each node $\overline{\#\text{grad}}$ are
	\begin{eqnarray*}
		\text{\algname{EF21-SGD}:}& &\textstyle  T = \cO\left(\frac{\wL\delta^0}{\alpha\varepsilon^2}\right), \qquad  \overline{\#\text{grad}} =  \cO\left(\frac{\wL\delta^0}{\alpha\varepsilon^2} + \frac{\wL\delta^0\sigma^2}{\alpha^3\varepsilon^4} \right),
	\end{eqnarray*}
	where $\sigma = \frac{1}{n}\sum_{i=1}^n\sigma_i^2$. 
	
	If Assumptions~\ref{as:main}~and~\ref{as:avg_smoothness_page} hold, then there exist appropriate choices of parameters for \algname{EF21-PAGE} such that the corresponding $T$ and $\overline{\#\text{grad}}$ are
	\begin{eqnarray*}
		\text{\algname{EF21-PAGE}:} & & T =  \textstyle \cO\rb{\fr{(\wL+\widetilde{\cL})\delta^0}{\alpha\varepsilon^2}  + \fr{\sqrt{m}\widetilde{\cL}\delta^0}{\varepsilon^2} }, \qquad  \overline{\#\text{grad}} = \textstyle  \cO\rb{m + \fr{(\wL+\widetilde{\cL})\delta^0}{\alpha\varepsilon^2}  + \fr{\sqrt{m}\widetilde{\cL}\delta^0}{\varepsilon^2} },
	\end{eqnarray*}
	where $\widetilde{\cL} = \sqrt{\frac{1-p}{n}\sum_{i=1}^n \cL_i^2}$, $\tau_i\equiv\tau =1$.
\end{corollary}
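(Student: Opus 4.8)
The plan is to derive Corollary~\ref{cor:monster_corollary} as an immediate consequence of the six per‑method convergence theorems stated in the appendix, by substituting concrete parameter choices — primarily the stepsize $\gamma$, and, for \algname{EF21-SGD} and \algname{EF21-PAGE}, the minibatch size $\tau$. All six proofs share the \algname{EF21} template: one builds a Lyapunov function of the form
\[
\Psi^t \eqdef f(x^t) - \finf + \frac{c\gamma}{n}\sum_{i=1}^n \norm{g_i^t - \nfi{x^t}}^2,
\]
augmented with suitable extra error terms for the bidirectional, momentum, stochastic and variance‑reduced variants; one applies the $L_i$‑smoothness descent inequality to bound $f(x^{t+1})-f(x^t)$ by $-\tfrac{\gamma}{2}\norm{\nf{x^t}}^2$ plus error terms, and uses the contraction property \eqref{eq:b_compressor} to show that the aggregated compression error $\tfrac1n\sum_i\norm{g_i^{t+1}-\nfi{x^{t+1}}}^2$ obeys a recursion that contracts at rate $1-\Theta(\alpha)$ while picking up a multiple of $\norm{x^{t+1}-x^t}^2=\gamma^2\norm{g^t}^2$. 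Choosing $c$ and then $\gamma=\Theta(\alpha/\wL)$ makes $\Psi^{t}-\Exp{\Psi^{t+1}}\ge\tfrac{\gamma}{2}\Exp{\norm{\nf{x^t}}^2}$; telescoping over $t=0,\dots,T-1$ gives $\tfrac1T\sum_t\Exp{\norm{\nf{x^t}}^2}\le\tfrac{2\Psi^0}{\gamma T}\le\varepsilon^2$ once $T=\cO(\wL\delta^0/(\alpha\varepsilon^2))$. This is exactly the rate for \algname{EF21-Prox} (with $\nf{\cdot}$ replaced by $\cG_\gamma$, using nonexpansiveness of $\operatorname{prox}_{\gamma r}$) and the backbone for all other parts.

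For the individual extensions I would then track the deviations from this template. \textbf{\algname{EF21-PP}:} the estimator $g_i^t$ is refreshed towards $\nfi{x^{t+1}}$ only on the event $\{i\in S_t\}$ of probability $p_i\ge p$, so after taking expectation over $S_t$ the error recursion contracts at rate $1-\Theta(p\alpha)$; this forces $\gamma\lesssim p\alpha/\wL$ and $T=\cO(\wL\delta^0/(p\alpha\varepsilon^2))$, while a client evaluates a gradient only when sampled, so its expected gradient count is $pT=\cO(\wL\delta^0/(\alpha\varepsilon^2))$. \textbf{\algname{EF21-BC}:} one adds a second error term $\norm{g^t-\wg^t}^2$ for the master's Markov compressor; the two errors form a coupled system whose joint contraction factor is $\Theta(\alpha_w\alpha_M)$, giving $\gamma\lesssim\alpha_w\alpha_M/\wL$. \textbf{\algname{EF21-HB}:} with the heavy‑ball iterate $v^t$ one augments $\Psi^t$ by $\tfrac{b\gamma}{1-\eta}\norm{v^t}^2$ (the standard momentum potential); $\gamma$ must satisfy both $\gamma\lesssim\alpha/\wL$ and $\gamma\lesssim(1-\eta)/\wL$, producing the factor $\tfrac1\alpha+\tfrac1{1-\eta}$. \textbf{\algname{EF21-SGD}} (part (b)): under Assumption~\ref{as:general_as_for_stoch_gradients} in the setup of Example~\ref{ex:UBV_case} ($A_i=B_i=0$, $C_i=\sigma_i^2$), a minibatch of size $\tau$ adds an $\cO(\gamma^2\wL\sigma^2/\tau)$‑type term to the descent; keeping $\gamma=\Theta(\alpha/\wL)$ preserves $T=\cO(\wL\delta^0/(\alpha\varepsilon^2))$ communication rounds, but annihilating the variance term needs $\tau=\Theta(1+\sigma^2/(\alpha^2\varepsilon^2))$, hence $\overline{\#\text{grad}}=\tau T=\cO(\wL\delta^0/(\alpha\varepsilon^2)+\wL\delta^0\sigma^2/(\alpha^3\varepsilon^4))$. \textbf{\algname{EF21-PAGE}} (part (c)): the extra quantities are the \algname{PAGE} variance $\tfrac1n\sum_i\norm{v_i^t-\nfi{x^t}}^2$, which under Assumption~\ref{as:avg_smoothness_page} contracts with the restart probability $p$ and grows by $\Theta(\widetilde{\cL}^{\,2}\gamma^2\norm{g^t}^2/p)$, together with $\tfrac1n\sum_i\norm{g_i^t-v_i^t}^2$; balancing all contraction constraints yields a stepsize bound of order $\bigl(\tfrac{\wL+\widetilde{\cL}}{\alpha}+\sqrt m\,\widetilde{\cL}\bigr)^{-1}$, hence $T=\cO\bigl((\wL+\widetilde{\cL})\delta^0/(\alpha\varepsilon^2)+\sqrt m\,\widetilde{\cL}\delta^0/\varepsilon^2\bigr)$, and since a step costs $\tau=1$ gradients with probability $1-p$ and a full pass of $m$ gradients with probability $p=\Theta(1/m)$, the expected computation is $\cO(m)$ for the initial full gradient plus $\cO(1)$ per round, i.e.\ $\overline{\#\text{grad}}=\cO(m+T)$.

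The main obstacle is not any single calculation but designing the correct coupled Lyapunov functions for the two genuinely new structural modifications — \algname{EF21-PP} and \algname{EF21-BC} — and verifying that the relevant error terms jointly contract with the advertised constants ($p\alpha$ and $\alpha_w\alpha_M$ respectively). In particular, for \algname{EF21-BC} one must check that the master's compression error does not feed back destructively into the workers' error recursion, and for \algname{EF21-PP} one must control the interaction of the sampling randomness with the (possibly randomized) compressor and with the arbitrary, possibly stale initialization $g_i^0$. For \algname{EF21-PAGE} the delicate point is stepsize bookkeeping: one tracks three coupled error quantities and must keep $\gamma$ small enough against the $\sqrt m$‑scale term without attaching an $\alpha$ to that term. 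Once the Lyapunov recursions are in place, each bound follows by the common telescoping‑and‑tune argument, and the P{\L} versions quoted in Table~\ref{tab:comparison} follow by replacing the telescoping step with a standard linear‑recursion estimate using $\norm{\nf{x}}^2\ge 2\mu\,(f(x)-\finf)$ (and the generalized P{\L} inequality for \algname{EF21-Prox}).
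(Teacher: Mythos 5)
Your plan follows the paper's own route almost step for step: each bound in the corollary is obtained by specializing the corresponding per-method theorem (Theorems~\ref{th:pp-ef21}, \ref{th:ef21-bc}, \ref{thm:HB}, \ref{th:prox-ef21}, \ref{thm:EF21_SGD_conv_general_assumption}, \ref{th:ef21-page-dist}), each of which is proved with exactly the Lyapunov construction you describe --- the descent lemma plus a contracting recursion for $G^t=\frac1n\sum_i\norm{g_i^t-\nabla f_i(x^t)}^2$ (augmented by $\norm{g^t-\wg^t}^2$ for \algname{EF21-BC}, by $P^t,V^t$ for \algname{EF21-PAGE}, and by the variance terms of Assumption~\ref{as:general_as_for_stoch_gradients} for \algname{EF21-SGD}) --- followed by telescoping and tuning $\gamma$ (and $\tau$, $p$) via Lemma~\ref{le:optimal_t-Peter} and Lemma~\ref{le:stepsize_simplify}. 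Your identification of the $1-\Theta(p\alpha)$ contraction for \algname{EF21-PP} (with $\#\text{grad}=pT$), the $\Theta(\alpha_w\alpha_M)$ joint contraction for \algname{EF21-BC}, the $\tau=\Theta(1+\sigma^2/(\alpha^2\varepsilon^2))$ batch size for \algname{EF21-SGD}, and the $\cO(m+T)$ expected cost for \algname{EF21-PAGE} all match the paper's corollaries.

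The one place you genuinely deviate is \algname{EF21-HB}: you propose augmenting the potential with a term proportional to $\norm{v^t}^2$, whereas the paper instead introduces the virtual iterates $z^{t+1}=x^{t+1}-\frac{\gamma\eta}{1-\eta}v^t$, which satisfy the plain update $z^{t+1}=z^t-\frac{\gamma}{1-\eta}g^t$ (Lemma~\ref{le:virtual_z_sequence}), applies smoothness along the $z$-sequence, and controls the momentum globally via the summed bound $\sum_t\norm{v^t}^2\le(1-\eta)^{-2}\sum_t\norm{g^t}^2$ (Lemma~\ref{le:bound_v_by_g}) rather than through a per-step potential. Both routes are standard for heavy ball and both produce the $\frac1\alpha+\frac1{1-\eta}$ factor; the virtual-iterate route is somewhat easier to couple with the biased \algname{EF21} estimator because it turns the method back into a plain gradient-type step on $z^t$, while your per-step momentum potential would require you to additionally relate $\langle\nabla f(x^t),v^t\rangle$ to $\norm{\nabla f(x^t)}^2$ in the presence of the compression bias --- doable, but the bookkeeping is heavier and is the one part of your plan whose constants you have not verified. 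Everything else is the paper's argument in outline form.
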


\topic{Discussion of results and comparison to prior work}


$\bullet$ For \algname{EF21-PP} and \algname{EF21-Prox}, none of previous error feedback methods work on these two settings (partial participation and proximal/composite case). Thus, we provide the \emph{first} convergence results for them. Moreover, we show that the gradient (computation) complexity for both \algname{EF21-PP} and \algname{EF21-Prox} is $\cO(\nicefrac{1}{\alpha\varepsilon})$, matching the original vanilla \algname{EF21}. It means that we extend \algname{EF21} to both settings for free.

$\bullet$ For \algname{EF21-BC}, we show $\cO(\nicefrac{1}{\alpha_w\alpha_M\varepsilon^2})$ complexity result, which naturally extends $\cO(\nicefrac{1}{\alpha_w \varepsilon^2})$  complexity to the case when compression is also applied by server. The most related method, which applies a biased compression in both directions, is  \algname{DoubleSqueeze} of \citet{DoubleSqueeze}. This algorithm acheives only $\cO(\nicefrac{\Delta}{\varepsilon^3})$, moreover,  it uses a strong assumption on the compressors ($\Exp{\|\cC(x) - x\|} \leq \Delta$), which is not satisfied for practically interesing examples such as Top-$K$. Therefore, our analysis improves upon the previous result by acheving better rates and using a more flexible class of compressors.

$\bullet$ Our iteration complexity for \algname{EF21-HB} is of order $\cO(\nicefrac{1}{\varepsilon^2})$. In contrast, the previous result of \algname{M-CSER} is $\cO(\nicefrac{G}{\varepsilon^3})$ and its analysis requires an additional bounded gradient assumption. Moreover, we improve the dependence on momentum and contraction parameters by splitting the product of $(1-\eta)^{-1}$ and $\al^{-1}$ into the sum. 

$\bullet$ For \algname{EF21-SGD} and \algname{EF21-PAGE}, we want to reduce the gradient complexity by using (variance-reduced) stochastic gradients instead of full gradient in the vanilla \algname{EF21}. Note that $\sigma^2$ and $\Delta^{\inf}$ in \algname{EF21-SGD} could be much smaller than $G$ in \algname{Choco-SGD}, while $\sigma^2$ and $\Delta^{\inf}$ are often dimension-free parameters (particularly, they are very small if the functions/data samples are similar). Thus, for high dimensional problems (e.g., deep neural networks), \algname{EF21-SGD} can be better than \algname{Choco-SGD}. Besides, in the finite-sum case \eqref{eq:finite_sum_case}, especially if the number of data samples $m$ on each client is not very large, then \algname{EF21-PAGE} is much better since its sample complexity is  $\cO(\nicefrac{\sqrt{m}}{\varepsilon^2})$ while for \algname{EF21-SGD} it is of order $\cO(\nicefrac{\sigma^2}{\varepsilon^4})$. We can also observe that the sample complexities of \algname{EF21-SGD} and \algname{EF21-PAGE} do not have the linear speedup in the number of nodes (i.e., the devision by $n$) as it is present in (distributed) \algname{SGD} \citep{khaled2020better}. In Appendix~\ref{subsec:tightness_n} we experimentally verify the tightness of our rates w.r.t. $n$.  

 {\color{myblue}
\subsection{Proof sketches}
In this section we provide insights into convergence analysis for several of our extensions: \algname{EF21-PP}, \algname{EF21-PAGE} and \algname{EF21-HB}. Proof sketches for \algname{EF21-SGD}, \algname{EF21-BC} and \algname{EF21-Prox} are deferred to Appendix and can be found in the corresponding sections. 

\paragraph{Partial participation.} The idea of our analysis of \algname{EF21-PP} is to develop a recursion on the error term $G_i^{t+1} \eqdef \sqnorm{g_i^{t+1} - \nabla f_i(x^{t+1})}$ analogous to that in original EF21 analysis (see Lemma~\ref{lem:theta-beta}). We do this by conditioning on the events $\mathcal A_i$ that node $i$ is sampled to participate in communication round, i.e., $i \in S_t$. That is, we consider the following two terms:
	$$
	\Exp{G_i^{t+1} \mid i \in S_t} \qquad \text{and} \quad \Exp{G_i^{t+1} \mid i \notin S_t} .
	$$
 The strategy for controlling these two terms is different. In the first case, when node $i$ participates in training, progress is made toward improving the accuracy of the $g_i^{t+1}$  estimator. In the second case, an additional cost arises because node $i$ skips the communication round.
	If we can bound each term above efficiently, we can continue by computing the full (unconditional) expectation using conditional expectations derived from events $\mathcal{A}_i$ and its complement.
	$$
	\Exp{G_i^{t+1}} =  p_i \, \Exp{G_i^{t+1}\mid i \in S_t} + (1-p_i)\, \Exp{G_i^{t+1}  \mid i \notin S_t} , 
	$$
	 Finally, combining the established recursion on the expected error term, $\Exp{G_i^{t+1}}$, with the standard descent lemma and performing a careful calculation of the final communication and iteration complexities allows us to establish convergence guarantees. The full proof is deferred to Appendix~\ref{sec:partial_participation}.
	
	\paragraph{Variance reduction.}
		The key strategy for analyzing \algname{EF21-PAGE} involves splitting the error into two parts,
		$$
		\sqnorm{\nfixt - g_i^{t}} \leq 2  \sqnorm{ \nabla f_i(\xt) - v_i^{t} } + 2 \sqnorm{v_i^{t} - g_i^t} ,
		$$
		and bounding each term separately. The first term corresponds to an error due to variance reduction, and the second term is related to the EF21 mechanism with the compressor. The strategy of controlling the first term (see Lemma~\ref{le:ineq_3_ef21_page-dist}) is similar to the analysis of error deviation of \algname{PAGE} estimator in \citep{PAGE2021}. However, controlling the second term (see Lemma~\ref{le:ineq_2_ef21_page-dist}) is more involved due to the interplay between the two errors. Indeed, while both sequences $v_i^{t}$ and $g_i^t$ change dynamically, the key challenge is to efficiently control the accumulated error from both and build up a recursion of type 
		$$
		\Exp{\sqnorm{v_i^{t+1} - g_i^{t+1}}} \leq (1-\theta) 		\Exp{\sqnorm{v_i^{t} - g_i^{t}}} + C_1 \Exp{\sqnorm{\nabla f_i(x^t) - v_i^t}} + C_2 \Exp{\sqnorm{x^{t+1} - x^t}} ,
		$$
		where $\theta \in (0,1)$ is a contraction factor, and $C_1, C_2 > 0$ are constants determined by problem structure and algorithm's parameters. Once this recursion is established, it is combined with a similar recursion for $\Exp{\sqnorm{\nabla f_i(x^t) - v_i^t}}$ and descent lemma (see Lemma~\ref{le:aux_smooth_lemma}), which results in the following Lyapunov function
		$$
		f(x^t) - \finf + \frac{\gamma }{\theta n } \sumin \sqnorm{ \nabla f_i(\xt) - v_i^{t} } + \frac{C_3}{n} \sumin \sqnorm{v_i^{t} - g_i^t}  ,
		$$
		where $\gamma > 0$ is step-size and $C_3 >0$. See Appendix~\ref{sec:variance_reduction} for more details. 
		
	\paragraph{Heavy ball momentum. } The key idea of the convergence analysis of \algname{EF21-HB} is in line with \citep{Unified_momentum,liu2020improved}, where an additional virtual sequence $\cb{z^t}_{t\geq 0}$ is defined as
		$$
		z^{t+1} = x^{t+1} - \fr{\g \eta }{1-\eta} v^t , 
		$$
		where $\gamma > 0$ is step-size and $\eta \in [0, 1)$ is momentum parameter of Algorithm~\ref{alg:EF21_HB}. The main challenge is to control the error term introduced by the EF21 mechanism with a contractive compressor, while accounting for the momentum step, which replaces the simple gradient descent step used in the original \algname{EF21}. The error term due to compression is controlled as in the \algname{EF21} analysis by showing
		\begin{equation}\label{eq:contraction_main}
		 \Exp{ \sqnorm{\nabla f_i(x^{t+1}) - g_i^{t+1}}  } \leq (1-\theta)  \Exp{ \sqnorm{\nabla f_i(x^{t}) - g_i^t}  } + \beta L_i^2  \Exp{ \sqnorm{\xtpo - \xt}  },
		\end{equation}
		where $\theta \in (0,1)$ represents a contraction factor, and $\beta > 0$ depends on contraction factor $\alpha$. However, the Lyapunov function used in \algname{EF21-HB} differs from that of \algname{EF21}:
		$$
		f(z^t) - \finf +  \frac{C}{n} \sum_{i=1}^n \sqnorm{g_i^t - \nabla f_i(x^t)} , 
		$$	
		 where $C>0$, since a virtual sequence $z^t$ appears in function value in the first term instead of $x^t$. This difference causes a technical difficulty in controlling the last term in \eqref{eq:contraction_main} since it is different from $\sqnorm{z^{t+1}- z^t}$ involved in the descent type lemma for \algname{EF21-HB}. We overcome this challenge by relating these two terms after summation as
		 $$
		 \sum_{t=0}^{T-1} \Exp{\sqnorm{x^{t+1} - x^{t}}} \leq 2 (1 + 4\eta^2) \sum_{t=0}^{T-1} \Exp{\sqnorm{z^{t+1} - z^{t}}}.
		 $$ 
		 We refer to Lemma~\ref{le:rec_HB} in Appendix~\ref{sec:HB} for a rigorous proof. 
	}

\section{Experiments}\label{sec:exp}

In this section, we consider a logistic regression problem with a non-convex regularizer, i.e., $f(x) = \frac{1}{N} \sum\limits_{i=1}^{N} \log \left(1+\exp \left(-b_{i}a_i^{\top} x\right)\right) + \lam \sum \limits_{j=1}^d \frac{x_j^2}{1 + x_j^2}$, 
{\small
}where $a_{i}  \in \mathbb{R}^{d}, b_{i} \in\{-1,1\}$ are the training data, and $\lambda>0$ is the regularization parameter, which is set to $\lambda =0.1$ in all experiments.
We use~$n = 20$ for experiments $1, 3$ and $n=100$ for experiment $2$, and split datapoints heterogeneously.
In all algorithms involving compression, we use Top-$k$ \citep{alistarh2017qsgd} as a canonical example of contractive compressor $\cC$, and fix the compression ratio $\nfr{k}{d} \approx 0.01$, where $d$ is the number of features in the data set. For all algorithms, at each iteration we compute the squared norm of the exact/full gradient for comparison of the methods performance. We terminate our algorithms either if they reach the certain number of iterations or the following stopping criterion is satisfied:  $\sqnorm{\nf{\xt}} \le 10^{-7}$. 
We tune the step-sizes for each method individually and report the best one based on the minimal number of bits required to acheive the desired accuracy. We refer the reader to Appendix~\ref{sec:exp_extra} for more detailed experimental setup, and additional experiments, including other proposed methods such as \algname{EF21-HB} and \algname{EF21-BC}.\footnote{Implementation of all our algorithms is publicly available at \href{https://github.com/IgorSokoloff/ef21_b-w_experiements_source_code}{https://github.com/IgorSokoloff/ef21\_b-w\_experiements\_source\_code}.} The main goal of the following numerical experiments is to illustrate our key theoretical findings. This way we further motivate the proposed algorithmic enhancements of \algname{EF21}.

\paragraph{Experiment 1: Fast convergence with variance reduction.}\label{subsecexp:ef21_page-ef21_sgd}
In our first experiment, we showcase the computation and communication benefit of \algname{EF21-PAGE} (Alg. \ref{alg:EF21-PAGE}) over \algname{EF21-SGD}. Figure \ref{fig:ef21_page_relasim} illustrates that, in all cases, \algname{EF21-PAGE} perfectly reduces the accumulated variance and converges to the desired tolerance, whereas \algname{EF21-SGD} is stuck at some accuracy level. Moreover, \algname{EF21-PAGE} turns out to be surprisingly efficient with small batchsizes (eg, $1.5\%$ of the local data ) both in terms of the number of epochs and the \# bits sent to the server per client. Interestingly, for most data sets, a further increase of batchsize does not considerably improve the convergence.

\begin{figure}[H]
	\begin{subfigure}{0.5\textwidth}
		\centering
		\includegraphics[width=\linewidth]{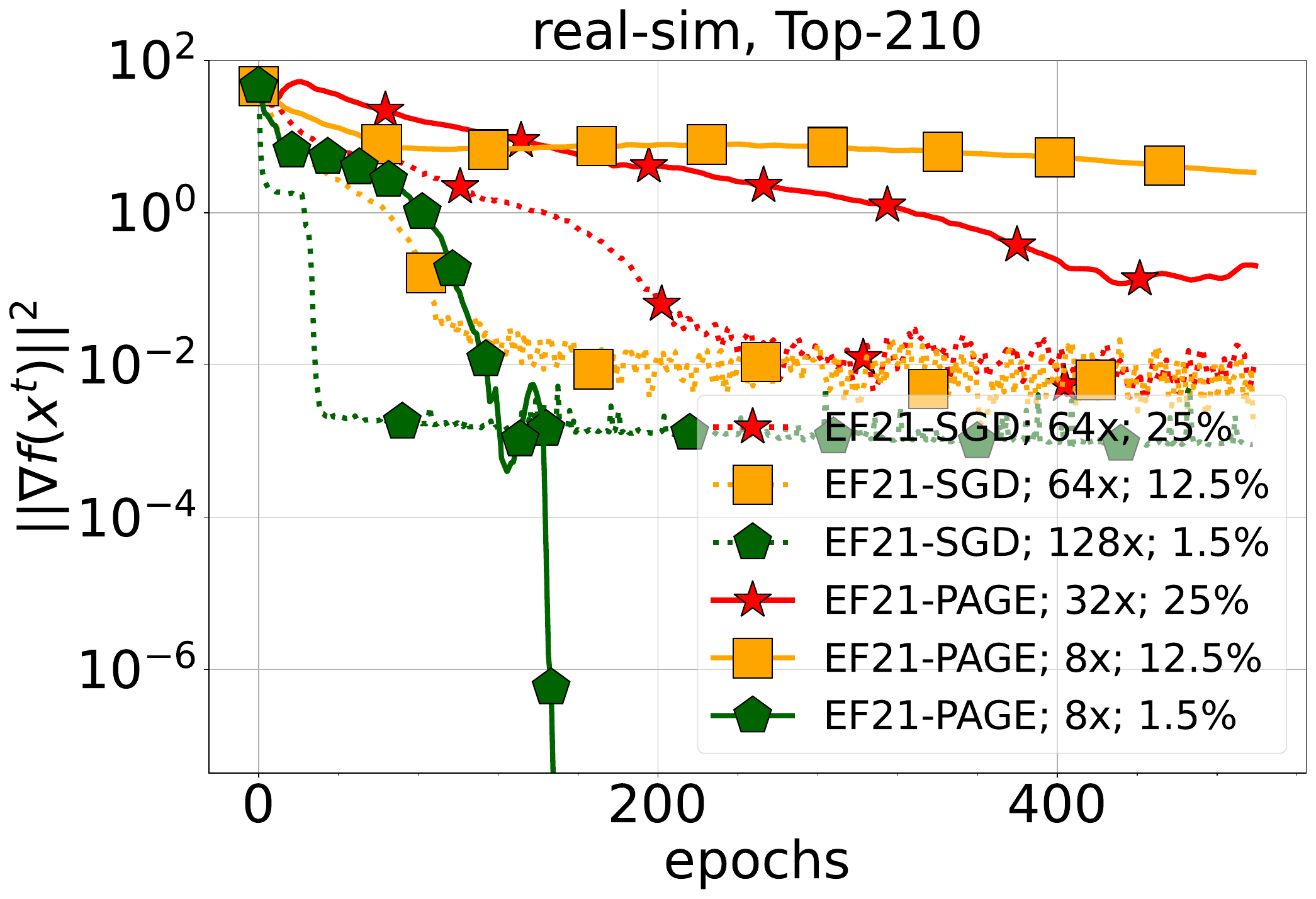}
		\caption{Convergence in epochs.\\${}$}
	\end{subfigure}
	\begin{subfigure}{0.5\textwidth}
		\includegraphics[width=\linewidth]{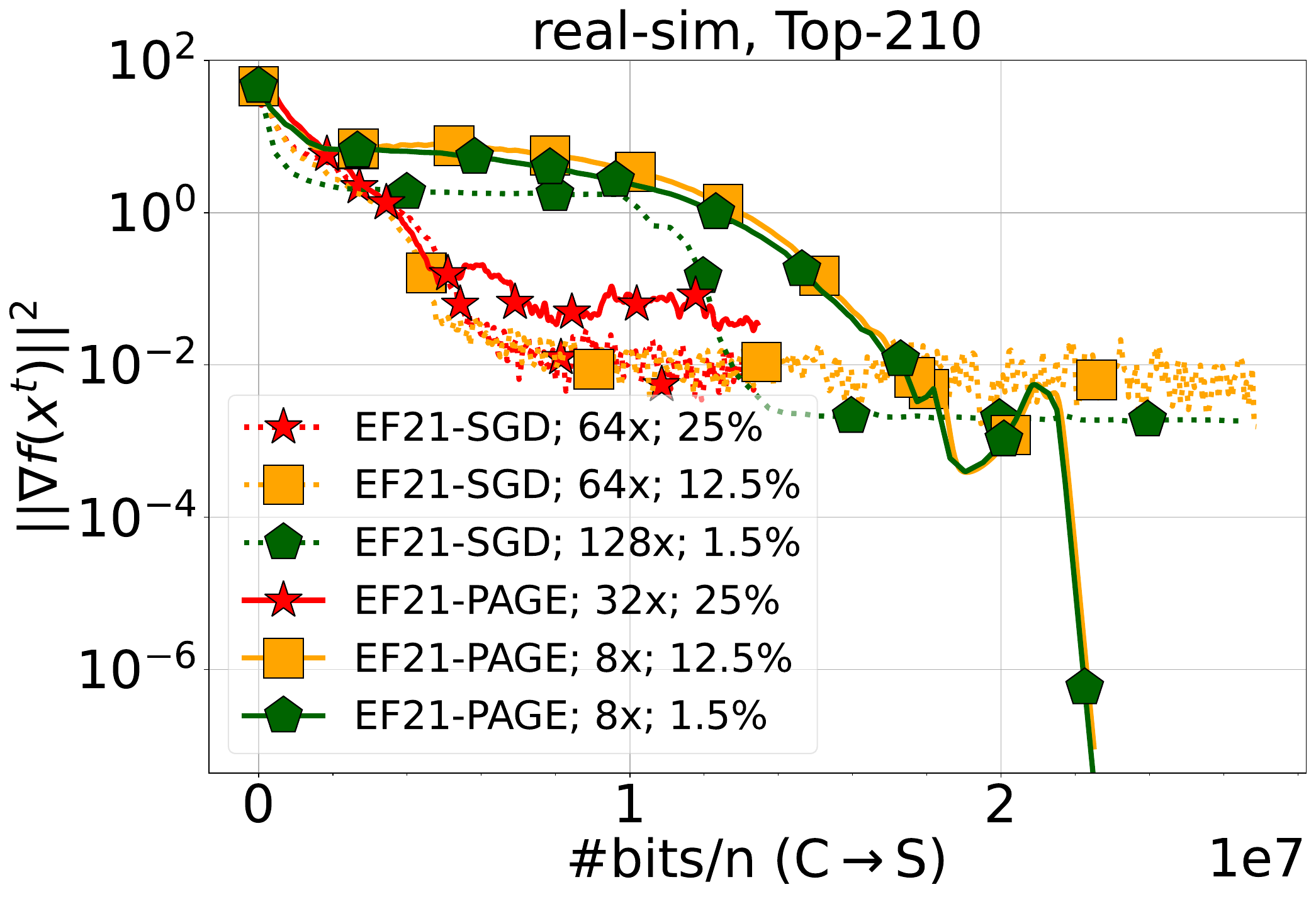} 
		\caption{Convergence in terms of  total number of bits sent from \textbf{C}lients to the \textbf{S}erver divided by $n$.}
	\end{subfigure}
	\caption{
		{\color{myblue} Comparison of \algname{EF21-PAGE} and \algname{EF21-SGD} with tuned parameters. By $1\times, 2\times, 4\times$ (and so on) we indicate that the stepsize was set to a multiple of  the largest stepsize predicted by theory for \algname{EF21}. By $25\%$, $12.5\% $ and $1.5\% $ we refer to batchsizes equal $\lfloor0.25 N_i\rfloor$, $\lfloor0.125N_i\rfloor$ and $\lfloor0.015N_i\rfloor$ for all clients $i=1,\dots,n$, where $N_i$ denotes the size of local data set.}}\label{fig:ef21_page_relasim}%
\end{figure}

\paragraph{Experiment 2: On the effect of partial participation of clients.}\label{subsecexp:ef21_pp-ef21_fg}
This experiment shows that \algname{EF21-PP} (Alg.~\ref{alg:PP-EF21}) has potential to reduce communication cost. For this comparison, we consider $n=100$, and apply a different data partitioning, see Table~\ref{tbl:datasets_summary_2} in Appendix~\ref{sec:exp_extra} for more details. It is predicted by our theory (Corollary~\ref{cor:monster_corollary}) that, in terms of the number of iterations/communication rounds, \textit{partial participation} slows down the convergence of \algname{EF21} by a fraction of participating clients. 
However, since for \algname{EF21-PP} the communications are considerably cheaper it is able to outperform \algname{EF21} in terms of the number of bits sent to the server per client on average (see Figure \ref{fig:ef21_pp_bits_realsim}).

\begin{figure}[H]
	\begin{subfigure}{0.5\textwidth}
		\centering
		\includegraphics[width=\linewidth]{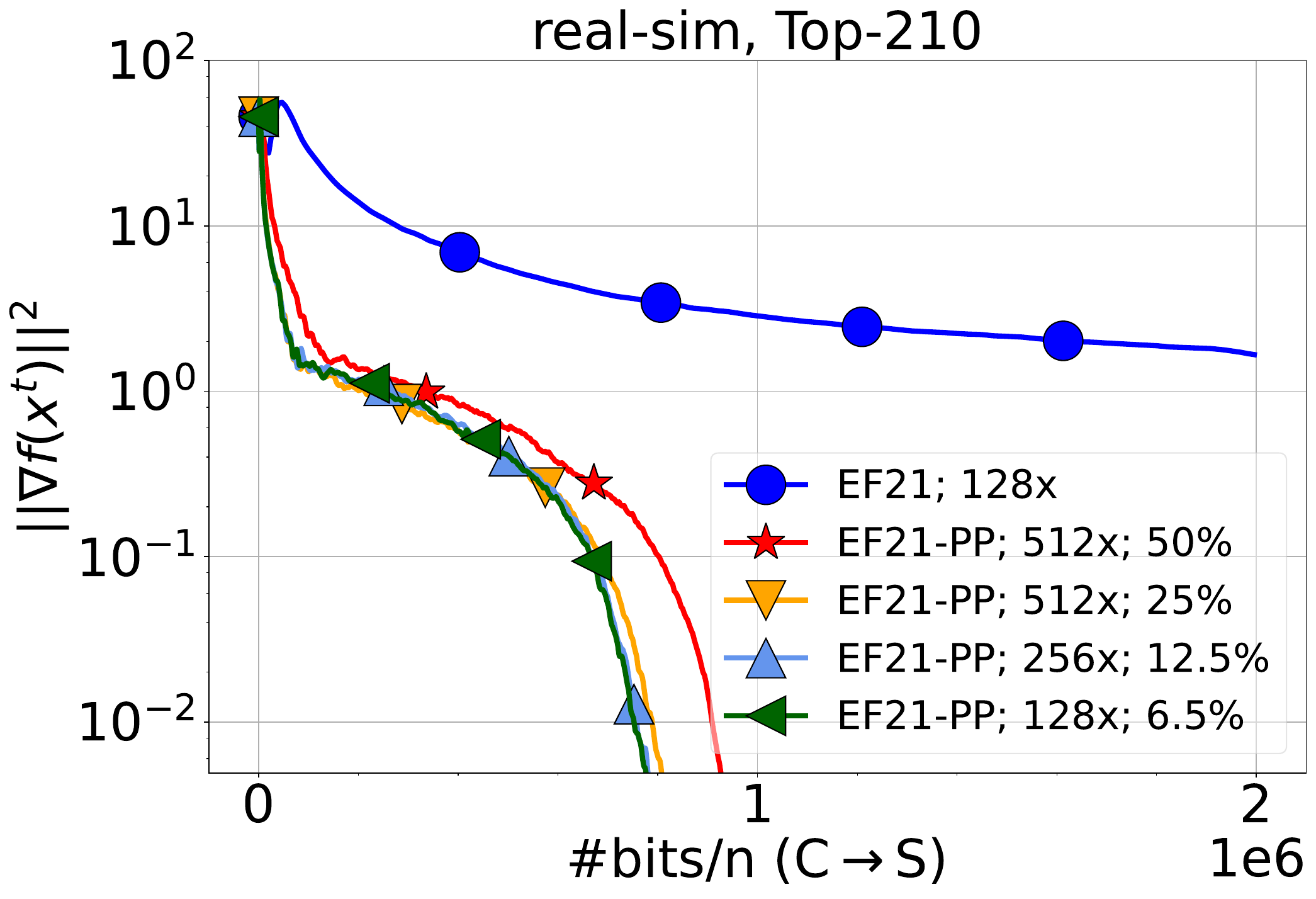}
		\caption{Convergence in terms of  total number of bits sent from \textbf{C}lients to the \textbf{S}erver divided by $n$.}
		\label{fig:ef21_pp_bits_realsim}
	\end{subfigure}
	\begin{subfigure}{0.5\textwidth}
		\includegraphics[width=\linewidth]{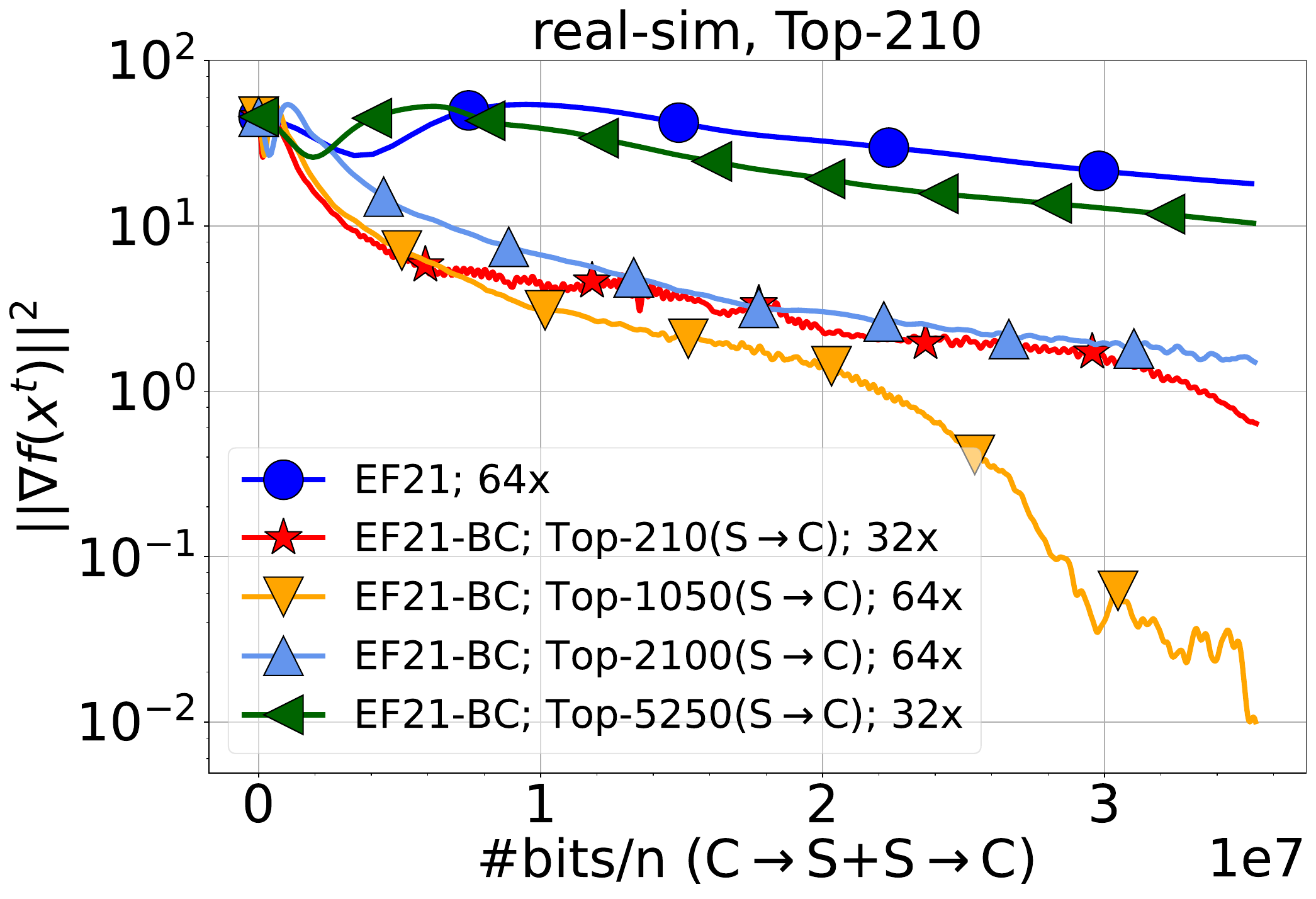}
		\caption{Convergence measured by (bits sent from \textbf{C}lients to the \textbf{S}erver + bits from \textbf{S}erver to \textbf{C}lients) $ / n$.}
		\label{fig:ef21_bc_bd_bits_realsim}
	\end{subfigure}
	\caption{
		{\color{myblue} Comparison of \algname{EF21}, \algname{EF21-PP} and \algname{EF21-BC} with tuned parameters. By $1\times, 2\times, 4\times$ (and so on) we indicate that the stepsize was set to a multiple of  the largest stepsize predicted by theory for \algname{EF21}.} }\label{fig:ef21_page_relasim_main}%
\end{figure}

{\color{myblue}
\paragraph{Experiment 3: On the advantages of bidirectional biased compression.}\label{subsecexp:ef21_bc-ef21_fg_main}
Our next experiment demonstrates that the application of the \textbf{S}erver $\rightarrow$ \textbf{C}lients  compression in \algname{EF21-BC} (Alg.~\ref{alg:EF21-BC}) improves convergence in terms of bits to be transmitted from \textbf{S}erver $\rightarrow$ \textbf{C}lients and \textbf{C}lients $\rightarrow$ \textbf{S}erver together. Indeed, Figure \ref {fig:ef21_bc_bd_bits_realsim} illustrates that \algname{EF21-BC} outperforms \algname{EF21} in terms of the total number of bits communicated even when communicating only $5\% - 15\%$ of data.\footnote{The range $5\% - 15\%$  comes from the fractions $\nicefrac{k}{d}$ for each data set and this observation is consistent accross several data sets.} Note that \algname{EF21} communicates full vectors from the \textbf{S}erver $\rightarrow$ \textbf{C}lients, which slows down communication at each round. We refer to Appendix~\ref{sec:exp_extra} for more ablation studies.

\section*{Conclusion}

This work extends the capabilities of \algname{EF21} by introducing six practical enhancements: partial participation, stochastic approximation, variance reduction, proximal settings, momentum, and bidirectional compression. These extensions address key limitations of earlier error feedback methods, offering improved theoretical guarantees and practical performance.
While our results highlight significant progress, further work is needed to explore their full potential in real-world scenarios, such as federated learning in highly heterogeneous regimes. We hope these contributions will inspire continued advancements in communication-efficient optimization.

}

\acks{The authors would like to thank the anonymous reviewers and the handling editor for their constructive feedback and suggestions, which helped improve the quality and clarity of this paper. This work was supported by funding from King Abdullah University of Science and Technology (KAUST) Baseline Research Scheme. I. Fatkhullin is partially funded by ETH AI Center Doctoral Fellowship. The work of E. Gorbunov was partially supported by a grant for research centers in the field of artificial intelligence, provided by the Analytical Center for the Government of the Russian Federation in accordance with the subsidy agreement (agreement identifier 000000D730321P5Q0002) and the agreement with the Moscow Institute of Physics and Technology dated November 1, 2021 No. 70-2021-00138.  }

\newpage
\appendix
\onecolumn

\renewcommand{\contentsname}{Table of Contents}
{\setlength{\parskip}{0.15em}\tableofcontents}

\newpage

\section{Tables with Notations and Methods}\label{sec:notations_methods}


Table~\ref{tab:notation_table} summarizes the most frequently used notations in our analysis. Additionally, we comment on the main quantities here. Following~\citet{EF21}, we denote {\color{myblue} the deviation of EF21 estimator $g_i^t$ from the local gradient} by $G_i^t \eqdef \sqnorm{\nfixt - g_i^t} $, and by $G^t \eqdef \suminn G_i^t$, {\color{myblue} the average of this quantity over multiple nodes. This notation is common for analysis of all algorithms in this work.  In Section~\ref{sec:variance_reduction} for \algname{EF21-PAGE}, it is useful to split the deviation $G_i^t$ further and define the corresponding deviations of variance reduced estimator $v_i^t$ from exact local gradient $\nfixt $, $P_i^t \eqdef \sqnorm{\nfixt - v_i^t}$, and $v_i^t$ from EF21 estimator $g_i^t$,  $V_i^t \eqdef \sqnorm{v_i^t - g_i^t}$. Similarly, in Section~\ref{sec:BC} for \algname{EF21-BC}, it is helpful to consider additionally the deviation between EF21 estimator on the clients and the exact gradient $P_i^t \eqdef \sqnorm{\wg_i^t - \nfixt}$ and the deviation between EF21 estimator on the server and the average of EF21 estimators on the clients $V^t \eqdef \sqnorm{\suminn \wg_i^t - g^t}$. }

For analysis of most algorithms, we define $\delta^{t} \eqdef f(x^{t})-\finf $,\footnote{If, additionally, Assumption~\ref{ass:PL} holds, then $\finf$ can be replaced by $f(x^\star)$ for  $x^\star \in \arg\min_{x\in\R^d} f(x) \neq \emptyset$.}  $R^{t} \eqdef \sqnorm{x^{t+1}-x^{t}} $. In the analysis of \algname{EF21-HB}, it is useful to modify this notation to  {\color{myblue} $\delta^{t} \eqdef f(z^{t})-\finf $ and } $R^t \eqdef (1-\eta)^2 \sqnorm{z^{t+1} -  z^t }$, where $\{z^t\}_{t\geq 0}$ is the sequence of virtual iterates introduced in Section~\ref{sec:HB}.

\begin{table}[t]
	\centering
	\caption{Summary of frequently used notations in the proofs.}\label{tab:notation_table}
	\begin{tabular}{|c|l|}
		\hline
		\textbf{Algorithm}   & \textbf{Notation}  \\   
		\hline
		\hline
		\makecell{{\color{myblue} for all} \\ {\color{myblue} algorithms}}   & {\color{myblue} $G_i^t =  \sqnorm{g_i^t - \nfixt}$, $G^t = \suminn G_i^t $}  \\   
		\hline
		\algname{EF21}      & \multirow{3}{*}{$R^t = \sqnorm{\xtpo-\xt}$, $\delta^t = f(x^t) - \finf$} \\ 
		\algname{EF21-SGD}  &\\ 
		\algname{EF21-PP}   &\\ 
		\hline
		\algname{EF21-PAGE} &
		\begin{tabular}[c]{@{}l@{}}$R^t = \sqnorm{\xtpo-\xt}, \delta^t = f(x^t) - \finf$, \\ 
			$P_i^t = \sqnorm{\nfixt - v_i^t}$, $V_i^t = \sqnorm{v_i^t - g_i^t}$, \\
			$P^t = \suminn P_i^t $, $V^t = \suminn V_i^t$ \end{tabular} \\ 
		\hline
		\algname{EF21-BC}   & 
			\begin{tabular}[c]{@{}l@{}} $R^t = \sqnorm{\xtpo-\xt}, \delta^t = f(x^t) - \finf $, \\
				$P_i^t =  \sqnorm{\wg_i^t - \nfixt}$, $P^t = \suminn P_i^t $ 
			\end{tabular} \\  
		\hline
		\algname{EF21-HB}   & $R^t = (1-\eta)^2 \sqnorm{z^{t+1} -  z^t }, \delta^t = f({\color{myblue} z^t}) - \finf$ \\ 
		\hline
		\algname{EF21-Prox} &
		\begin{tabular}[c]{@{}l@{}} $R^t = \sqnorm{\xtpo-\xt}$, $\Phi(x)  = f(x) + r(x)$, $ \delta^t = \Phi(x^t) - \Phi^{inf}$, \\ 
			$\cG_{\g}(x) = \fr{1}{\g} \rb{x - \opn{prox}_{\g r}(x -\g \nabla f(x)) } $ \end{tabular}\\
		\hline 
	\end{tabular}
\end{table}

\begin{table*}
	\centering
	\caption{  Description of the methods developed and analyzed in the paper. For the ease of comparison, we also provide a description of \algname{EF21}. In all methods only compressed vectors $c_i^t$ are transmitted from workers to the master and the master broadcasts non-compressed iterates $x^{t+1}$ (except \algname{EF21-BC}, where the master broadcasts compressed vector $b^{t+1}$). Initialization of $g_i^0$, $i = 1, \dots, n$ can be arbitrary (possibly randomized). One possible choice is $g_i^0 = \cC(\nabla f_i(x^0))$. The pseudocodes for each method are given in the appendix.}
	\label{tab:methods}    
	\begin{threeparttable}
		\begin{tabular}{|c|c|c  c|}
			\hline
			Method & \algname{EF21-} &  $c_i^t$ & Comment\\ 
			\hline\hline
			\multirow{11}{2.5cm}{\centering $x^{t+1} = x^t - \gamma g^t$,\\ $g^t = \frac{1}{n}\sum\limits_{i=1}^n g_i^t$,\\ $g_i^{t+1} = g_i^{t} + c_i^t$}& \makecell{n/a \\ Alg.~\ref{alg:EF21}}  &	$\cC(\nabla f_i(x^{t+1}) - g_i^t)$ & \\
			\cline{2-4}
			&\makecell{\algname{SGD} \\ Alg.~\ref{alg:EF21-online}} &	$\cC(\hat g_i(x^{t+1}) - g_i^t)$& $\hat g_i(x^{t+1})$ satisfies As.~\ref{as:general_as_for_stoch_gradients}\\
			\cline{2-4}			
			&\makecell{\algname{PAGE} \\ Alg.~\ref{alg:EF21-PAGE} } & 	$\cC(v_i^{t+1} - g_i^t)$& \makecell{$b_i^t \sim \operatorname{Be}(p)$,\\ $v_i^{t+1} = \nabla f_i(x^{t+1})$, if $b_i^t  = 1$,\\ $v_i^{t+1} = v_i^t + \fr{1}{\tau_i} \sum \limits_{j\in I_i^{t}} \nabla f_{ij}(x^{t+1})$\\ $- \fr{1}{\tau_i} \sum \limits_{j\in I_i^{t}} \nabla f_{ij}(\xt)$, if $b_i^t  = 0$,\\ $I_{i}^{t}$ is a minibatch, $|I_i^{t}|=\tau_i$} \\
			\cline{2-4}			
			&\makecell{\algname{PP}\\ Alg.~\ref{alg:PP-EF21}}  & \makecell{$\cC(\nabla f_i(x^{t+1}) - g_i^t)$\\ $0$}& \makecell{if $i \in S_t$\\if $i \not\in S_t$} \\
			\hline\hline
			\makecell{$x^{t+1} = x^t - \gamma g^t$,\\ $\gtpo = \gt + b^{t+1}$,\\ $b^{t+1}= \cC_M(\wgtpo - \gt)$, \\ $\wgtpo = \frac{1}{n} \sum_{i=1}^n  \wg_i^{t+1}$,\\ $\wg_i^{t+1} = \wg_i^t + c_i^t$} &\makecell{\algname{BC} \\ Alg.~\ref{alg:EF21-BC}}  & $\cC_w(\nabla f_i(x^{t+1}) - \wg_i^t)$ & \makecell{Master broadcasts $b^{t+1}$;\\ $\cC_w$ used by workers,\\ $\cC_M$ used by master}\\
			\hline\hline
			\makecell{$x^{t+1} = x^t - \gamma v^t$,\\ $v^{t+1} = \eta v^t + g^{t+1}$,\\ $g^{t+1} = \frac{1}{n} \sum_{i=1}^n  g_i^{t+1}$, \\ $g_i^{t+1} = g_i^t + c_i^t$} & \makecell{\algname{HB} \\ Alg.~ \ref{alg:EF21_HB} } & $\cC(\nabla f_i(x^{t+1}) - g_i^t)$ & \makecell{$\eta \in [0,1)$ \\is momentum parameter } \\
			\hline\hline
			\makecell{$x^{t+1} = \operatorname{prox}_{\gamma r} \rb{x^t - \gamma g^t}$,\\ $g^{t+1} = \frac{1}{n} \sum_{i=1}^n  g_i^{t+1}$, \\ $g_i^{t+1} = g_i^t + c_i^t$} & \makecell{\algname{Prox} \\ Alg.~\ref{alg:Prox-ef21} } & $\cC(\nabla f_i(x^{t+1}) - g_i^t)$ & \makecell{For problem \eqref{eq:composite_optimization_0};\\ $\opn{prox}_{\g r}(x)$ is defined in \eqref{eq:def_prox}}\\
			\hline
		\end{tabular}
	\end{threeparttable}
\end{table*}

\newpage

\section{EF21}
For completeness, we provide here the pseudocode and the detailed convergence proof for \algname{EF21} \citep{EF21}.

\begin{algorithm}[H]
	\centering
	\caption{\algname{EF21}}\label{alg:EF21}
	\begin{algorithmic}[1]
		\STATE \textbf{Input:} starting point $x^{0} \in \R^d$;  $g_i^0\in \R^d$ for $i=1,\dots, n$ (known by nodes); $g^0 = \frac{1}{n}\sum_{i=1}^n g_i^0$ (known by master); learning rate $\gamma>0$
		\FOR{$t=0,1, 2, \dots , T-1 $}
		\STATE Master computes $x^{t+1} = x^t - \gamma g^t$ and broadcasts $x^{t+1}$ to all nodes
		\FOR{{\bf all nodes $i =1,\dots, n$ in parallel}}
		\STATE Compress $c_i^t = \cC(\nabla f_i(x^{t+1}) - g_i^t)$ and send $c_i^t $ to the master
		\STATE Update local state $g_i^{t+1} = g_i^t + c_i^t $
		\ENDFOR
		\STATE Master computes $g^{t+1} = \frac{1}{n} \sum_{i=1}^n  g_i^{t+1}$ via  $g^{t+1} = g^t + \frac{1}{n} \sum_{i=1}^n c_i^t $
		\ENDFOR
	\end{algorithmic}
\end{algorithm}

\begin{lemma}\label{lem:theta-beta} Let $\cC$ be a contractive compressor, then for all $i = 1, \dots, n$ 
	\begin{equation}\label{eq:rec_1} 
		\Exp{ G_i^{t+1} } \leq (1-\theta) \Exp{  G_i^t } + \beta  L_i^2 \Exp{ \sqnorm{\xtpo - \xt}  }, \text{  and}
	\end{equation}
	\begin{equation}\label{eq:rec_1_avg} \Exp{ G^{t+1} } \leq (1-\theta) \Exp{  G^t } + \beta  \wL^2  \Exp{ \sqnorm{\xtpo - \xt}  },
	\end{equation}
	where $\theta \eqdef 1- (1- \alpha )(1+s), \quad \beta \eqdef (1- \alpha ) \left(1+ s^{-1} \right) \quad \text{for any } s > 0$.
\end{lemma}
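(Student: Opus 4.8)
The plan is a standard three‑ingredient argument: (i) peel off the compression error using Definition~\ref{def:contractive_compressor}, (ii) split the residual into an ``old'' part and a ``drift'' part via Young's inequality, and (iii) control the drift part by $L_i$‑smoothness. Throughout I will work conditionally on the history up to and including the iterate $\xtpo$ (which is measurable with respect to that history, since $\xtpo = \xt - \gamma g^t$ and $g_i^t$ are already determined), so that the only randomness left is the internal randomness of the compressor $\cC$ applied at step $t$.

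First I would write, using $g_i^{t+1} = g_i^t + c_i^t$ with $c_i^t = \cC(\nfixtpo - g_i^t)$,
\[
G_i^{t+1} = \sqnorm{\nfixtpo - g_i^{t+1}} = \sqnorm{\cC\!\rb{\nfixtpo - g_i^t} - \rb{\nfixtpo - g_i^t}} .
\]
Applying \eqref{eq:b_compressor} with $x = \nfixtpo - g_i^t$ and taking conditional expectation over the compressor's randomness yields
\[
\Exp{G_i^{t+1}\mid \xtpo} \le (1-\alpha)\sqnorm{\nfixtpo - g_i^t}.
\]
Next I would decompose $\nfixtpo - g_i^t = \rb{\nfixt - g_i^t} + \rb{\nfixtpo - \nfixt}$ and apply Young's inequality $\sqnorm{a+b}\le (1+s)\sqnorm{a} + (1+s^{-1})\sqnorm{b}$ for arbitrary $s>0$, together with the $L_i$‑smoothness bound $\sqnorm{\nfixtpo - \nfixt}\le L_i^2\sqnorm{\xtpo-\xt}$ from Assumption~\ref{as:main}. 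This gives
\[
\sqnorm{\nfixtpo - g_i^t} \le (1+s)\,G_i^t + (1+s^{-1})\,L_i^2\,\sqnorm{\xtpo-\xt}.
\]
Combining the last two displays, recalling $\theta = 1-(1-\alpha)(1+s)$ and $\beta = (1-\alpha)(1+s^{-1})$, and taking total expectation (tower property) yields \eqref{eq:rec_1}.

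Finally, for \eqref{eq:rec_1_avg} I would simply average \eqref{eq:rec_1} over $i = 1,\dots,n$, using $G^t = \suminn G_i^t$ and the definition $\wL^2 = \suminn L_i^2$; note that $\sqnorm{\xtpo-\xt}$ does not depend on $i$, so the coefficient becomes $\beta\wL^2$ exactly. I do not expect any real obstacle here — the only point requiring care is the conditioning step: one must justify that $\nfixtpo - g_i^t$ is deterministic given the history, so that the contractive bound \eqref{eq:b_compressor} applies to it and the subsequent tower‑property step is legitimate. Everything else is routine.
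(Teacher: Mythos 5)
Your proposal is correct and follows essentially the same route as the paper's proof of Lemma~\ref{lem:theta-beta}: condition on the history (the paper's $W^t$), apply the contractive property \eqref{eq:b_compressor} to $\nabla f_i(x^{t+1}) - g_i^t$, split via Young's inequality \eqref{eq:facts:young_ineq2}, invoke $L_i$-smoothness, and average over $i$. No gaps.
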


\begin{proof}
	Define $W^t \eqdef \{g_1^t, \dots, g_n^t, x^t, x^{t+1}\}$, then 
	\begin{eqnarray}
		\Exp{  G_i^{t+1} } & = & \Exp{ \Exp{ G_i^{t+1} \;|\; W^t} }\notag \\
		& = & \Exp{  \Exp{  \sqnorm{g_i^{t+1} - \nabla f_i(x^{t+1})}  \;|\; W^t} }	\notag  \\
		&=& \Exp{  \Exp{  \sqnorm{g_i^t + \cC ( \nabla f_i(x^{t+1}) - g_i^t) - \nabla f_i(x^{t+1})}  \;|\; W^t}	} \notag \\
		&\overset{\eqref{eq:b_compressor}}{\leq} &  (1-\alpha) \Exp{ \sqnorm{\nabla f_i(x^{t+1}) - g_i^t} }\notag \\
		&\overset{(i)}{\leq} & (1-\alpha)  (1+ s) \Exp{ \sqnorm{\nabla f_i(x^{t}) - g_i^t}  } \notag \\
		&& \qquad  + (1-\alpha)  \left(1+s^{-1}\right) \Exp{ \sqnorm{\nabla f_i(x^{t+1}) - \nabla f_i(x^t)}   } 	\label{eq:beta-theta-without-smoothness}	 \\
		&\overset{(ii)}{\leq} & (1-\alpha)  (1+ s) \Exp{ \sqnorm{\nabla f_i(x^{t}) - g_i^t}  }\notag \\
		&& \qquad  + (1-\alpha)  \left(1+s^{-1}\right) L_i^2  \Exp{ \sqnorm{\xtpo - \xt}  } \notag \\
		&\overset{(iii)}{\leq} & (1-\theta)  \Exp{ \sqnorm{\nabla f_i(x^{t}) - g_i^t}  } + \beta L_i^2  \Exp{ \sqnorm{\xtpo - \xt}  } ,\notag 
	\end{eqnarray}
	where $(i)$ follows by Young's inequality~\eqref{eq:facts:young_ineq2},  $(ii)$ holds by Assumption~\ref{as:main}, and in $(iii)$ we apply the definition of $\theta$ and $\beta$. Averaging the above inequalities over $i = 1, \dots, n$, we obtain \eqref{eq:rec_1_avg}.
\end{proof}


\begin{theorem}\label{thm:main-distrib}
	Let Assumption~\ref{as:main} hold, and let the stepsize in Algorithm~\ref{alg:EF21} be set as
	\begin{equation} \label{eq:manin-nonconvex-stepsize}
		0<\gamma \leq \rb{L + \wL\sqrt{\frac{\beta}{\theta}}}^{-1}.
	\end{equation}
	Fix $T \geq 1$ and let $\hat{x}^{T}$ be chosen from the iterates $x^{0}, x^{1}, \ldots, x^{T-1}$  uniformly at random. Then
	\begin{equation} \label{eq:main-nonconvex}		
		\Exp{\sqnorm{\nabla f(\hat{x}^{T})} } \leq \frac{2\left(f(x^{0})-f^{\text {inf }}\right)}{\gamma T} + \frac{\Exp{G^0}}{\theta T} ,
	\end{equation}		
	where $\wL = \sqrt{\frac{1}{n}\sum_{i=1}^n L_i^2}$, $\theta = 1- (1- \alpha )(1+s)$,  $\beta =  (1- \alpha ) \left(1+ s^{-1} \right)$ for any $s > 0$.
\end{theorem}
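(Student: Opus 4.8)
The plan is to combine a single one-step descent inequality for the objective with the error-contraction recursion of Lemma~\ref{lem:theta-beta}, glued together into a Lyapunov function that telescopes. Throughout I write $\delta^t \eqdef f(x^t) - \finf$ and $G^t = \suminn \sqnorm{\nfixt - g_i^t}$.

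First I would establish the descent step. Since $f = \suminn f_i$ inherits $L$-smoothness from Assumption~\ref{as:main} (with $L\le\wL$, a standard consequence), the descent lemma applied along $x^{t+1} = x^t - \gamma g^t$ together with the polarization identity $\ve{\nf{x^t}}{g^t} = \tfrac12\sqnorm{\nf{x^t}} + \tfrac12\sqnorm{g^t} - \tfrac12\sqnorm{\nf{x^t} - g^t}$ gives
$$f(x^{t+1}) \le f(x^t) - \tfrac{\gamma}{2}\sqnorm{\nf{x^t}} + \tfrac{\gamma}{2}\sqnorm{\nf{x^t} - g^t} - \tfrac{\gamma}{2}\rb{1 - L\gamma}\sqnorm{g^t}.$$
Then I would bound $\sqnorm{\nf{x^t} - g^t} = \sqnorm{\suminn\rb{\nfixt - g_i^t}} \le \suminn\sqnorm{\nfixt - g_i^t} = G^t$ by Jensen's inequality, and use $\sqnorm{\xtpo - \xt} = \gamma^2\sqnorm{g^t}$ to rewrite \eqref{eq:rec_1_avg} as $\Exp{G^{t+1}} \le (1-\theta)\Exp{G^t} + \beta\wL^2\gamma^2\Exp{\sqnorm{g^t}}$.

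Next, I would introduce the Lyapunov function $\Psi^t \eqdef \delta^t + \tfrac{\gamma}{2\theta}G^t$. Taking expectations in the descent inequality and adding $\tfrac{\gamma}{2\theta}$ times the $G^t$-recursion, the algebraic identity $\tfrac{\gamma}{2\theta}(1-\theta) + \tfrac{\gamma}{2} = \tfrac{\gamma}{2\theta}$ makes the $\Exp{G^t}$ terms on the right collapse back into $\tfrac{\gamma}{2\theta}\Exp{G^t}$, leaving
$$\Exp{\Psi^{t+1}} \le \Exp{\Psi^t} - \tfrac{\gamma}{2}\Exp{\sqnorm{\nf{x^t}}} - \tfrac{\gamma}{2}\rb{1 - L\gamma - \tfrac{\beta\wL^2\gamma^2}{\theta}}\Exp{\sqnorm{g^t}}.$$

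The step I expect to be the crux is verifying that the coefficient $1 - L\gamma - \tfrac{\beta\wL^2\gamma^2}{\theta}$ is nonnegative, so the $\sqnorm{g^t}$ term can be dropped; this is exactly where the stepsize restriction \eqref{eq:manin-nonconvex-stepsize} enters. It follows because $\gamma\rb{L + \wL\sqrt{\beta/\theta}} \le 1$ implies $\wL\sqrt{\beta/\theta}\,\gamma \le 1$, hence $\tfrac{\beta\wL^2\gamma^2}{\theta} = \rb{\wL\sqrt{\beta/\theta}\,\gamma}^2 \le \wL\sqrt{\beta/\theta}\,\gamma$, and therefore $L\gamma + \tfrac{\beta\wL^2\gamma^2}{\theta} \le L\gamma + \wL\sqrt{\beta/\theta}\,\gamma \le 1$. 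After discarding that term I would telescope $\Exp{\Psi^{t+1}} \le \Exp{\Psi^t} - \tfrac{\gamma}{2}\Exp{\sqnorm{\nf{x^t}}}$ over $t = 0,\dots,T-1$, use $\Psi^T \ge 0$ (since $\delta^T\ge 0$ and $G^T\ge 0$) to get $\tfrac{\gamma}{2}\sum_{t=0}^{T-1}\Exp{\sqnorm{\nf{x^t}}} \le \delta^0 + \tfrac{\gamma}{2\theta}\Exp{G^0}$, divide by $\gamma T/2$, and finally note that the uniform random choice of $\hat x^T$ from $x^0,\dots,x^{T-1}$ gives $\Exp{\sqnorm{\nf{\hat x^T}}} = \tfrac1T\sum_{t=0}^{T-1}\Exp{\sqnorm{\nf{x^t}}}$, which is precisely \eqref{eq:main-nonconvex}. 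Apart from choosing the weight $\tfrac{\gamma}{2\theta}$ so the error terms cancel and the quadratic-in-$\gamma$ check above, the argument is routine.
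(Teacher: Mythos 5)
Your proposal is correct and follows essentially the same route as the paper's proof: the same descent inequality (your polarization-identity derivation reproduces Lemma~\ref{le:aux_smooth_lemma} since $R^t=\gamma^2\sqnorm{g^t}$), the same Jensen bound, the same Lyapunov function $\delta^t+\tfrac{\gamma}{2\theta}G^t$, and the same telescoping. Your inline verification that $L\gamma+\tfrac{\beta\wL^2\gamma^2}{\theta}\le 1$ is exactly the content of Lemma~\ref{le:stepsize_page_fact}, which the paper cites instead of proving on the spot.
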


\begin{proof}
	According to our notation, for Algorithm~\ref{alg:EF21} $R^t = \sqnorm{\xtpo - \xt}$. By Lemma~\ref{lem:theta-beta}, we have
	
	\begin{eqnarray}\label{eq:main_recursion_distrib}
		\Exp{G^{t+1}} &\leq& \rb{1 - \theta}\Exp{G^t}+ \beta \wL^2 \Exp{ R^t} .
	\end{eqnarray}
	Next, using Lemma~\ref{le:aux_smooth_lemma} and Jensen's inequality \eqref{eq:facts:young_ineq3_avg}, we obtain the bound 
	\begin{eqnarray}
		f(x^{t+1}) &\leq & f(x^{t})-\frac{\g}{2}\sqnorm{\nabla f(x^{t})}-\left(\frac{1}{2 \g}-\frac{L}{2}\right) R^t +\frac{\g}{2}\sqnorm{\suminn \rb{g_i^t -\nfixt}}\notag \\
		&\leq & f(x^{t})-\frac{\g}{2}\sqnorm{\nabla f(x^{t})}-\left(\frac{1}{2 \g}-\frac{L}{2}\right) R^t +\frac{\g}{2} \suminn \sqnorm{ g_i^t-\nfixt }\notag \\
		&{=}&
		f(x^{t})-\frac{\g}{2}\sqnorm{\nabla f(x^{t})}-\left(\frac{1}{2 \g}-\frac{L}{2}\right) R^t +\frac{\g}{2} G^t. \label{eq:aux_smooth_lemma_distrib}
	\end{eqnarray}
	Subtracting $f^{\text {inf }}$ from both sides of the above inequality, taking expectation and using the notation $\delta^t = f(x^{t})-\finf$, we get
	\begin{eqnarray}\label{eq:func_diff_distrib}
		\Exp{\delta^{t+1}} &\leq& \Exp{\delta^t}-\frac{\gamma}{2} \Exp{\sqnorm{\nabla f(x^{t})}}  - \left(\frac{1}{2 \gamma}-\frac{L}{2}\right) \Exp{R^t}+ \frac{\gamma}{2}\Exp{G^t}.
	\end{eqnarray}
	Then by adding \eqref{eq:func_diff_distrib} with a $\frac{\gamma}{2 \theta}$ multiple of \eqref{eq:main_recursion_distrib} we obtain
	\begin{eqnarray*}
		\Exp{ \delta^{t+1}}+\frac{\gamma}{2 \theta} \Exp{G^{t+1} } &\leq& \Exp{\delta^{t}}-\frac{\gamma}{2} \Exp{\sqnorm{\nabla f(x^{t})} } - \left(\frac{1}{2 \gamma}-\frac{L}{2}\right) \Exp{ R^{t} }+\frac{\gamma}{2} \Exp{G^{t}} \\
		&& \qquad +\frac{\gamma}{2 \theta}\rb{ \beta \wL^2 \Exp{R^t} + (1 - \theta) \Exp{G^{t}} } \\
		&=&\Exp{ \delta^{t} } +\frac{\gamma}{2\theta} \Exp{G^{t}}-\frac{\gamma}{2} \Exp{ \sqnorm{\nabla f(x^{t})} } \\
		&& \qquad - \rb{\frac{1}{2\g} -\frac{L}{2}  - \frac{\g}{2\theta}\beta \wL^2  } \Exp{R^{t}} \\
		& \leq& \Exp{ \delta^{t}}+\frac{\gamma}{2\theta} \Exp{ G^{t} } -\frac{\gamma}{2} \Exp{\sqnorm{\nabla f(x^{t})}}.
	\end{eqnarray*}
	The last inequality follows from the bound $\g ^2\frac{\beta \wL^2}{\theta} + L\g \leq 1,$ which holds because of Lemma \ref{le:stepsize_page_fact} and our assumption on the stepsize.
	By summing up inequalities for $t =0, \ldots, T-1,$ and rearranging we get \eqref{eq:main-nonconvex}, since $\hat{x}^{T}$ is chosen from $x^{0}, x^{1}, \ldots, x^{T-1}$ uniformly at random.			
\end{proof}

\begin{corollary}\label{cor:ef21}
	Let assumptions of Theorem~\ref{thm:main-distrib} hold, 
	\begin{eqnarray*}
		g_i^0 &=& \nabla f_i (x^0), \qquad i = 1,\ldots, n, \\
		\g &=&  \rb{L   + \wL \sqrt{\nfr{\beta}{\theta} }}^{-1} .
	\end{eqnarray*}
	Then, after $T$ iterations/communication rounds of \algname{EF21} we have $\Exp{\sqnorm{\nabla f(\hat{x}^{T})} } \leq \eps^2$. It requires 
	\begin{eqnarray}\notag
		T = \# \text{grad} =  \cO\rb{    \fr{   \wL \delta^0}{ \alpha \varepsilon^2} } 
	\end{eqnarray}
	iterations/communications rounds/gradint computations at each node,  where $\wL = \sqrt{\frac{1}{n}\sum_{i=1}^n L_i^2}$,  $\delta^0 = f(x^0) - f^{inf}$.
\end{corollary}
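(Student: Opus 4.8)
The plan is to specialize Theorem~\ref{thm:main-distrib} to the parameter choices in the statement and then read off the iteration complexity. First I would use the initialization $g_i^0 = \nabla f_i(x^0)$: it gives $G_i^0 = \sqnorm{\nabla f_i(x^0) - g_i^0} = 0$ for every $i$, hence $G^0 = \suminn G_i^0 = 0$, so the second term in \eqref{eq:main-nonconvex} vanishes. Since the stepsize is taken at its maximal admissible value $\gamma = \rb{L + \wL\sqrt{\beta/\theta}}^{-1}$, the bound of Theorem~\ref{thm:main-distrib} collapses to
\begin{equation*}
\Exp{\sqnorm{\nabla f(\hat{x}^{T})}} \;\le\; \frac{2\delta^0}{\gamma T} \;=\; \frac{2\delta^0\rb{L + \wL\sqrt{\beta/\theta}}}{T},
\end{equation*}
so it remains to choose $T$ large enough that the right-hand side is at most $\eps^2$.

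The only place where genuine (though still elementary) work is needed is bounding $L + \wL\sqrt{\beta/\theta}$ by a quantity of order $\wL/\alpha$ via a good choice of the free parameter $s>0$. I would pick $s$ with $1+s = (1-\alpha)^{-1/2}$, so that $(1-\alpha)(1+s) = \sqrt{1-\alpha}$ and hence $\theta = 1 - \sqrt{1-\alpha}$; a one-line computation then gives $\beta = (1-\alpha)/(1-\sqrt{1-\alpha})$ and $\sqrt{\beta/\theta} = \sqrt{1-\alpha}/(1-\sqrt{1-\alpha})$. Using the identity $1 - \sqrt{1-\alpha} = \alpha/(1+\sqrt{1-\alpha})$, valid for $\alpha\in(0,1]$, I get $\theta \ge \alpha/2$ and $\sqrt{\beta/\theta} \le (1-\sqrt{1-\alpha})^{-1} \le 2/\alpha$. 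Combining this with $L \le \suminn L_i \le \wL$ (triangle inequality and Cauchy--Schwarz/Jensen) and $\alpha \le 1$ yields $L + \wL\sqrt{\beta/\theta} \le \wL + 2\wL/\alpha \le 3\wL/\alpha$, i.e.\ $\gamma \ge \alpha/(3\wL)$.

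Plugging this into the displayed inequality shows that $T = \lceil 6\wL\delta^0/(\alpha\eps^2)\rceil = \cO\rb{\wL\delta^0/(\alpha\eps^2)}$ communication rounds suffice to guarantee $\Exp{\sqnorm{\nabla f(\hat{x}^{T})}} \le \eps^2$. Since in \algname{EF21} each node computes exactly one full gradient $\nabla f_i(x^{t+1})$ per round (line~5 of Algorithm~\ref{alg:EF21}), the number of gradient computations per node equals $T$, which gives the claimed complexity. The main (and only) obstacle is the algebraic step: selecting $s$ and establishing the two estimates $\theta \ge \alpha/2$ and $\sqrt{\beta/\theta} \le 2/\alpha$; everything else is a direct substitution into Theorem~\ref{thm:main-distrib}.
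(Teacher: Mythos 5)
Your proposal is correct and follows essentially the same route as the paper's proof: zero out $G^0$ via the initialization, substitute the stepsize into \eqref{eq:main-nonconvex}, and bound $\sqrt{\beta/\theta}\le 2/\alpha$ using the choice $s = (1-\alpha)^{-1/2}-1$, together with $L\le\wL$ and $\alpha\le 1$, arriving at the same constant $6\wL\delta^0/(\alpha\eps^2)$. The only cosmetic difference is that you re-derive the estimate on $\sqrt{\beta/\theta}$ inline, whereas the paper invokes it as Lemma~\ref{le:optimal_t-Peter} (which also separately handles the degenerate case $\alpha=1$, where your choice of $s$ is undefined but the claim is trivial).
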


\begin{proof}
	Since $g_i^0 = \nabla f_i(x^0) $, $i = 1, \dots, n$ , we have $G^0 = 0$ and by Theorem~\ref{thm:main-distrib}
	
	\begin{eqnarray*}
		\# \text{grad} &=& T \overset{(i)}{\leq }  \fr{  2\delta^0}{\g \varepsilon^2} \overset{(ii)}{\leq }  \fr{  2\delta^0}{ \varepsilon^2} \rb{L + \wL\sqrt{\frac{\beta}{\theta}}}
		\overset{(iii)}{\leq }  \fr{  2\delta^0}{ \varepsilon^2} \rb{L + \wL \rb{\fr{2}{\alpha} - 1}} \\		
		&\leq & \fr{  2\delta^0}{ \varepsilon^2} \rb{L +  \fr{2\wL}{\alpha}} \overset{(iv)}{\leq }   \fr{  2\delta^0}{ \varepsilon^2} \rb{\fr{\wL}{\alpha} + \fr{2\wL  }{\alpha} } =  \fr{  6 \wL \delta^0}{ \alpha \varepsilon^2},
	\end{eqnarray*}
	where in $(i)$ is due to the rate \eqref{eq:main-nonconvex} given by Theorem~\ref{thm:main-distrib}. In $(ii)$ we plug in the stepsize, in $(iii)$ we use Lemma~\ref{le:optimal_t-Peter}, and $(iv)$ follows by the  inequalities $\alpha \leq 1$, and $L \leq \wL$.
\end{proof}

\newpage

\section{Stochastic Gradients}\label{sec:online}

In this section, we study the extension of \algname{EF21} to the case when stochastic gradients are used instead of full gradients. {\color{myblue} The main idea of the proof is to design an analogous recursion as in Lemma~\ref{lem:theta-beta} for the EF21 error term
	$$
	G_i^{t+1} = \sqnorm{g_i^{t+1} - \nfixtpo}, 
	$$
	where
	$$
	 g_i^{t+1} = g_i^t + \cC(\hgitpo - g_i^t),  \qquad \hgitpo  = \fr{1}{\tau} \sum_{j=1}^{\tau}\nabla f_{\xi_{i j}^{t}}(\xtpo) .
	$$ 
	However, due to additional noise from sampling stochastic gradients, extra error terms occur. The goal of the next lemma is to efficiently control such error terms uing Young's inequality several times and applying Assumption~\ref{as:general_as_for_stoch_gradients} on stochastic gradients.
	
	As in the previous section, we use notations $G^{t} = \suminn G_i^t$,  $G_i^{t} = \sqnorm{\nfixt - g_i^{t}}$.
	}


\begin{lemma}\label{le:ef21_sgd_G_t_bound}
	Let Assumptions~\ref{as:main} and \ref{as:general_as_for_stoch_gradients} hold. Then for all $t \geq 0$ and all constants $\rho,\nu > 0$ \algname{EF21-SGD} satisfies
	\begin{eqnarray}
		\Exp{G^{t+1}} &\leq & (1 - \hat{\theta})\Exp{G^t }  + \hat{\beta}_1 \wL^2 \Exp{\sqnorm{\xtpo -\xt  } }\notag\\
		&&\qquad  + \widetilde{A}\hat\beta_2\Exp{f(x^{t+1}) - f^{\inf}} + \widetilde{C}\hat\beta_2 , \label{eq:ef21_sgd_G_t_bound}
	\end{eqnarray}
	where $\hat{\theta} \eqdef 1 - \rb{1-\alpha} (1+\rho) (1+\nu)$, $\hat{\beta}_1 \eqdef 2\rb{1-\alpha} (1+\rho) \rb{1+\fr{1}{\nu}}$, $\hat{\beta}_2 \eqdef 2 \rb{1-\alpha} (1+\rho) \rb{1+\fr{1}{\nu}} + \rb{1+\fr{1}{\rho}}$, $\widetilde{A} = \max_{i=1,\ldots,n}\frac{2(A_i+L_i(B_i-1))}{\tau_i}$, $\widetilde{C} = \frac{1}{n}\sum\limits_{i=1}^n\left(\frac{2(A_i+L_i(B_i-1))}{\tau_i}\left(f^{\inf} - f_i^{\inf}\right) + \frac{C_i}{\tau_i}\right)$.
\end{lemma}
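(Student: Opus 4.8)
The plan is to follow the template of Lemma~\ref{lem:theta-beta}, peeling off the stochastic-gradient error with one extra Young's inequality. Fix $i\in\{1,\dots,n\}$ and $t$, and condition on $W^t\eqdef\{g_1^t,\dots,g_n^t,x^t,x^{t+1}\}$; note $\xtpo=x^t-\gamma g^t$, $R^t=\sqnorm{\xtpo-\xt}$ and $G_i^t=\sqnorm{\nfixt-\git}$ are all $W^t$-measurable. Introduce $a_i^t\eqdef\hgitpo-\git$, so the update gives $\gitpo=\git+\cC(a_i^t)$ and hence the exact identity
\begin{equation*}
\gitpo-\nfixtpo=\big(\cC(a_i^t)-a_i^t\big)+\big(\hgitpo-\nfixtpo\big).
\end{equation*}
First I would apply Young's inequality with a free parameter $\rho>0$ to this two-term sum, then take expectation over the compressor (conditioning additionally on $\hgitpo$) and invoke the contractive property~\eqref{eq:b_compressor} on $\cC(a_i^t)-a_i^t$; this yields $\Exp{\sqnorm{\gitpo-\nfixtpo}\mid W^t}\le(1+\rho)(1-\al)\Exp{\sqnorm{a_i^t}\mid W^t}+(1+\rho^{-1})\Exp{\sqnorm{\hgitpo-\nfixtpo}\mid W^t}$.

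Next I would expand $a_i^t$ as the three-term sum $(\hgitpo-\nfixtpo)+(\nfixtpo-\nfixt)+(\nfixt-\git)$, group the last piece against the first two via Young's inequality with a second free parameter $\nu>0$, then use $\sqnorm{u+v}\le 2\sqnorm{u}+2\sqnorm{v}$ on the remaining pair, and bound $\sqnorm{\nfixtpo-\nfixt}\le L_i^2R^t$ by $L_i$-smoothness (Assumption~\ref{as:main}). This gives $\sqnorm{a_i^t}\le(1+\nu)G_i^t+2(1+\nu^{-1})L_i^2R^t+2(1+\nu^{-1})\sqnorm{\hgitpo-\nfixtpo}$. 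Substituting back and collecting terms produces, conditionally on $W^t$, a bound $(1-\hat\theta)G_i^t+\hat\beta_1 L_i^2R^t+\hat\beta_2\Exp{\sqnorm{\hgitpo-\nfixtpo}\mid W^t}$ with exactly the claimed $\hat\theta=1-(1-\al)(1+\rho)(1+\nu)$, $\hat\beta_1=2(1-\al)(1+\rho)(1+\nu^{-1})$, $\hat\beta_2=2(1-\al)(1+\rho)(1+\nu^{-1})+(1+\rho^{-1})$. The asymmetry --- no factor $2$ on $G_i^t$, but a factor $2$ on $R^t$ and on the noise term --- is precisely what the three-way split of $a_i^t$ delivers (it replaces the single ``$x^{t+1}\!\to\!x^t$'' bridge of Lemma~\ref{lem:theta-beta} by a bridge routed through $\nfixtpo$).

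It then remains to estimate the mini-batch error $\Exp{\sqnorm{\hgitpo-\nfixtpo}\mid W^t}$. Using unbiasedness and independence of the $\tau_i$ samples, the variance of the average equals $\tfrac1{\tau_i}$ times that of a single sample, so $\Exp{\sqnorm{\hgitpo-\nfixtpo}\mid W^t}=\tfrac1{\tau_i}\big(\Exp{\sqnorm{\nabla f_{\xi}(\xtpo)}\mid W^t}-\sqnorm{\nfixtpo}\big)$; applying Assumption~\ref{as:general_as_for_stoch_gradients} (via~\eqref{eq:general_second_mom_upp_bound}) to the first term and, since $B_i\ge 1$, absorbing the residual $(B_i-1)\sqnorm{\nfixtpo}$ through $\sqnorm{\nfixtpo}\le 2L_i(f_i(\xtpo)-f_i^{\inf})$ (smoothness plus lower boundedness) bounds this by $\tfrac{2(A_i+L_i(B_i-1))}{\tau_i}(f_i(\xtpo)-f_i^{\inf})+\tfrac{C_i}{\tau_i}$. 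Finally I would average the per-node inequality over $i$, using $\tfrac1n\sum_iL_i^2=\wL^2$ and $\tfrac1n\sum_iG_i^t=G^t$, bound each (nonnegative) coefficient $\tfrac{2(A_i+L_i(B_i-1))}{\tau_i}$ by $\widetilde{A}$ against the nonnegative quantities $f_i(\xtpo)-f_i^{\inf}$, write $\tfrac1n\sum_i(f_i(\xtpo)-f_i^{\inf})=(f(\xtpo)-f^{\inf})+\tfrac1n\sum_i(f^{\inf}-f_i^{\inf})$, and collect the $f^{\inf}-f_i^{\inf}$ and $C_i/\tau_i$ contributions into $\widetilde{C}$; taking total expectations via the tower rule gives~\eqref{eq:ef21_sgd_G_t_bound}. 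I expect the stochastic-gradient estimate itself to be routine; the main care is in the bookkeeping of the auxiliary constants and in choosing the three-term split of $a_i^t$ so that the $G_i^t$-coefficient is $(1-\al)(1+\rho)(1+\nu)$ rather than twice that.
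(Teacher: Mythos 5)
Your proposal is correct and follows essentially the same route as the paper's proof: the same two-stage Young decomposition (first peeling off the compression error with parameter $\rho$ and the contraction property, then the three-way split of $\hgitpo-\git$ through $\nfixtpo$ and $\nfixt$ with parameter $\nu$), the same mini-batch variance decomposition combined with \eqref{eq:general_second_mom_upp_bound} and the bound $\sqnorm{\nabla f_i(x^{t+1})}\le 2L_i(f_i(x^{t+1})-f_i^{\inf})$, and the same final averaging into $\widetilde{A}$ and $\widetilde{C}$. No gaps beyond the bookkeeping conventions already present in the paper's own argument.
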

\begin{proof}
 {\color{myblue}Applying Young’s inequality with parameter $\rho > 0$}
	\begin{eqnarray}
		\Exp{G_i^{t+1}} & = &\Exp{\sqnorm{g_i^{t+1} - \nfixtpo} } 
		\leq (1+\rho) \Exp{\sqnorm{\cC\rb{\hgitpo - g_i^{t}}  - \rb{\hgitpo - g_i^{t}} } } \notag \\
		&& \qquad + \rb{1+\fr{1}{\rho}} \Exp{\sqnorm{ \hgitpo -  \nfixtpo } } \notag \\
		&\leq&\rb{1-\alpha} (1+\rho) \Exp{\sqnorm{g_i^{t} - \hgitpo  } }  + \rb{1+\fr{1}{\rho}} \Exp{\sqnorm{ \hgitpo - \nfixtpo } } . \notag 
	\end{eqnarray}
	 {\color{myblue} Further applying Young’s inequality with parameters $\nu > 0$ and $s = 2$}
	\begin{eqnarray}
			\Exp{G_i^{t+1}} &\leq& \rb{1-\alpha} (1+\rho) (1+\nu)\Exp{\sqnorm{g_i^{t} -\nabla f_{i}(\xt)  } } \notag \\
		&& \qquad + 2\rb{1-\alpha} (1+\rho) \rb{1+\fr{1}{\nu}} \Exp{\sqnorm{\nfixtpo - \hgitpo  } } \notag \\
		&& \qquad + 2\rb{1-\alpha} (1+\rho) \rb{1+\fr{1}{\nu}} \Exp{\sqnorm{\nfixtpo -\nfixt  } } \notag \\
		&& \qquad + \rb{1+\fr{1}{\rho}} \Exp{\sqnorm{ \hgitpo - \nfixtpo } } \notag \\
		&\leq& (1 - \hat{\theta})\Exp{G_i^t }  + \hat{\beta}_1 L_i^2 \Exp{\sqnorm{\xtpo -\xt  } } + \hat{\beta}_2 \Exp{\sqnorm{ \hgitpo - \nfixtpo } }, \notag 
	\end{eqnarray}
	where we introduced $\hat{\theta} \eqdef 1 - \rb{1-\alpha} (1+\rho) (1+\nu)$, $\hat{\beta}_1 \eqdef 2\rb{1-\alpha} (1+\rho) \rb{1+\fr{1}{\nu}}$, $\hat{\beta}_2 \eqdef 2 \rb{1-\alpha} (1+\rho) \rb{1+\fr{1}{\nu}} + \rb{1+\fr{1}{\rho}}$. Next we use independence of $\nabla f_{\xi_{i j}^{t}}(\xt)$, variance decomposition, and \eqref{eq:general_second_mom_upp_bound} to estimate the last term:
	\begin{eqnarray*}
		\Exp{G_i^{t+1}} &\leq& (1 - \hat{\theta})\Exp{G_i^t }  + \hat{\beta}_1 L_i^2 \Exp{\sqnorm{\xtpo -\xt  } } \\
		&& \qquad + \frac{\hat{\beta}_2 }{\tau_i^2}\sum_{j=1}^{\tau_i}\Exp{\sqnorm{ \nabla f_{\xi_{i j}^{t}}(x^{t+1}) - \nfixtpo } }\\
		&=& (1 - \hat{\theta})\Exp{G_i^t }  + \hat{\beta}_1 L_i^2 \Exp{\sqnorm{\xtpo -\xt  } } \\
		&& \qquad + \frac{\hat{\beta}_2 }{\tau_i^2}\sum_{j=1}^{\tau_i}\rb{\Exp{\sqnorm{ \nabla f_{\xi_{i j}^{t}}(x^{t+1}) } } - \Exp{\sqnorm{\nfixtpo } }}\\
		&\overset{\eqref{eq:general_second_mom_upp_bound}}{\leq}& (1 - \hat{\theta})\Exp{G_i^t }  + \hat{\beta}_1 L_i^2 \Exp{\sqnorm{\xtpo -\xt  } } \\
		&& \qquad + \frac{2A_i\hat{\beta}_2 }{\tau_i}\Exp{f_i(x^{t+1}) - f_i^{\inf}} + \frac{\hat{\beta}_2 (B_i - 1)}{\tau_i}\Exp{\sqnorm{\nabla f_i(x^{t+1})}} + \frac{C_i\hat{\beta}_2 }{\tau_i}\\
		&\leq& (1 - \hat{\theta})\Exp{G_i^t }  + \hat{\beta}_1 L_i^2 \Exp{\sqnorm{\xtpo -\xt  } } \\
		&& \qquad + \frac{2(A_i+L_i(B_i-1))\hat{\beta}_2 }{\tau_i}\Exp{f_i(x^{t+1}) - f_i^{\inf}} + \frac{C_i\hat{\beta}_2 }{\tau_i} .
	\end{eqnarray*}
	Averaging the obtained inequality for $i=1,\ldots,n$ we get
	\begin{eqnarray*}
		\Exp{G^{t+1}} &\leq& (1 - \hat{\theta})\Exp{G^t }  + \hat{\beta}_1 \wL^2 \Exp{\sqnorm{\xtpo -\xt  } } \\
		&& \qquad + \frac{1}{n}\sum\limits_{i=1}^n\left(\frac{2(A_i+L_i(B_i-1))\hat{\beta}_2 }{\tau_i}\Exp{f_i(x^{t+1}) - f_i^{\inf}} + \frac{C_i\hat{\beta}_2 }{\tau_i}\right) \\
		&\leq& (1 - \hat{\theta})\Exp{G^t }  + \hat{\beta}_1 \wL^2 \Exp{\sqnorm{\xtpo -\xt  } } \\
		&& \qquad + \frac{1}{n}\sum\limits_{i=1}^n\left(\frac{2(A_i+L_i(B_i-1))\hat{\beta}_2 }{\tau_i}\Exp{f_i(x^{t+1}) - f^{\inf}}\right)\\
		&&\qquad + \frac{\hat\beta_2}{n}\sum\limits_{i=1}^n\left(\frac{2(A_i+L_i(B_i-1))}{\tau_i}\left(f^{\inf} - f_i^{\inf}\right) + \frac{C_i}{\tau_i}\right)\\
		&\leq& (1 - \hat{\theta})\Exp{G^t }  + \hat{\beta}_1 \wL^2 \Exp{\sqnorm{\xtpo -\xt  } }  + \widetilde{A}\hat\beta_2\Exp{f(x^{t+1}) - f^{\inf}} + \widetilde{C}\hat\beta_2 .
	\end{eqnarray*}
\end{proof}


\begin{theorem}\label{thm:EF21_SGD_conv_general_assumption}
	Let Assumptions~\ref{as:main} and \ref{as:general_as_for_stoch_gradients} hold, and let the stepsize in Algorithm~\ref{alg:EF21-online} be set as 
	\begin{equation} \label{eq:stepsize_EF21_SGD}
		0<\gamma \leq \left(L + \wL\sqrt{\frac{\hat \beta_1}{\hat\theta}}\right)^{-1},
	\end{equation}
	where $\wL = \sqrt{\frac{1}{n}\sum_{i=1}^n L_i^2}$, $\hat{\theta} \eqdef 1 - \rb{1-\alpha} (1+\rho) (1+\nu)$, $\hat{\beta}_1 \eqdef 2\rb{1-\alpha} (1+\rho) \rb{1+\fr{1}{\nu}}$, and $\rho,\nu > 0$ are some positive numbers. Assume that batchsizes $\tau_1,\ldots,\tau_i$ are such that $\frac{\gamma\widetilde{A}\hat\beta_2}{2\hat\theta} < 1$, where $\widetilde{A} = \max_{i=1,\ldots,n}\frac{2(A_i+L_i(B_i-1))}{\tau_i}$ and $\hat{\beta}_2 \eqdef 2 \rb{1-\alpha} (1+\rho) \rb{1+\fr{1}{\nu}} + \rb{1+\fr{1}{\rho}}$. Fix $T \geq 1$ and let $\hat{x}^{T}$ be chosen from the iterates $x^{0}, x^{1}, \ldots, x^{T-1}$  with following probabilities:
	\begin{equation*}
		\Prob\left\{\hat{x}^{T} = x^t\right\} = \frac{w_t}{W_T},\quad w_t = \left(1 - \frac{\gamma\widetilde{A}\hat\beta_2}{2\hat\theta}\right)^t,\quad W_T = \sum\limits_{t=0}^{T}w_t.
	\end{equation*}	
	Then
	\begin{equation} \label{eq:main_EF21_SGD}		
		\Exp{\sqnorm{\nabla f(\hat{x}^{T})} } \leq \frac{2(f(x^0) - f^{\inf})}{\gamma T \left(1 - \frac{\gamma\widetilde{A}\hat\beta_2}{2\hat\theta}\right)^T} + \frac{\Exp{G^0}}{\hat{\theta} T \left(1 - \frac{\gamma\widetilde{A}\hat\beta_2}{2\hat\theta}\right)^T} + \frac{\widetilde{C}\hat\beta_2}{\hat\theta},
	\end{equation}
	where $\widetilde{C} = \frac{1}{n}\sum\limits_{i=1}^n\left(\frac{2(A_i+L_i(B_i-1))}{\tau_i}\left(f^{\inf} - f_i^{\inf}\right) + \frac{C_i}{\tau_i}\right)$.
\end{theorem}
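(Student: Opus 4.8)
The plan is to run the same Lyapunov argument that underlies Theorem~\ref{thm:main-distrib}, with one modification forced by the extra term $\widetilde{A}\hat\beta_2\Exp{f(x^{t+1})-f^{\inf}}$ that appears in the descent recursion \eqref{eq:ef21_sgd_G_t_bound} of Lemma~\ref{le:ef21_sgd_G_t_bound}. First I would reproduce the steps \eqref{eq:aux_smooth_lemma_distrib}--\eqref{eq:func_diff_distrib}: apply the descent lemma (Lemma~\ref{le:aux_smooth_lemma}) and Jensen's inequality to the $L$-smooth function $f$ along the update $x^{t+1}=x^t-\gamma g^t$, subtract $f^{\inf}$, and take expectation to get
$$\Exp{\delta^{t+1}} \le \Exp{\delta^t} - \frac{\gamma}{2}\Exp{\sqnorm{\nabla f(x^t)}} - \left(\frac{1}{2\gamma} - \frac{L}{2}\right)\Exp{R^t} + \frac{\gamma}{2}\Exp{G^t},$$
where $\delta^t = f(x^t) - f^{\inf}$ and $R^t = \sqnorm{x^{t+1}-x^t}$. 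This part is identical to the \algname{EF21} proof since the descent step is unchanged.

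Next I would add to this inequality a $\frac{\gamma}{2\hat\theta}$-multiple of \eqref{eq:ef21_sgd_G_t_bound}. Exactly as in Theorem~\ref{thm:main-distrib}, the $\Exp{G^t}$ contributions combine to $\frac{\gamma}{2\hat\theta}\Exp{G^t}$, and the coefficient of $\Exp{R^t}$ becomes $\frac{\gamma\hat\beta_1\wL^2}{2\hat\theta} - \left(\frac{1}{2\gamma} - \frac{L}{2}\right)$, which is nonpositive because the stepsize bound \eqref{eq:stepsize_EF21_SGD} yields $\gamma^2\hat\beta_1\wL^2/\hat\theta + \gamma L \le 1$ via Lemma~\ref{le:stepsize_page_fact}. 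The genuinely new feature is the term $\frac{\gamma\widetilde{A}\hat\beta_2}{2\hat\theta}\Exp{\delta^{t+1}}$ left over on the right-hand side. Writing $q \eqdef 1 - \frac{\gamma\widetilde{A}\hat\beta_2}{2\hat\theta}$ (which lies in $(0,1)$ precisely by the batchsize assumption $\frac{\gamma\widetilde{A}\hat\beta_2}{2\hat\theta}<1$) and $\Phi^t \eqdef \Exp{\delta^t} + \frac{\gamma}{2\hat\theta}\Exp{G^t}$, I move this term to the left and use $q\le 1$ to bound the left-hand $\Exp{G^{t+1}}$ coefficient below by $q\cdot\frac{\gamma}{2\hat\theta}$, arriving at the one-step recursion
$$q\,\Phi^{t+1} \le \Phi^t - \frac{\gamma}{2}\Exp{\sqnorm{\nabla f(x^t)}} + \frac{\gamma\widetilde{C}\hat\beta_2}{2\hat\theta}.$$

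Finally I would multiply through by $q^t$ (this is exactly the weight $w_t$ in the statement), telescope over $t=0,\dots,T-1$, discard the nonnegative term $q^T\Phi^T$, and divide by $\frac{\gamma}{2}W$ with $W=\sum_{t=0}^{T-1}q^t$; the normalized sum on the left is then $\Exp{\sqnorm{\nabla f(\hat x^T)}}$ for the stated sampling law, giving $\Exp{\sqnorm{\nabla f(\hat x^T)}} \le \frac{2\Phi^0}{\gamma W} + \frac{\widetilde{C}\hat\beta_2}{\hat\theta}$. Expanding $\Phi^0 = (f(x^0)-f^{\inf}) + \frac{\gamma}{2\hat\theta}\Exp{G^0}$ and bounding $W = \sum_{t=0}^{T-1}q^t \ge T q^{T-1} \ge T q^T$ (using $0<q\le 1$) turns this into exactly \eqref{eq:main_EF21_SGD}. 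I expect the only subtle point to be the bookkeeping around the coupling term $\Exp{\delta^{t+1}}$: it is what dictates the geometric weights $w_t=q^t$ and the slightly lossy estimate $q\Phi^{t+1}\le q\Exp{\delta^{t+1}}+\frac{\gamma}{2\hat\theta}\Exp{G^{t+1}}$; the remaining manipulations are the same Young's-inequality-and-telescoping routine as in Theorem~\ref{thm:main-distrib}.
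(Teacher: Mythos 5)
Your proposal is correct and follows essentially the same route as the paper's proof: the same Lyapunov function $\Phi^t=\Exp{\delta^t}+\frac{\gamma}{2\hat\theta}\Exp{G^t}$, the same handling of the coupling term $\frac{\gamma\widetilde{A}\hat\beta_2}{2\hat\theta}\Exp{\delta^{t+1}}$ via the lossy bound on the $\Exp{G^{t+1}}$ coefficient, the same geometric weights $w_t=q^t$, and the same final estimate $W\geq T q^{T}$. No substantive differences.
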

\begin{proof}
	We notice that inequality \eqref{eq:func_diff_distrib} holds for \algname{EF21-SGD} as well, i.e., we have
	\begin{eqnarray*}
		\Exp{\delta^{t+1}} &\leq& \Exp{\delta^t}-\frac{\gamma}{2} \Exp{\sqnorm{\nabla f(x^{t})}}  - \left(\frac{1}{2 \gamma}-\frac{L}{2}\right) \Exp{R^t}+ \frac{\gamma}{2}\Exp{G^t}.
	\end{eqnarray*}
	Summing up the above inequality with a $\frac{\gamma}{2\hat\theta}$ multiple of \eqref{eq:ef21_sgd_G_t_bound}, we derive
	\begin{eqnarray*}
		\Exp{\delta^{t+1} + \frac{\gamma}{2\hat\theta}G^{t+1}} &\leq& \Exp{\delta^t}-\frac{\gamma}{2} \Exp{\sqnorm{\nabla f(x^{t})}}  - \left(\frac{1}{2 \gamma}-\frac{L}{2}\right) \Exp{R^t}+ \frac{\gamma}{2}\Exp{G^t}\\
		&& \qquad + \frac{\gamma}{2\hat\theta}(1 - \hat{\theta})\Exp{G^t }  + \frac{\gamma}{2\hat\theta}\hat{\beta}_1 \wL^2 \Exp{R^t}\\
		&&\qquad  + \frac{\gamma}{2\hat\theta}\widetilde{A}\hat\beta_2\Exp{\delta^{t+1}} + \frac{\gamma}{2\hat\theta}\widetilde{C}\hat\beta_2\\
		&\leq& \frac{\gamma\widetilde{A}\hat\beta_2}{2\hat\theta}\Exp{\delta^{t+1}} + \Exp{\delta^{t} + \frac{\gamma}{2\hat\theta}G^{t}} -\frac{\gamma}{2} \Exp{\sqnorm{\nabla f(x^{t})}} + \frac{\gamma}{2\hat\theta}\widetilde{C}\hat\beta_2\\
		&&\qquad - \left(\frac{1}{2 \gamma}-\frac{L}{2} - \frac{\gamma\hat{\beta}_1 \wL^2}{2\hat\theta}\right)\Exp{R^t}\\
		&\overset{\eqref{eq:stepsize_EF21_SGD}}{\leq}& \frac{\gamma\widetilde{A}\hat\beta_2}{2\hat\theta}\Exp{\delta^{t+1}} + \Exp{\delta^{t} + \frac{\gamma}{2\hat\theta}G^{t}} -\frac{\gamma}{2} \Exp{\sqnorm{\nabla f(x^{t})}} + \frac{\gamma}{2\hat\theta}\widetilde{C}\hat\beta_2,
	\end{eqnarray*}
	where $\hat{\theta} \eqdef 1 - \rb{1-\alpha} (1+\rho) (1+\nu)$, $\hat{\beta}_1 \eqdef 2\rb{1-\alpha} (1+\rho) \rb{1+\fr{1}{\nu}}$, $\hat{\beta}_2 \eqdef 2 \rb{1-\alpha} (1+\rho) \rb{1+\fr{1}{\nu}} + \rb{1+\fr{1}{\rho}}$, and $\rho,\nu > 0$ are some positive numbers. Next, we rearrange the terms
	\begin{eqnarray*}
		\Exp{\sqnorm{\nabla f(x^{t})}} 
		&\leq& \frac{2}{\gamma}\left(\Exp{\delta^{t} + \frac{\gamma}{2\hat\theta}G^{t}} - \left(1 - \frac{\gamma\widetilde{A}\hat\beta_2}{2\hat\theta}\right)\Exp{\delta^{t+1} + \frac{\gamma}{2\hat\theta}\Exp{G^{t+1}}}\right) + \frac{\widetilde{C}\hat\beta_2}{\hat\theta},
	\end{eqnarray*}
	sum up the obtained inequalities for $t=0,1,\ldots,T$ with weights $\nicefrac{w_t}{W_T}$, and use the definition of $\hat x^T$
	\begin{eqnarray*}
		\Exp{\sqnorm{\nabla f(\hat x^T)}} &=& \frac{1}{W_K}\sum\limits_{t=0}^T w_t\Exp{\sqnorm{\nabla f(x^{t})}}\\
		&\leq& \frac{2}{\gamma W_T}\sum\limits_{t=0}^T\left(w_t\Exp{\delta^{t} + \frac{\gamma}{2\hat\theta}G^{t}} - w_{t+1}\Exp{\delta^{t+1} + \frac{\gamma}{2\hat\theta}\Exp{G^{t+1}}}\right)  + \frac{\widetilde{C}\hat\beta_2}{\hat\theta}\\
		&\leq& \frac{2\delta^0}{\gamma W_T} + \frac{\Exp{G^0}}{\hat\theta W_T} + \frac{\widetilde{C}\hat\beta_2}{\hat\theta}.
	\end{eqnarray*}
	Finally, we notice
$
		W_T = \sum\limits_{t=0}^{T}w_t \geq (T+1)\min\limits_{t=0,1,\ldots,T} w_t >  T\left(1 - \frac{\gamma\widetilde{A}\hat\beta_2}{2\hat\theta}\right)^T
$
	that finishes the proof.
\end{proof}

\begin{corollary}\label{cor:EF21_SGD_general}
	Let assumptions of Theorem~\ref{thm:EF21_SGD_conv_general_assumption} hold, $\rho = \nicefrac{\alpha}{2}$, $\nu = \nicefrac{\alpha}{4}$,
	\begin{eqnarray*}
		\gamma &=& \frac{1}{L + \wL\sqrt{\frac{\hat \beta_1}{\hat\theta}}},\\
		\tau_i &=& \left\lceil\max\left\{1, \frac{2T\gamma\left(A_i + L_i(B_i-1)\right)\hat\beta_2}{\hat\theta}, \frac{8\left(A_i + L_i(B_i-1)\right)\hat\beta_2}{\hat\theta \varepsilon^2}\delta_i^{\inf}, \frac{4C_i\hat\beta_2}{\hat\theta \varepsilon^2}\right\}\right\rceil,\\
		T &=& \left\lceil\max\left\{\frac{16\delta^0}{\gamma\varepsilon^2}, \frac{8\Exp{G^0}}{\hat\theta\varepsilon^2}\right\} \right\rceil,
	\end{eqnarray*}
	where $\delta_i^{\inf} = f^{\inf} - f_i^{\inf}$, $\delta^0 = f(x^0) - f^{\inf}$. Then, after $T$ iterations of \algname{EF21-SGD} we have $\Exp{\sqnorm{\nabla f(\hat x^T)}} \leq \varepsilon^2$. It requires
	\begin{equation}
		T = \cO\left(\frac{\wL\delta^0+\Exp{G^0}}{\alpha\varepsilon^2}\right) \notag
	\end{equation}
	iterations/communications rounds,
	\begin{eqnarray}
		\#\text{grad}_i &=& \tau_i T = \cO\Bigg(\frac{\wL\delta^0+\Exp{G^0}}{\alpha\varepsilon^2} + \frac{\left(\wL\delta^0+\Exp{G^0}\right)\left(\hat A_i(\delta^0 + \delta_i^{\inf}) + C_i\right)}{\alpha^3\varepsilon^4}\notag\\
		&&\qquad\qquad\qquad\qquad\qquad\qquad\qquad\qquad + \frac{(\wL\delta^0+\Exp{G^0})\hat A_i \Exp{G^0}}{\alpha^2(\alpha L + \wL)\varepsilon^4}\Bigg) \notag
	\end{eqnarray}
	stochastic oracle calls for worker $i$, and
	\begin{eqnarray}
		\overline{\#\text{grad}} &=& \frac{1}{n}\sum\limits_{i=1}^n\tau_i T \notag\\
		&=& \cO\Bigg(\frac{\wL\delta^0+\Exp{G^0}}{\alpha\varepsilon^2} + \frac{1}{n}\sum\limits_{i=1}^n\frac{\left(\wL\delta^0+\Exp{G^0}\right)\left(\hat A_i(\delta^0 + \delta_i^{\inf}) + C_i\right)}{\alpha^3\varepsilon^4}\notag\\
		&&\qquad\qquad\qquad\qquad\qquad\qquad\qquad\qquad + \frac{1}{n}\sum\limits_{i=1}^n\frac{(\wL\delta^0+\Exp{G^0})\hat A_i \Exp{G^0}}{\alpha^2(\alpha L + \wL)\varepsilon^4}\Bigg) \notag
	\end{eqnarray}
	stochastic oracle calls per worker on average, where $\hat A_i = A_i + L_i(B_i-1)$.
\end{corollary}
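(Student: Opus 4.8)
The plan is to substitute the stated choices of $\rho,\nu,\gamma,\tau_i,T$ into Theorem~\ref{thm:EF21_SGD_conv_general_assumption}, verify that its hypotheses hold, and simplify. First I would record the elementary consequences of $\rho=\nicefrac{\alpha}{2}$, $\nu=\nicefrac{\alpha}{4}$: expanding $(1-\alpha)(1+\nicefrac{\alpha}{2})(1+\nicefrac{\alpha}{4})$ yields $\nicefrac{\alpha}{4}\le\hat\theta\le\alpha$, while $1+\nicefrac{1}{\nu}=1+\nicefrac{4}{\alpha}$ and $1+\nicefrac{1}{\rho}=1+\nicefrac{2}{\alpha}$ give $\hat\beta_1=\cO(\nicefrac{1}{\alpha})$ and $\hat\beta_2=\cO(\nicefrac{1}{\alpha})$, hence $\sqrt{\nicefrac{\hat\beta_1}{\hat\theta}}=\cO(\nicefrac{1}{\alpha})$ and $\gamma^{-1}=L+\wL\sqrt{\nicefrac{\hat\beta_1}{\hat\theta}}=\cO(\nicefrac{\wL}{\alpha})$ (using $L\le\wL$ and $\alpha\le1$). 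The stepsize restriction \eqref{eq:stepsize_EF21_SGD} holds by the very definition of $\gamma$, so it remains to check the batch-size condition and to control the three quantities entering the rate \eqref{eq:main_EF21_SGD}.

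The key is that the first nontrivial entry of the maximum defining $\tau_i$ is $\nicefrac{2T\gamma\hat A_i\hat\beta_2}{\hat\theta}$ (with $\hat A_i=A_i+L_i(B_i-1)$), so that $\widetilde{A}=\max_i\nicefrac{2\hat A_i}{\tau_i}\le\nicefrac{\hat\theta}{(T\gamma\hat\beta_2)}$, whence $\nicefrac{\gamma\widetilde{A}\hat\beta_2}{(2\hat\theta)}\le\nicefrac{1}{(2T)}<1$; this verifies the hypothesis of the theorem and moreover gives $\bigl(1-\nicefrac{\gamma\widetilde{A}\hat\beta_2}{(2\hat\theta)}\bigr)^T\ge\bigl(1-\nicefrac{1}{(2T)}\bigr)^T\ge\nicefrac{1}{2}$ (the last quantity is increasing in $T$ and tends to $e^{-1/2}$). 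The other two relevant entries, $\nicefrac{8\hat A_i\hat\beta_2\delta_i^{\inf}}{(\hat\theta\varepsilon^2)}$ and $\nicefrac{4C_i\hat\beta_2}{(\hat\theta\varepsilon^2)}$, are calibrated precisely so that $\nicefrac{2\hat A_i\delta_i^{\inf}}{\tau_i}\le\nicefrac{\hat\theta\varepsilon^2}{(4\hat\beta_2)}$ and $\nicefrac{C_i}{\tau_i}\le\nicefrac{\hat\theta\varepsilon^2}{(4\hat\beta_2)}$, which after averaging over $i$ gives $\nicefrac{\widetilde{C}\hat\beta_2}{\hat\theta}\le\nicefrac{\varepsilon^2}{2}$.

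Combining these with \eqref{eq:main_EF21_SGD}, the right-hand side is bounded by $\nicefrac{4\delta^0}{(\gamma T)}+\nicefrac{2\Exp{G^0}}{(\hat\theta T)}+\nicefrac{\varepsilon^2}{2}$, and the choice $T\ge\max\bigl\{\nicefrac{16\delta^0}{(\gamma\varepsilon^2)},\nicefrac{8\Exp{G^0}}{(\hat\theta\varepsilon^2)}\bigr\}$ makes each of the first two terms at most $\nicefrac{\varepsilon^2}{4}$, so $\Exp{\sqnorm{\nabla f(\hat x^T)}}\le\varepsilon^2$, as claimed. For the complexity counts, $\gamma^{-1}=\cO(\nicefrac{\wL}{\alpha})$ and $\hat\theta^{-1}=\cO(\nicefrac{1}{\alpha})$ turn the choice of $T$ into $T=\cO\bigl(\nicefrac{(\wL\delta^0+\Exp{G^0})}{(\alpha\varepsilon^2)}\bigr)$. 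Then $\#\text{grad}_i=\tau_iT$: writing $\tau_i$ as $1$ plus its three nontrivial entries and $T\le T_1+T_2+1$ with $T_1=\nicefrac{16\delta^0}{(\gamma\varepsilon^2)}$ and $T_2=\nicefrac{8\Exp{G^0}}{(\hat\theta\varepsilon^2)}$, one expands $\tau_iT$ into a finite sum of products (the first nontrivial entry of $\tau_i$ already carries a factor $T$, so it contributes $T^2$-type terms), inserts the bounds on $\hat\theta,\hat\beta_1,\hat\beta_2,\gamma$, and checks that every resulting product is dominated by one of the three terms in the statement; averaging over $i$ then yields $\overline{\#\text{grad}}$.

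I expect this last step — the term-by-term bookkeeping for $\tau_iT$ — to be the only real work: there are many products of $\alpha$, $\wL$, $L$, $\delta^0$, $\Exp{G^0}$, $\hat A_i$, $C_i$, $\delta_i^{\inf}$ to keep straight, and one must be careful that the $T$-dependent entry $\nicefrac{2T\gamma\hat A_i\hat\beta_2}{\hat\theta}$ of $\tau_i$ (the one needed above for the lower bound on $\bigl(1-\cdot\bigr)^T$) contributes, after multiplication by $T$ and use of $\gamma^{-1}=\cO(\nicefrac{\wL}{\alpha})$, exactly the pieces proportional to $\hat A_i$ that carry no $\delta_i^{\inf}$ or $C_i$, while the $\delta_i^{\inf}$- and $C_i$-entries generate the remaining pieces; the rest is routine algebra.
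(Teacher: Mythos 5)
Your proposal is correct and follows exactly the route of the paper's own proof: substitute the stated $\rho,\nu,\gamma,\tau_i,T$ into Theorem~\ref{thm:EF21_SGD_conv_general_assumption}, use the $T$-dependent entry of $\tau_i$ to get $\gamma\widetilde{A}\hat\beta_2/(2\hat\theta)\le 1/(2T)$ and hence $\bigl(1-\gamma\widetilde{A}\hat\beta_2/(2\hat\theta)\bigr)^T\ge 1/2$, use the other entries to get $\widetilde{C}\hat\beta_2/\hat\theta\le\varepsilon^2/2$, and then make the remaining two terms of \eqref{eq:main_EF21_SGD} at most $\varepsilon^2/4$ each via the choice of $T$. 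You actually supply more detail than the paper (whose proof is a three-line sketch ending in ``after simple computation''), and you correctly use the threshold $\varepsilon^2/2$ where the paper's text has a small $\varepsilon/2$ slip; the final bookkeeping for $\tau_i T$ is left as routine algebra in both.
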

\begin{proof}
	The given choice of $\tau_i$ ensures that $\left(1 - \frac{\gamma\widetilde{A}\hat\beta_2}{2\hat\theta}\right)^T = \cO(1)$ and $\nicefrac{\widetilde{C}\hat\beta_2}{\hat\theta} \leq \nicefrac{\varepsilon}{2}$. Next, the choice of $T$ ensures that the right-hand side of \eqref{eq:main_EF21_SGD} is smaller than $\varepsilon$. Finally, after simple computation we get the expression for $\tau_i T$. 
\end{proof}

\begin{corollary}\label{cor:EF21_SGD_UBV}
	Consider the setting described in Example~\ref{ex:UBV_case}. Let assumptions of Theorem~\ref{thm:EF21_SGD_conv_general_assumption} hold, $\rho = \nicefrac{\alpha}{2}$, $\nu = \nicefrac{\alpha}{4}$,
	\begin{eqnarray*}
		\gamma = \frac{1}{L + \wL\sqrt{\frac{\hat \beta_1}{\hat\theta}}},\quad \tau_i = \left\lceil\max\left\{1, \frac{4\sigma_i^2\hat\beta_2}{\hat\theta \varepsilon^2}\right\}\right\rceil,\quad T = \left\lceil\max\left\{\frac{16\delta^0}{\gamma\varepsilon^2}, \frac{8\Exp{G^0}}{\hat\theta\varepsilon^2}\right\} \right\rceil,
	\end{eqnarray*}
	where $\delta^0 = f(x^0) - f^{\inf}$. Then, after $T$ iterations of \algname{EF21-SGD} we have $\Exp{\sqnorm{\nabla f(\hat x^T)}} \leq \varepsilon^2$. It requires
	\begin{equation}
		T = \cO\left(\frac{\wL\delta^0+\Exp{G^0}}{\alpha\varepsilon^2}\right) \notag
	\end{equation}
	iterations/communications rounds,
	\begin{eqnarray}
		\#\text{grad}_i &=& \tau_i T = \cO\left(\frac{\wL\delta^0+\Exp{G^0}}{\alpha\varepsilon^2} + \frac{\left(\wL\delta^0+\Exp{G^0}\right)\sigma_i^2}{\alpha^3\varepsilon^4} \right) \notag
	\end{eqnarray}
	stochastic oracle calls for worker $i$, and
	\begin{eqnarray}
		\overline{\#\text{grad}} &=& \frac{1}{n}\sum\limits_{i=1}^n\tau_i T = \cO\left(\frac{\wL\delta^0+\Exp{G^0}}{\alpha\varepsilon^2} + \frac{\left(\wL\delta^0+\Exp{G^0}\right)\sigma^2}{\alpha^3\varepsilon^4} \right) \notag
	\end{eqnarray}
	stochastic oracle calls per worker on average, where $\sigma^2 = \frac{1}{n}\sum_{i=1}^n\sigma_i^2$. 
\end{corollary}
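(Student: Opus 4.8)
The plan is to instantiate Theorem~\ref{thm:EF21_SGD_conv_general_assumption} in the uniformly-bounded-variance (UBV) regime of Example~\ref{ex:UBV_case}, where $A_i = 0$, $B_i = 0$, $C_i = \sigma_i^2$ for all $i$. First I would re-examine what these values do to the quantities entering the theorem. Following the derivation of Lemma~\ref{le:ef21_sgd_G_t_bound} with $A_i = B_i = 0$, the term $\frac{2A_i\hat\beta_2}{\tau_i}\Exp{f_i(x^{t+1}) - f_i^{\inf}}$ vanishes, while $\frac{\hat\beta_2(B_i-1)}{\tau_i}\Exp{\sqnorm{\nabla f_i(x^{t+1})}} = -\frac{\hat\beta_2}{\tau_i}\Exp{\sqnorm{\nabla f_i(x^{t+1})}}$ is non-positive and can be dropped; hence the recursion holds with $\widetilde A = 0$ and $\widetilde C = \frac{1}{n}\sum_{i=1}^n \frac{\sigma_i^2}{\tau_i}$. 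With $\widetilde A = 0$ the weights in Theorem~\ref{thm:EF21_SGD_conv_general_assumption} collapse to $w_t \equiv 1$, so $W_T = T+1 > T$ and the bound \eqref{eq:main_EF21_SGD} simplifies to
\[
\Exp{\sqnorm{\nabla f(\hat{x}^T)}} \le \frac{2\delta^0}{\gamma T} + \frac{\Exp{G^0}}{\hat\theta T} + \frac{\hat\beta_2}{\hat\theta}\cdot\frac{1}{n}\sum_{i=1}^n\frac{\sigma_i^2}{\tau_i}.
\]

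Second, I would pin down the constants generated by the choice $\rho = \nicefrac{\alpha}{2}$, $\nu = \nicefrac{\alpha}{4}$. Expanding the products and using $0 < \alpha \le 1$ gives $\hat\theta = \nicefrac{\alpha}{4} + \nicefrac{5\alpha^2}{8} + \nicefrac{\alpha^3}{8}$, hence the two-sided bound $\nicefrac{\alpha}{4} \le \hat\theta \le \alpha$, together with $\hat\beta_1 \le \nicefrac{15}{\alpha}$ and $\hat\beta_2 = \hat\beta_1 + (1 + \nicefrac{2}{\alpha}) \le \nicefrac{18}{\alpha}$. Consequently $\sqrt{\hat\beta_1/\hat\theta} \le \nicefrac{8}{\alpha}$, so that $\nicefrac{1}{\gamma} = L + \wL\sqrt{\hat\beta_1/\hat\theta} \le \nicefrac{9\wL}{\alpha}$ (using $L \le \wL$), i.e. $\gamma = \Omega(\nicefrac{\alpha}{\wL})$, and $\hat\beta_2/\hat\theta \le \nicefrac{72}{\alpha^2}$.

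Third, I would check that the prescribed $\tau_i$ and $T$ make each term small. The bound $\tau_i \ge \frac{4\sigma_i^2\hat\beta_2}{\hat\theta\varepsilon^2}$ forces $\frac{\hat\beta_2}{\hat\theta}\cdot\frac{\sigma_i^2}{\tau_i} \le \frac{\varepsilon^2}{4}$, so the variance term is at most $\nicefrac{\varepsilon^2}{4}$; meanwhile $T \ge \max\cb{\frac{16\delta^0}{\gamma\varepsilon^2},\ \frac{8\Exp{G^0}}{\hat\theta\varepsilon^2}}$ makes the first two terms at most $\nicefrac{\varepsilon^2}{8}$ each, whence $\Exp{\sqnorm{\nabla f(\hat{x}^T)}} \le \nicefrac{\varepsilon^2}{2} \le \varepsilon^2$. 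Substituting $\nicefrac{1}{\gamma} = \cO(\nicefrac{\wL}{\alpha})$ and $\nicefrac{1}{\hat\theta} = \cO(\nicefrac{1}{\alpha})$ turns the choice of $T$ into $T = \cO\rb{\frac{\wL\delta^0 + \Exp{G^0}}{\alpha\varepsilon^2}}$, while $\hat\beta_2/\hat\theta = \cO(\nicefrac{1}{\alpha^2})$ turns the choice of $\tau_i$ into $\tau_i = \cO\rb{1 + \frac{\sigma_i^2}{\alpha^2\varepsilon^2}}$. Multiplying, $\#\text{grad}_i = \tau_i T = \cO\rb{\frac{\wL\delta^0 + \Exp{G^0}}{\alpha\varepsilon^2} + \frac{(\wL\delta^0 + \Exp{G^0})\sigma_i^2}{\alpha^3\varepsilon^4}}$, and averaging over $i$ replaces $\sigma_i^2$ by $\sigma^2 = \frac{1}{n}\sum_{i=1}^n\sigma_i^2$, yielding the stated $\overline{\#\text{grad}}$.

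The only real work here is the constant-chasing in the second step: obtaining clean two-sided bounds on $\hat\theta,\hat\beta_1,\hat\beta_2$ after the substitution $\rho = \nicefrac{\alpha}{2}$, $\nu = \nicefrac{\alpha}{4}$, and propagating them so that $\gamma = \Omega(\nicefrac{\alpha}{\wL})$ and $\hat\beta_2/\hat\theta = \cO(\nicefrac{1}{\alpha^2})$, since these are what pin the $\nicefrac{1}{\alpha^3}$ (and no worse) dependence of $\tau_i T$. Everything else is a direct instantiation of Theorem~\ref{thm:EF21_SGD_conv_general_assumption}, together with the mild observation that in the UBV case the $f(x^{t+1}) - f^{\inf}$ term of Lemma~\ref{le:ef21_sgd_G_t_bound} disappears and the $w_t$-weighting reduces to the uniform one.
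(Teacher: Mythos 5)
Your proposal is correct and follows essentially the same route the paper takes: the corollary is a direct instantiation of Theorem~\ref{thm:EF21_SGD_conv_general_assumption} in the regime where $\widetilde{A}=0$ (so the weights $w_t$ become uniform and the bound reduces to three terms), with $\tau_i$ and $T$ chosen to make each term a fraction of $\varepsilon^2$ and the constant bounds $\hat\theta \geq \nicefrac{\alpha}{4}$, $\hat\beta_2/\hat\theta = \cO(\nicefrac{1}{\alpha^2})$, $\gamma = \Omega(\nicefrac{\alpha}{\wL})$ giving the stated complexities. Your observation that Example~\ref{ex:UBV_case}'s $B_i=0$ conflicts with Assumption~\ref{as:general_as_for_stoch_gradients}'s requirement $B_i \geq 1$, resolved by dropping the non-positive $(B_i-1)$ term, is a valid (and slightly more careful) handling of a detail the paper glosses over.
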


\begin{corollary}\label{cor:EF21_SGD_subsampling}
	Consider the setting described in Example~\ref{ex:subsampling_case}. Let assumptions of Theorem~\ref{thm:EF21_SGD_conv_general_assumption} hold, $\rho = \nicefrac{\alpha}{2}$, $\nu = \nicefrac{\alpha}{4}$,
	\begin{eqnarray*}
		\gamma &=& \frac{1}{L + \wL\sqrt{\frac{\hat \beta_1}{\hat\theta}}},\\
		\tau_i &=& \left\lceil\max\left\{1, \frac{2T\gamma\overline{L}_i\hat\beta_2}{\hat\theta}, \frac{8\overline{L}_i\hat\beta_2}{\hat\theta \varepsilon^2}\delta_i^{\inf}, \frac{8\overline{L}_i\Delta_i^{\inf}\hat\beta_2}{\hat\theta \varepsilon^2}\right\}\right\rceil,\\
		T &=& \left\lceil\max\left\{\frac{16\delta^0}{\gamma\varepsilon^2}, \frac{8\Exp{G^0}}{\hat\theta\varepsilon^2}\right\} \right\rceil,
	\end{eqnarray*}
	where $\delta_i^{\inf} = f^{\inf} - f_i^{\inf}$, $\delta^0 = f(x^0) - f^{\inf}$, $\overline{L}_i = \frac{1}{m}\sum_{j=1}^{m}L_{ij}$, $\Delta_i^{\inf} = \frac{1}{m}\sum_{j=1}^{m} (f_i^{\inf} - f_{ij}^{\inf})$. Then, after $T$ iterations of \algname{EF21-SGD} we have $\Exp{\sqnorm{\nabla f(\hat x^T)}} \leq \varepsilon^2$. It requires
	\begin{equation}
		T = \cO\left(\frac{\wL\delta^0+\Exp{G^0}}{\alpha\varepsilon^2}\right) \notag
	\end{equation}
	iterations/communications rounds,
	\begin{eqnarray}
		\#\text{grad}_i &=& \tau_i T = \cO\Bigg(\frac{\wL\delta^0+\Exp{G^0}}{\alpha\varepsilon^2} + \frac{\left(\wL\delta^0+\Exp{G^0}\right)\left(\overline{L}_i(\delta^0 + \delta_i^{\inf}) + \overline{L}_i\Delta_i^{\inf}\right)}{\alpha^3\varepsilon^4}\notag\\
		&&\qquad\qquad\qquad\qquad\qquad\qquad\qquad\qquad + \frac{(\wL\delta^0+\Exp{G^0})\overline{L}_i \Exp{G^0}}{\alpha^2(\alpha L + \wL)\varepsilon^4}\Bigg) \notag
	\end{eqnarray}
	stochastic oracle calls for worker $i$, and
	\begin{eqnarray}
		\overline{\#\text{grad}} &=& \frac{1}{n}\sum\limits_{i=1}^n\tau_i T \notag\\
		&=& \cO\Bigg(\frac{\wL\delta^0+\Exp{G^0}}{\alpha\varepsilon^2} + \frac{1}{n}\sum\limits_{i=1}^n\frac{\left(\wL\delta^0+\Exp{G^0}\right)\left(\overline{L}_i(\delta^0 + \delta_i^{\inf}) + \overline{L}_i\Delta_i^{\inf}\right)}{\alpha^3\varepsilon^4}\notag\\
		&&\qquad\qquad\qquad\qquad\qquad\qquad\qquad\qquad + \frac{1}{n}\sum\limits_{i=1}^n\frac{(\wL\delta^0+\Exp{G^0})\overline{L}_i \Exp{G^0}}{\alpha^2(\alpha L + \wL)\varepsilon^4}\Bigg) \notag
	\end{eqnarray}
	stochastic oracle calls per worker on average.
\end{corollary}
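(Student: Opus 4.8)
The statement to prove is the subsampling specialization (Example~\ref{ex:subsampling_case}) of the general \algname{EF21-SGD} analysis, so my plan is to feed the Example~\ref{ex:subsampling_case} constants into Theorem~\ref{thm:EF21_SGD_conv_general_assumption} and then repeat the book-keeping of Corollary~\ref{cor:EF21_SGD_general}. Concretely, in that example $A_i=\overline L_i$, $B_i=1$, $C_i=2\overline L_i\Delta_i^{\inf}$, so $\hat A_i := A_i+L_i(B_i-1)=\overline L_i$, while $\widetilde A=\max_i 2\overline L_i/\tau_i$ and $\widetilde C=\tfrac1n\sum_i\tfrac{2\overline L_i}{\tau_i}(\delta_i^{\inf}+\Delta_i^{\inf})$ with $\delta_i^{\inf}=f^{\inf}-f_i^{\inf}$. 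First I would fix the $\alpha$-scaling of the algorithmic constants: plugging $\rho=\alpha/2$, $\nu=\alpha/4$ into the definitions of Lemma~\ref{le:ef21_sgd_G_t_bound} gives $\hat\theta = 1-(1-\alpha)(1+\tfrac\alpha2)(1+\tfrac\alpha4)\ge\tfrac\alpha4$ with $\hat\theta=\Theta(\alpha)$, and $\hat\beta_1,\hat\beta_2=\Theta(1/\alpha)$; hence $\sqrt{\hat\beta_1/\hat\theta}=\Theta(1/\alpha)$ and the prescribed stepsize obeys $\gamma=(L+\wL\sqrt{\hat\beta_1/\hat\theta})^{-1}=\Theta(\alpha/(\alpha L+\wL))$, which by $L\le\wL$ is $\Theta(\alpha/\wL)$.

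Next I would check that the prescribed $\tau_i$ makes Theorem~\ref{thm:EF21_SGD_conv_general_assumption} applicable and kills the error floor. The first entry in the max defining $\tau_i$, namely $\tfrac{2T\gamma\overline L_i\hat\beta_2}{\hat\theta}$, is chosen exactly so that $\tfrac{\gamma\widetilde A\hat\beta_2 T}{2\hat\theta}\le1$; this both secures the hypothesis $\tfrac{\gamma\widetilde A\hat\beta_2}{2\hat\theta}<1$ and guarantees $\bigl(1-\tfrac{\gamma\widetilde A\hat\beta_2}{2\hat\theta}\bigr)^T=\Theta(1)$, since $(1-1/T)^T$ is bounded away from zero. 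The entries $\tfrac{8\overline L_i\hat\beta_2}{\hat\theta\varepsilon^2}\delta_i^{\inf}$ and $\tfrac{8\overline L_i\Delta_i^{\inf}\hat\beta_2}{\hat\theta\varepsilon^2}$ make each of the two summands of $\tfrac{\hat\beta_2}{\hat\theta}\tfrac{2\overline L_i}{\tau_i}$ at most $\varepsilon^2/4$, so the additive term $\tfrac{\widetilde C\hat\beta_2}{\hat\theta}$ in \eqref{eq:main_EF21_SGD} is at most $\varepsilon^2/2$. With these two facts \eqref{eq:main_EF21_SGD} becomes $\Exp{\sqnorm{\nabla f(\hat x^T)}}\le\Theta(\delta^0/(\gamma T))+\Theta(\Exp{G^0}/(\hat\theta T))+\varepsilon^2/2$, so $T=\lceil\max\{16\delta^0/(\gamma\varepsilon^2),\,8\Exp{G^0}/(\hat\theta\varepsilon^2)\}\rceil$ forces the right-hand side below $\varepsilon^2$; substituting $\gamma=\Theta(\alpha/(\alpha L+\wL))$, $\hat\theta=\Theta(\alpha)$ and $\alpha L\le\wL$ rewrites this as $T=\cO((\wL\delta^0+\Exp{G^0})/(\alpha\varepsilon^2))$.

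Finally, for the per-worker computation count I would expand $\tau_i\le 1+\tfrac{2T\gamma\overline L_i\hat\beta_2}{\hat\theta}+\tfrac{8\overline L_i\hat\beta_2\delta_i^{\inf}}{\hat\theta\varepsilon^2}+\tfrac{8\overline L_i\Delta_i^{\inf}\hat\beta_2}{\hat\theta\varepsilon^2}$ termwise, multiply by $T$, substitute the $\Theta$-estimates for $\gamma,\hat\theta,\hat\beta_2$ (in particular $\tfrac{\hat\beta_2}{\hat\theta}=\Theta(1/\alpha^2)$ and $\tfrac{\gamma\hat\beta_2}{\hat\theta^2}=\Theta(1/(\alpha^2(\alpha L+\wL)))$) together with the value of $T$, and collect terms; then average over $i$ to get $\overline{\#\text{grad}}=\tfrac1n\sum_i\tau_i T$. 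The one delicate point is that the entry $\tfrac{2T\gamma\overline L_i\hat\beta_2}{\hat\theta}$ of $\tau_i$ already carries a factor $T$, so $\tau_i T$ contains a $T^2$ term; splitting $T^2$ via the two halves of the max defining $T$ sends its $\delta^0$-half into the $\tfrac{(\wL\delta^0+\Exp{G^0})\overline L_i\delta^0}{\alpha^3\varepsilon^4}$ contribution (which is what merges with $\delta_i^{\inf}$ into the $\overline L_i(\delta^0+\delta_i^{\inf})$ factor) and its $\Exp{G^0}$-half into the cross term $\tfrac{(\wL\delta^0+\Exp{G^0})\overline L_i\Exp{G^0}}{\alpha^2(\alpha L+\wL)\varepsilon^4}$. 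It is worth noting that $\tau_i$ depends on $T$ while $T$ and $\gamma$ do not depend on $\tau_i$, so the two prescriptions are not circular; everything else is the routine arithmetic already performed in Corollaries~\ref{cor:EF21_SGD_general}--\ref{cor:EF21_SGD_UBV}.
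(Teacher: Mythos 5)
Your proposal is correct and follows exactly the route the paper takes: the paper gives no separate proof for this corollary, obtaining it, as you do, by substituting the Example~\ref{ex:subsampling_case} constants $A_i=\overline{L}_i$, $B_i=1$, $C_i=2\overline{L}_i\Delta_i^{\inf}$ (hence $\hat A_i=\overline{L}_i$) into Theorem~\ref{thm:EF21_SGD_conv_general_assumption} and repeating the bookkeeping of Corollary~\ref{cor:EF21_SGD_general} — the first entry of the max defining $\tau_i$ forces $\frac{\gamma\widetilde{A}\hat\beta_2}{2\hat\theta}\le\frac{1}{2T}$ so that $\bigl(1-\frac{\gamma\widetilde{A}\hat\beta_2}{2\hat\theta}\bigr)^T=\Theta(1)$, the remaining entries force $\nicefrac{\widetilde{C}\hat\beta_2}{\hat\theta}\le\nicefrac{\varepsilon^2}{2}$, and the stated $T$ drives the remaining terms of \eqref{eq:main_EF21_SGD} below $\varepsilon^2$. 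The one caveat is in the final $\tau_i T$ accounting: a direct substitution of $\gamma=\Theta(\nicefrac{\alpha}{(\alpha L+\wL)})$, $\hat\theta=\Theta(\alpha)$, $\hat\beta_2=\Theta(\nicefrac{1}{\alpha})$ into the $\Exp{G^0}$-half of the $T^2$ contribution yields a denominator $\alpha^3(\alpha L+\wL)$ rather than the $\alpha^2(\alpha L+\wL)$ you (and the statement) report — but this is inherited from the paper's own Corollary~\ref{cor:EF21_SGD_general} and is immaterial under the initialization $g_i^0=\nabla f_i(x^0)$, for which $G^0=0$.
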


\newpage

\section{Variance Reduction}\label{sec:variance_reduction}

In this part, we modify the \algname{EF21} framework to better handle \textit{finite-sum} problems with smooth summands. Unlike the \textit{online/streaming case} where \algname{SGD} has the optimal complexity (without additional assumption on the smoothness of stochastic trajectories) \citep{arjevani2023lower}, in the \textit{finite sum} regime, it is well-known that one can hope for convergence to the exact stationary point rather than its neighborhood. To achieve this, variance reduction techniques are instrumental. One approach is to apply a \algname{PAGE}-estimator~\citep{PAGE2021} instead of a random minibatch applied in \algname{SGD}. 


We recall the notations used in this section: $P_i^t =  \sqnorm{ \nabla f_i(\xt) - v_i^{t} }$, $P^t = \suminn P_i^t$, $V_i^t =  \sqnorm{v_i^{t} - g_i^t}$, $V^t = \suminn V_i^t$, where $v_i^t$ is a PAGE estimator. As before, $G^{t} = \suminn G_i^t$,  $G_i^{t} = \sqnorm{\nfixt - g_i^{t}}$.

\begin{lemma}\label{le:ineq_3_ef21_page-dist}
	Let Assumption~\ref{as:avg_smoothness_page} hold, and let $v_i^{t+1}$ be a PAGE estimator, i. e. for $b_i^t \sim \opn{Be}(p_i)$
	\begin{equation*}
		v_i^{t+1} = \begin{cases}
			\nabla f_i(x^{t+1}) &\text{if }\quad  b_i^t  = 1,\\
			v_i^{t}+ \fr{1}{\tau_i} \sum \limits_{j\in I_{i}^{t}} \rb{\nabla f_{ij}(x^{t+1}) - \nabla f_{ij}(\xt)}&\text{if }\quad  b_i^t  = 0,
		\end{cases}
	\end{equation*}
	for all $i = 1, \dots, n$, $t\geq 0$.  Then 
	\begin{equation*}\label{eq:ineq_3_ef21_page-dist}
		\Exp{P^{t+1}} \leq (1-p_{\min}) \Exp{ P^t} + {\wcL^2}\Exp{\sqnorm{\xtpo - \xt} },
	\end{equation*}
	where $\wcL^2 = \suminn \fr{(1-p_i)\mathcal{L}_{i}^2}{\tau_i}$, $p_{\min} = \min_{i=1,\ldots,n} p_i$, and $P_i^t =  \sqnorm{ \nabla f_i(\xt) - v_i^{t} }$, $P^t = \suminn P_i^t$.
\end{lemma}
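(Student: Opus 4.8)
The plan is to analyze the PAGE estimator update conditionally on whether $b_i^t = 0$ or $b_i^t = 1$, and then bound the expected error $P_i^{t+1} = \|\nabla f_i(x^{t+1}) - v_i^{t+1}\|^2$ for each node $i$ separately before averaging. First I would condition on $x^t, x^{t+1}$ and on the Bernoulli outcome $b_i^t$. When $b_i^t = 1$ (probability $p_i$), the estimator equals the true gradient, so $P_i^{t+1} = 0$. When $b_i^t = 0$ (probability $1-p_i$), we have $v_i^{t+1} = v_i^t + \widetilde{\Delta}_i^t$ where $\widetilde{\Delta}_i^t = \frac{1}{\tau_i}\sum_{j\in I_i^t}(\nabla f_{ij}(x^{t+1}) - \nabla f_{ij}(x^t))$, so that
\begin{equation*}
\nabla f_i(x^{t+1}) - v_i^{t+1} = \bigl(\nabla f_i(x^t) - v_i^t\bigr) + \bigl(\Delta_i^t - \widetilde{\Delta}_i^t\bigr),
\end{equation*}
using $\nabla f_i(x^{t+1}) = \nabla f_i(x^t) + \Delta_i^t$ with $\Delta_i^t = \nabla f_i(x^{t+1}) - \nabla f_i(x^t)$.

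The key observation is that, conditioned on $x^t, x^{t+1}$, Assumption~\ref{as:avg_smoothness_page} gives $\Exp{\widetilde{\Delta}_i^t \mid x^t, x^{t+1}} = \Delta_i^t$, so the cross term vanishes when we expand the square and take expectation over the minibatch sampling; the term $\nabla f_i(x^t) - v_i^t$ is measurable with respect to the conditioning. Hence
\begin{equation*}
\Exp{\sqnorm{\nabla f_i(x^{t+1}) - v_i^{t+1}} \;\big|\; x^t, x^{t+1}, b_i^t = 0} = \sqnorm{\nabla f_i(x^t) - v_i^t} + \Exp{\sqnorm{\widetilde{\Delta}_i^t - \Delta_i^t} \;\big|\; x^t, x^{t+1}},
\end{equation*}
and the last term is bounded by $\frac{\cL_i^2}{\tau_i}\|x^{t+1} - x^t\|^2$ by \eqref{eq:avg_smoothness_page}. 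Combining the two cases of $b_i^t$ with weights $p_i$ and $1-p_i$ yields
\begin{equation*}
\Exp{P_i^{t+1} \mid x^t, x^{t+1}} \le (1-p_i) P_i^t + \frac{(1-p_i)\cL_i^2}{\tau_i}\sqnorm{x^{t+1} - x^t}.
\end{equation*}
Taking full expectation, using $1 - p_i \le 1 - p_{\min}$ on the first term, averaging over $i = 1,\dots,n$, and recalling the definition $\wcL = \suminn \frac{(1-p_i)\cL_i^2}{\tau_i}$, we obtain \eqref{eq:ineq_3_ef21_page-dist}.

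I do not anticipate a serious obstacle here; this is essentially the standard PAGE variance-recursion argument adapted to the distributed/per-node setting. The one point requiring slight care is the order of conditioning: one must condition on both $x^t$ and $x^{t+1}$ (not just $x^t$) so that $\Delta_i^t$ is deterministic and Assumption~\ref{as:avg_smoothness_page} applies verbatim, then take the outer expectation. A minor subtlety is that on the first term we bound $1-p_i$ by $1 - p_{\min}$ before averaging, which is valid since all quantities are nonnegative; keeping $\wcL$ as a weighted average (rather than using $p_{\min}$ there too) is what makes the second coefficient tight.
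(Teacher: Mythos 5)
Your proposal is correct and follows essentially the same route as the paper's proof: split on the Bernoulli outcome (the $b_i^t=1$ branch contributes zero), rewrite the error as $(v_i^t - \nabla f_i(x^t)) + (\widetilde{\Delta}_i^t - \Delta_i^t)$, kill the cross term via the conditional unbiasedness $\Exp{\widetilde{\Delta}_i^t - \Delta_i^t \mid x^t, x^{t+1}, v_i^t} = 0$, bound the variance term by Assumption~\ref{as:avg_smoothness_page}, replace $1-p_i$ by $1-p_{\min}$ only on the first term, and average over $i$. Your remark about conditioning on both $x^t$ and $x^{t+1}$ matches the justification the paper gives for its step $(i)$.
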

\begin{proof}
	\begin{eqnarray}
		\Exp{P_i^{t+1}} &=&\Exp{\sqnorm{v_i^{t+1}-\nabla f_i(\xtpo)}} 	\notag \\ &=& (1-p_i) \Exp{\sqnorm{v_i^{t}+\fr{1}{\tau_i} \sum_{j\in I_i^t} (\nabla f_{i j}(x^{t+1})- \nabla f_{i j}(x^{t})) - \nabla f_i(\xtpo)}}  \notag \\
		&=& (1-p_i) \Exp{\sqnorm{v_i^{t} - \nabla f_i(\xt)  +\widetilde{\Delta}_i^t - \nabla f_i(\xtpo)+ \nabla f_i(\xt) }} \notag \\
		&=& (1-p_i) \Exp{\sqnorm{v_i^{t} - \nabla f_i(\xt)  + \widetilde{\Delta}_i^t - \Delta_i^t  }} \notag \\
		&\overset{(i)}{=}&  (1-p_i)\Exp{ \sqnorm{v_i^{t} - \nabla f_i(\xt)}} + (1-p_i) \Exp{\sqnorm{\widetilde{\Delta}_i^t - \Delta_i^t }}  \notag \\
		&\overset{(ii)}{\leq}& (1-p_i) \Exp{P_i^t} + \fr{(1-p_i) \cL_i^2}{\tau_i}\Exp{\sqnorm{\xtpo - \xt} }\notag \\
		&\leq& (1-p_{\min}) \Exp{P_i^t} + \fr{(1-p_i) \cL_i^2}{\tau_i}\Exp{\sqnorm{\xtpo - \xt} },  \notag 
	\end{eqnarray}
	where equality $(i$) holds because $\Exp { \widetilde{\Delta}_i^t - \Delta_i^t  \mid \xt, \xtpo, v_i^t} = 0,$ and $(ii)$ holds by Assumption~\ref{as:avg_smoothness_page}.
	
	It remains to average the above inequality over $i = 1, \dots, n$.
\end{proof}

\begin{lemma}\label{le:ineq_2_ef21_page-dist}
	Let Assumptions~\ref{as:main} and \ref{as:avg_smoothness_page} hold, let $v_i^{t+1}$ be a PAGE estimator, i. e. for $b_i^t  \sim \opn{Be}(p_i)$ and for all $i = 1, \dots, n$, $t\geq 0$
	\begin{equation}\label{eq:page_est}
		v_i^{t+1} = \begin{cases}
			\nabla f_i(x^{t+1}) &\text{if }\quad  b_i^t  = 1,\\
			v_i^{t}+ \fr{1}{\tau_i} \sum \limits_{j\in I_{i}^{t}} \rb{\nabla f_{ij}(x^{t+1}) - \nabla f_{ij}(\xt)}&\text{if }\quad  b_i^t  = 0,
		\end{cases}
	\end{equation}
	and  let $g_i^{t+1}$ be an EF21 estimator, i. e. 
	\begin{equation}
		g_i^{t+1} = g_i^t + \cC( v_i^{t+1} - g_i^t), \quad g_i^0 =\cC\rb{ v_i^0} \notag 
	\end{equation}
	for all $i = 1, \dots, n$, $t\geq 0$. Then 
	\begin{equation}\label{eq:ineq_2_ef21_page-dist}
		\Exp { V^{t+1} } \leq (1-\theta) \Exp {V^{t}} + 2\beta p_{\max} \Exp {P^t } + \beta \rb{2  \wL^2 + { \wcL^2} } \Exp { \sqnorm {\xtpo -  \xt} },
	\end{equation}
	where  $\wcL = \suminn \fr{(1-p_i)\mathcal{L}_{i}^2}{\tau_i}$, $p_{\max} = \max_{i=1,\ldots,n} p_i$, $\theta = 1- (1- \alpha )(1+s)$,  $\beta =  (1- \alpha ) \left(1+ s^{-1} \right)$ for any $s > 0$, and $P_i^t =  \sqnorm{ \nabla f_i(\xt) - v_i^{t} }$, $P^t = \suminn P_i^t$, $V_i^t =  \sqnorm{v_i^{t} - g_i^t}$, $V^t = \suminn V_i^t$.
\end{lemma}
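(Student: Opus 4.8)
The plan is to mirror the proof of Lemma~\ref{lem:theta-beta} (the \algname{EF21} contraction lemma), with two modifications: the quantity being contracted is now $V_i^t = \sqnorm{v_i^t - g_i^t}$ rather than $\sqnorm{\nabla f_i(\xt) - g_i^t}$, and the role of the ``smoothness shift'' is taken over by the one-step movement of the \algname{PAGE} estimator, $\sqnorm{v_i^{t+1} - v_i^t}$. First I would condition on everything determined before the compression at step $t$ --- in particular on $g_i^t$, the Bernoulli draw $b_i^t$, and the minibatch $I_i^t$, hence on $v_i^{t+1}$ --- write $g_i^{t+1} = g_i^t + \cC(v_i^{t+1} - g_i^t)$, apply the contraction property \eqref{eq:b_compressor} and then Young's inequality with a free parameter $s>0$ (the one appearing in $\theta$ and $\beta$):
\begin{equation*}
\Exp{V_i^{t+1}} \le (1-\alpha)\Exp{\sqnorm{v_i^{t+1} - g_i^t}} \le (1-\alpha)(1+s)\Exp{V_i^t} + (1-\alpha)\rb{1+s^{-1}}\Exp{\sqnorm{v_i^{t+1} - v_i^t}},
\end{equation*}
i.e.\ $\Exp{V_i^{t+1}} \le (1-\theta)\Exp{V_i^t} + \beta\Exp{\sqnorm{v_i^{t+1}-v_i^t}}$ with $\theta, \beta$ exactly as in the statement.

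The heart of the argument is then a bound on $\Exp{\sqnorm{v_i^{t+1} - v_i^t}}$ obtained by conditioning on $b_i^t$. On the event $\{b_i^t = 1\}$ (probability $p_i$) we have $v_i^{t+1} - v_i^t = \nabla f_i(\xtpo) - v_i^t$, which by Young and Assumption~\ref{as:main} is at most $2\sqnorm{\nabla f_i(\xtpo) - \nabla f_i(\xt)} + 2\sqnorm{\nabla f_i(\xt) - v_i^t} \le 2L_i^2\sqnorm{\xtpo-\xt} + 2P_i^t$. On the event $\{b_i^t = 0\}$ (probability $1-p_i$) we have $v_i^{t+1}-v_i^t = \widetilde{\Delta}_i^t$, and since $\Exp{\widetilde{\Delta}_i^t\mid \xt,\xtpo} = \Delta_i^t$ the bias--variance split gives $\Exp{\sqnorm{\widetilde{\Delta}_i^t}} = \Exp{\sqnorm{\widetilde{\Delta}_i^t - \Delta_i^t}} + \Exp{\sqnorm{\Delta_i^t}} \le \fr{\cL_i^2}{\tau_i}\Exp{\sqnorm{\xtpo-\xt}} + L_i^2\Exp{\sqnorm{\xtpo-\xt}}$, using Assumption~\ref{as:avg_smoothness_page} for the first term and $L_i$-smoothness for $\sqnorm{\Delta_i^t}$. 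Combining the two events and loosening $(1+p_i)\le 2$,
\begin{equation*}
\Exp{\sqnorm{v_i^{t+1} - v_i^t}} \le 2p_i\Exp{P_i^t} + \rb{2L_i^2 + \fr{(1-p_i)\cL_i^2}{\tau_i}}\Exp{\sqnorm{\xtpo-\xt}}.
\end{equation*}

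Finally I would substitute this into the per-node recursion and average over $i=1,\dots,n$: $\suminn 2p_i\Exp{P_i^t} \le 2p_{\max}\Exp{P^t}$, $\suminn 2L_i^2 = 2\wL^2$, and $\suminn \fr{(1-p_i)\cL_i^2}{\tau_i} = \wcL^2$ (in the notation of Lemma~\ref{le:ineq_3_ef21_page-dist}), which yields exactly \eqref{eq:ineq_2_ef21_page-dist}. I expect the only genuinely delicate point to be the bookkeeping of the nested conditional expectations --- applying the compressor contraction conditionally on the \algname{PAGE} draw, and taking the bias--variance decomposition of $\widetilde{\Delta}_i^t$ conditionally on $(\xt,\xtpo,v_i^t)$ before passing to total expectation --- rather than any nontrivial inequality; the step $(1+p_i)\le 2$ is the single place where the estimate is deliberately relaxed to produce the clean $2\wL^2$ coefficient.
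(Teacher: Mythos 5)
Your proposal is correct and follows essentially the same route as the paper's proof: the same contraction-plus-Young step yielding $(1-\theta)\Exp{V_i^t}+\beta\Exp{\sqnorm{v_i^{t+1}-v_i^t}}$, the same case split on $b_i^t$ with the bias--variance decomposition of $\widetilde{\Delta}_i^t$, the same relaxation $1+p_i\le 2$, and the same averaging over $i$. No gaps.
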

\begin{proof}
	Following the steps in proof of  Lemma~\ref{lem:theta-beta}, but with $\nfixtpo$ and $\nfixt$ being substituted by their estimators $v_i^{t+1}$ and $v_i^t$, we end up with an analogue of  \eqref{eq:beta-theta-without-smoothness}
	\begin{eqnarray}\label{eq:beta-theta-without-smoothness-page}
		\Exp { \sqnorm{g_i^{t+1} - v_i^{t+1}} } &{\leq} & (1-\theta) \Exp {\sqnorm{g_i^t - v_i^t}} + \beta \Exp {\sqnorm{v_i^{t+1} - v_i^t} },  
	\end{eqnarray}
	where $\theta = 1- (1- \alpha )(1+s)$,  $\beta =  (1- \alpha ) \left(1+ s^{-1} \right)$ for any $s > 0$. Then
	\begin{eqnarray}
		\Exp { V_i^t } & = & \Exp { \sqnorm{g_i^{t+1} - v_i^{t+1}} } \notag \\
		&\overset{\eqref{eq:beta-theta-without-smoothness-page}}{\leq} & (1-\theta) \Exp {\sqnorm{g_i^t - v_i^t}} + \beta \Exp {\sqnorm{v_i^{t+1} - v_i^t} } \notag \\
		&= & (1-\theta) \Exp {\sqnorm{g_i^t - v_i^t}} + \beta \Exp {\Exp{\sqnorm{v_i^{t+1} - v_i^t} \mid v_i^t, \xt,\xtpo } } \notag \\
		&\overset{(i)}{=}& (1-\theta) \Exp {V_i^t } + \beta p_i\Exp {\sqnorm{v_i^{t} - \nabla f_i(\xtpo)} } \notag \\
		&& \qquad + \beta (1-p_i) \Exp { \sqnorm { \fr{1}{\tau_i} \sum \limits_{j\in I_i^t} \rb{\nabla f_{i  j}(x^{t+1}) - \nabla f_{i j }(\xt)} } } \notag \\
		&=&(1-\theta) \Exp {V_i^t } + \beta p_i\Exp {\sqnorm{v_i^{t} - \nabla f_i(\xtpo)} } + \beta (1-p_i) \Exp { \sqnorm { \widetilde{\Delta}_i^t} } = (*)  \notag .
			\end{eqnarray}
			{\color{myblue} where in $(i)$ we use the definition of PAGE estimator \eqref{eq:page_est}. Next, we continue by using Young's inequality \eqref{eq:facts:young_ineq3_avg} with $s = 1$ in $(ii)$}
			\begin{eqnarray}
		(*) &\overset{(ii)}{=}&(1-\theta) \Exp {V_i^t } + 2\beta p_i\Exp {\sqnorm{v_i^{t} - \nabla f_i(\xt)} } \notag \\
		&& \qquad  + 2\beta p_i\Exp {\sqnorm{ \nabla f_i(\xtpo) - \nabla f_i(\xt) } }  + \beta (1-p_i)  \Exp { \sqnorm { \widetilde{\Delta}_i^t} } \notag \\
		&=&(1-\theta) \Exp {V_i^t } + 2\beta p_i\Exp {P_i^t }  + 2\beta p_i\Exp {\sqnorm{ \Delta_i^t} }  + \beta (1-p_i)  \Exp { \sqnorm { \widetilde{\Delta}_i^t} } \notag \\
		&\overset{(iii)}{=}&(1-\theta) \Exp {V_i^t } + 2\beta p_i\Exp {P_i^t}  + \beta (2 p_i + 1-p_i)\Exp {\sqnorm{ \Delta_i^t} } \notag \\
		&& \qquad   + \beta (1-p_i)  \Exp { \sqnorm { \widetilde{\Delta}_i^t - \Delta_i^t} } \notag \\
		&\overset{(iv)}{\leq} & (1-\theta) \Exp {V_i^t } + 2\beta p_i\Exp {P_i^t}  + \beta (1+p_i) L_i^2 \Exp { \sqnorm {\xtpo -  \xt} } \notag \\
		&& \qquad  +\beta \fr{ (1-p_i)\cL_i^2}{\tau_i}   \Exp { \sqnorm {\xtpo -  \xt} }  \notag \\
		&\leq & (1-\theta) \Exp {V_i^t} + 2\beta p_{\max}\Exp {P_i^t}  + \beta \rb{ 2 L_i^2 + \fr{(1-p_i) \cL_i^2}{\tau_i} }  \Exp { \sqnorm {\xtpo -  \xt} }  , \notag
	\end{eqnarray}
	 where $(iii)$ is due to bias-variance decomposition, $(iv)$ makes use of Assumptions~\ref{as:main} and \ref{as:avg_smoothness_page}, and the last step is due to $p_i \leq 1$, $p_i \leq p_{\max}$ . It remains to average the above inequality over $i = 1, \dots, n$.

\end{proof}


\begin{theorem}\label{th:ef21-page-dist}
	Let Assumptions~\ref{as:main} and \ref{as:avg_smoothness_page} hold, and let the stepsize in Algorithm~\ref{alg:EF21-PAGE} be set as  
	\begin{equation}\label{eq:ef21-page-stepsize}
		0<\gamma \leq \rb{L + \sqrt{\fr{4\beta}{\theta}\wL^2 + 2 \rb{\fr{3\beta}{\theta}\fr{p_{\max}}{p_{\min}}  + \fr{1}{p_{\min}} } {\wcL^2}}  }^{-1} .
	\end{equation}
	Fix $T \geq 1$ and let $\hat{x}^{T}$ be chosen from the iterates $x^{0}, x^{1}, \ldots, x^{T-1}$  uniformly at random. Then
	\begin{equation}\label{eq:ef21-page-dist_rate}
		\Exp{\sqnorm{\nabla f(\hat{x}^{T})} } \leq \fr{2 \Psi^0 }{\gamma T},
	\end{equation}
	where $\Psi^{t} \eqdef  f(x^t) - \finf + \frac{\gamma }{\theta} V^{t} + \fr{\g}{p_{\min}} \rb{1+\fr{2\beta p_{\min} }{\theta}} P^{t}$, $p_{\max} = \max_{i=1,\ldots,n} p_i$, $p_{\min} = \min_{i=1,\ldots,n} p_i$,  $\wL = \sqrt{\frac{1}{n}\sum_{i=1}^n L_i^2}$, $\theta = 1- (1- \alpha )(1+s)$,  $\beta =  (1- \alpha ) \left(1+ s^{-1} \right)$ for any $s > 0$.
\end{theorem}

\begin{proof}
	We apply Lemma \ref{le:aux_smooth_lemma} and split the error $\sqnorm{g_i^{t}-\nabla f_i (x^{t})}$ in two parts
	\begin{eqnarray}\label{eq:ineq_1}
		f(x^{t+1}) 	&\leq&  f(x^{t})  -\fr{\g}{2} \sqnorm{\nf{\xt}} -\left(\fr{1}{2 \gamma}-\fr{L}{2}\right) R^t + \fr{\gamma}{2}\sqnorm{g^{t}-\nabla f(x^{t})}\notag \\
		&\leq&{ f(x^{t}) } -\fr{\g}{2}\sqnorm{\nf{\xt}} -\left(\fr{1}{2 \gamma}-\fr{L}{2}\right) {R^t} \notag \\ 
		&& \qquad + \g {\sqnorm{g^{t}- v^t }} + \g\Exp{\sqnorm{ v^t - \nfxt }} \notag \\
		&\leq&{ f(x^{t}) } -\fr{\g}{2}{\sqnorm{\nf{\xt}}} -\left(\fr{1}{2 \gamma}-\fr{L}{2}\right) {R^t} \notag \\ 
		&& \qquad + \g \suminn{\sqnorm{g_i^{t}- v_i^t }} + \g \suminn{\sqnorm{ v_i^t - \nfixt }} \notag \\
		&=&{ f(x^{t}) } -\fr{\g}{2}{\sqnorm{\nf{\xt}}} -\left(\fr{1}{2 \gamma}-\fr{L}{2}\right) {R^t}  + \g {V^t} + \g {P^t} ,
	\end{eqnarray}
	where we used notation $R^t = \sqnorm{\g g^t} = \sqnorm{\xtpo - \xt}$, and applied \eqref{eq:facts:young_ineq2} and \eqref{eq:facts:young_ineq3_avg}.
	
	Subtracting $f^{\text {inf }}$ from both sides of the above inequality, taking expectation and using the notation $\delta^t = f(x^{t+1})-\finf$, we get
	\begin{eqnarray}\label{eq:ef21_page_0} 
		\Exp{\delta^{t+1}} &\leq&\Exp{\delta^{t}} -\fr{\g}{2}\Exp{\sqnorm{\nf{\xt}}} -\left(\fr{1}{2 \gamma}-\fr{L}{2}\right) \Exp{R^t} + \g \Exp{V^t} + \g \Exp{P^t} .
	\end{eqnarray}
	
	Further, Lemma~\ref{le:ineq_3_ef21_page-dist} and \ref{le:ineq_2_ef21_page-dist} provide the recursive bounds for the last two terms of (\ref{eq:ef21_page_0})
	
	\begin{eqnarray}\label{eq:ef21_page_2}
		\Exp{ P^{t+1}  } &\le& 	(1 - p_{\min}) \Exp{P^t} + {\wcL^2} \Exp{R_t},
	\end{eqnarray}
	
	\begin{eqnarray}\label{eq:ef21_page_1}
		\Exp{V^{t+1}} &\le& 	(1 - \theta) \Exp{V^t} + \beta \rb{2  \wL^2 + { \wcL^2} }  \Exp{R_t} + 2\beta p_{\max} \Exp{P^t}  .
	\end{eqnarray}

	Adding (\ref{eq:ef21_page_0}) with a $\fr{\g}{\theta}$ multiple of (\ref{eq:ef21_page_1}) we obtain
	\begin{eqnarray}
		\Exp{\delta^{t+1}}+\frac{\gamma }{\theta} \Exp{V^{t+1}} & \leq& \Exp{\delta^{t}}-\frac{\gamma}{2} \mathbb{E}\left[\left\|\nabla f\left(x^{t}\right)\right\|^{2}\right]-\left(\frac{1}{2 \gamma}-\frac{L}{2}\right) \Exp{R^{t}}  +\g \Exp{V^{t}} \notag \\
		&& \qquad +\gamma \Exp{P^{t}}  +\frac{\g}{\theta}\left(\left(1-\theta \right) \Exp{V^{t}}+A r^{t}+C \Exp{P^{t}}\right) \notag \\ 
		& \leq &\delta^{t}+\frac{\gamma }{\theta} \Exp{V^{t}}-\frac{\gamma}{2} \mathbb{E}\left[\left\|\nabla f\left(x^{t}\right)\right\|^{2}\right]-\left(\frac{1}{2 \gamma}-\frac{L}{2}-\frac{\gamma  A}{\theta}\right) \Exp{R^{t}}\notag \\
		&& \qquad+\gamma \rb{1+\fr{C}{\theta}} \Exp{P^{t}}, \notag
	\end{eqnarray}
	where we denote $A \eqdef \beta \rb{2  \wL^2 + {\wcL^2} }$, $C \eqdef 2\beta p_{\max} $.
	
	Then adding the above inequality with a $\fr{\g}{p_{\min}} \rb{1+\fr{C}{\theta}} $ multiple of (\ref{eq:ef21_page_2}), we get
	
	\begin{eqnarray}\label{eq:ef21_page_summed_all}
		\Exp{\Phi^{t+1}} &=& \Exp{\delta^{t+1}}+\frac{\gamma }{\theta} \Exp{V^{t+1}} + \fr{\g}{p_{\min}} \rb{1+\fr{C}{\theta}} \Exp{P^{t+1}} \notag \\
		& \leq &\delta^{t}+\frac{\gamma }{\theta} \Exp{V^{t}}-\frac{\gamma}{2} \mathbb{E}\left[\left\|\nabla f\left(x^{t}\right)\right\|^{2}\right]-\left(\frac{1}{2 \gamma}-\frac{L}{2}-\frac{\gamma  A}{\theta}\right) \Exp{R^{t}}\notag \\
		&& \qquad+\gamma \rb{1+\fr{C}{\theta}} \Exp{P^{t}} \notag \\
		&& \qquad + \fr{\g}{p_{\min}} \rb{1+\fr{C}{\theta}} \rb{ (1-p_{\min}) \Exp{P^t} +  \wcL^2 \Exp{R^t} } \notag \\
		& \leq& \Exp{\delta^{t}}+\frac{\gamma }{\theta} \Exp{V^{t}} + \fr{\g}{p_{\min}} \rb{1+\fr{C}{\theta}} \Exp{P^t} - \frac{\gamma}{2} \mathbb{E}\left[\left\|\nabla f\left(x^{t}\right)\right\|^{2}\right] \notag \\
		&& \quad -\left(\frac{1}{2 \gamma}-\frac{L}{2}-\frac{\gamma  A}{\theta} - \fr{\g}{p_{\min}} \rb{1+\fr{C}{\theta}} { \wcL^2} \right) \Exp{R^{t}} \notag \\
		& =& \Exp{\Phi^{t}}  - \frac{\gamma}{2} \mathbb{E}\left[\left\|\nabla f\left(x^{t}\right)\right\|^{2}\right] \notag \\
		&& \quad -\left(\frac{1}{2 \gamma}-\frac{L}{2}-\frac{\gamma  A}{\theta} - \fr{\g}{p_{\min}} \rb{1+\fr{C}{\theta}} { \wcL^2} \right) \Exp{R^{t}} . 
	\end{eqnarray}
	The coefficient in front of $\Exp{R^t}$ simplifies after substitution by $A$ and $C$
	
	$$
	\frac{\gamma  A}{\theta} + \fr{\g}{p_{\min}} \rb{1+\fr{C}{\theta}} { \wcL^2} \leq \fr{2\beta}{\theta}\wL^2 +  \rb{\fr{3\beta}{\theta}\fr{p_{\max}}{p_{\min}} + \fr{1}{p_{\min}} } {\wcL^2}.
	$$
	
	Thus by Lemma \ref{le:stepsize_page_fact} and the stepsize choice,
	the last term in \eqref{eq:ef21_page_summed_all} is not positive. By summing up inequalities for $t =0, \ldots, T-1,$ and rearranging we get \eqref{eq:ef21-page-dist_rate}.
	

\end{proof}

\begin{corollary}\label{cor:ef21-page-dist}
	Let assumptions of Theorem~\ref{th:ef21-page-dist} hold, 
	\begin{eqnarray*}
		v_i^0 &=& g_i^0 = \nabla f_i (x^0), \quad i = 1, \ldots, n, \\
		\gamma &=& \rb{L + \sqrt{\fr{4\beta}{\theta}\wL^2 + 2 \rb{\fr{3\beta}{\theta}\fr{p_{\max}}{p_{\min}}  + \fr{1}{p_{\min}} } {\wcL^2}}  }^{-1} ,\\
		p_i &=& \fr{\tau_i}{\tau_i + m}, \quad i = 1, \ldots, n.
	\end{eqnarray*}
	Then, after $T$ iterations/communication rounds of \algname{EF21-PAGE} we have $\Exp{\sqnorm{\nabla f(\hat{x}^{T})} } \leq \eps^2$. It requires 
	
	\begin{equation*}
		T = \cO\rb{\fr{(\wL+\widetilde{\cL})\delta^0}{\alpha\varepsilon^2} \sqrt{\fr{p_{\max}}{p_{\min}}} + \fr{\sqrt{m_{\max}}\widetilde{\cL}\delta^0}{\varepsilon^2} }
	\end{equation*}	 
	
	iterations/communications rounds,
	
	\begin{eqnarray*}
		\#\text{grad}_i	&=&	\cO\rb{m + \fr{\tau_i (\wL+\widetilde{\cL})\delta^0}{\alpha\varepsilon^2}\sqrt{\fr{p_{\max}}{p_{\min}}} + \fr{\tau_i \sqrt{m}\widetilde{\cL}\delta^0}{\varepsilon^2} }
	\end{eqnarray*}
	
	stochastic oracle calls for worker $i$, and
	
	\begin{eqnarray*}
		\overline{\#\text{grad}}	&=&  \cO\rb{m + \fr{\tau(\wL+\widetilde{\cL})\delta^0}{\alpha\varepsilon^2}\sqrt{\fr{p_{\max}}{p_{\min}}} + \fr{\tau\sqrt{m}\widetilde{\cL}\delta^0}{\varepsilon^2} }
	\end{eqnarray*}
	stochastic oracle calls per worker on average, where $\tau = \suminn \tau_i$, $p_{\max} = \max_{i=1,\ldots,n} p_i$, $p_{\min} = \min_{i=1,\ldots,n} p_i$.
	
\end{corollary}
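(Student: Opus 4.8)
The plan is to specialize the rate of Theorem~\ref{th:ef21-page-dist} to the prescribed parameters and then convert it into iteration and gradient‑computation complexities, exactly paralleling the way Corollary~\ref{cor:ef21} is deduced from Theorem~\ref{thm:main-distrib}.

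\textbf{Step 1 (killing the Lyapunov terms at $t=0$).} I would first observe that the initialization $v_i^0 = g_i^0 = \nabla f_i(x^0)$ makes both error quantities vanish initially: $g_i^0 = v_i^0$ gives $V^0 = \suminn\sqnorm{v_i^0 - g_i^0} = 0$, and $v_i^0 = \nabla f_i(x^0)$ gives $P^0 = \suminn\sqnorm{\nabla f_i(x^0) - v_i^0} = 0$. Hence $\Psi^0 = f(x^0) - \finf = \delta^0$, and Theorem~\ref{th:ef21-page-dist} reduces to $\Exp{\sqnorm{\nabla f(\hat x^T)}} \le \nicefrac{2\delta^0}{(\gamma T)}$. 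Requiring the right-hand side to be at most $\eps^2$ yields $T = \cO\rb{\nicefrac{\delta^0}{(\gamma\eps^2)}}$, so the whole task reduces to an upper bound on $1/\gamma$.

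\textbf{Step 2 (unpacking the stepsize).} Since $\gamma$ equals the bound in \eqref{eq:ef21-page-stepsize}, applying $\sqrt{a+b+c}\le\sqrt a+\sqrt b+\sqrt c$ to the three summands $\tfrac{4\beta}{\theta}\wL^2$, $\tfrac{6\beta}{\theta}\tfrac{p_{\max}}{p_{\min}}\wcL^2$, $\tfrac{2}{p_{\min}}\wcL^2$ gives
\[
\frac{1}{\gamma} \;\le\; L + 2\sqrt{\tfrac{\beta}{\theta}}\,\wL + \sqrt{6\,\tfrac{\beta}{\theta}\,\tfrac{p_{\max}}{p_{\min}}}\;\wcL + \sqrt{\tfrac{2}{p_{\min}}}\;\wcL .
\]
I would then invoke the optimal choice of the free parameter $s$ from Lemma~\ref{lem:theta-beta} (as in the proof of Corollary~\ref{cor:ef21}, through Lemma~\ref{le:optimal_t-Peter}), which gives $\theta = 1-\sqrt{1-\alpha}\ge\nicefrac{\alpha}{2}$ and $\sqrt{\nicefrac{\beta}{\theta}}\le\nicefrac{2}{\alpha}-1\le\nicefrac{2}{\alpha}$, together with $L\le\wL$; this collapses the first two groups to $\cO\rb{\nicefrac{\wL}{\alpha}}$ and the third to $\cO\rb{\nicefrac{1}{\alpha}\sqrt{\nicefrac{p_{\max}}{p_{\min}}}\,\wcL}$. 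For the last term I would plug in $p_i = \nicefrac{\tau_i}{(\tau_i+m_i)}$, so that $\nicefrac{1}{p_{\min}} = \max_i\rb{1+\nicefrac{m_i}{\tau_i}} \le 1+\nicefrac{m_{\max}}{\tau_{\min}}$ and therefore $\sqrt{\nicefrac{2}{p_{\min}}}\,\wcL = \cO\rb{\sqrt{m_{\max}}\,\wcL}$ in the regime $\tau_i\equiv\tau$. Combining, $\nicefrac{1}{\gamma} = \cO\rb{\nicefrac{(\wL+\wcL)}{\alpha}\sqrt{\nicefrac{p_{\max}}{p_{\min}}} + \sqrt{m_{\max}}\,\wcL}$, and substituting into $T=\cO\rb{\nicefrac{\delta^0}{(\gamma\eps^2)}}$ produces the claimed bound on $T$.

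\textbf{Step 3 (gradient accounting).} At iteration $t$ worker $i$ draws $b_i^t\sim\operatorname{Be}(p_i)$ and spends $m_i$ oracle calls to form $\nabla f_i(x^{t+1})$ with probability $p_i$, or at most $2\tau_i$ calls to form the PAGE correction over the minibatch $I_i^t$ with probability $1-p_i$; with $p_i = \nicefrac{\tau_i}{(\tau_i+m_i)}$ the per‑iteration expectation is $p_im_i + (1-p_i)2\tau_i\le 3\tau_i$. Adding the single initial full pass of cost $m_i$ (at $t=0$) gives $\#\text{grad}_i = \cO(m_i + \tau_i T)$; inserting the bound on $T$ yields the stated expression for $\#\text{grad}_i$, and averaging over $i=1,\dots,n$ (so that $\suminn m_i = m$ and $\suminn\tau_i = \tau$) gives $\overline{\#\text{grad}}$. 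Everything here is bookkeeping on top of Theorem~\ref{th:ef21-page-dist}; the one step needing genuine care is the split of the stepsize denominator in Step 2, so that the $\wL$-contribution (carrying the $\nicefrac{1}{\alpha}$ compression factor) is cleanly separated from the $\wcL$-contribution (carrying the $\sqrt{m_{\max}}$ factor through $\nicefrac{1}{p_{\min}}$), together with the observation that the choice $p_i = \nicefrac{\tau_i}{(\tau_i+m_i)}$ is precisely what makes the expected per‑iteration gradient cost scale with $\tau_i$ rather than $m_i$.
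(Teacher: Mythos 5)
Your proposal is correct and follows essentially the same route as the paper's proof: initialize so that $V^0=P^0=0$ (hence $\Psi^0=\delta^0$), bound $\nicefrac{1}{\gamma}$ by splitting the square root via $\sqrt{a+b}\le\sqrt{a}+\sqrt{b}$ together with Lemma~\ref{le:optimal_t-Peter}, $L\le\wL$, and $\nicefrac{1}{p_{\min}}=\cO(m_{\max})$ from $p_i=\nicefrac{\tau_i}{(\tau_i+m_i)}$, then count $\cO(m_i+\tau_i T)$ oracle calls per worker. The only cosmetic difference is that you charge $2\tau_i$ per minibatch correction where the paper charges $\tau_i$, which does not affect the $\cO(\cdot)$ bounds.
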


\begin{proof}
	The proof is straightforward using Lemma~\ref{le:optimal_t-Peter} and the formula:
$
		\#\text{grad}_i =  m + T \rb{p_i m + (1-p_i)  \tau_i} .
$
\end{proof}


\newpage
\section{Partial Participation}\label{sec:partial_participation}
In this section, we further motivate the option for partial
participation of the clients -- a feature important in federated learning. Later, we continue with a rigorous proof of \algname{EF21-PP} algorithm.

Most of the works in compressed distributed optimization deal with full worker participation, i.e., the case when all clients are involved in computation and communication at every iteration. However, in the practice of federated learning, only a subset of clients are allowed to participate at each training round. This limitation comes mainly due to the following two reasons. First, clients (e.g., mobile devices) may wish to join or leave the network randomly. Second, it is often prohibitive to wait for all available clients since stragglers can significantly slow down the training process. Although many existing works \citep{MARINA,A_better_alternative,Artemis2020,SCAFFOLD, yang2021achieving,Power_of_Choice} allow for partial participation, they assume either unbiased compressors or no compression at all. 

We provide a simple analysis of partial participation, which works with \textit{biased compressors} and builds upon the  EF21 mechanism.

\begin{lemma}\label{le:rec_2-pp}
	For Algorithm~\ref{alg:PP-EF21} it holds
	\begin{equation}
		\Exp{G^{t+1}} \leq (1-\theta_p) \Exp{G^t} + B \Exp{\sqnorm{\xtpo - \xt}}
	\end{equation}
	with $\theta_p \eqdef  {\rho p_{min} + \theta p_{max}  -\rho - \rb{p_{max} - p_{min}}}$, $B \eqdef {\suminn \rb{ \beta p_i  +  \rb{1+\rho^{-1}} (1-p_i) } L_i^2}   $, $p_{max} \eqdef \max_{1\leq i \leq n} p_i$,  $p_{min} \eqdef \min_{1\leq i \leq n} p_i$,   $\theta = 1 - (1+s) (1-\al)$, $\beta = \rb{1+\fr{1}{s}} (1-\al)$ and small enough $\rho, s > 0$.
\end{lemma}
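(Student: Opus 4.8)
The plan is to mimic the one-step analysis behind Lemma~\ref{lem:theta-beta}, but to treat separately the sampled nodes $i \in S_t$ and the inactive nodes $i \notin S_t$, and then average over the sampling. Fix $t$ and condition on $W^t \eqdef \{g_1^t,\dots,g_n^t,x^t,x^{t+1}\}$; note that $x^{t+1} = x^t - \gamma g^t$ is computed \emph{before} the master draws $S_t$, so $x^{t+1}$ (hence $R^t = \sqnorm{\xtpo-\xt}$) is $W^t$-measurable, and the set $S_t$ together with the compression randomness is independent of it.

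For $i \in S_t$ the update $g_i^{t+1} = g_i^t + \cC(\nfixtpo - g_i^t)$ is exactly the \algname{EF21} update, so repeating the chain of inequalities from the proof of Lemma~\ref{lem:theta-beta} (the contraction property \eqref{eq:b_compressor}, Young's inequality \eqref{eq:facts:young_ineq2}, and $L_i$-smoothness from Assumption~\ref{as:main}) gives $\Exp{G_i^{t+1}\mid W^t,\, i\in S_t} \le (1-\theta)G_i^t + \beta L_i^2 R^t$, with $\theta = 1-(1-\alpha)(1+s)$, $\beta = (1-\alpha)(1+s^{-1})$. For $i \notin S_t$ we have $g_i^{t+1} = g_i^t$, hence $G_i^{t+1} = \sqnorm{\nfixtpo - g_i^t}$, and one application of Young's inequality with parameter $\rho>0$ followed by smoothness yields $G_i^{t+1} \le (1+\rho)G_i^t + (1+\rho^{-1})L_i^2 R^t$.

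Since $\Prob(i\in S_t) = p_i$, taking expectation over $S_t$ (and the compressor) conditionally on $W^t$ and combining the two cases gives
\begin{equation*}
\Exp{G_i^{t+1}\mid W^t} \le \big(p_i(1-\theta)+(1-p_i)(1+\rho)\big)G_i^t + \big(p_i\beta + (1-p_i)(1+\rho^{-1})\big)L_i^2 R^t .
\end{equation*}
Averaging over $i=1,\dots,n$ immediately produces the coefficient $B = \suminn(\beta p_i + (1+\rho^{-1})(1-p_i))L_i^2$ in front of $R^t$. For the $G_i^t$-coefficient I would \emph{not} pass to $\max_i$, but instead bound it term by term: using $1-\theta\ge 0$ (valid once $s$ is small enough that $\theta\le 1$), $1+\rho\ge 0$, and $p_{min}\le p_i\le p_{max}$,
\begin{equation*}
p_i(1-\theta)+(1-p_i)(1+\rho) \le p_{max}(1-\theta) + (1-p_{min})(1+\rho) = 1 - \theta_p ,
\end{equation*}
which is exactly the stated $\theta_p = \theta p_{max} + \rho p_{min} - \rho - (p_{max}-p_{min})$. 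Averaging this over $i$ and then removing the conditioning via the tower property yields $\Exp{G^{t+1}} \le (1-\theta_p)\Exp{G^t} + B\,\Exp{\sqnorm{\xtpo-\xt}}$, as claimed.

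The only genuinely delicate point is this last bounding step: the per-node contraction factor depends on $p_i$, and it is the particular (deliberately loose) term-by-term estimate that matches the $\theta_p$ appearing in the statement — a sharper $\max_i$ bound would give a \emph{different} constant that does not align with how $\theta_p$ is used downstream. One also has to keep track of the side condition that $\rho$ and $s$ be small enough to guarantee $\theta\in[0,1)$ (so that $1-\theta\ge 0$) and, for the recursion to be useful later, that $\theta_p>0$; neither affects the derivation of the inequality itself.
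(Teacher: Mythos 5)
Your proposal is correct and follows essentially the same route as the paper's own proof: split nodes by participation, apply the \algname{EF21} one-step contraction from Lemma~\ref{lem:theta-beta} for $i\in S_t$, apply Young's inequality with parameter $\rho$ plus smoothness for $i\notin S_t$, and then bound the mixed coefficient $p_i(1-\theta)+(1-p_i)(1+\rho)$ by $p_{max}(1-\theta)+(1-p_{min})(1+\rho)=1-\theta_p$. Your explicit remarks on the measurability of $x^{t+1}$ with respect to $W^t$ and on the sign condition $1-\theta\ge 0$ are correct points of care that the paper leaves implicit.
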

\begin{proof}
	By \eqref{eq:rec_1} in Lemma~\ref{lem:theta-beta}, we have for all $i\in S_t$
	\begin{equation}\label{eq:i_works}
		\Exp{G_i^{t+1}\mid i \in S_t}  \leq (1-\theta) \Exp{  G_i^t  } + \beta  L_i^2 \Exp{ \sqnorm{\xtpo - \xt} \mid i\in S_t }
	\end{equation}
	with $\theta = 1 - (1+s) (1-\al)$, $\beta = \rb{1+\fr{1}{s}} (1-\al)$ and arbitrary $s > 0$. 
	
	Define $W^t \eqdef \{g_1^t, \dots, g_n^t, x^t, x^{t+1}\}$ and let $i \notin S_t$, then
	\begin{eqnarray}\label{eq:i_rests}
		\Exp{G_i^{t+1}\mid i\notin S_t}& = & \Exp{ \Exp{G_i^{t+1}\mid W^t } \mid i\notin S_t} 
		 = \Exp{ \Exp{\sqnorm{\gitpo - \nfixtpo}\mid W^t } \mid i\notin S_t} \notag \\
		& \leq & (1+\rho) \Exp{ \Exp{\sqnorm{\git -\nfixt }\mid W^t} \mid i\notin S_t} \notag \\
		&& \qquad + \rb{1+\rho^{-1}} \Exp{ \Exp{\sqnorm{\nfixtpo - \nfixt}\mid W^t } \mid i\notin S_t} \notag \\
		& \leq & (1+\rho) \Exp{ G_i^t}  + \rb{1+\rho^{-1}} \Exp{ \sqnorm{\nfixtpo - \nfixt}  \mid i\notin S_t} \notag \\
		& \leq & (1+\rho) \Exp{ G_i^t} + \rb{1+\rho^{-1}} L_i^2 \Exp{ \sqnorm{\xtpo - \xt}  }.
	\end{eqnarray}
	Combining \eqref{eq:i_works} and \eqref{eq:i_rests}, we get
	\begin{eqnarray}
		\Exp{G^{t+1}} & =& \suminn \Exp{G_i^{t+1} } \notag \\
		& =& \suminn p_i \Exp{G_i^{t+1} \mid i\in S_t}  + \suminn \rb{1-p_i} \Exp{G_i^{t+1} \mid  i\notin S_t}   \notag \\
		&\overset{(\ref{eq:i_works}), (\ref{eq:i_rests})}{\le}& (1-\theta) \suminn  p_i \Exp{ G_i^t}   + \beta \rb{ \suminn p_i L_i^2 } \Exp{ \sqnorm{\xtpo-\xt} }  \notag\\
		&&  \qquad +  \rb{1+\rho} \suminn \rb{1-p_i}\Exp{ G_i^t } \notag\\
		&&  \qquad +  \rb{1+\rho^{-1}}\rb{\suminn (1-p_i) L_i^2} \Exp{ \sqnorm{\xtpo - \xt} } \notag \\
		& \leq & \bigg( 1 - \rb{\rho p_{min} + \theta p_{max}  -\rho - \rb{p_{max} - p_{min}}}\bigg) \Exp{G^t}    \notag \\
		&& \qquad  +\rb{\suminn \rb{ \beta p_i  +  \rb{1+\rho^{-1}} (1-p_i) } L_i^2}   \Exp{\sqnorm{\xtpo-\xt} }. \notag \\
		& =& \rb{ 1 - \theta_p} \Exp{G^t}    + B \Exp{ \sqnorm{\xtpo-\xt} } , \notag
	\end{eqnarray}
	where in the last inequality we replaced corresponding $p_i$ with $p_{max}$ and $p_{min}$, and rearranged the terms. 
\end{proof}

\begin{lemma}\label{le:stepsize_simplify}[To simplify the rates for partial participation]
	Let $B$ and $\theta_p$ be defined as in Theorem~\ref{th:pp-ef21}, and let $p_i = p > 0$ for all $i = 1, \dots, n$ . Then there exist $\rho, s >0$ such that
	\begin{eqnarray}\label{eq:pp_rate_simplify}
		\theta_p \geq \fr{p\al}{2},
	\end{eqnarray}
	\begin{eqnarray}\label{eq:pp_stepsize_simplify}
		0 < \fr{B}{\theta_p} \leq \rb{ \fr{4 \wL}{p\al} }^2.
	\end{eqnarray}
\end{lemma}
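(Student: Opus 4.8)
The statement claims that when $p_i \equiv p$, we can choose $\rho, s > 0$ so that $\theta_p \geq \frac{p\alpha}{2}$ and $\frac{B}{\theta_p} \leq \left(\frac{4\wL}{p\alpha}\right)^2$. The plan is to first simplify the expressions for $\theta_p$ and $B$ from Lemma~\ref{le:rec_2-pp} under the assumption $p_i \equiv p$. With $p_{\max} = p_{\min} = p$, the term $\theta_p = \rho p_{\min} + \theta p_{\max} - \rho - (p_{\max} - p_{\min})$ collapses to $\theta_p = \rho p + \theta p - \rho = \theta p - \rho(1-p)$, and $B = \left(\beta p + (1+\rho^{-1})(1-p)\right)\wL^2$ (recalling $\wL^2 = \suminn L_i^2$). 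So the whole claim reduces to a one-dimensional tuning problem in $\rho$ and $s$, where $\theta = 1-(1+s)(1-\alpha)$ and $\beta = (1+s^{-1})(1-\alpha)$.

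Next I would make the standard EF21 choice for $s$: following Lemma~\ref{le:optimal_t-Peter} (or the choice $s$ such that $\theta = 1 - \sqrt{1-\alpha}$, hence $\beta = \frac{1-\alpha}{\sqrt{1-\alpha}}(\ldots)$; more simply one may just take $s$ with $1+s = (1-\alpha)^{-1/2}$), which gives $\theta \geq \frac{\alpha}{2}$ and $\beta \leq \frac{2}{\alpha} - 1 \leq \frac{2}{\alpha}$, using $\sqrt{1-\alpha} \leq 1 - \frac{\alpha}{2}$ and $\alpha \le 1$ as elsewhere in the paper. Then I would pick $\rho$ small enough, e.g. $\rho = \frac{\theta p}{2(1-p)}$ (or even smaller, $\rho = \frac{p\alpha}{4(1-p)}$, handling the edge case $p = 1$ separately where $\rho$ can be anything since the $(1-p)$ terms vanish). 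This choice makes $\theta_p = \theta p - \rho(1-p) \geq \theta p - \frac{\theta p}{2} = \frac{\theta p}{2} \geq \frac{p\alpha}{4}$; if one wants the cleaner bound $\frac{p\alpha}{2}$ one instead tunes $s$ so $\theta \geq \frac{3\alpha}{4}$ and takes $\rho$ correspondingly tiny — the exact constants are flexible, and I would state the bound with whatever constant the downstream corollaries actually need. For the second inequality, I would bound $B = \left(\beta p + (1+\rho^{-1})(1-p)\right)\wL^2$. The term $1+\rho^{-1}$ with my choice of $\rho$ is $O\!\left(\frac{1-p}{p\alpha}\right)$, so $(1+\rho^{-1})(1-p) = O\!\left(\frac{(1-p)^2}{p\alpha}\right) = O\!\left(\frac{1}{p\alpha}\right)$, and $\beta p \leq \frac{2p}{\alpha} \leq \frac{2}{p\alpha}$; hence $B = O\!\left(\frac{\wL^2}{p\alpha}\right)$. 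Dividing by $\theta_p = \Omega(p\alpha)$ gives $\frac{B}{\theta_p} = O\!\left(\frac{\wL^2}{p^2\alpha^2}\right)$, and then I would chase the explicit numerical constants to confirm they fit under $\left(\frac{4\wL}{p\alpha}\right)^2 = \frac{16\wL^2}{p^2\alpha^2}$.

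The main obstacle is not conceptual but bookkeeping: one must choose $\rho$ and $s$ simultaneously so that \emph{both} inequalities hold with the stated constant $4$ (rather than some larger constant), and verify the $\rho^{-1}$ blow-up in $B$ is controlled — the natural choice $\rho \sim p\alpha$ makes $1+\rho^{-1} \sim \frac{1}{p\alpha}$, which is exactly the right size, but one has to be careful that multiplying by $(1-p) \le 1$ and then dividing by $\theta_p \sim p\alpha$ keeps the product at $\frac{1}{p^2\alpha^2}$ and not worse. I would also double-check the degenerate regime $p$ close to $1$ (where $\theta_p \to \theta p$ and $B \to \beta p \wL^2$, trivially fine) and $\alpha$ close to $1$, to make sure no constant degrades. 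Since the corollary only asserts \emph{existence} of suitable $\rho, s$, there is freedom to loosen intermediate constants, so the proof should be short: state the choices, plug in, and cite $\sqrt{1-\alpha} \le 1 - \alpha/2$, $\alpha \le 1$, and $p \le 1$ for the elementary estimates.
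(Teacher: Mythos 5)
Your plan is essentially the paper's proof: after disposing of the degenerate cases $p=1$ and/or $\alpha=1$ separately, the paper takes $s=\frac{\alpha}{4(1-\alpha)}$ (so that $\theta=\frac{3\alpha}{4}$) and $\rho=\frac{p\alpha}{4(1-p)}$, which gives $\theta_p=\frac{p\alpha}{2}$ exactly and then $1+\rho^{-1}\le \frac{4}{p\alpha}$, $1+s^{-1}\le\frac{4}{\alpha}$, yielding $\frac{B}{\theta_p}\le \frac{16\wL^2}{p^2\alpha^2}$ by precisely the bookkeeping you describe. The one caution is that your first-instinct choice of the optimal $s^*$ (giving only $\theta\ge\frac{\alpha}{2}$) would produce $\theta_p\ge\frac{p\alpha}{4}$ and miss the stated constant; the pivot you yourself propose, tuning $s$ so that $\theta\ge\frac{3\alpha}{4}$ with $\rho$ correspondingly small, is exactly what the paper does and closes the gap.
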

\begin{proof}
	Under the assumption that $p_i = p$ for all $i = 1, \dots, n$, the constants simplify to
	$$\theta_p =  {\rho p + \theta p  -\rho },$$ 
	$$B =  \rb{ \beta p  +  \rb{1+\rho^{-1}} (1-p) } \wL^2,   $$ $$p_{max} = p_{min} =  p.$$
	\textit{Case I:} let $\alpha = 1, p  = 1$, then the result holds trivially. 
	
	\textit{Case II:} let $0<\alpha < 1, p  = 1$, then $B = \beta \wL^2$ , $\theta_p = \theta = 1 - \sqrt{1-\alpha} \geq \fr{\al}{2}$ and \eqref{eq:pp_stepsize_simplify} follows by Lemma~\ref{le:optimal_t-Peter}. 
	
	\textit{Case III:} let $\alpha = 1$, and $ 0 < p < 1$, then $\theta = 1$ , $\beta = 0$ , $B = \rb{1+\rho^{-1}}(1-p) \wL^2 $, $\theta_p = p - \rho (1-p) $. Then the choice $\rho = \fr{p\al}{2(1-p)}$ simplifies
	$$
	\theta_p = \fr{p}{2},
	$$
	$$
	\fr{B}{\theta_p} = \fr{\rb{1+\rho^{-1}}(1-p) \wL^2}{p - \rho (1-p)}  = \fr{2 (1-p) \wL^2}{p } \rb{\fr{2}{p} - 1}  \leq \fr{4 \wL^2}{p^2}.
	$$
	\textit{Case IV:} let $0 < \alpha < 1$,and $ 0 < p < 1$.Then the choice of constants $\theta = 1 - (1-\al)\rb{1+s}$, $\beta = (1-\al)\rb{1+\fr{1}{s}}$, $\rho = \fr{p\al}{4(1-p)}$, $s = \fr{\al}{4(1-\al)}$ yields
	
	\begin{eqnarray*}
		p\rho+\theta p - \rho  = 	p (\rho +1 - (1-\al)\rb{1+s})-\rho  
		=  p \al - p(1-\al) s - (1-p) \rho  
		= \fr{1}{2} p \al .
	\end{eqnarray*}
	Also
	\[1+\fr{1}{s}= \fr{4-3\al}{\al}\leq \fr{4}{\al},\quad
	1+\fr{1}{\rho} = \fr{4(1-p) + \al p}{p\al} = \fr{4 - p (4- \al)}{p\al}\leq \fr{4}{p\al}.\]
	
	Thus, we can bound by a simple calculation
	\begin{eqnarray*}
		\fr{B}{\theta_p}= \fr{ p\beta +\rb{1-p}\rb{1+\fr{1}{\rho}}}{p(\rho+\theta)-\rho } \wL^2	
		&\le& \fr{16\wL^2	}{p^2\al^2}.
	\end{eqnarray*}
\end{proof}


\begin{theorem}\label{th:pp-ef21}
	Let Assumption~\ref{as:main} hold, and let the stepsize in Algorithm~\ref{alg:PP-EF21} be set as  
	\begin{equation} \label{eq:pp-ef21_stepsize}
		0<\gamma \leq \rb{L +  \sqrt{\fr{B}{\theta_p} } }^{-1}.
	\end{equation}
	Fix $T \geq 1$ and let $\hat{x}^{T}$ be chosen from the iterates $x^{0}, x^{1}, \ldots, x^{T-1}$  uniformly at random. Then
	\begin{equation} \label{eq:pp-ef21_rate}
		\Exp{\sqnorm{\nabla f(\hat{x}^{T})} } \leq \fr{2\left(f(x^{0})-\finf\right)}{\gamma T} + \frac{\Exp{G^0}}{\theta_p  T} 
	\end{equation}
	with $\theta_p =  {\rho p_{min} + \theta p_{max}  -\rho - \rb{p_{max} - p_{min}}}$, $B = {\suminn \rb{ \beta p_i  +  \rb{1+\rho^{-1}} (1-p_i) } L_i^2}   $, $p_{max} = \max_{1\leq i \leq n} p_i$,  $p_{min} = \min_{1\leq i \leq n} p_i$,   $\theta = 1 - (1+s) (1-\al)$, $\beta = \rb{1+\fr{1}{s}} (1-\al)$ and $\rho, s > 0$.
\end{theorem}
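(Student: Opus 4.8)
The plan is to mirror the proof of Theorem~\ref{thm:main-distrib} for vanilla \algname{EF21}, substituting Lemma~\ref{le:rec_2-pp} (which plays the role of Lemma~\ref{lem:theta-beta}) for the descent-lemma on $G^t$. First I would invoke the standard smoothness descent step: since the update $x^{t+1} = x^t - \gamma g^t$ is unchanged in \algname{EF21-PP} (the vector $g^t$ is the same global average, only its summands $g_i^t$ are frozen for $i\notin S_t$), inequality~\eqref{eq:aux_smooth_lemma_distrib} holds verbatim, giving after subtracting $\finf$ and taking expectations
\begin{equation*}
\Exp{\delta^{t+1}} \leq \Exp{\delta^t}-\frac{\gamma}{2} \Exp{\sqnorm{\nabla f(x^{t})}}  - \left(\frac{1}{2 \gamma}-\frac{L}{2}\right) \Exp{R^t}+ \frac{\gamma}{2}\Exp{G^t},
\end{equation*}
where $R^t = \sqnorm{\xtpo-\xt}$.

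Next I would form the Lyapunov function $\Exp{\delta^{t+1}} + \frac{\gamma}{2\theta_p}\Exp{G^{t+1}}$ and add $\frac{\gamma}{2\theta_p}$ times the recursion from Lemma~\ref{le:rec_2-pp}, namely $\Exp{G^{t+1}} \leq (1-\theta_p)\Exp{G^t} + B\,\Exp{R^t}$. The $\Exp{G^t}$ terms combine as $\frac{\gamma}{2}\Exp{G^t} + \frac{\gamma(1-\theta_p)}{2\theta_p}\Exp{G^t} = \frac{\gamma}{2\theta_p}\Exp{G^t}$, exactly matching the coefficient on the left, and the coefficient of $\Exp{R^t}$ becomes $-\left(\frac{1}{2\gamma}-\frac{L}{2}-\frac{\gamma B}{2\theta_p}\right)$. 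This is nonnegative precisely when $\gamma^2\frac{B}{\theta_p} + L\gamma \leq 1$, which follows from the stepsize bound~\eqref{eq:pp-ef21_stepsize} together with Lemma~\ref{le:stepsize_page_fact} (used identically in the \algname{EF21} proof). Dropping that term, summing over $t=0,\dots,T-1$, using $\Exp{\delta^T}+\frac{\gamma}{2\theta_p}\Exp{G^T}\geq 0$, and multiplying by $\frac{2}{\gamma T}$ yields
\begin{equation*}
\frac{1}{T}\sum_{t=0}^{T-1}\Exp{\sqnorm{\nabla f(x^t)}} \leq \frac{2\delta^0}{\gamma T} + \frac{\Exp{G^0}}{\theta_p T},
\end{equation*}
and the left side is $\Exp{\sqnorm{\nabla f(\hat x^T)}}$ by the uniform choice of $\hat x^T$, which is~\eqref{eq:pp-ef21_rate}.

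The only genuinely new ingredient compared to the \algname{EF21} proof is Lemma~\ref{le:rec_2-pp}, and there the subtle point — and the step I expect to be the main obstacle — is handling the frozen coordinates $i\notin S_t$: for those, $g_i^{t+1}=g_i^t$ but $\nabla f_i$ is evaluated at the new point $x^{t+1}$, so $G_i^{t+1}$ does \emph{not} contract; one only gets the Young-inequality bound $(1+\rho)G_i^t + (1+\rho^{-1})L_i^2 R^t$, which \emph{inflates} the error. Averaging the contracting contribution (weight $p_i$) against the inflating one (weight $1-p_i$) and bounding $\sum_i p_i(\cdot)$ by $p_{\max}\sum_i(\cdot)$ while bounding $\sum_i(1-p_i)(\cdot)$ by $(1-p_{\min})\sum_i(\cdot)$ is what produces the effective rate $\theta_p = \rho p_{\min}+\theta p_{\max}-\rho-(p_{\max}-p_{\min})$; one must then check $\theta_p>0$ for small enough $\rho,s$, which is exactly the content of Lemma~\ref{le:stepsize_simplify} in the uniform case $p_i=p$. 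I would remark that this Lyapunov/potential argument is essentially the same whether or not the samplings are uniform; the constants $B,\theta_p$ simply stay in their general form in the theorem statement, and Lemma~\ref{le:stepsize_simplify} is only needed afterwards to extract the clean $\cO(1/(p\alpha\varepsilon^2))$ complexity quoted in Corollary~\ref{cor:monster_corollary}.
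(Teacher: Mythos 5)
Your proposal is correct and follows essentially the same route as the paper: the same smoothness descent inequality \eqref{eq:func_diff_distrib}, the same Lyapunov function $\Exp{\delta^t}+\frac{\gamma}{2\theta_p}\Exp{G^t}$ combined with Lemma~\ref{le:rec_2-pp}, the same cancellation of the $\Exp{R^t}$ coefficient via Lemma~\ref{le:stepsize_page_fact}, and the same telescoping step. Your identification of the frozen-client inflation in Lemma~\ref{le:rec_2-pp} as the only genuinely new ingredient, and of Lemma~\ref{le:stepsize_simplify} as needed only for the corollary, also matches the paper exactly.
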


\begin{proof}
	
	By \eqref{eq:func_diff_distrib}, we have
	\begin{eqnarray}\label{eq:pp_ef21_0} 
		\Exp{\delta^{t+1}} &\leq& \Exp{\delta^t}-\frac{\gamma}{2} \Exp{\sqnorm{\nabla f(x^{t})}}  - \left(\frac{1}{2 \gamma}-\frac{L}{2}\right) \Exp{R^t}+ \frac{\gamma}{2}\Exp{G^t}.
	\end{eqnarray}
	Lemma~\ref{le:rec_2-pp} states that
	\begin{equation}\label{eq:pp_ef21_1}
		\Exp{G^{t+1}} \leq (1-\theta_p) \Exp{G^t} + B \Exp{R^t }
	\end{equation}
	with $\theta_p =  {\rho p_{min} + \theta p_{max}  -\rho - \rb{p_{max} - p_{min}}}$, $B = {\suminn \rb{ \beta p_i  +  \rb{1+\rho^{-1}} (1-p_i) } L_i^2}   $, $p_{max} = \max_{1\leq i \leq n} p_i$,  $p_{min} = \min_{1\leq i \leq n} p_i$,   $\theta = 1 - (1+s) (1-\al)$, $\beta = \rb{1+\fr{1}{s}} (1-\al)$ and small enough $\rho, s > 0$.
	
	Adding \eqref{eq:pp_ef21_0} with a $\frac{\gamma}{2\theta_2}$ multiple of \eqref{eq:pp_ef21_1} and rearranging terms in the right hand side, we have
	\begin{align*}
		\Exp{\delta^{t+1}}+\frac{\gamma}{2 \theta_p} \Exp{G^{t+1}} &\leq \Exp{ \delta^{t} } + \frac{\gamma}{2\theta_p} \Exp{ G^{t} }  - \frac{\gamma}{2} \Exp{ \sqnorm{\nabla f(x^{t}) }  }  - \rb{\frac{1}{2\g} -\frac{L}{2} - \frac{\g B }{2\theta} } \Exp{R^{t} } \\
		&\leq \Exp{ \delta^{t} } + \frac{\gamma}{2\theta_p} \Exp{ G^{t} }   - \frac{\gamma}{2} \Exp{ \sqnorm{\nabla f(x^{t}) }  }  .
	\end{align*}
	
	The last inequality follows from the bound $\g ^2\frac{B}{\theta_p} + L\g \leq 1,$ which holds because of Lemma \ref{le:stepsize_page_fact} and our assumption on the stepsize.
	By summing up and rearranging we get \eqref{eq:pp-ef21_rate}.
	
\end{proof}

\begin{corollary}\label{cor:pp-ef21}
	Let assumptions of Theorem~\ref{th:pp-ef21} hold, 
	\begin{eqnarray*}
		g_i^0 &=& \nabla f_i (x^0), \qquad i = 1,\ldots, n, \\
		\g &=&  \rb{L +  \sqrt{\fr{B}{\theta_p} } }^{-1},\\
		p_i &=& p , \qquad i = 1,\ldots, n,
	\end{eqnarray*}
	where $B$ and $\theta_p$ are given in Theorem~\ref{th:pp-ef21}. Then, after $T$ iterations/communication rounds of \algname{EF21-PP} we have $\Exp{\sqnorm{\nabla f(\hat{x}^{T})} } \leq \eps^2$. It requires 
	\begin{eqnarray}
		T = \# \text{grad} =   \cO\rb{    \fr{   \wL \delta^0}{ p \alpha \varepsilon^2} } \notag 
	\end{eqnarray}
	iterations/communications rounds/gradint computations at each node.
\end{corollary}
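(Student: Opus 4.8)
The plan is to combine the descent lemma for the objective with the contraction recursion for the error term, exactly mirroring the proof of Corollary~\ref{cor:ef21}, but carrying the partial-participation constants $\theta_p$ and $B$ in place of $\theta$ and $\beta\wL^2$. First I would invoke Theorem~\ref{th:pp-ef21}: with $g_i^0 = \nabla f_i(x^0)$ we have $G^0 = 0$, so the rate \eqref{eq:pp-ef21_rate} collapses to $\Exp{\sqnorm{\nabla f(\hat{x}^T)}} \le \frac{2\delta^0}{\gamma T}$. Hence to guarantee $\Exp{\sqnorm{\nabla f(\hat{x}^T)}}\le\eps^2$ it suffices to take $T \ge \frac{2\delta^0}{\gamma\eps^2}$, and the whole problem reduces to lower-bounding the admissible stepsize $\gamma = \rb{L + \sqrt{B/\theta_p}}^{-1}$, equivalently upper-bounding $L + \sqrt{B/\theta_p}$.

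The key step is to apply Lemma~\ref{le:stepsize_simplify} with $p_i \equiv p$, which hands us directly the bound $\tfrac{B}{\theta_p} \le \rb{\tfrac{4\wL}{p\alpha}}^2$, and therefore $\sqrt{B/\theta_p} \le \tfrac{4\wL}{p\alpha}$. Then I would write
\begin{eqnarray*}
\# \text{grad} = T \le \frac{2\delta^0}{\gamma\eps^2}
= \frac{2\delta^0}{\eps^2}\rb{L + \sqrt{\tfrac{B}{\theta_p}}}
\le \frac{2\delta^0}{\eps^2}\rb{L + \frac{4\wL}{p\alpha}}
\le \frac{2\delta^0}{\eps^2}\rb{\wL + \frac{4\wL}{p\alpha}}
\le \frac{2\delta^0}{\eps^2}\cdot\frac{5\wL}{p\alpha}
= \frac{10\,\wL\,\delta^0}{p\alpha\eps^2},
\end{eqnarray*}
using $L \le \wL$ and $p\alpha \le 1$ in the last two inequalities, which is $\cO\rb{\tfrac{\wL\delta^0}{p\alpha\eps^2}}$ as claimed. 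Since under partial participation each selected node computes exactly one full gradient per round it is selected, the expected number of gradient computations per node is $pT$, but the statement records $\#\text{grad} = T$ as the worst-case per-round count tied to a participating node, matching the table; I would add one sentence to clarify this (the blue term $\tfrac{1}{\alpha\eps^2}$ in Table~\ref{tab:comparison} is $pT$).

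The only mild obstacle is bookkeeping around which quantity is being called ``$\#\text{grad}$'': the red term in the table is $T = \cO(\tfrac{1}{p\alpha\eps^2})$ communication rounds, while the expected gradient count per client is $pT = \cO(\tfrac{1}{\alpha\eps^2})$. For the corollary as stated we simply report $T$ for both, which is consistent with Corollary~\ref{cor:ef21} where full participation ($p=1$) gives $T = \#\text{grad}$. No new estimates are needed beyond Theorem~\ref{th:pp-ef21} and Lemma~\ref{le:stepsize_simplify}; the argument is a two-line substitution plus the crude constant simplifications $L\le\wL$ and $p\alpha\le 1$.
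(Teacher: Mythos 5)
Your proof is correct and follows essentially the same route as the paper's: set $G^0=0$ via the initialization, reduce to bounding $L+\sqrt{B/\theta_p}$, and invoke Lemma~\ref{le:stepsize_simplify} together with $L\le\wL$ and $p\alpha\le 1$ to obtain $\cO\rb{\wL\delta^0/(p\alpha\eps^2)}$. Your closing remark distinguishing the communication count $T$ from the expected per-client gradient count $pT$ is a useful clarification consistent with Table~\ref{tab:comparison}, and your explicit constant ($10$) is in fact the arithmetically correct value of the paper's own final expression.
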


\begin{proof}
	Let $g_i^0 = \nabla f_i(x^0) $, $i = 1, \dots, n$ , then $G^0 = 0$ and by Theorem~\ref{th:pp-ef21}
	
	\begin{eqnarray*}
		\# \text{grad} &=& T \overset{(i)}{\leq }  \fr{  2\delta^0}{\g \varepsilon^2} \overset{(ii)}{\leq }  \fr{  2\delta^0}{ \varepsilon^2} \rb{L + \wL\sqrt{\frac{B}{\theta_p}}}
		\overset{(iii)}{\leq }  \fr{  2\delta^0}{ \varepsilon^2} \rb{L + \fr{4\wL}{p \alpha} }  \overset{(iv)}{\leq }   \fr{  2\delta^0}{ \varepsilon^2} \rb{\fr{\wL}{p\alpha} + \fr{4\wL  }{p\alpha} } =  \fr{  5 \wL \delta^0}{ p \alpha \varepsilon^2},
	\end{eqnarray*}
	where  $(i)$ is due to the rate \eqref{eq:pp-ef21_rate} given by Theorem~\ref{th:pp-ef21}. In two $(ii)$ we use the largest possible stepsize  \eqref{eq:pp-ef21_stepsize}, in $(iii)$ we utilize Lemma~\ref{le:stepsize_simplify}, and $(iv)$ follows by the  inequalities $\alpha \leq 1$, $p\leq 1$ and $L \leq \wL$.
\end{proof}

\newpage

\section{Bidirectional Compression}\label{sec:BC}


{\color{myblue} The main idea of the proof is to split the deviation error coming from worker's compressor and the server's compressor. That is we need to control the terms
	$$
	{\sqnorm{g^{t}- \wg^t }} \qquad \text{and} \quad  \sqnorm{ \wg^t - \nfxt } , 
	$$
	where 
	$$
	g^{t+1} = g^t + \cC_M( \wg^{t+1}  - g^t)\qquad \text{and} \quad  
	\wg_i^{t+1} = \wg_i^t + \cC_w( \nabla f_i(x^{t+1}) - \wg_i^t), \quad \wg^{t+1} =\frac{1}{n} \sum_{i=1}^n 	\wg_i^{t+1} .
	$$
	This is conceptually similar to the proof strategy for variance reduction extension, but the source of the second deviation error in this case is different and comes from server level compression rather than sampling stochastic gradients. }


\begin{lemma}\label{le:ineq_3_ef21_bc}
	Let Assumption~\ref{as:main} hold,  $\cC_w$ be a \textit{contractive compressor}, and $\wg_i^{t+1}$ be an EF21 estimator of $ \nfixtpo $, i. e. 
	\begin{equation}
		\wg_i^{t+1} = \wg_i^t + \cC_w( \nabla f_i(x^{t+1}) - \wg_i^t) \notag 
	\end{equation}
	for arbitrary $\wg_i^0$ and all all $i = 1, \dots, n$, $t\geq 0$.  Then 
	\begin{equation}\label{eq:ineq_3_ef21_bc}
		\Exp{P^{t+1}} \leq (1 - \theta_w ) \Exp{ P^t} + \beta_w \wL^2 \Exp{R^t },
	\end{equation}
	where $\theta_w \eqdef 1- (1- \alpha_w )(1+s), \quad \beta_w \eqdef (1- \alpha_w ) \left(1+ s^{-1} \right) \quad \text{for any } s > 0$, and $P_i^t =  \sqnorm{\wg_i^t - \nfixt} $, $P^t = \suminn P_i^t $.
\end{lemma}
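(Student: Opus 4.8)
The statement is the exact analogue of Lemma~\ref{lem:theta-beta} for the worker-side compressor $\cC_w$ in the bidirectional method, with $g_i^t$ replaced by $\wg_i^t$ and the contraction constant $\alpha$ replaced by $\alpha_w$. Since $\wg_i^{t+1} = \wg_i^t + \cC_w(\nabla f_i(x^{t+1}) - \wg_i^t)$ has precisely the same recursive structure as the \algname{EF21} update, the plan is to reuse the argument of Lemma~\ref{lem:theta-beta} almost verbatim.

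First I would condition on $W^t \eqdef \{\wg_1^t,\dots,\wg_n^t, x^t, x^{t+1}\}$ and write
\begin{equation*}
\Exp{\|\wg_i^{t+1} - \nabla f_i(x^{t+1})\|^2 \mid W^t} = \Exp{\|\wg_i^t + \cC_w(\nabla f_i(x^{t+1}) - \wg_i^t) - \nabla f_i(x^{t+1})\|^2 \mid W^t},
\end{equation*}
and then apply the contraction property \eqref{eq:b_compressor} of $\cC_w$ (with constant $\alpha_w$) to the vector $\nabla f_i(x^{t+1}) - \wg_i^t$, obtaining the bound $(1-\alpha_w)\|\nabla f_i(x^{t+1}) - \wg_i^t\|^2$. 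Next I would apply Young's inequality \eqref{eq:facts:young_ineq2} with parameter $s > 0$ to split $\|\nabla f_i(x^{t+1}) - \wg_i^t\|^2 \le (1+s)\|\nabla f_i(x^t) - \wg_i^t\|^2 + (1+s^{-1})\|\nabla f_i(x^{t+1}) - \nabla f_i(x^t)\|^2$, and bound the last term by $L_i^2\|x^{t+1}-x^t\|^2 = L_i^2 R^t$ using Assumption~\ref{as:main}. Taking full expectations and substituting the definitions $\theta_w = 1-(1-\alpha_w)(1+s)$, $\beta_w = (1-\alpha_w)(1+s^{-1})$ gives the per-node inequality $\Exp{P_i^{t+1}} \le (1-\theta_w)\Exp{P_i^t} + \beta_w L_i^2 \Exp{R^t}$.

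Finally, averaging this inequality over $i = 1,\dots,n$ and using $\wL^2 = \frac1n\sum_{i=1}^n L_i^2$ (together with $P^t = \frac1n\sum_i P_i^t$) yields \eqref{eq:ineq_3_ef21_bc}. There is no real obstacle here: the only thing to be slightly careful about is that the error vectors $\wg_i^t$ (not $g_i^t$) play the role of the \algname{EF21} state, and that the master-side compressor $\cC_M$ plays no part in this particular estimate — the quantity $P^t$ tracks only the worker-to-master gradient learning error, so the proof is identical in structure to that of Lemma~\ref{lem:theta-beta}, with $\alpha \mapsto \alpha_w$.
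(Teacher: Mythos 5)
Your proposal is correct and matches the paper exactly: the paper's own proof of this lemma consists of the single line ``The proof is the same as for Lemma~\ref{lem:theta-beta},'' and your expansion — conditioning on $W^t$, applying the contraction property of $\cC_w$, Young's inequality with parameter $s$, smoothness, and averaging over $i$ — is precisely that argument with $g_i^t \mapsto \wg_i^t$ and $\alpha \mapsto \alpha_w$. Your closing remark that $\cC_M$ plays no role here is also accurate, since $P^t$ only tracks the worker-side learning error.
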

\begin{proof}
	The proof is the same as for Lemma~\ref{lem:theta-beta}.
\end{proof}

\begin{lemma}\label{le:ineq_2_ef21_bc}
	Let Assumption~\ref{as:main} hold,  $\cC_M$, $\cC_w$ be \textit{contractive compressors}. Let $\wg_i^{t+1}$ be an EF21 estimator of $ \nfixtpo $, i. e.,$
		\wg_i^{t+1} = \wg_i^t + \cC_w( \nabla f_i(x^{t+1}) - \wg_i^t),
$
	and let $g^{t+1}$ be an EF21 estimator of $\wg^{t+1} = \suminn \wg_i^{t+1} $, i. e., 
$
		g^{t+1} = g^t + \cC_M( \wg^{t+1}  - g^t)
$
	for arbitrary $g^0$, $\wg_i^0$ and all $i = 1, \dots, n$, $t\geq 0$. Then 
	\begin{equation}\label{eq:ineq_2_ef21_bc}
		\Exp { \sqnorm{g^{t+1} - \wg^{t+1}} } \leq (1-\theta_M) \Exp { \sqnorm{g^{t} - \wg^{t}}  } + 8\beta_M \Exp {P^t } + 8 \beta_M  \wL^2  \Exp {R^t },
	\end{equation}
	where $g^{t} = \suminn g_i^{t}$, $\wg^{t} = \suminn \wg_i^t$, $\theta_M = 1- (1- \alpha_M )(1+\rho)$,  $\beta_M =  (1- \alpha_M ) \rb{ 1+ \rho^{-1} }$ for any $\rho > 0$ and $P_i^t =  \sqnorm{\wg_i^t - \nfixt} $, $P^t = \suminn P_i^t $.
\end{lemma}
\begin{proof}
	Similarly to the proof of Lemma~\ref{lem:theta-beta}, we derive
	\begin{equation}\label{eq:ef21-bc_ineq1}
		\Exp{  \sqnorm{g^{t+1} - \wg^{t+1}} } 
		 \leq   (1- \theta_M)\Exp{ \sqnorm{g^t - \wg^t}  } + \beta_M \Exp{\sqnorm{\wg^{t+1}  -  \wg^{t} }  } .
	\end{equation}
	
	Further we bound the last term in \eqref{eq:ef21-bc_ineq1}. Recall that
	\begin{eqnarray}\label{eq:bc-ef21_cit}
		\wg^{t+1} = \wg^t + \frac{1}{n} \sum_{i=1}^n c_i^t .
	\end{eqnarray}
	where $c_i^t = \cC_w(\nabla f_i(x^{t+1}) - \wg_i^t)$ and $\wg^t = \suminn \wg_i^t$. Then 
	\begin{eqnarray}\label{eq:ef21-bc_ineq2}
		\Exp{\sqnorm{\wgtpo - \wgt}} &\eqtext{\eqref{eq:bc-ef21_cit}}&  \Exp{\sqnorm{\wg^t + \frac{1}{n} \sum_{i=1}^n c_i^t  - \wgt} }
		=\Exp{\sqnorm{\frac{1}{n} \sum_{i=1}^n c_i^t }} 
		\overset{(i)}{\leq } \suminn \Exp{\sqnorm{ c_i^t } }\notag\\
		&=& \suminn \Exp{\sqnorm{c_i^t  -\rb{\nabla f_i(x^{t+1}) - \wg_i^t} + \rb{\nabla f_i(x^{t+1}) - \wg_i^t} } }\notag\\
		&\letext{\eqref{eq:facts:young_ineq2}}& 2\suminn \Exp{ \Exp{\sqnorm{\cC_w\rb{\nabla f_i(x^{t+1}) - \wg_i^t} -\rb{\nabla f_i(x^{t+1}) - \wg_i^t}}\mid W^t}  }\notag \\
		&& \qquad + 2\suminn \Exp{\sqnorm{\nabla f_i(x^{t+1}) - \wg_i^t } }\notag\\
		&\letext{\eqref{eq:b_compressor}}& 2(1 - \al_w)\suminn \Exp{\sqnorm{\nabla f_i(x^{t+1}) - \wg_i^t} } \notag \\
		&& \qquad + 2\suminn \Exp{\sqnorm{\nabla f_i(x^{t+1}) - \wg_i^t } }\notag\\
		&=& 2(2 - \al_w)\suminn \Exp{ \sqnorm{\nabla f_i(x^{t+1}) -   \wg_i^t } } \notag\\
		&\overset{(ii)}{< }& 4 \suminn \Exp{ \sqnorm{\nabla f_i(x^{t+1}) -   \wg_i^t } } \notag\\
		&=& 4\suminn \Exp{ \sqnorm{\nabla f_i(x^{t+1}) - \nfixt -  \rb{\wg_i^t - \nfixt}} } \notag\\
		&\letext{\eqref{eq:facts:young_ineq2}}& 8\suminn \sqnorm{\wg_i^t - \nfixt} + 8 \suminn \Exp{\sqnorm{\nabla f_i(x^{t+1}) - \nfixt }} \notag\\
		&\overset{(iii)}{\leq}& 8 \suminn \Exp{\sqnorm{\wg_i^t - \nfixt} } + 8 \wL^2 \Exp{\sqnorm{x^{t+1} - \xt }}\notag\\
		&{=}& 8 \Exp{P^t} + 8 \wL^2 \Exp{R^t}, 
	\end{eqnarray}
	where in $(i)$ we use \eqref{eq:facts:young_ineq3_avg}, $(ii)$ is due to $\al_w >0$,  $(iii)$ holds by Assumption~\ref{as:main}. In the last step we apply the definition of $P^t = \suminn \sqnorm{\wg_i^t - \nfixt} $, and $R^t = \sqnorm{x^{t+1} - \xt }$. Finally, plugging \eqref{eq:ef21-bc_ineq2} into \eqref{eq:ef21-bc_ineq1}, we conclude the proof. 
\end{proof}


\begin{theorem}\label{th:ef21-bc}
	Let Assumption~\ref{as:main} hold, and let the stepsize in Algorithm~\ref{alg:EF21-BC} be set as  
	\begin{equation}\label{eq:ef21-bc-stepsize}
		0<\gamma \leq \rb{L + \wL \sqrt{ \frac{16 \beta_M }{\theta_M} + \fr{2 \beta_w }{\theta_w} \rb{1+\fr{8\beta_M}{\theta_M}   }   }   }^{-1} 
	\end{equation}
	Fix $T \geq 1$ and let $\hat{x}^{T}$ be chosen from the iterates $x^{0}, x^{1}, \ldots, x^{T-1}$  uniformly at random. Then
	\begin{equation}\label{eq:ef21-bc_rate}
		\Exp{\sqnorm{\nabla f(\hat{x}^{T})} } \leq \fr{2 \Exp{\Psi^0 }}{\gamma T},
	\end{equation}
	where $\Psi^{t} \eqdef  f(x^t) - \finf + \frac{\gamma }{\theta_M} \sqnorm{g^t-\wg^t } + \fr{\g}{\theta_w} \rb{1+\fr{8\beta_M }{\theta_M}} P^{t}$,  $\wL = \sqrt{\frac{1}{n}\sum_{i=1}^n L_i^2}$, $\theta_w \eqdef 1- (1- \alpha_w )(1+s), \quad \beta_w \eqdef (1- \alpha_w ) \left(1+ s^{-1} \right) $, $\theta_M \eqdef 1- (1- \alpha_M )(1+\rho)$,  $\beta_M \eqdef  (1- \alpha_M ) \rb{ 1+ \rho^{-1} }$ for any $\rho, s > 0$.
\end{theorem}

\begin{proof}
	We apply Lemma \ref{le:aux_smooth_lemma} and split the error $\sqnorm{g^{t}-\nabla f (x^{t})}$ in two parts
	\begin{eqnarray}\label{eq:ef21-bc-ineq_1}
		f(x^{t+1}) 	&\leq&  f(x^{t})  -\fr{\g}{2} \sqnorm{\nf{\xt}} -\left(\fr{1}{2 \gamma}-\fr{L}{2}\right) R^t + \fr{\gamma}{2}\sqnorm{g^{t}-\nabla f(x^{t})}\notag \\
		&\leq&{ f(x^{t}) } -\fr{\g}{2}\sqnorm{\nf{\xt}} -\left(\fr{1}{2 \gamma}-\fr{L}{2}\right) {R^t} + \g {\sqnorm{g^{t}- \wg^t }} + \g\sqnorm{ \wg^t - \nfxt } \notag \\
		&\leq&{ f(x^{t}) } -\fr{\g}{2}{\sqnorm{\nf{\xt}}} -\left(\fr{1}{2 \gamma}-\fr{L}{2}\right) {R^t} \notag \\ 
		&& \qquad + \g \suminn{\sqnorm{g^{t}- \wg^t }} + \g \suminn{\sqnorm{ \wg_i^t - \nfixt }} \notag \\
		&=&{ f(x^{t}) } -\fr{\g}{2}{\sqnorm{\nf{\xt}}} -\left(\fr{1}{2 \gamma}-\fr{L}{2}\right) {R^t}  + \g {\sqnorm{g^{t}- \wg^t }} + \g {P^t} ,
	\end{eqnarray}
	where we used notation $R^t = \sqnorm{\g g^t} = \sqnorm{\xtpo - \xt}$, $P^t = \suminn \sqnorm{\wg_i^t - \nfixt}$ and applied \eqref{eq:facts:young_ineq2} and \eqref{eq:facts:young_ineq3_avg}.
	
	Subtracting $f^{\text {inf }}$ from both sides of the above inequality, taking expectation and using the notation $\delta^t = f(x^{t+1})-\finf$, we get
	\begin{eqnarray}\label{eq:ef21_bc_0} 
		\Exp{\delta^{t+1}} &\leq&\Exp{\delta^{t}} -\fr{\g}{2}\Exp{\sqnorm{\nf{\xt}}} -\left(\fr{1}{2 \gamma}-\fr{L}{2}\right) \Exp{R^t} + \g \Exp{\sqnorm{g^{t}- \wg^t }} + \g \Exp{P^t} . \notag \\
	\end{eqnarray}
	
	Further, Lemma~\ref{le:ineq_3_ef21_bc} and \ref{le:ineq_2_ef21_bc} provide the recursive bounds for the last two terms of (\ref{eq:ef21_bc_0})
	
	\begin{eqnarray}\label{eq:ef21_bc_2}
		\Exp{ P^{t+1}  } &\le& 	(1 - \theta_w) \Exp{P^t} + { \beta_w \wL^2} \Exp{R_t},
	\end{eqnarray}
	
	\begin{eqnarray}\label{eq:ef21_bc_1}
		\Exp{\sqnorm{g^{t+1}- \wg^{t+1} } }&\le& 	(1 - \theta_M) \Exp{\sqnorm{g^{t}- \wg^t }} + 8 \beta_M  \wL^2 \Exp{R_t} + 8\beta_M \Exp{P^t}  .
	\end{eqnarray}
	
	Summing up (\ref{eq:ef21_bc_0}) with a $\fr{\g}{\theta_M}$ multiple of (\ref{eq:ef21_bc_1}) we obtain
	\begin{eqnarray}
		\Exp{\delta^{t+1}}+\frac{\gamma }{\theta_M} \Exp{\sqnorm{g^{t+1}- \wg^{t+1} } } & \leq& \Exp{\delta^{t}}-\frac{\gamma}{2} \mathbb{E}\left[\left\|\nabla f\left(x^{t}\right)\right\|^{2}\right]-\left(\frac{1}{2 \gamma}-\frac{L}{2}\right) \Exp{R^{t}}   \notag \\
		&& \qquad +\g \Exp{\sqnorm{g^{t}- \wg^{t} }} +\gamma \Exp{P^{t}}  \notag \\
		&& \qquad +\frac{\g}{\theta_M}\rb{ \left(1-\theta_M \right) \Exp{\sqnorm{g^{t}- \wg^{t} }} } \notag \\
		&& \qquad + \frac{\g}{\theta_M}\rb {8\beta_M\wL^2 \Exp{R^{t}} + 8\beta_M \Exp{P^{t}}  }  \notag \\ 
		& \leq & \Exp{\delta^{t}} +\frac{\gamma }{\theta_M} \Exp{\sqnorm{g^{t}- \wg^{t} }}-\frac{\gamma}{2} \mathbb{E}\left[\left\|\nabla f\left(x^{t}\right)\right\|^{2}\right] \notag \\
		&& \qquad  -\left(\frac{1}{2 \gamma}-\frac{L}{2}-\frac{8\gamma  \beta_M\wL^2}{\theta_M}\right) \Exp{R^{t}}\notag \\
		&& \qquad+\gamma \rb{1+\fr{8\beta_M}{\theta_M}} \Exp{P^{t}}. \notag
	\end{eqnarray}
	
	Then adding the above inequality with a $\fr{\g}{\theta_w} \rb{1+\fr{8\beta_M}{\theta_M}} $ multiple of (\ref{eq:ef21_bc_2}), we get
	
	\begin{eqnarray}\label{eq:ef21_bc_summed_all}
		\Exp{\Psi^{t+1}} &=& \Exp{\delta^{t+1}}+\frac{\gamma }{\theta_M} \Exp{\sqnorm{g^{t+1}- \wg^{t+1} }} + \fr{\g}{\theta_w} \rb{1+\fr{8\beta_M}{\theta_M}} \Exp{P^{t+1}} \notag \\
		& \leq & \Exp{\delta^{t}}+\frac{\gamma }{\theta_M} \Exp{\sqnorm{g^{t}- \wg^{t} }}-\frac{\gamma}{2} \mathbb{E}\left[\left\|\nabla f\left(x^{t}\right)\right\|^{2}\right]-\left(\frac{1}{2 \gamma}-\frac{L}{2}-\frac{8 \gamma \beta_M \wL^2}{\theta_M}\right) \Exp{R^{t}}\notag \\
		&& \qquad+\gamma \rb{1+\fr{8\beta_M}{\theta_M}} \Exp{P^{t}} \notag \\
		&& \qquad + \fr{\g}{\theta_w} \rb{1+\fr{8\beta_M}{\theta_M}} \rb{ (1-\theta_w) \Exp{P^t} + \beta_w \wL^2 \Exp{R^t} } \notag \\
		& \leq& \Exp{\delta^{t}}+\frac{\gamma }{\theta_M} \Exp{\sqnorm{g^{t}- \wg^{t} }} + \fr{\g}{\theta_w} \rb{1+\fr{8\beta_M}{\theta_M}} \Exp{P^t} - \frac{\gamma}{2} \mathbb{E}\left[\left\|\nabla f\left(x^{t}\right)\right\|^{2}\right] \notag \\
		&& \quad -\left(\frac{1}{2 \gamma}-\frac{L}{2}-\frac{8 \gamma \beta_M \wL^2}{\theta_M} - \fr{\g}{\theta_w} \rb{1+\fr{8\beta_M}{\theta_M}} {\beta_w \wL^2} \right) \Exp{R^{t}} \notag \\
		& =& \Exp{\Psi^{t}}  - \frac{\gamma}{2} \mathbb{E}\left[\left\|\nabla f\left(x^{t}\right)\right\|^{2}\right] \notag \\
		&& \quad -\left(\frac{1}{2 \gamma}-\frac{L}{2}-\frac{8\gamma  \beta_M \wL^2}{\theta_M} - \fr{\g \beta_w \wL^2}{\theta_w} \rb{1+\fr{8\beta_M}{\theta_M}}  \right) \Exp{R^{t}} .
	\end{eqnarray}
	Thus by Lemma \ref{le:stepsize_page_fact} and the stepsize choice,
	the last term in \eqref{eq:ef21_bc_summed_all} is not positive. By summing up inequalities for $t =0, \ldots, T-1,$ and rearranging we get \eqref{eq:ef21-bc_rate}.
	
\end{proof}

\begin{corollary}\label{cor:ef21_bc}
	Let assumption of Theorem~\ref{th:ef21-bc} hold, 
	\begin{eqnarray*}
		g^0 &=& \nabla f(x^0),  \qquad \wg_i^0 = \nabla f_i (x^0), \qquad i = 1,\ldots, n, \\
		\g &=&  \rb{L + \wL \sqrt{ \frac{16 \beta_M }{\theta_M} + \fr{2 \beta_w }{\theta_w} \rb{1+\fr{8\beta_M}{\theta_M}   }   }   }^{-1} ,
	\end{eqnarray*}
	Then, after $T$ iterations/communication rounds of \algname{EF21-BC} we have $\Exp{\sqnorm{\nabla f(\hat{x}^{T})} } \leq \eps^2$. It requires 
	\begin{eqnarray*}
		T = \# \text{grad} =  \cO\rb{    \fr{   \wL \delta^0}{ \alpha_w \alpha_M \varepsilon^2} } 
	\end{eqnarray*}
	iterations/communications rounds/gradint computations at each node.
\end{corollary}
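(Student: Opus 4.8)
The plan is to derive Corollary~\ref{cor:ef21_bc} from Theorem~\ref{th:ef21-bc} in exactly the same way Corollary~\ref{cor:ef21} is derived from Theorem~\ref{thm:main-distrib}. First I would observe that with the initialization $g^0 = \nabla f(x^0)$ and $\wg_i^0 = \nabla f_i(x^0)$ for all $i$, the potential $\Psi^0 = f(x^0) - \finf + \frac{\gamma}{\theta_M}\sqnorm{g^0 - \wg^0} + \frac{\gamma}{\theta_w}\rb{1+\frac{8\beta_M}{\theta_M}}P^0$ collapses to just $\delta^0 = f(x^0)-\finf$, since $\wg^0 = \frac{1}{n}\sum_i \wg_i^0 = \frac{1}{n}\sum_i \nabla f_i(x^0) = \nabla f(x^0) = g^0$ forces $\sqnorm{g^0-\wg^0}=0$, and $P^0 = \suminn\sqnorm{\wg_i^0 - \nabla f_i(x^0)} = 0$. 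Then Theorem~\ref{th:ef21-bc} gives $\Exp{\sqnorm{\nabla f(\hat x^T)}} \le \frac{2\delta^0}{\gamma T}$, so to guarantee $\Exp{\sqnorm{\nabla f(\hat x^T)}}\le\eps^2$ it suffices to take $T \ge \frac{2\delta^0}{\gamma\eps^2}$.

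Next I would bound $\gamma^{-1}$ from above. With the chosen stepsize $\gamma = \rb{L + \wL\sqrt{\frac{16\beta_M}{\theta_M} + \frac{2\beta_w}{\theta_w}\rb{1+\frac{8\beta_M}{\theta_M}}}}^{-1}$, I need to show that the bracketed quantity is $\cO\rb{\frac{\wL}{\alpha_w\alpha_M}}$. The key is to pick good free parameters: as in Lemma~\ref{le:optimal_t-Peter} (used for the analogous \algname{EF21} computation), choosing $s$ so that $\theta_w = 1-\sqrt{1-\alpha_w}$ yields $\frac{\beta_w}{\theta_w} \le \rb{\frac{2}{\alpha_w}-1}^2 = \cO(1/\alpha_w^2)$, and similarly choosing $\rho$ so that $\theta_M = 1-\sqrt{1-\alpha_M}$ yields $\frac{\beta_M}{\theta_M} = \cO(1/\alpha_M^2)$. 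Plugging these in, $\frac{16\beta_M}{\theta_M} + \frac{2\beta_w}{\theta_w}\rb{1+\frac{8\beta_M}{\theta_M}} = \cO\rb{\frac{1}{\alpha_M^2} + \frac{1}{\alpha_w^2\alpha_M^2}} = \cO\rb{\frac{1}{\alpha_w^2\alpha_M^2}}$, so the square root is $\cO\rb{\frac{1}{\alpha_w\alpha_M}}$, and using $L \le \wL$ together with $\alpha_w,\alpha_M\le 1$ gives $\gamma^{-1} = \cO\rb{\frac{\wL}{\alpha_w\alpha_M}}$. Substituting into $T \ge \frac{2\delta^0}{\gamma\eps^2}$ yields $T = \cO\rb{\frac{\wL\delta^0}{\alpha_w\alpha_M\eps^2}}$. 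Finally, since \algname{EF21-BC} computes exactly one full local gradient $\nabla f_i(x^{t+1})$ per worker per round (Algorithm~\ref{alg:EF21-BC}, line~\ref{alg_line:BD-EF21_cit_def}), the number of gradient computations at each node equals $T$, giving $\#\text{grad} = T = \cO\rb{\frac{\wL\delta^0}{\alpha_w\alpha_M\eps^2}}$.

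I do not expect a genuine obstacle here; this is a routine corollary. The only mildly delicate point is making the two-compressor parameter optimization clean: one must choose $s$ and $\rho$ independently and invoke the scalar optimization lemma (Lemma~\ref{le:optimal_t-Peter}) twice, once for the worker-side pair $(\theta_w,\beta_w)$ with parameter $s$ and once for the master-side pair $(\theta_M,\beta_M)$ with parameter $\rho$. As long as one is content to absorb the additive cross term $\frac{2\beta_w}{\theta_w}\cdot\frac{8\beta_M}{\theta_M}$ into the dominant $\frac{1}{\alpha_w^2\alpha_M^2}$ scaling (which is legitimate since each factor is $\cO(1/\alpha^2)$ in its respective variable), the bound follows with only constant losses, exactly mirroring the proof of Corollary~\ref{cor:ef21}.
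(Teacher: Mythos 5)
Your proposal is correct and follows essentially the same route as the paper: initialize so that $\Psi^0$ collapses to $\delta^0$, invoke Lemma~\ref{le:optimal_t-Peter} separately for the worker-side and master-side compressor parameters to get $\nicefrac{\beta_w}{\theta_w} = \cO(\nicefrac{1}{\alpha_w^2})$ and $\nicefrac{\beta_M}{\theta_M} = \cO(\nicefrac{1}{\alpha_M^2})$, and absorb the cross term into the dominant $\cO(\nicefrac{1}{\alpha_w^2\alpha_M^2})$ scaling before taking the square root. The paper's own proof performs exactly this computation (with explicit constants $64 + 8\cdot 33$) and then defers the remaining steps to the proof of Corollary~\ref{cor:ef21}, which you have simply written out in full.
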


\begin{proof}
	Note that by Lemma~\ref{le:optimal_t-Peter} and $\al_M, \al_w \leq 1$, we have 
	\begin{eqnarray*}
		\frac{16 \beta_M }{\theta_M} + \fr{2 \beta_w }{\theta_w} \rb{1+\fr{8\beta_M}{\theta_M}   }  &\leq& 16\frac{4  }{\al_M^2} + 2\fr{ 4 }{\al_w^2} \rb{1+8\fr{4}{\al_M^2}   } 
		\leq \frac{64  }{\al_M^2} + \fr{ 8 }{\al_w^2} \fr{33}{\al_M^2}    
		\leq \frac{64  + 8 \cdot 33}{\al_w^2 \al_M^2}  .
	\end{eqnarray*}
	It remains to apply the steps similar to those in the proof of Corollary~\ref{cor:ef21}. 
\end{proof}

\newpage

\section{Heavy Ball Momentum}\label{sec:HB}

In this section, we study the momentum version of \algname{EF21}. In particular, we focus on Polyak style momentum \citep{Heavy-ball,Unified_momentum}. Let $g^t$ be a gradient estimator at iteration $t$ and $v^t$ is some vector, then the update rule of \textit{heavy ball} (HB) can be written as
$$
\left\{\begin{array}{l}
	x^{t+1} = x^t - \gamma v^t \\
	v^{t+1} = \eta v^t + g^{t+1},
\end{array}\right.
$$
where $\eta \in [0, 1)$ is the \textit{momentum parameter}, and $\g > 0$ is the stepsize. {\color{myblue} To combine this algorithm with \algname{EF21}, we use EF21 estimator to approximate $g^t$. The formal pseudocode in distributed setting is presented in Algorithm~\ref{alg:EF21_HB}. }




We present the convergence analysis results for this algorithm in Theorem~\ref{thm:HB} and Corollary~\ref{cor:ef21-hb}. We recall the notations used in this section: $R^t = \sqnorm{\g g^t} = (1-\eta)^2 \sqnorm{z^{t+1} - z^t}${\color{myblue}, $\delta^t = f(z^t)  - \finf$}. In the analysis of \algname{EF21-HB}, we assume by default that $v^{-1} = 0$.

\begin{lemma}\label{le:virtual_z_sequence}
	Let sequences  $\cb{x^t}_{t\geq 0}$ , and $\cb{v^t}_{t\geq 0}$ be generated by Algorithm~\ref{alg:EF21_HB} and let the sequence $\cb{z^t}_{t\geq 0}$ be defined as $z^{t+1} \eqdef x^{t+1} - \fr{\g \eta }{1-\eta} v^t$ with $0 \leq \eta < 1$. Then for all $t \geq 0$
	$$
	z^{t+1}	 = z^t - \fr{\g}{1-\eta} g^t.
	$$
\end{lemma}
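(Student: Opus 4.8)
## Proof plan for Lemma~\ref{le:virtual_z_sequence}

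The plan is a direct algebraic verification using the two update rules of Algorithm~\ref{alg:EF21_HB}, namely $x^{t+1} = x^t - \gamma v^t$ and $v^{t+1} = \eta v^t + g^{t+1}$, together with the stated convention $v^{-1} = 0$. First I would expand the claimed increment purely in terms of the $x$ and $v$ sequences: by the definition $z^{t+1} = x^{t+1} - \frac{\gamma\eta}{1-\eta}v^t$ (and the same with $t$ in place of $t+1$), one has $z^{t+1} - z^t = (x^{t+1} - x^t) - \frac{\gamma\eta}{1-\eta}(v^t - v^{t-1})$.

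Next I would substitute the master step $x^{t+1} - x^t = -\gamma v^t$ and rewrite the difference $v^t - v^{t-1}$ via the momentum recursion $v^t = \eta v^{t-1} + g^t$ (which holds for all $t \ge 0$, using $v^{-1}=0$ at $t=0$), giving $v^t - v^{t-1} = -(1-\eta)v^{t-1} + g^t$. Plugging this in yields $z^{t+1} - z^t = -\gamma v^t + \gamma\eta v^{t-1} - \frac{\gamma\eta}{1-\eta}g^t = -\gamma(v^t - \eta v^{t-1}) - \frac{\gamma\eta}{1-\eta}g^t$. Applying $v^t - \eta v^{t-1} = g^t$ once more and combining the two $g^t$ terms through the identity $1 + \frac{\eta}{1-\eta} = \frac{1}{1-\eta}$ produces $z^{t+1} - z^t = -\frac{\gamma}{1-\eta}g^t$, which is exactly the assertion.

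The only point requiring a moment's care is the base case $t=0$: since $v^{-1}=0$ we get $z^0 = x^0$, and a direct check using $v^0 = g^0$ and $x^1 = x^0 - \gamma g^0$ gives $z^1 = x^0 - \gamma g^0 - \frac{\gamma\eta}{1-\eta}g^0 = z^0 - \frac{\gamma}{1-\eta}g^0$, in agreement with the general formula. There is no substantive obstacle here; the lemma is a bookkeeping identity whose purpose is to re-express the heavy-ball iteration as a plain gradient-type recursion on the virtual sequence $\{z^t\}_{t\ge0}$, so that the descent estimate (Lemma~\ref{le:aux_smooth_lemma}) and the $G^t$-contraction (Lemma~\ref{lem:theta-beta}) can subsequently be applied along $z^t$ with the effective stepsize $\frac{\gamma}{1-\eta}$ in the analysis of \algname{EF21-HB}.
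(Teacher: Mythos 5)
Your argument is correct and follows essentially the same route as the paper's proof: a direct algebraic substitution of the updates $x^{t+1}=x^t-\gamma v^t$ and $v^t=\eta v^{t-1}+g^t$ into the definition of $z^{t+1}$, with the same simplification $1+\frac{\eta}{1-\eta}=\frac{1}{1-\eta}$. Your explicit check of the base case $t=0$ via $v^{-1}=0$ is a harmless addition that the paper handles implicitly through its convention $v^{-1}=0$ and the initialization $v^0=g^0$.
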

\begin{proof}
	\begin{eqnarray*}
		z^{t+1} &\overset{(i)}{=} & x^{t+1} - \fr{\g \eta }{1-\eta} v^t 
		\overset{(ii)}{=}  x^{t} - \g v^t - \fr{\g \eta }{1-\eta} v^t  
		\overset{(iii)}{=}  z^{t} + \fr{\g \eta }{1-\eta} v^{t-1}  - \fr{\g  }{1-\eta} v^t  \\
		&= & z^{t} - \fr{\g  }{1-\eta} \rb{ v^{t}  - \eta v^{t-1}}  
		= z^{t} - \fr{\g  }{1-\eta} g^t ,
	\end{eqnarray*}
	where in $(i)$ and $(iii)$ we use the definition of $z^{t+1}$ and $z^t$, in $(ii)$ we use the step $x^{t+1} = x^t - \gamma v^t$ (line \ref{line:EF21_HB_master_step} of Algorithm~\ref{alg:EF21_HB}). Finally, the last equality follows by the update $v^{t+1} = \eta v^t + g^{t+1}$ (line \ref{line:EF21_HB_momentum_upd} of Algorithm~\ref{alg:EF21_HB}). 
	
\end{proof}

\begin{lemma}\label{le:bound_v_by_g}
	Let the sequence $\cb{v^t}_{t\geq 0}$  be defined as $v^{t+1} = \eta v^t + g^{t+1}$ with $0 \leq \eta < 1$. Then
	$$
	\sum_{t=0}^{T-1} \sqnorm{v^t} \leq \fr{1}{(1-\eta)^2}\sum_{t=0}^{T-1} \sqnorm{g^t} .
	$$
\end{lemma}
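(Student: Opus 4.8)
The plan is to unroll the heavy-ball recursion into an explicit geometric convolution of the $g^t$'s and then apply Cauchy--Schwarz twice. First I would record the closed form: using the initial condition $v^0 = g^0$ (equivalently the convention $v^{-1}=0$ adopted for \algname{EF21-HB}), a trivial induction on $t$ shows that the recursion $v^{t+1}=\eta v^t + g^{t+1}$ gives $v^t = \sum_{j=0}^{t}\eta^{t-j}g^j$ for every $t\ge 0$, with no extra boundary term.

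Next, for each fixed $t$ I would bound $\sqnorm{v^t}$ by treating $v^t$ as a weighted combination of $g^0,\dots,g^t$. Since $0\le\eta<1$ we have $\sum_{j=0}^{t}\eta^{t-j}\le\sum_{k=0}^{\infty}\eta^k=\tfrac{1}{1-\eta}$, so Cauchy--Schwarz (equivalently Jensen applied to $\|\cdot\|^2$ with the weights $\eta^{t-j}$) yields $\sqnorm{v^t}\le\bigl(\sum_{j=0}^{t}\eta^{t-j}\bigr)\bigl(\sum_{j=0}^{t}\eta^{t-j}\sqnorm{g^j}\bigr)\le\tfrac{1}{1-\eta}\sum_{j=0}^{t}\eta^{t-j}\sqnorm{g^j}$.

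Finally I would sum this inequality over $t=0,\dots,T-1$, interchange the order of summation, and recognize that the inner sum $\sum_{t=j}^{T-1}\eta^{t-j}$ is again a truncated geometric series bounded by $\tfrac{1}{1-\eta}$; this gives $\sum_{t=0}^{T-1}\sqnorm{v^t}\le\tfrac{1}{1-\eta}\sum_{j=0}^{T-1}\sqnorm{g^j}\sum_{t=j}^{T-1}\eta^{t-j}\le\tfrac{1}{(1-\eta)^2}\sum_{j=0}^{T-1}\sqnorm{g^j}$, which is exactly the claimed bound.

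There is no substantive obstacle here; the only points requiring care are the initial condition $v^{-1}=0$ (so the closed form is clean), the strict inequality $\eta<1$ (needed for the geometric sums), and the index bookkeeping when swapping the two summations. An equivalent, slicker route is to regard $t\mapsto v^t$ as the discrete convolution of $(g^t)$ with the kernel $(\eta^k)_{k\ge0}$ and invoke Young's convolution inequality with $\|(\eta^k)\|_{\ell_1}=\tfrac{1}{1-\eta}$ after zero-extending $(g^t)$ past index $T-1$; the two-fold Cauchy--Schwarz argument above is merely the self-contained unpacking of this fact.
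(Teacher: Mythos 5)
Your argument is correct and is essentially the paper's own proof: the same unrolling $v^t=\sum_{j\le t}\eta^{t-j}g^j$ from $v^{-1}=0$, the same Jensen/Cauchy--Schwarz step with the geometric weights bounded by $\tfrac{1}{1-\eta}$, and the same swap of the double sum to pick up the second factor of $\tfrac{1}{1-\eta}$. The Young-convolution remark is a nice reformulation but adds nothing beyond the paper's route.
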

\begin{proof}
	Unrolling the given recurrence and noticing that $v^{-1} = 0$, we have $v^t = \sum_{l = 0}^{t} \eta^{t-l} g^l$. Define $H \eqdef \sum_{l = 0}^{t} \eta^{l} \leq \fr{1}{1-\eta} $. Then by Jensen's inequality
	\begin{eqnarray*}
		\sum_{t=0}^{T-1} \sqnorm{v^t}  &=& H^2 \sum_{t=0}^{T-1} \sqnorm{\sum_{l = 0}^{t} \fr{\eta^{t-l}}{H} g^l} 
		\leq H^2 \sum_{t=0}^{T-1} \sum_{l = 0}^{t} \fr{\eta^{t-l}}{H} \sqnorm{g^l} 
		= H \sum_{t=0}^{T-1} \sum_{l = 0}^{t} \eta^{t-l} \sqnorm{g^l} \\
		&\leq& \fr{1}{1-\eta} \sum_{t=0}^{T-1} \sum_{l = 0}^{t} \eta^{t-l} \sqnorm{g^l} 
		= \fr{1}{1-\eta} \sum_{l=0}^{T-1} \sqnorm{g^l}  \sum_{t = l}^{T-1} \eta^{t-l}  
		\leq \fr{1}{(1-\eta)^2} \sum_{t=0}^{T-1} \sqnorm{g^t}  .
	\end{eqnarray*}
	
\end{proof}

\begin{lemma}\label{le:rec_HB}
	Let the sequence $\cb{z^t}_{t\geq 0}$ be defined as $z^{t+1} \eqdef x^{t+1} - \fr{\g \eta }{1-\eta} v^t$ with $0 \leq \eta < 1$. Then 
	$$
	\sum_{t=0}^{T-1} \Exp{G^{t+1}}	\leq (1-\theta ) 	\sum_{t=0}^{T-1} \Exp{G^{t}}	+ 2 \beta \wL^2 (1+ 4\eta^2) 	\sum_{t=0}^{T-1} \Exp{\sqnorm{z^{t+1} - z^{t}}}	,
	$$
	where  $\theta = 1- (1- \alpha )(1+s), \quad \beta = (1- \alpha ) \left(1+ s^{-1} \right) \quad \text{for any } s > 0$.
\end{lemma}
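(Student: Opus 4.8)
The plan is to mimic the single-step contraction argument of Lemma~\ref{lem:theta-beta}, but replace the quantity $\sqnorm{\xtpo-\xt}$ that appears there on the right-hand side by something controlled by the virtual sequence $\{z^t\}$. First I would recall from the proof of Lemma~\ref{lem:theta-beta} (specifically inequality \eqref{eq:beta-theta-without-smoothness} together with Assumption~\ref{as:main}) that for each node $i$ and any $s>0$,
\begin{equation*}
\Exp{G_i^{t+1}} \leq (1-\theta)\Exp{G_i^t} + \beta L_i^2 \Exp{\sqnorm{\xtpo-\xt}},
\end{equation*}
and after averaging over $i=1,\dots,n$,
\begin{equation*}
\Exp{G^{t+1}} \leq (1-\theta)\Exp{G^t} + \beta \wL^2 \Exp{\sqnorm{\xtpo-\xt}}.
\end{equation*}
This step is identical to \algname{EF21}, since the error-feedback recursion $g_i^{t+1} = g_i^t + \cC(\nfixtpo - g_i^t)$ is unchanged by the momentum; the only novelty of \algname{EF21-HB} is in how $\xtpo$ is produced, which does not enter this bound.

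Next I would bound $\sqnorm{\xtpo-\xt}$ in terms of the $z$-increments. Since $\xtpo = z^{t+1} + \tfrac{\g\eta}{1-\eta}v^t$ (definition of $z^{t+1}$) and $\xt = z^t + \tfrac{\g\eta}{1-\eta}v^{t-1}$, we get $\xtpo-\xt = (z^{t+1}-z^t) + \tfrac{\g\eta}{1-\eta}(v^t - v^{t-1})$. Using Young's inequality \eqref{eq:facts:young_ineq2} with parameter $1$,
\begin{equation*}
\sqnorm{\xtpo-\xt} \leq 2\sqnorm{z^{t+1}-z^t} + 2\frac{\g^2\eta^2}{(1-\eta)^2}\sqnorm{v^t-v^{t-1}}.
\end{equation*}
Here is where I would use Lemma~\ref{le:virtual_z_sequence}: it gives $z^{t+1}-z^t = -\tfrac{\g}{1-\eta}g^t$, so $\tfrac{\g}{1-\eta}(v^t - v^{t-1})$ relates directly to $z$-increments. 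Indeed $v^t - v^{t-1} = (\eta-1)v^{t-1} + g^t$, but the cleaner route is: from $z^{t+1}-z^t = -\tfrac{\g}{1-\eta}g^t$ and $g^t = v^t - \eta v^{t-1}$ one can express $\tfrac{\g\eta}{1-\eta}(v^t-v^{t-1})$ through consecutive $g$-terms, hence through $z^{t+1}-z^t$ and $z^t-z^{t-1}$. Concretely, $\tfrac{\g}{1-\eta}v^t = \tfrac{\g}{1-\eta}\sum_{l\le t}\eta^{t-l}g^l = -\sum_{l\le t}\eta^{t-l}(z^{l+1}-z^l)$, and similarly for $v^{t-1}$; subtracting and applying Jensen/Cauchy–Schwarz as in Lemma~\ref{le:bound_v_by_g} (the geometric-weighting trick) will convert the telescoped $v$-difference into a sum of $\sqnorm{z^{l+1}-z^l}$ terms whose coefficients, once summed over $t=0,\dots,T-1$, collapse to a constant multiple of $\sum_t \Exp{\sqnorm{z^{t+1}-z^t}}$. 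This is why the lemma is stated only in summed form — the per-iteration constants are not clean, but the time-summed ones are, and the factor $(1+4\eta^2)$ is exactly what falls out of $2 + 2\cdot(\text{geometric sum})^2\cdot\eta^2$-type bookkeeping with the $\tfrac{1}{1-\eta}$ factors absorbed into the $R^t = (1-\eta)^2\sqnorm{z^{t+1}-z^t}$ normalization.

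Putting the pieces together: sum the averaged one-step recursion over $t=0,\dots,T-1$, substitute the bound on $\sum_t\Exp{\sqnorm{\xtpo-\xt}}$ in terms of $\sum_t\Exp{\sqnorm{z^{t+1}-z^t}}$, and read off the claimed inequality with constant $2\beta\wL^2(1+4\eta^2)$. The main obstacle I anticipate is the bookkeeping in the second step — making sure that when the $v^t-v^{t-1}$ term is expanded into geometrically-weighted sums of $z$-increments and then summed over $t$, the double sum is reorganized correctly (swapping the order of summation, as in Lemma~\ref{le:bound_v_by_g}) so that no extra $T$-dependence or $\tfrac{1}{(1-\eta)}$-blowup sneaks in beyond the advertised constant. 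Everything else is a direct transcription of the \algname{EF21} contraction lemma plus the two preparatory lemmas on the virtual sequence.
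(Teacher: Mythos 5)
Your plan is correct and follows essentially the same route as the paper's proof: sum the unchanged \algname{EF21} contraction bound from Lemma~\ref{lem:theta-beta} over $t$, decompose $x^{t+1}-x^t = (z^{t+1}-z^t) + \frac{\gamma\eta}{1-\eta}(v^t-v^{t-1})$ via Young's inequality, control the momentum term through $v^t - v^{t-1} = g^t - (1-\eta)v^{t-1}$ together with Lemma~\ref{le:bound_v_by_g}, and convert $\sqnorm{g^t}$ back to $\sqnorm{z^{t+1}-z^t}$ via Lemma~\ref{le:virtual_z_sequence}, which yields exactly the $2(1+4\eta^2)$ constant. The "geometric expansion" you describe as the cleaner route collapses to the same identity the paper uses, so there is no substantive difference.
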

\begin{proof}
	Summing up the inequality in Lemma~\ref{lem:theta-beta} (for EF21 estimator) for $t = 0, \dots, T-1$, we have 
	\begin{equation}\label{eq:summed_rec}
		\sum_{t=0}^{T-1} \Exp{G^{t+1}}	\leq (1-\theta ) 	\sum_{t=0}^{T-1} \Exp{G^{t}}	+ \beta \wL^2 \sum_{t=0}^{T-1} \Exp{\sqnorm{x^{t+1} - x^{t}}}	.
	\end{equation}
	It remains to bound $\sum_{t=0}^{T-1} \Exp{\sqnorm{x^{t+1} - x^{t}}}$. Notice that by definition of $\cb{z^t}_{t\geq 0}$, we have
	$$
	x^{t+1} - x^{t} = z^{t+1} - z^{t} + \fr{\g \eta }{1-\eta} \rb{ v^t - v^{t-1} }.
	$$
	Thus 
	\begin{eqnarray*}
		\sum_{t=0}^{T-1} \Exp{\sqnorm{x^{t+1} - x^{t}}} &\leq& 2 \sum_{t=0}^{T-1} \Exp{\sqnorm{z^{t+1} - z^{t}}}  +  \fr{2\g^2 \eta^2 }{(1-\eta)^2} \sum_{t=0}^{T-1} \Exp{\sqnorm{v^t - v^{t-1}}} \\
		&=& 2 \sum_{t=0}^{T-1} \Exp{\sqnorm{z^{t+1} - z^{t}}}  +  \fr{2\g^2 \eta^2 }{(1-\eta)^2} \sum_{t=0}^{T-1} \Exp{\sqnorm{g^t - (1-\eta ) v^{t-1}}} \\
		&\leq& 2 \sum_{t=0}^{T-1} \Exp{\sqnorm{z^{t+1} - z^{t}}}  +  \fr{4\g^2 \eta^2 }{(1-\eta)^2} \sum_{t=0}^{T-1} \Exp{\sqnorm{g^t }} \\
		&& \qquad+  \fr{4\g^2 \eta^2 }{(1-\eta)^2} \sum_{t=0}^{T-1} (1-\eta)^2 \Exp{\|v^{t-1}\|^2} .
		\end{eqnarray*}
		{\color{myblue} Next, using Lemma~\ref{le:bound_v_by_g} we continue bounding the last term above}
		\begin{eqnarray*}
		\sum_{t=0}^{T-1} \Exp{\sqnorm{x^{t+1} - x^{t}}}	&{\leq}& 2 \sum_{t=0}^{T-1} \Exp{\sqnorm{z^{t+1} - z^{t}}}  +  \fr{8\g^2 \eta^2 }{(1-\eta)^2} \sum_{t=0}^{T-1} \Exp{\sqnorm{g^t }} \\
		&\overset{(i)}{=}& 2 \sum_{t=0}^{T-1} \Exp{\sqnorm{z^{t+1} - z^{t}}}  +  8 \eta^2  \sum_{t=0}^{T-1} \Exp{\sqnorm{z^{t+1} - z^t }} \\
		&=& 2 (1 + 4\eta^2) \sum_{t=0}^{T-1} \Exp{\sqnorm{z^{t+1} - z^{t}}} ,
	\end{eqnarray*}
	where in $(i)$ we used Lemma~\ref{le:virtual_z_sequence}. It remains to plug in the above inequality into \eqref{eq:summed_rec}.
\end{proof}

\begin{lemma}\label{le:gradient_bound_HB}
	Let the sequence $\cb{z^t}_{t\geq 0}$ be generated as in Lemma~\ref{le:virtual_z_sequence}, i.e., $z^{t+1}	 = z^t - \fr{\g}{1-\eta} g^t$, then for all $t \geq 0$
	$$
	\sqnorm{\nfxt} \leq 2 G^t + \fr{2(1-\eta)^2}{\g^2}\sqnorm{z^{t+1} - z^t}
	$$	
	with $G^t =  \suminn \sqnorm{\nfixt - g_i^t} $.
\end{lemma}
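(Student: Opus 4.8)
The plan is to derive the bound directly from the definition of the virtual sequence $\{z^t\}$ established in Lemma~\ref{le:virtual_z_sequence}, which gives $z^{t+1} = z^t - \fr{\g}{1-\eta}g^t$, hence $g^t = \fr{1-\eta}{\g}(z^t - z^{t+1})$ and $\sqnorm{g^t} = \fr{(1-\eta)^2}{\g^2}\sqnorm{z^{t+1} - z^t}$. The key observation is that $\nabla f(\xt) = \suminn \nfixt$, so we want to compare the true gradient $\nabla f(\xt)$ with the averaged estimator $g^t = \suminn g_i^t$.

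First I would write $\nfxt = g^t + (\nfxt - g^t)$ and apply the elementary inequality $\sqnorm{a+b} \le 2\sqnorm{a} + 2\sqnorm{b}$ (Young's inequality, already invoked as \eqref{eq:facts:young_ineq2} in the paper), obtaining
\begin{equation*}
\sqnorm{\nfxt} \le 2\sqnorm{g^t} + 2\sqnorm{\nfxt - g^t}.
\end{equation*}
Then I would bound the second term by Jensen's inequality: $\sqnorm{\nfxt - g^t} = \sqnorm{\suminn(\nfixt - g_i^t)} \le \suminn \sqnorm{\nfixt - g_i^t} = G^t$, where the definition $G^t = \suminn\sqnorm{\nfixt - g_i^t}$ is exactly the one recalled in the notation section. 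For the first term, I would substitute $\sqnorm{g^t} = \fr{(1-\eta)^2}{\g^2}\sqnorm{z^{t+1}-z^t}$ from Lemma~\ref{le:virtual_z_sequence}. Combining gives $\sqnorm{\nfxt} \le 2G^t + \fr{2(1-\eta)^2}{\g^2}\sqnorm{z^{t+1}-z^t}$, which is precisely the claim.

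There is essentially no obstacle here — this lemma is a two-line calculation whose only content is the clean decomposition into the ``compression error'' term $G^t$ and the ``step size'' term involving $\sqnorm{z^{t+1}-z^t}$; the non-trivial work was already done in Lemma~\ref{le:virtual_z_sequence} (identifying the virtual iterates on which heavy ball behaves like plain gradient descent with effective stepsize $\fr{\g}{1-\eta}$). The one point to be slightly careful about is that the averaging step uses Jensen in the form $\sqnorm{\fr1n\sum_i u_i} \le \fr1n\sum_i\sqnorm{u_i}$ applied with $u_i = \nfixt - g_i^t$, which is \eqref{eq:facts:young_ineq3_avg} as used elsewhere in the paper, so no new machinery is needed. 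This lemma will then be combined with Lemma~\ref{le:rec_HB} in the proof of the main heavy-ball theorem to produce the $\cO\!\rb{\fr{1}{\varepsilon^2}\rb{\fr1\al + \fr1{1-\eta}}}$ rate.
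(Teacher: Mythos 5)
Your proof is correct and is essentially identical to the paper's: the paper likewise writes $\nabla f(x^t) = (\nabla f(x^t) - g^t) - \frac{1-\eta}{\gamma}(z^{t+1}-z^t)$, applies \eqref{eq:facts:young_ineq2} with $s=1$ and then \eqref{eq:facts:young_ineq3_avg} to bound $\sqnorm{\nabla f(x^t) - g^t}$ by $G^t$. No differences worth noting.
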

\begin{proof}
	Notice that for $\g > 0$ we have $\nfxt  = \nfxt - g^t - \fr{1-\eta}{\g} (z^{t+1} - z^t )$. Then 
	
	\begin{eqnarray*}
		\sqnorm{\nfxt  }&\leq& 2 \sqnorm{\nfxt - g^t}  + 2 \fr{(1-\eta)^2}{\g^2} \sqnorm{z^{t+1} - z^t } \\
		&\leq & \frac{2}{n}\sum\limits_{i=1}^n \sqnorm{\nfixt - g_i^t} +  \fr{2(1-\eta)^2}{\g^2} \sqnorm{z^{t+1} - z^t },
	\end{eqnarray*}
where the inequalities hold due to \eqref{eq:facts:young_ineq2} with $s=1$, and \eqref{eq:facts:young_ineq3_avg}.
\end{proof}


\begin{theorem}\label{thm:HB}
	Let Assumption~\ref{as:main} hold, and let the stepsize in Algorithm~\ref{alg:EF21_HB} be set as
	\begin{equation} \label{eq:HB-stepsize}
		0<\gamma < \rb{ \fr{(1+ \eta)L}{2(1-\eta)^2}  + \fr{\wL}{1-\eta}\sqrt{\frac{2\beta}{\theta} \rb{1+ 4\eta^2}}}^{-1} \eqdef \g_0 ,
	\end{equation}
	where $0 \leq \eta < 1$, $\theta = 1- (1- \alpha )(1+s)$, $ \beta = (1- \alpha ) \left(1+ s^{-1} \right)$, and $s > 0$. 
	
	Fix $T \geq 1$ and let $\hat{x}^{T}$ be chosen from the iterates $x^{0}, x^{1}, \ldots, x^{T-1}$  uniformly at random. Then
	\begin{eqnarray}\label{eq:HB}		
 \Exp{\sqnorm{\nabla f(\hat x^T) } }	&\leq& \fr{3{\delta^0 }(1-\eta)}{T \g\rb{1- \fr{\g}{\g_0}}} + \fr{\Exp{G^0}}{\theta T } \rb{ 2 + \fr{1}{2 \lambda_1}\fr{3 (1-\eta)}{\g\rb{1- \fr{\g}{\g_0}}} }  ,
	\end{eqnarray}
	where  $\lambda_1 \eqdef \wL \sqrt{\frac{2\beta}{\theta} \rb{1+ 4\eta^2}}$.
	If the stepsize is set to $0 < \g \leq \nfr{\g_0}{2}, $ then
	\begin{eqnarray}\label{eq:HB_g_half}
		\sum_{t=0}^{T-1} \Exp{\sqnorm{\nfxt} }	&\leq& \fr{6{\delta^0 }(1-\eta)}{\g T} + \fr{\Exp{G^0}}{T \theta} \rb{ 2 + \fr{3 (1-\eta)}{\g \wL \sqrt{\fr{2\beta}{\theta} \rb{1 + 4 \eta^2}}} }  .
	\end{eqnarray}
\end{theorem}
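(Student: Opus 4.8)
\textbf{Proof proposal for Theorem~\ref{thm:HB}.}

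The plan is to analyze the \algname{EF21-HB} iterates through the virtual sequence $\{z^t\}$ introduced in Lemma~\ref{le:virtual_z_sequence}, which performs a plain (non-momentum) gradient-type step $z^{t+1} = z^t - \tfrac{\g}{1-\eta}g^t$. First I would apply the descent lemma (Lemma~\ref{le:aux_smooth_lemma}) to $f$ along the $z$-sequence rather than the $x$-sequence: since $z^{t+1}-z^t = -\tfrac{\g}{1-\eta}g^t$ and $f$ is $L$-smooth (with the modification that $z^t$ differs from $x^t$ by the momentum term $\tfrac{\g\eta}{1-\eta}v^{t-1}$, which must be accounted for via Young's inequality), this yields a bound of the form $f(z^{t+1}) \le f(z^t) - c_1 \g \sqnorm{\nfxt} + c_2 \g\, G^t + (\text{negative})\,\sqnorm{z^{t+1}-z^t}$, where the coefficient of $\sqnorm{z^{t+1}-z^t}$ carries the factor $\tfrac{(1+\eta)L}{2(1-\eta)^2}$ from the stepsize condition~\eqref{eq:HB-stepsize}. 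Here I will use $R^t = (1-\eta)^2\sqnorm{z^{t+1}-z^t}$ per the notation convention, and Lemma~\ref{le:gradient_bound_HB} to relate $\sqnorm{\nfxt}$ back to $G^t$ and $\sqnorm{z^{t+1}-z^t}$ where needed.

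Next I would sum the descent inequality over $t = 0,\dots,T-1$ and combine it with the \emph{summed} recursion for the error from Lemma~\ref{le:rec_HB}, namely $\sum_t \Exp{G^{t+1}} \le (1-\theta)\sum_t \Exp{G^t} + 2\beta\wL^2(1+4\eta^2)\sum_t\Exp{\sqnorm{z^{t+1}-z^t}}$. Rearranging the latter gives $\sum_t \Exp{G^t} \le \tfrac{1}{\theta}\Exp{G^0} + \tfrac{2\beta\wL^2(1+4\eta^2)}{\theta}\sum_t\Exp{\sqnorm{z^{t+1}-z^t}}$ (telescoping the $G^{t+1}$ against $G^t$ and dropping the nonnegative $\Exp{G^T}$). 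Substituting this into the summed descent inequality, the $\sum_t \Exp{\sqnorm{z^{t+1}-z^t}}$ terms collect with coefficient $\tfrac{(1+\eta)L}{2(1-\eta)^2}\cdot\tfrac{\g}{1-\eta}\cdot(\text{const}) + \tfrac{\g \cdot 2\beta\wL^2(1+4\eta^2)}{\theta(1-\eta)}(\text{const}) - \tfrac{1}{2\g}\cdot\tfrac{1}{(1-\eta)^2}\cdot(\text{const})$, and the stepsize bound~\eqref{eq:HB-stepsize}, which is exactly $\g < \g_0$ with $\g_0^{-1} = \tfrac{(1+\eta)L}{2(1-\eta)^2} + \tfrac{\wL}{1-\eta}\sqrt{\tfrac{2\beta}{\theta}(1+4\eta^2)}$, ensures this net coefficient leaves a residual proportional to $(1 - \g/\g_0)/\g$ that can be discarded (it multiplies a nonnegative sum). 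What remains is $c_1\g \sum_t\Exp{\sqnorm{\nfxt}} \le 3\delta^0 + \tfrac{c_3}{\theta}\Exp{G^0}$ up to constants; dividing through and tracking the $(1-\eta)$ factors carefully produces~\eqref{eq:HB}, with $\lambda_1 = \wL\sqrt{\tfrac{2\beta}{\theta}(1+4\eta^2)}$ emerging as the "second half" of $\g_0^{-1}$. Finally, specializing to $\g \le \g_0/2$ gives $1 - \g/\g_0 \ge 1/2$, so $\tfrac{1}{1-\g/\g_0} \le 2$, which converts the $\tfrac{3}{1-\g/\g_0}$ factors into the explicit $6$ and $3$ appearing in~\eqref{eq:HB_g_half}.

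The main obstacle I anticipate is bookkeeping the $(1-\eta)$ and $\eta$ dependencies consistently: the descent step is in the $z$-variable with effective stepsize $\tfrac{\g}{1-\eta}$, the gradient is evaluated at $x^t$ (not $z^t$), and Lemma~\ref{le:rec_HB} already absorbed the $x^{t+1}-x^t$ versus $z^{t+1}-z^t$ discrepancy into the factor $(1+4\eta^2)$ — so I must be careful not to double-count that expansion when I myself relate $f(x^t)$ to $f(z^t)$ in the descent step. A secondary delicate point is verifying that the stepsize threshold~\eqref{eq:HB-stepsize}, together with an application of Lemma~\ref{le:stepsize_page_fact} (quadratic-in-$\g$ bound), makes precisely the coefficient of the $R$-sum nonpositive after \emph{both} the smoothness term and the error-recursion term are included; getting the constant $2$ in $\sqrt{2\beta/\theta}$ versus $\sqrt{\beta/\theta}$ right is what distinguishes this from the vanilla \algname{EF21} argument, and it comes from the factor $2(1+4\eta^2)$ in Lemma~\ref{le:rec_HB} combined with the $\tfrac12$ Young splitting in Lemma~\ref{le:gradient_bound_HB}.
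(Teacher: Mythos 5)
Your plan follows the paper's proof of Theorem~\ref{thm:HB} essentially step for step: the virtual iterates $z^{t+1}=z^t-\fr{\g}{1-\eta}g^t$, smoothness applied along the $z$-sequence with a two-parameter Young splitting to separate $\nabla f(z^t)-\nabla f(x^t)$ from $\nabla f(x^t)-g^t$, the summed recursion of Lemma~\ref{le:rec_HB} with the same choice $\lambda_1=\wL\sqrt{\nfr{2\beta}{\theta}\,(1+4\eta^2)}$, and the final specialization $\g\le\nfr{\g_0}{2}$. One step is described in a way that would not work if taken literally: you say the descent step already produces a $-c_1\g\sqnorm{\nfxt}$ term and that the residual coefficient on $\sum_t\Exp{\sqnorm{z^{t+1}-z^t}}$ "can be discarded." In the paper's argument the descent inequality along $z$ contains \emph{no} gradient-norm term at all; the strictly positive residual $\rb{\fr{1}{\g}-\fr{1}{\g_0}}\fr{1}{1-\eta}$ must be \emph{retained and inverted} to obtain $\fr{1}{\g^2}\sum_t\Exp{R^t}\le \fr{\Exp{\Phi^0}(1-\eta)}{\g\rb{1-\nfr{\g}{\g_0}}}$, and only afterwards is $\sum_t\Exp{\sqnorm{\nfxt}}$ recovered from Lemma~\ref{le:gradient_bound_HB} combined with Lemma~\ref{le:psi_1_cumul_bound} --- this is precisely where the $\rb{1-\nfr{\g}{\g_0}}^{-1}$ factor in \eqref{eq:HB} comes from, so discarding that term forfeits the bound you need. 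If you instead force a $-\sqnorm{\nfxt}$ term into the descent step by an extra Young split of $-\sqnorm{\nabla f(z^t)}$, the argument can be closed but yields different constants than those stated. With that one correction your outline reproduces the paper's proof.
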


\begin{proof}
	Consider the sequence $z^{t+1} \eqdef x^{t+1} - \fr{\g \eta }{1-\eta} v^t$ with $0 \leq \eta < 1$. Then Lemma~\ref{le:virtual_z_sequence} states that $z^{t+1}	 = z^t - \fr{\g}{1-\eta} g^t$. By $L$-smoothness of $f(\cdot)$
	\begin{eqnarray*}
		f(z^{t+1}) - f (z^{t}) &\leq& \la\nabla f(z^{t}), z^{t+1}-z^{t} \ra+\frac{L}{2}\sqnorm{z^{t+1}-z^{t}} \\
		&=& \la\nabla f(z^{t}) - g^t, z^{t+1}-z^{t} \ra + \la g^t, z^{t+1}-z^{t} \ra + \frac{L}{2}\sqnorm{z^{t+1}-z^{t}} \\ 
		&\overset{(i)}{=}& \la\nabla f(z^{t}) - g^t, z^{t+1}-z^{t} \ra - \fr{1-\eta}{\g} \sqnorm{z^{t+1}-z^{t}} + \frac{L}{2}\sqnorm{z^{t+1}-z^{t}} \\ 
		&{=}& \la\nabla f(z^{t}) - g^t, z^{t+1}-z^{t} \ra - \rb{\fr{1-\eta}{\g} - \frac{L}{2} } \sqnorm{z^{t+1}-z^{t}} \\ 
		&&\qquad - \rb{\fr{1-\eta}{\g} - \frac{L}{2} } \sqnorm{z^{t+1}-z^{t}} ,
			\end{eqnarray*}
			{\color{myblue} where in $(i)$ Lemma~\ref{le:virtual_z_sequence} is applied. Next, using Young's inequality \eqref{eq:facts:young_ineq} twice with $\lambda_1, \lambda_2 > 0$, we have }
			\begin{eqnarray*}
			f(z^{t+1}) - f (z^{t}) &{\leq}& \fr{1}{2\lambda_1}\sqnorm{\nabla f(x^{t}) - g^t} + \fr{\lambda_1}{2} \sqnorm{z^{t+1}-z^{t} }  + \fr{1}{2\lambda_2}\sqnorm{\nabla f(z^{t}) - \nabla f(x^{t})} \\
		&&\qquad  + \fr{\lambda_2}{2} \sqnorm{z^{t+1}-z^{t} }  - \rb{\fr{1-\eta}{\g} - \frac{L}{2} } \sqnorm{z^{t+1}-z^{t}} \\  
		&{=}& \fr{1}{2\lambda_1}\sqnorm{\nabla f(x^{t}) - g^t} + \fr{1}{2\lambda_2}\sqnorm{\nabla f(z^{t}) - \nabla f(x^{t})} \\
		&&\qquad   - \rb{\fr{1-\eta}{\g} - \frac{L}{2} - \fr{\lambda_1}{2}  - \fr{\lambda_2}{2}} \sqnorm{z^{t+1}-z^{t} } \\  
		&\overset{(i)}{\leq}& \fr{1}{2\lambda_1}\sqnorm{\nabla f(x^{t}) - g^t} + \fr{L^2}{2\lambda_2}\sqnorm{z^{t} - x^{t}} \\
		&&\qquad   - \rb{\fr{1-\eta}{\g} - \frac{L}{2} - \fr{\lambda_1}{2}  - \fr{\lambda_2}{2}} \sqnorm{z^{t+1}-z^{t} } \\  
		&\overset{(ii)}{\leq}& \fr{1}{2\lambda_1}\sqnorm{\nabla f(x^{t}) - g^t} + \fr{\g^2 \eta^2 L^2}{2\lambda_2 (1-\eta)^2}\sqnorm{v^{t-1}} \\
		&&\qquad   - \rb{\fr{1-\eta}{\g} - \frac{L}{2} - \fr{\lambda_1}{2}  - \fr{\lambda_2}{2}} \sqnorm{z^{t+1}-z^{t} } ,
	\end{eqnarray*}
	{\color{myblue} where $(i)$ holds by smoothness (Assumption~\ref{as:main}),} and $(ii)$ holds by  definition of $z^{t} = x^t - \fr{\g \eta }{1-\eta} v^{t-1}$. Summing up the above inequalities for $t = 0, \dots, T-1$ (assuming $v^{-1} = 0$), we have
	\begin{eqnarray*}
		f(z^{T})  &\leq&  f (z^{0}) + \fr{1}{2\lambda_1} \sum_{t=0}^{T-1}\sqnorm{\nabla f(x^{t}) - g^t} + \fr{\g^2 \eta^2 L^2}{2\lambda_2 (1-\eta)^2} \sum_{t=0}^{T-1} \sqnorm{v^{t}} \\
		&&\qquad   - \rb{\fr{1-\eta}{\g} - \frac{L}{2} - \fr{\lambda_1}{2}  - \fr{\lambda_2}{2}} \sum_{t=0}^{T-1} \sqnorm{z^{t+1}-z^{t} } \\
		&\overset{(i)}{\leq}&  f (z^{0}) + \fr{1}{2\lambda_1} \sum_{t=0}^{T-1}\sqnorm{\nabla f(x^{t}) - g^t} + \fr{\g^2 \eta^2 L^2}{2\lambda_2 (1-\eta)^4} \sum_{t=0}^{T-1} \sqnorm{g^{t}} \\
		&&\qquad   - \rb{\fr{1-\eta}{\g} - \frac{L}{2} - \fr{\lambda_1}{2}  - \fr{\lambda_2}{2}} \sum_{t=0}^{T-1} \sqnorm{z^{t+1}-z^{t} } \eqdef (*) ,
			\end{eqnarray*}
		{\color{myblue} where $(i)$ holds due to Lemma~\ref{le:bound_v_by_g}. Further, using Lemma~\ref{le:virtual_z_sequence} in $(ii)$, we get }
		\begin{eqnarray*}
		(*) &\overset{(ii)}{=}&  f (z^{0}) + \fr{1}{2\lambda_1} \sum_{t=0}^{T-1}\sqnorm{\nabla f(x^{t}) - g^t} + \fr{\g^2 \eta^2 L^2}{2\lambda_2 (1-\eta)^4} \sum_{t=0}^{T-1} \fr{(1-\eta)^2}{\g^2}\sqnorm{ z^{t+1}-z^{t} } \\
		&&\qquad   - \rb{\fr{1-\eta}{\g} - \frac{L}{2} - \fr{\lambda_1}{2}  - \fr{\lambda_2}{2}} \sum_{t=0}^{T-1} \sqnorm{z^{t+1}-z^{t} } \\
		&{=}&  f (z^{0}) + \fr{1}{2\lambda_1} \sum_{t=0}^{T-1}\sqnorm{\nabla f(x^{t}) - g^t}  \\
		&&\qquad   - \rb{\fr{1-\eta}{\g} - \frac{L}{2} - \fr{\lambda_1}{2}  - \fr{\lambda_2}{2} - \fr{ \eta^2 L^2}{2\lambda_2 (1-\eta)^2} } \sum_{t=0}^{T-1} \sqnorm{z^{t+1}-z^{t} } \eqdef (**).
			\end{eqnarray*}
				{\color{myblue} 	Finally, in $(iii)$ we use that $\sqnorm{\nabla f(x^{t}) - g^t} \leq G^t$ and further derive }
			\begin{eqnarray*}
		(**) &\overset{(iii)}{\leq}&  f (z^{0}) + \fr{1}{2\lambda_1} \sum_{t=0}^{T-1} G^t    - \rb{\fr{1-\eta}{\g} - \frac{L}{2} - \fr{\lambda_1}{2}  - \fr{\lambda_2}{2} - \fr{ \eta^2 L^2}{2\lambda_2 (1-\eta)^2} } \sum_{t=0}^{T-1} \sqnorm{z^{t+1}-z^{t} }\\
		&{=}&  f (z^{0}) + \fr{1}{2\lambda_1} \sum_{t=0}^{T-1}G^t   - \rb{\fr{1-\eta}{\g} - \frac{L}{2} - \fr{\lambda_1}{2}   - \fr{ \eta L}{ (1-\eta)} } \sum_{t=0}^{T-1} \sqnorm{z^{t+1}-z^{t} } \\
		&{=}&  f (z^{0}) + \fr{1}{2\lambda_1} \sum_{t=0}^{T-1}G^t  - \rb{\fr{1-\eta}{\g} - \frac{L}{2} - \fr{\lambda_1}{2}   - \fr{ \eta L}{ (1-\eta)} } \fr{1}{(1-\eta)^2}\sum_{t=0}^{T-1} R^t,
	\end{eqnarray*}
	where in the last two steps we choose $\lambda_2 = \fr{\eta L}{1-\eta}$, and recall the definition $R^t = \sqnorm{\g g^t} = (1-\eta)^2\sqnorm{z^{t+1} - z^t}$.
	
	Subtracting $f^{\text {inf }}$ from both sides of the above inequality, taking expectation and using the notation $\delta^t = f(z^{t})-\finf$, we get
	\begin{eqnarray}\label{eq:func_diff_HB}
		\Exp{\delta^{T}} &\leq& \Exp{\delta^0} + \fr{1}{2\lambda_1} \sum_{t=0}^{T-1}\Exp{G^t }   - \rb{\fr{1-\eta}{\g} - \frac{L}{2} - \fr{\lambda_1}{2}   - \fr{ \eta L}{ (1-\eta)} }  \fr{1}{(1-\eta)^2}\sum_{t=0}^{T-1} \Exp{ R^t }. \notag\\
	\end{eqnarray}
	By Lemma~\ref{le:rec_HB}, we have
	\begin{eqnarray}\label{eq:rec_HB}
		\sum_{t=0}^{T-1} \Exp{G^{t+1}}	\leq (1-\theta ) 	\sum_{t=0}^{T-1} \Exp{G^{t}}	+ \fr{2 \beta \wL^2 (1+ 4\eta^2) }{(1-\eta)^2}	\sum_{t=0}^{T-1} \Exp{R^t}	.
	\end{eqnarray}
	Next, we are going to add \eqref{eq:func_diff_HB} with a $\fr{1}{2\theta\lambda_1}$ multiple of \eqref{eq:rec_HB}. First, let us "forget", for a moment, about all the terms involving $R^t$ and denote their sum appearing on the right hand side by $\mathcal{R}$, then
	\begin{eqnarray*}
		\Exp{\delta^{T}} + \fr{1}{2\theta\lambda_1} \sum_{t=0}^{T-1} \Exp{G^{t+1}} &\leq& \Exp{\delta^0} + \fr{1}{2\lambda_1} \sum_{t=0}^{T-1}\Exp{G^t } + (1-\theta)		\fr{1}{2\lambda_1}	\sum_{t=0}^{T-1} \Exp{G^{t}} + \mathcal{R}\\
		&=&  \Exp{\delta^0} + \fr{1}{2\theta\lambda_1} \sum_{t=0}^{T-1}\Exp{G^t } + \mathcal{R}.
	\end{eqnarray*}
	Canceling out the same terms in both sides of the above inequality, we get
	\begin{eqnarray*}
		\Exp{\delta^{T}} + \fr{1}{2\theta\lambda_1} \Exp{G^{T}} &\leq&  \Exp{\delta^0} + \fr{1}{2\theta\lambda_1}\Exp{G^0 } + \mathcal{R},
	\end{eqnarray*}
	where $\mathcal{R} \eqdef - \rb{\fr{1-\eta}{\g} - \frac{L}{2} \rb{1+ \fr{2\eta}{1-\eta}}-  \fr{\lambda_1}{2}   - \fr{\beta \wL^2 (1+4\eta^2)}{\theta \lambda_1} }  \fr{1}{(1-\eta)^2}\sum_{t=0}^{T-1} \Exp{ R^t }$. 
	
	Now choosing $\lambda_1 = \wL \sqrt{\fr{2\beta}{\theta} (1+4\eta^2)}$ and using the definition of $\g_0$ given by \eqref{eq:HB-stepsize}, i.e., $
	\g_0 \eqdef \rb{ \fr{(1+ \eta)L}{2(1-\eta)^2}  + \fr{\wL}{1-\eta}\sqrt{\frac{2\beta}{\theta} \rb{1+ 4\eta^2}}}^{-1} 
	$, we can compute 
	
	\begin{eqnarray*}
		\rb{\fr{1-\eta}{\g} - \frac{L}{2} \rb{1+ \fr{2\eta}{1-\eta}}-  \fr{\lambda_1}{2}   - \fr{\beta \wL^2 (1+4\eta^2)}{\theta \lambda_1} }  \fr{1}{(1-\eta)^2} 
		 = \rb{\fr{1}{\g} - \fr{1}{\g_0}}\fr{1}{1-\eta}.
	\end{eqnarray*}
	Then
	\begin{eqnarray*}
		0 \leq \Exp{\Phi^T} & \eqdef &\Exp{\delta^{T} + \fr{1}{2\theta\lambda_1} G^T}  
		\leq \Exp{\delta^0 + \fr{1}{2\theta\lambda_1} G^0} - \rb{\fr{1}{\g} - \fr{1}{\g_0}}\fr{1}{1-\eta}   \sum_{t=0}^{T-1} \Exp{ R^t }\\
		&=& \Exp{\Phi^0} - \rb{\fr{1}{\g} - \fr{1}{\g_0}}\fr{1}{1-\eta}   \sum_{t=0}^{T-1} \Exp{ R^t }.
	\end{eqnarray*}
	After rearranging, we get
	
	\begin{eqnarray*}
		\fr{1}{\g^2}\sum_{t=0}^{T-1}\Exp{ R^t } \leq \fr{\Exp{\Phi^0} (1-\eta)}{\g\rb{1- \fr{\g}{\g_0}}}.
	\end{eqnarray*}
	Summing the result of Lemma~\ref{le:gradient_bound_HB} over $t = 0, \dots, T-1$ and applying expectation, we get
	\begin{eqnarray*}
		\sum_{t=0}^{T-1} \Exp{\sqnorm{\nfxt} } \leq 2 \sum_{t=0}^{T-1}\Exp{G^t} + \fr{2}{\g^2}\sum_{t=0}^{T-1} \Exp{R^t}.
	\end{eqnarray*}
	Due to Lemma~\ref{le:rec_HB}, the conditions of Lemma~\ref{le:psi_1_cumul_bound} hold with $C \eqdef 2\beta \wL^2 \fr{1+ 4 \eta^2}{(1-\eta)^2}$, $s^t = \Exp{G^t} $, $r^t = \Exp{R^t} $, thus
	
	\begin{eqnarray*}
		\sum_{t=0}^{T-1}\Exp{G^t} \leq \fr{\Exp{G^0}}{\theta}  + \fr{C}{\theta}\sum_{t=0}^{T-1} \Exp{R^t}.
	\end{eqnarray*}
	Combining the above inequalities, we can continue with
	\begin{eqnarray*}
		\sum_{t=0}^{T-1} \Exp{\sqnorm{\nfxt} } &\leq& 2 \sum_{t=0}^{T-1}\Exp{G^t} + \fr{2}{\g^2}\sum_{t=0}^{T-1} \Exp{R^t} \\ 
		&\leq& \fr{2\Exp{G^0}}{\theta} + \rb{ 2 + \fr{\g^2 C}{\theta}  }\fr{1}{\g^2}\sum_{t=0}^{T-1} \Exp{R^t} \\ 
		&\leq& \fr{2\Exp{G^0}}{\theta} + \rb{ 2 + \fr{\g^2 C}{\theta}  } \fr{\Exp{\Phi^0 }(1-\eta)}{\g\rb{1- \fr{\g}{\g_0}}}.
	\end{eqnarray*}
	Note that for $\g < \g_0 =  \rb{\fr{(1+ \eta)L}{2(1-\eta)^2}  +\sqrt{\fr{C}{\theta}}}^{-1} $, we have $\fr{\g^2 C}{\theta} < 1$. Thus 
	\begin{eqnarray*}
		\sum_{t=0}^{T-1} \Exp{\sqnorm{\nfxt} }	&\leq& \fr{2\Exp{G^0}}{\theta} +  \fr{3\Exp{\Phi^0 }(1-\eta)}{\g\rb{1- \fr{\g}{\g_0}}} \\
		&=& \fr{3{\delta^0 }(1-\eta)}{\g\rb{1- \fr{\g}{\g_0}}} + \fr{\Exp{G^0}}{\theta} \rb{ 2 + \fr{1}{2 \lambda_1}\fr{3 (1-\eta)}{\g\rb{1- \fr{\g}{\g_0}}} }  ,
	\end{eqnarray*}
	where $\lambda_1 = \wL \sqrt{\fr{2\beta}{\theta} (1+4\eta^2)}$.
\end{proof}

\begin{corollary}\label{cor:ef21-hb}
	Let assumptions of Theorem~\ref{thm:HB} hold, 
	\begin{eqnarray*}
		g_i^0 &=& \nabla f_i (x^0), \qquad i = 1,\ldots, n, \\
		\g &=&  \rb{ \fr{(1+ \eta)L}{2(1-\eta)^2}  + \fr{\wL}{1-\eta}\sqrt{\frac{2\beta}{\theta} \rb{1+ 4\eta^2}}}^{-1}  .
	\end{eqnarray*}
	Then, after $T$ iterations/communication rounds of \algname{EF21-HB} we have $\Exp{\sqnorm{\nabla f(\hat{x}^{T})} } \leq \eps^2$. It requires 
	\begin{eqnarray}
		T = \# \text{grad} =   \cO\rb{    \fr{   \wL \delta^0}{  \varepsilon^2}  \rb{\fr{1}{\al} + \fr{1}{1-\eta}}}  \notag 
	\end{eqnarray}
	iterations/communications rounds/gradint computations at each node.
\end{corollary}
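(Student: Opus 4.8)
The plan is to follow the template of Corollary~\ref{cor:ef21}: take the iteration-complexity estimate supplied by Theorem~\ref{thm:HB}, eliminate the $G^0$ term by the warm start, and then absorb all problem-dependent constants into the stepsize.

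First I would observe that the choice $g_i^0 = \nabla f_i(x^0)$ gives $G^0 = \suminn\sqnorm{\nabla f_i(x^0) - g_i^0} = 0$, so the second term on the right-hand side of the convergence bound disappears. I would run \algname{EF21-HB} with a fixed constant fraction of the admissible stepsize range, say $\g = \g_0/2$ (this is the regime in which the clean estimate \eqref{eq:HB_g_half} applies, with the factor $1-\g/\g_0$ bounded away from $0$). With $G^0 = 0$ that inequality collapses to
\begin{equation*}
\frac{1}{T}\sum_{t=0}^{T-1}\Exp{\sqnorm{\nfxt}} \;\le\; \frac{6\delta^0(1-\eta)}{\g T},
\end{equation*}
and since $\hat x^T$ is drawn uniformly from $x^0,\dots,x^{T-1}$, the left-hand side equals $\Exp{\sqnorm{\nabla f(\hat x^T)}}$. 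Demanding that the right-hand side be at most $\eps^2$ forces $T \ge 6\delta^0(1-\eta)/(\g\eps^2)$, so the whole statement reduces to upper-bounding $\tfrac{1-\eta}{\g} = \tfrac{2(1-\eta)}{\g_0}$.

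Next I would expand, straight from the definition \eqref{eq:HB-stepsize},
\begin{equation*}
\frac{2(1-\eta)}{\g_0} \;=\; \frac{(1+\eta)L}{1-\eta} \;+\; 2\wL\sqrt{\tfrac{2\beta}{\theta}\,(1+4\eta^2)},
\end{equation*}
and bound the two summands separately. For the first, $1+\eta\le 2$ and $L\le\wL$ give $\tfrac{(1+\eta)L}{1-\eta}\le\tfrac{2\wL}{1-\eta}$. For the second I would invoke Lemma~\ref{le:optimal_t-Peter} with its optimal choice of $s$ (so $\theta = 1-\sqrt{1-\alpha}\ge\alpha/2$ and $\sqrt{\beta/\theta}\le\tfrac{2}{\alpha}-1\le\tfrac2\alpha$), together with $1+4\eta^2\le 5$ for $\eta\in[0,1)$, yielding $2\wL\sqrt{\tfrac{2\beta}{\theta}(1+4\eta^2)} = \cO(\wL/\alpha)$. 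Adding the pieces gives $\tfrac{1-\eta}{\g} = \cO\big(\wL(\tfrac1\alpha+\tfrac1{1-\eta})\big)$, hence
\begin{equation*}
T = \cO\!\left(\frac{\wL\delta^0}{\eps^2}\Big(\frac1\alpha+\frac1{1-\eta}\Big)\right),
\end{equation*}
and since each node evaluates exactly one full gradient $\nabla f_i(x^{t+1})$ per round, $\#\text{grad}=T$, which closes the argument.

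I expect the only mildly delicate point to be the stepsize bookkeeping: one must take $\g$ as a fixed constant multiple of $\g_0$ (rather than literally $\g=\g_0$) so that the factor $1-\g/\g_0$ in the bound of Theorem~\ref{thm:HB} stays bounded below, and one must be careful that the additive momentum contribution $\tfrac{(1+\eta)L}{1-\eta}$ to $\g_0^{-1}$ is what produces the extra $\tfrac{1}{1-\eta}$ term; everything after that is the same constant-chasing as in Corollary~\ref{cor:ef21}.
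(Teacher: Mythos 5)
Your proposal is correct and follows essentially the same route as the paper: set $G^0=0$ via the warm start, invoke the bound \eqref{eq:HB_g_half}, and upper-bound $\nicefrac{(1-\eta)}{\g}$ by $\cO\big(\wL(\nicefrac{1}{\alpha}+\nicefrac{1}{1-\eta})\big)$ using $1+\eta\le 2$, $1+4\eta^2\le 5$, $L\le\wL$ and Lemma~\ref{le:optimal_t-Peter}. In fact your stepsize bookkeeping is slightly more careful than the paper's: the corollary as stated sets $\g=\g_0$, which sits outside the strict range $\g<\g_0$ of Theorem~\ref{thm:HB} and outside the regime $\g\le\nicefrac{\g_0}{2}$ under which \eqref{eq:HB_g_half} is derived, whereas your choice $\g=\nicefrac{\g_0}{2}$ makes the application of \eqref{eq:HB_g_half} legitimate at the cost of only a constant factor.
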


\begin{proof}
	Notice that by using $L\leq \wL$,  $\eta<1$ and Lemma~\ref{le:optimal_t-Peter}, we have
	\begin{eqnarray*}
		\fr{(1+ \eta)L}{2(1-\eta)^2}  + \fr{\wL}{1-\eta}\sqrt{\frac{2\beta}{\theta} \rb{1+ 4\eta^2}} 	&\leq& \fr{\wL }{(1-\eta)^2}  + \fr{\wL}{1-\eta}\sqrt{\frac{10 \beta}{\theta} } 
		\leq \fr{\wL}{1-\eta}\rb{  \fr{1}{1-\eta} + \fr{2 \sqrt{10}}{\al}}.
	\end{eqnarray*}
	Using the above inequality, \eqref{eq:HB_g_half}, and \eqref{eq:HB-stepsize}, we get
	
	\begin{eqnarray*}
		\# \text{grad} = T \leq   \fr{ 6 \delta^0 (1-\eta)}{\g \varepsilon^2} 
		\leq   \fr{  6 \delta^0 (1-\eta)}{ \varepsilon^2} \fr{\wL}{1-\eta}\rb{  \fr{1}{1-\eta} + \fr{2 \sqrt{10}}{\al}} 		
		\leq  \fr{  6 \wL \delta^0 }{ \varepsilon^2} \rb{  \fr{1}{1-\eta} + \fr{2 \sqrt{10}}{\al}} .
	\end{eqnarray*}
	
\end{proof}

\newpage

\section{Composite Setting}\label{sec:prox_setting}
Now we focus on solving a composite optimization problem 
\begin{equation}\label{eq:composite_optimization}
	\min_{x\in \R^d}\Phi(x) \eqdef \frac{1}{n} \sum \limits_{i=1}^n f_i(x) + r(x),
\end{equation}
where each $f_i(\cdot)$ is $L_i$-smooth (possibly non-convex), $r(\cdot)$ is convex, and $\Phi^{\inf} = \inf_{x\in\R^d} \Phi(x) > -\infty$. This is a standard and important generalization of problem \eqref{eq:finit_sum}. In particular, it includes optimization problems over convex compact sets and $l_1$-regularization (LASSO).

For any $\g>0$, $x\in\R^d$, recall that the proximal mapping of function $r(\cdot)$ (prox-operator) is defined as 
\begin{equation}\label{eq:def_prox}
	\opn{prox}_{\g r}(x) = \underset{y \in \mathbb{R}^{d}}{\arg \min }\left\{r(y)+\frac{1}{2 \g}\|y-x\|^{2}\right\}.
\end{equation}

To evaluate convergence in composite case, we define the \textit{generalized gradient mapping} at a point $x \in \R^d$ with a parameter $\g$ 
\begin{equation*}
	\mathcal{G}_{\g}(x) \eqdef \fr{1}{\g} \rb{ x-\opn{prox}_{\g r}(x-\g \nabla f(x)) }.
\end{equation*}
One can verify that the above quantity is a well-defined evaluation metric~\citep{BeckBook}. Namely, for any $x^* \in \R^d$, it holds that $\mathcal{G}_{\g}(x) = 0 $ if and only if $x^*$ is a stationary point of (\ref{eq:composite_optimization}), and in a special case when $r \equiv 0$, we have $\mathcal{G}_{\g}(x) = \nfx$.


{\color{myblue} When there is no compression, the convergence analysis of proximal gradient descent (see, e.g., Section 10.3 in \citep{BeckBook}) consists in showing a descent lemma with repect to the squared norm of gradient mapping, i.e., for any $x^t$
$$
\Phi(x^t) - \Phi(x^{t+1}) \geq  \gamma \rb{1-\gamma L /2 } \sqnorm{\mathcal{G}_{\g}(x)} .
$$
However, when there is a non-trivial compression, such inequality may not hold. The main idea of the analysis below is to upper bound the squared norm of gradient mapping $\sqnorm{\mathcal{G}_{\g}(x)}$ with certain error terms proportional to $\sqnorm{\xtpo - \xt}$ and $\sqnorm{g^t - \nfxt}$, which can be controlled using recursions from EF21 analysis (Lemma~\ref{lem:theta-beta}).
}

\begin{lemma}[Gradient mapping bound]\label{le:gradient_mapping_bound}
	Let $\xtpo \eqdef \opn{prox}_{\g r}(\xt - \g v^t)$, then 
	\begin{equation*}
		\Exp{\sqnorm{\cG_{\g}\rb{x^t}}} \leq \fr{2}{\g^2 } \Exp{\sqnorm{\xtpo - \xt}} + 2  \Exp{\sqnorm{v^t - \nfxt}}.
	\end{equation*}	
\end{lemma}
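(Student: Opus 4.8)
The statement to prove is Lemma~\ref{le:gradient_mapping_bound}, the gradient-mapping bound for \algname{EF21-Prox}. The plan is to compare the two prox-steps defining the update $\xtpo = \opn{prox}_{\g r}(\xt - \g v^t)$ and the "ideal" step $\bar x^{t+1} \eqdef \opn{prox}_{\g r}(\xt - \g \nfxt)$ used in the definition of $\cG_{\g}(x^t)$. Note that by definition $\g \cG_{\g}(x^t) = x^t - \bar x^{t+1}$, so $\sqnorm{\cG_{\g}(x^t)} = \tfrac{1}{\g^2}\sqnorm{x^t - \bar x^{t+1}}$. The first step is to insert $\xtpo$ and split via Young's inequality~\eqref{eq:facts:young_ineq2} with $s=1$:
\begin{equation*}
\sqnorm{x^t - \bar x^{t+1}} \leq 2\sqnorm{x^t - \xtpo} + 2\sqnorm{\xtpo - \bar x^{t+1}}.
\end{equation*}

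The key step is to control $\sqnorm{\xtpo - \bar x^{t+1}}$, i.e.\ the distance between the outputs of the prox-operator applied to two different inputs. Here I would invoke firm nonexpansiveness of the proximal mapping of the convex function $r$: $\|\opn{prox}_{\g r}(u) - \opn{prox}_{\g r}(w)\| \le \|u - w\|$ for all $u,w$. Applying this with $u = \xt - \g v^t$ and $w = \xt - \g \nfxt$ gives $\sqnorm{\xtpo - \bar x^{t+1}} \le \sqnorm{\g v^t - \g\nfxt} = \g^2 \sqnorm{v^t - \nfxt}$. Substituting back,
\begin{equation*}
\sqnorm{\cG_{\g}(x^t)} = \frac{1}{\g^2}\sqnorm{x^t - \bar x^{t+1}} \le \frac{2}{\g^2}\sqnorm{x^t - \xtpo} + 2\sqnorm{v^t - \nfxt},
\end{equation*}
and taking expectations yields the claim.

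I expect the main obstacle to be purely a bookkeeping/identification issue rather than a genuine difficulty: one must be careful that the quantity $\cG_{\g}(x^t)$ in the lemma uses the \emph{true} gradient $\nfxt$ inside the prox (as in the definition $\cG_{\g}(x)= \tfrac{1}{\g}(x-\opn{prox}_{\g r}(x-\g\nabla f(x)))$), whereas the algorithm's actual iterate $\xtpo$ uses the estimator $g^t$ (written $v^t$ here to match the lemma's notation, where in \algname{EF21-Prox} one has $v^t = g^t$). Everything then hinges on the nonexpansiveness of $\opn{prox}_{\g r}$, which holds because $r$ is convex; this is a standard fact (e.g.\ \citep{BeckBook}) and the only external input needed. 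No smoothness of $f$ or contractivity of $\cC$ is used in this particular lemma — those enter only when this bound is later combined with the descent lemma and the recursion for $G^t$ to establish convergence of \algname{EF21-Prox}.
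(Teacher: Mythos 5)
Your proof is correct and is essentially identical to the paper's: both insert $\xtpo$ into $\sqnorm{\xt - \opn{prox}_{\g r}(\xt - \g\nfxt)}$, split by Young's inequality with $s=1$, and bound the second term via nonexpansiveness of $\opn{prox}_{\g r}$. No differences worth noting.
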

\begin{proof}
	\begin{eqnarray}
		\Exp{\sqnorm{\cG_{\g}\rb{x^t}}} &=& \fr{1}{\g^2 }  \Exp{\sqnorm{ \xt - \opn{prox}_{\g r}(\xt - \g \nfxt)}} \notag \\
		& \leq & \fr{2}{\g^2 }  \Exp{\sqnorm{ \xtpo - \xt}} + \fr{2}{\g^2 }  \Exp{\sqnorm{ \xtpo - \opn{prox}_{\g r}(\xt - \g \nfxt)}}  \notag \\
		& \leq & \fr{2}{\g^2 }  \Exp{\sqnorm{ \xtpo - \xt}}  + \fr{2}{\g^2 }  \Exp{\sqnorm{ (\xt - \g v^t) - (\xt - \g \nfxt)}}  \notag \\
		& = & \fr{2}{\g^2 } \Exp{\sqnorm{ \xtpo - \xt}} + 2  \Exp{\sqnorm{ v^t - \nfxt)}}  , \notag 
	\end{eqnarray}
	where in the last inequality we apply non-expansiveness of prox-operator.
\end{proof}

\begin{lemma}\label{le:prox-le}
	Let $\xtpo \eqdef \opn{prox}_{\g r}(\xt - \g v^t)$, then for any $\lambda > 0$,
	\begin{eqnarray*}
		\Phi \rb{\xtpo } &\leq& 	\Phi \rb{\xt }  + \fr{1}{2\lambda} \sqnorm{ v^t - \nfxt  } - \rb{ \fr{1}{\g} - \fr{L}{2} - \fr{\lambda}{2} } \sqnorm{\xtpo - \xt}. 
	\end{eqnarray*}
\end{lemma}
\begin{proof}
	Define $\tilde{r}(x) \eqdef r(x) + \fr{1}{2\g} \sqnorm{x - \xt + \g v^t}$, and note that $\xtpo = \argmin_{x\in \R^d} \cb{\tilde{r}(x)}$. Since $\tilde{r}(\cdot)$ is $\nfr{1}{\g}$ - strongly convex, we have
	\begin{equation}
		\tilde{r}(\xt) \geq \tilde{r}(\xtpo)  + \fr{1}{2\g} \sqnorm{\xtpo - \xt}, \notag
	\end{equation}
	\begin{equation}
		r(\xt)  + \fr{1}{2\g}\sqnorm{\g v^t}\geq r(\xtpo)  + \fr{1}{2\g} \sqnorm{\xtpo - \xt + \g v^t} + \fr{1}{2\g} \sqnorm{\xtpo - \xt}. \notag
	\end{equation}
	Thus 
	\begin{equation}\label{eq:prox-le-inter}
		r(\xtpo) - r(\xt) \leq -\fr{1}{\g}\sqnorm{\xtpo - \xt} - \la v^t, \xtpo - \xt\ra.
	\end{equation}
	
	By $L$ smoothness of $f(\cdot)$, 
	\begin{equation}\label{eq:L_smooth1}
		f\left(x^{t+1}\right)-f\left(x^{t}\right) \leq\left\langle\nabla f\left(x^{t}\right), x^{t+1}-x^{t}\right\rangle+\frac{L}{2}\left\|x^{t+1}-x^{t}\right\|^{2}.
	\end{equation}
	
	Summing up (\ref{eq:L_smooth1}) with (\ref{eq:prox-le-inter}) we obtain
	\begin{eqnarray}
		\Phi \rb{\xtpo } -	\Phi \rb{\xt } &\leq&\la \nfxt - v^t , \xtpo - \xt \ra - \rb{ \fr{1}{\g} - \fr{L}{2} } \sqnorm{\xtpo - \xt}  \notag \\
		&\leq& \fr{1}{2\lambda} \sqnorm{\nfxt - v^t } - \rb{ \fr{1}{\g} - \fr{L}{2} - \fr{\lambda}{2} } \sqnorm{\xtpo - \xt}.  \notag 
	\end{eqnarray}
	
\end{proof}



\begin{theorem}\label{th:prox-ef21}
	Let Assumption~\ref{as:main} hold, $r(\cdot)$ be convex and  $\Phi^{\inf} = \inf_{x\in\R^d} \Phi(x) > -\infty$. Set the stepsize in Algorithm~\ref{alg:Prox-ef21} as 
	\begin{equation*}
		0 < \g < \rb{\fr{ L}{2}   + \wL \sqrt{\fr{\beta}{\theta} }}^{-1} \eqdef \g_0,
	\end{equation*}
	where $\wL = \sqrt{\frac{1}{n}\sum_{i=1}^n L_i^2}$, $\theta = 1- (1- \alpha )(1+s)$,  $\beta =  (1- \alpha ) \left(1+ s^{-1} \right)$ for any $s > 0$.
	
	Fix $T \geq 1$ and let $\hat{x}^{T}$ be chosen from the iterates $x^{0}, x^{1}, \ldots, x^{T-1}$  uniformly at random. Then
	\begin{eqnarray*} 	
		\Exp{\sqnorm{\mathcal{G}_{\g}(\hat{x}^T) }} &\leq& \fr{4 \rb{\Phi^0 - \Phi^{\inf}}}{ T \g \rb{1-\fr{\g}{\g_0}}}  + \fr{2 \Exp{G^0}}{\theta T}  \rb{1  + \fr{1 }{  \g \rb{1-\fr{\g}{\g_0}} }  \fr{1}{ \wL }\sqrt{\fr{\theta}{\beta}} } .
	\end{eqnarray*}
	If the stepsize is set to $0 < \g \leq \nfr{\g_0}{2}, $ then
	\begin{equation*} 	
		\Exp{\sqnorm{\mathcal{G}_{\g}(\hat{x}^T) }} \leq 
		\fr{8 \rb{\Phi^0 - \Phi^{\inf}}}{ \g T}  + \fr{2  \Exp{G^0}}{\theta T} \rb{1 + \fr{2}{\g \wL} \sqrt{\fr{\theta }{\beta}}} .
	\end{equation*}
\end{theorem}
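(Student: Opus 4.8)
\textbf{Proof plan for Theorem~\ref{th:prox-ef21}.}
The plan is to follow the same Lyapunov-function template used for \algname{EF21} (Theorem~\ref{thm:main-distrib}), but with two modifications: replacing the descent lemma by the proximal descent lemma (Lemma~\ref{le:prox-le}) and replacing the stationarity measure $\sqnorm{\nfxt}$ by the gradient mapping $\sqnorm{\cG_\g(\xt)}$ via Lemma~\ref{le:gradient_mapping_bound}. First I would specialize Lemma~\ref{le:prox-le} to $v^t = g^t$: since $\xtpo = \opn{prox}_{\g r}(\xt - \g g^t)$, for any $\lambda > 0$ we get
\begin{equation*}
\Exp{\delta^{t+1}} \leq \Exp{\delta^t} + \fr{1}{2\lambda}\Exp{\sqnorm{g^t - \nfxt}} - \rb{\fr{1}{\g} - \fr{L}{2} - \fr{\lambda}{2}}\Exp{R^t},
\end{equation*}
where $\delta^t = \Phi(\xt) - \Phi^{\inf}$ and $R^t = \sqnorm{\xtpo - \xt}$. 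Bounding $\sqnorm{g^t - \nfxt} = \sqnorm{\suminn (g_i^t - \nfixt)} \le \suminn \sqnorm{g_i^t - \nfixt} = G^t$ by Jensen, this becomes a recursion in $\delta^t$, $G^t$, $R^t$.

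Next I would invoke Lemma~\ref{lem:theta-beta} (which applies verbatim since the compressor update $g_i^{t+1} = g_i^t + \cC(\nfixtpo - g_i^t)$ is unchanged and smoothness gives $\sqnorm{\nfixtpo - \nfixt} \le L_i^2 R^t$), giving $\Exp{G^{t+1}} \le (1-\theta)\Exp{G^t} + \beta \wL^2 \Exp{R^t}$. Then I would form the Lyapunov function $\Psi^t \eqdef \delta^t + \fr{1}{2\theta\lambda}G^t$ and add the $\delta$-recursion to a $\fr{1}{2\theta\lambda}$-multiple of the $G$-recursion. The coefficient of $\Exp{R^t}$ becomes $-\rb{\fr{1}{\g} - \fr{L}{2} - \fr{\lambda}{2} - \fr{\beta \wL^2}{2\theta\lambda}}$; choosing $\lambda = \wL\sqrt{\beta/\theta}$ balances the last two terms, yielding $\fr{\lambda}{2} + \fr{\beta\wL^2}{2\theta\lambda} = \wL\sqrt{\beta/\theta}$, so the coefficient is $-\rb{\fr{1}{\g} - \fr{1}{\g_0}}$ with $\g_0 = \rb{\fr{L}{2} + \wL\sqrt{\beta/\theta}}^{-1}$ — exactly the stated stepsize threshold. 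Thus
\begin{equation*}
\Exp{\Psi^{t+1}} \le \Exp{\Psi^t} - \rb{\fr{1}{\g} - \fr{1}{\g_0}}\Exp{R^t},
\end{equation*}
and telescoping over $t = 0,\dots,T-1$ with $\Psi^T \ge 0$ gives $\sum_{t=0}^{T-1}\Exp{R^t} \le \fr{\Exp{\Psi^0}}{\fr{1}{\g}-\fr{1}{\g_0}} = \fr{\g \g_0 \Exp{\Psi^0}}{\g_0 - \g}$, i.e. $\fr{1}{\g^2}\sum \Exp{R^t} \le \fr{\Exp{\Psi^0}}{\g(1 - \g/\g_0)}$.

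Finally I would convert back to the gradient mapping: summing Lemma~\ref{le:gradient_mapping_bound} (with $v^t = g^t$) over $t$ gives $\sum_{t=0}^{T-1}\Exp{\sqnorm{\cG_\g(\xt)}} \le \fr{2}{\g^2}\sum\Exp{R^t} + 2\sum\Exp{G^t}$. For $\sum\Exp{G^t}$ I would apply the auxiliary cumulative bound (as in the \algname{EF21-HB} proof, via Lemma~\ref{le:psi_1_cumul_bound} or directly unrolling Lemma~\ref{lem:theta-beta}): $\sum_{t=0}^{T-1}\Exp{G^t} \le \fr{\Exp{G^0}}{\theta} + \fr{\beta\wL^2}{\theta}\sum\Exp{R^t}$. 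Substituting the $R^t$ bound, collecting constants, dividing by $T$, and noting $\hat x^T$ is uniform over the iterates so that the average equals $\Exp{\sqnorm{\cG_\g(\hat x^T)}}$, yields the claimed rate; the special case $\g \le \g_0/2$ follows by using $1 - \g/\g_0 \ge 1/2$. The main obstacle I anticipate is purely bookkeeping: tracking the $\lambda$-dependent constants through the Lyapunov combination and the $\sum G^t$ substitution so that the final coefficient of $\Exp{G^0}$ comes out as $\fr{2}{\theta}\rb{1 + \fr{1}{\g(1-\g/\g_0)}\fr{1}{\wL}\sqrt{\theta/\beta}}$ — this requires care because the $\fr{1}{2\lambda}$ weight in $\Psi^t$ and the $\fr{\beta\wL^2}{\theta}$ factor from the cumulative $G$-bound interact, but there is no conceptual difficulty beyond the algebra already present in the \algname{EF21} and \algname{EF21-HB} proofs.
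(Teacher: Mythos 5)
Your plan is correct and follows essentially the same route as the paper's proof: the proximal descent lemma with $v^t=g^t$, the Lyapunov function $\delta^t+\tfrac{1}{2\theta\lambda}G^t$ with $\lambda=\wL\sqrt{\beta/\theta}$, telescoping to bound $\sum_t\Exp{R^t}$, and then converting to the gradient mapping via Lemma~\ref{le:gradient_mapping_bound} together with the cumulative bound of Lemma~\ref{le:psi_1_cumul_bound}. The constants you identify (including the $\gamma_0$ threshold and the coefficient of $\Exp{G^0}$) match the paper's derivation.
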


\begin{proof}
	First, let us apply  Lemma~\ref{le:prox-le} with $v^t = g^t, \lambda >0$
	\begin{eqnarray*}
		\Phi\left(x^{t+1}\right) \leq \Phi\left(x^{t}\right)+\frac{1}{2 \lambda}\left\|g^{t}-\nabla f\left(x^{t}\right)\right\|^{2}-\left(\frac{1}{\gamma}-\frac{L}{2}-\frac{\lambda}{2}\right)\left\|x^{t+1}-x^{t}\right\|^{2}.
	\end{eqnarray*}
	Subtract $\Phi^{\inf}$ from both sides, take expectation, and define $\delta^t = {\Phi \rb{\xt } -	\Phi^{\inf} }$ , \\$G^t = \suminn { \sqnorm{ g_i^t  - \nfixt  } }$, $R^t =  {\sqnorm{\xtpo - \xt } }$, then
	\begin{eqnarray}\label{eq:ineq_1_prox_ef21}
		\Exp{\delta^{t+1}} &\leq& \Exp{\delta^{t}} - \rb{ \fr{1}{\g} - \fr{L}{2} - \fr{\lambda}{2} }\Exp{R^t} + \fr{1}{2\lambda} \Exp{G^t} .   
	\end{eqnarray}
	Note that the proof of Lemma~\ref{lem:theta-beta} does not rely on the update rule for  $x^{t+1}$, but only on the way the estimator $g_i^{t+1}$ is constructed. Therefore, \eqref{eq:rec_1_avg} also holds for the composite case 
	\begin{eqnarray}\label{eq:ineq_2_prox_ef21}
		\Exp{G^{t+1}}&\leq& (1-\theta) \Exp{G^t} + \beta \wL^2 \Exp{R^t}.
	\end{eqnarray}
	Adding  (\ref{eq:ineq_1_prox_ef21}) with a $\fr{1}{2 \theta \lambda }$ multiple of (\ref{eq:ineq_2_prox_ef21}) , we obtain
	\begin{eqnarray}
		\Exp{\delta^{t+1}} + \fr{1}{2 \theta \lambda } \Exp{G^{t+1}} &\leq& \Exp{ \delta^{t}  } + \fr{1}{2 \theta \lambda } \Exp{ G^{t}}   - \rb{ \fr{1}{\g} - \fr{L}{2} - \fr{\lambda}{2} - \fr{\beta}{2 \theta \lambda } \wL^2} \Exp{R^t}  .\notag
	\end{eqnarray}	
	By summing up the inequalities for $t =0, \ldots, T-1,$ and rearranging, we get
	\begin{eqnarray}\label{eq:sumof_rt_bound_ef21}
		\sum_{t=0}^{T-1} \Exp{ R^t} &\leq& \rb{\delta^{0}  + \fr{1}{2 \theta \lambda } \Exp{  G^0 }} \rb{\fr{1}{\g} - \fr{L}{2} - \fr{\lambda}{2}  - \fr{\beta}{2 \theta \lambda } \wL^2 }^{-1} \notag \\
		&=& \rb{\delta^{0}  + \fr{1}{2 \theta}\sqrt{\fr{\theta}{\beta \wL^2 }} \Exp{ G^0 } } \rb{\fr{1}{\g} - \fr{L}{2} - \sqrt{\fr{\beta}{\theta} \wL^2 } }^{-1} 
		= \g^2 F^0 B . 
	\end{eqnarray}
	where in the first equality we choose $\lambda = \sqrt{\fr{\beta}{\theta}  \wL^2}$, and in the second we define $F^0 \eqdef \delta^{0}  + \fr{1}{2 \theta}\sqrt{\fr{\theta}{\beta \wL^2 }} \Exp{ G^0} $,  $B \eqdef \rb{\g - \fr{L\g^2}{2} - \sqrt{\fr{\beta}{\theta}  \wL^2} \g^2 }^{-1} = \rb{ \g - \fr{\g^2}{\g_0}}^{-1}$.
	
	By Lemma~\ref{le:gradient_mapping_bound} with $v^t = g^t$ we have 
	\begin{eqnarray}
\fr{1}{T} \sum_{t=0}^{T-1} \Exp{\sqnorm{\cG_{\g}\rb{x^t}}} 
		&\leq& \fr{2}{\g^2 T} \sum_{t=0}^{T-1} \Exp{ R^t} + \fr{2}{T} \sum_{t=0}^{T-1} \Exp{ G^t } \notag \\
		&\overset{(i)}{\leq}& \fr{2}{\g^2 T} \sum_{t=0}^{T-1} \Exp{ R^t} + \fr{2}{T} \fr{\Exp{ G^0}}{\theta} + \fr{2}{T} \fr{\beta \wL^2}{\theta} \sum_{t=0}^{T-1} \Exp{ R^t} \notag \\
		&\overset{(ii)}{\leq}& \fr{2 F^0 B}{ T} + \fr{2}{T} \fr{\Exp{ G^0}}{\theta} + \fr{2}{T} \fr{\beta \wL^2}{\theta} \g^2 F^0 B  \notag \\
		&=& \fr{2 F^0}{ T \g \rb{ 1 - \fr{\g}{\g_0}}}  \rb{1 + \fr{\g^2 \beta \wL^2}{\theta}} + \fr{2}{T} \fr{\Exp{ G^0}}{\theta}, \notag
	\end{eqnarray}
	where in $(i)$ we apply Lemma~\ref{le:psi_1_cumul_bound} with $C \eqdef \beta \wL^2$, $s^t \eqdef \Exp{ G^t}$, $r^t \eqdef \Exp{ R^t}$. $(ii)$ is due to (\ref{eq:sumof_rt_bound_ef21}).
	
	Note that for $\g < \rb{\fr{ L}{2}   + \sqrt{\fr{\beta}{\theta} }\wL}^{-1} $, we have $\fr{\g^2 \beta \wL^2}{\theta} < 1$. Thus
	\begin{eqnarray*}
		\Exp{\sqnorm{\mathcal{G}_{\g}(\hat{x}^T) }} &\leq &   \fr{4 \delta^0}{ T \g \rb{ 1 - \fr{\g}{\g_0}} }   + \fr{2 \Exp{ G^0}}{\theta T}   + \fr{2 \Exp{ G^0}}{T \g \rb{ 1 - \fr{\g}{\g_0}} }  \fr{1}{ \theta}\sqrt{\fr{\theta}{\beta \wL^2}} .  
	\end{eqnarray*}
	Set $\g \leq \nfr{\g_0}{2} $, then the bound simplifies to 
	\begin{eqnarray*}
		\Exp{\sqnorm{\mathcal{G}_{\g}(\hat{x}^T) }} &\leq &
		\fr{8  \delta^0 }{ \g T}  + \fr{2 \Exp{ G^0}}{\theta T} \rb{1 + \fr{2}{\g  } \sqrt{\fr{\theta }{\beta \wL^2}}} .
	\end{eqnarray*}
	
\end{proof}

\newpage
\section{Useful Lemma}

\begin{lemma}[Basic Facts]
For all $a, b, x_{1}, \ldots, x_{n} \in \mathbb{R}^{d}, s>0$ and $p \in(0,1]$ the following inequalities hold
\begin{eqnarray}
	\langle a, b\rangle &\leq& \frac{\|a\|^{2}}{2 s}+\frac{s\|b\|^{2}}{2}, \label{eq:facts:young_ineq}\\
	\|a+b\|^{2} &\leq& (1+s)\|a\|^{2}+(1+1 / s)\|b\|^{2}, \label{eq:facts:young_ineq2}\\
	\sqnorm{\suminn x_i } &\leq& \suminn \sqnorm{x_{i}},\label{eq:facts:young_ineq3_avg}\\
	\left(1-\frac{p}{2}\right)^{-1} &\leq& 1+p, \label{eq:1-p_2_in_denominator}\\
	\left(1+\frac{p}{2}\right)(1-p) &\leq& 1-\frac{p}{2}, \label{eq:1+p_2_1-p}\\
	\log\rb{1 - p } &\leq&  -  p.  \label{eq:facts:log_bound}
\end{eqnarray}
\end{lemma}

\begin{lemma}[Lemma 5 of \citep{EF21}]\label{le:stepsize_page_fact}
	If $0 \leq \gamma \leq \fr{1}{\sqrt{a}+b}$, then $a \gamma^{2}+b \gamma \leq 1$. Moreover, the bound is tight up to the factor of 2 since $\fr{1}{\sqrt{a}+b} \leq \min \left\{\fr{1}{\sqrt{a}}, \fr{1}{b}\right\} \leq \fr{2}{\sqrt{a}+b}$.
\end{lemma}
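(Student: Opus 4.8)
The statement to prove is Lemma~\ref{le:stepsize_page_fact} (quoted as Lemma 5 of \citet{EF21}): if $0\le\gamma\le\frac{1}{\sqrt{a}+b}$ then $a\gamma^2+b\gamma\le 1$, together with the tightness remark that $\frac{1}{\sqrt a+b}\le\min\{\frac{1}{\sqrt a},\frac1b\}\le\frac{2}{\sqrt a+b}$. The whole thing is elementary, so the plan is simply to verify the chain of inequalities carefully while being explicit about the (implicitly assumed) hypotheses $a\ge 0$, $b\ge 0$ and not both zero, so that $\sqrt a+b>0$ and the bound on $\gamma$ makes sense.

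\textbf{Main inequality.} First I would note that $\gamma\mapsto a\gamma^2+b\gamma$ is nondecreasing on $[0,\infty)$ since $a,b\ge 0$, so it suffices to check the claim at the largest admissible value $\gamma=\frac{1}{\sqrt a+b}$. At that point,
\begin{equation*}
a\gamma^2+b\gamma \;=\; \frac{a}{(\sqrt a+b)^2}+\frac{b}{\sqrt a+b} \;=\; \frac{a+b(\sqrt a+b)}{(\sqrt a+b)^2} \;=\; \frac{a+b\sqrt a+b^2}{a+2b\sqrt a+b^2}.
\end{equation*}
The numerator minus the denominator equals $b\sqrt a - 2b\sqrt a = -b\sqrt a\le 0$, so the fraction is $\le 1$, which gives $a\gamma^2+b\gamma\le 1$. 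For a general $0\le\gamma\le\frac1{\sqrt a+b}$ the monotonicity observation then finishes this part. (If one prefers to avoid the degenerate case entirely, one can note that when $a=b=0$ there is nothing to prove, and otherwise $\sqrt a+b>0$.)

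\textbf{Tightness remark.} For the left inequality $\frac1{\sqrt a+b}\le\min\{\frac1{\sqrt a},\frac1b\}$: since $a,b\ge 0$ we have $\sqrt a+b\ge\sqrt a$ and $\sqrt a+b\ge b$, and taking reciprocals (valid whenever the relevant quantity is positive; if $a=0$ the bound $\frac1{\sqrt a}=+\infty$ holds trivially, similarly for $b=0$) yields the claim. For the right inequality $\min\{\frac1{\sqrt a},\frac1b\}\le\frac{2}{\sqrt a+b}$: letting $u=\sqrt a$, $v=b$ we must show $\min\{\frac1u,\frac1v\}\le\frac2{u+v}$, i.e. $\frac{\max\{u,v\}}{1}\ge\frac{u+v}{2}$, which is just the statement that the maximum of two nonnegative numbers is at least their average — immediate. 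Assembling these three observations completes the proof.

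\textbf{Expected obstacle.} There is essentially no mathematical obstacle here; the only thing to be careful about is bookkeeping of the degenerate cases ($a=0$ or $b=0$), where some reciprocals become $+\infty$, and making sure the monotonicity argument is invoked rather than silently assuming $\gamma$ equals its upper bound. I would keep the write-up to a few lines and not belabor these edge cases beyond a parenthetical remark.
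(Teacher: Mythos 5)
Your argument is correct and complete. Note that the paper itself gives no proof of this lemma — it is imported verbatim as Lemma~5 of \citet{EF21} — so there is nothing to compare against line by line; your write-up is a valid self-contained verification. The computation at the endpoint $\gamma = \frac{1}{\sqrt a + b}$, giving $\frac{a + b\sqrt a + b^2}{a + 2b\sqrt a + b^2} \le 1$ because the numerator falls short of the denominator by $b\sqrt a \ge 0$, combined with monotonicity of $\gamma \mapsto a\gamma^2 + b\gamma$ on $[0,\infty)$, settles the first claim; the tightness chain via $\sqrt a + b \ge \max\{\sqrt a, b\} \ge \frac{\sqrt a + b}{2}$ settles the second. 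For reference, the argument in the cited source is a one-liner in the same spirit: $a\gamma^2 + b\gamma \le \frac{a\gamma + b}{\sqrt a + b} \le \frac{\sqrt a + b}{\sqrt a + b} = 1$, using $a\gamma \le \frac{a}{\sqrt a + b} \le \sqrt a$. Your version trades that telescoping bound for an explicit endpoint evaluation; both are equally elementary, and your handling of the degenerate cases $a = 0$ or $b = 0$ is appropriately light.
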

\begin{lemma}[Lemma 2 of \citep{PAGE2021}]\label{le:aux_smooth_lemma}
	Suppose that function $f$ is $L$-smooth and let $x^{t+1}\eqdef x^{t}-\g g^{t} ,$ where $g^t\in \R^d$ is any vector, and $\g>0$ any scalar. Then we have
	\begin{eqnarray*}
		f(x^{t+1}) \leq f(x^{t})-\fr{\g}{2}\sqnorm{\nabla f(x^{t})}-\left(\fr{1}{2 \g}-\fr{L}{2}\right)\sqnorm{x^{t+1}-x^{t}}+\fr{\g}{2}\sqnorm{g^{t}-\nabla f(x^{t})}.
	\end{eqnarray*}
\end{lemma}

\begin{lemma}[Lemma 3 of \citep{EF21}]\label{le:optimal_t-Peter} Let $0<\alpha< 1$ and for $s>0$ let $\theta(s)$ and $\beta(s)$ be defined as 
	\begin{eqnarray*}
		\theta(s) &\eqdef& 1 - (1 - \al)(1 + s), \qquad
		\beta(s)  \eqdef (1 - \al)(1 + s^{-1}).\notag
	\end{eqnarray*} Then the solution of the optimization problem
	\begin{equation*}
		 \min_{s} \left\{ \frac{\beta(s)}{\theta(s)} \;:\; 0<s<\frac{\alpha}{1-\alpha}\right\}
	\end{equation*}
	is given by $s^* = \frac{1}{\sqrt{1-\alpha}}-1$. Furthermore, $\theta(s^*) = 1-\sqrt{1-\alpha}$, $\beta(s^*) = \frac{1-\alpha}{1-\sqrt{1-\alpha}}$ and
	\begin{equation*}
		\sqrt{\frac{\beta(s^*)}{\theta(s^*)}} = \frac{1}{\sqrt{1-\alpha}} -1 = \frac{1}{\alpha} + \frac{\sqrt{1-\alpha}}{\alpha} - 1 \leq \frac{2}{\alpha}-1.
	\end{equation*}
	In the trivial case $\al = 1$, we have $\frac{\beta(s)}{\theta(s)} = 0$ for any $s > 0$, and above inequality is satisfied.
\end{lemma}

\begin{lemma}\label{le:psi_1_cumul_bound}
	Let (arbitrary scalar) non-negative sequences $\{s^t\}_{t\geq 0}$, and $\{r^t\}_{t\geq 0}$ satisfy 
	$$\sum_{t=0}^{T-1} s^{t+1} \leq (1-\theta ) \sum_{t=0}^{T-1} s^t + C  \sum_{t=0}^{T-1} r^t $$
	for some parameters $\theta\in (0, 1]$, $C > 0$. Then for all $T\geq 0$
	\begin{equation*}
		\sum_{t=0}^{T-1} s^t \leq \fr{s^0}{\theta}  + \fr{C}{\theta}  \sum_{t=0}^{T-1} r^t.
	\end{equation*}
\end{lemma}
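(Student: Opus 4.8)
\textbf{Proof proposal for Lemma~\ref{le:psi_1_cumul_bound}.}

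The plan is to treat the cumulative sum $S_{T} \eqdef \sum_{t=0}^{T-1} s^{t}$ as the primary object and exploit the near-telescoping structure hidden in the hypothesis. First I would note that the left-hand side $\sum_{t=0}^{T-1} s^{t+1}$ equals $\sum_{t=1}^{T} s^{t} = S_{T} - s^{0} + s^{T}$, so the assumed inequality can be rewritten as
\begin{equation}
S_{T} - s^{0} + s^{T} \;\leq\; (1-\theta) S_{T} + C \sum_{t=0}^{T-1} r^{t}.
\end{equation}
Since $s^{T} \geq 0$ (the sequence is non-negative), dropping this term only weakens the left-hand side, giving
\begin{equation}
S_{T} - s^{0} \;\leq\; (1-\theta) S_{T} + C \sum_{t=0}^{T-1} r^{t}.
\end{equation}

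Next I would move the $(1-\theta) S_{T}$ term to the left, obtaining $\theta S_{T} \leq s^{0} + C \sum_{t=0}^{T-1} r^{t}$. Because $\theta \in (0,1]$ is strictly positive, dividing through by $\theta$ is legitimate and yields exactly the claimed bound
\begin{equation}
\sum_{t=0}^{T-1} s^{t} \;\leq\; \frac{s^{0}}{\theta} + \frac{C}{\theta} \sum_{t=0}^{T-1} r^{t}.
\end{equation}
The degenerate case $T = 0$ is handled trivially since both sides are $0$ (or the statement is vacuous).

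There is essentially no obstacle here: the only two facts used are the index shift in the summation and the non-negativity of $s^{T}$, which is why I would dispose of the $s^{T}$ term immediately. The one point worth stating carefully is that we need $s^{t} \geq 0$ only for the single index $t = T$ (plus $\theta > 0$ for the final division); the non-negativity of $r^{t}$ is not even required for this particular inequality, though it is natural in the applications. I would present the argument in three short displayed lines exactly as above.
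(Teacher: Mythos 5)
Your proposal is correct and is essentially the same argument as the paper's proof: both rely on the index shift $\sum_{t=0}^{T-1} s^{t+1} = \sum_{t=0}^{T-1}s^t + s^T - s^0$, discard the non-negative term $s^T$, and then rearrange and divide by $\theta>0$. The only difference is cosmetic (you rewrite the hypothesis forward while the paper bounds $\sum_{t=0}^{T-1}s^t - s^0$ upward into $\sum_{t=0}^{T-1}s^{t+1}$), so nothing further is needed.
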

{\color{myblue} \begin{proof}
	The proof follows immediately by canceling out the common terms on both sides and then dividing by $\theta > 0$.
\end{proof}
}

\newpage

\section{Extra Experiments}\label{sec:exp_extra}

In this section, we give missing details on the experiments from Section~\ref{sec:exp}, and  provide additional experiments. 

\subsection{Non-Convex Logistic Regression: Additional Experiments and Details} \label{sec:exp:logistic}

{\bf Data sets, hardware and implementation.}	We use standard LibSVM data sets \citep{chang2011libsvm}, and  split each data set among $n$ clients. 
For experiments $1$, $3$, $4$ and $5$, we chose $n = 20$ whereas for the experiment $2$ we consider $n=100$.
The first $n-1$ clients own equal parts, and the remaining part, of size $N - n \cdot\lfloor\nicefrac{N}{n}\rfloor$, is assigned to the last client. We consider the heterogeneous data distribution regime (i.e. we do not make any additional assumptions on data similarity between workers).  A summary of data sets and details of splitting data among workers can be found in Tables~\ref{tbl:datasets_summary_1} and ~\ref{tbl:datasets_summary_2}.   The algorithms are implemented in Python 3.8; we use 3 different CPU cluster node types in all experiments: 1) AMD EPYC 7702 64-Core; 2) Intel(R) Xeon(R) Gold 6148 CPU @ 2.40GHz; 3) Intel(R) Xeon(R) Gold 6248 CPU @ 2.50GHz. In all algorithms involving compression, we use Top-$k$ \citep{alistarh2017qsgd} as a canonical example of contractive compressor $\cC$, and fix the compression ratio $\nfr{k}{d} \approx 0.01$, where $d$ is the number of features in the data set. For all algorithms, at each iteration we compute the squared norm of the exact/full gradient for comparison of the methods performance. We terminate our algorithms either if they reach the certain number of iterations or the following stopping criterion is satisfied:  $\sqnorm{\nf{\xt}} \le 10^{-7}$. 

In all experiments, the stepsize is set to the largest stepsize predicted by theory for \algname{EF21} multiplied by some constant multiplier which was individually tuned in all cases. 

\begin{table}[h]
	\caption{Summary of the data sets and splitting of the data among clients for Experiments $1$, $3$, $4$, and $5$. Here $N_i$ denotes the number of datapoints per client.} 
	\label{tbl:datasets_summary_1}
	\centering
	\begin{tabular}{l l r r r r}
		\toprule
		Data set  & $n$ & $N$ (total \# of datapoints) & $d$ (\# of features)  &k & $N_i$ \\
		\midrule 			
		\texttt{mushrooms} & 20  & 8,120    & 112  &  2& 406		\\  
		\texttt{w8a}  				 & 20  &49,749  & 300  & 2 & 2,487		\\
		\texttt{a9a} 				  & 20  &32,560  & 123  &  2 &1,628 		\\
		\texttt{phishing}       & 20   &11,055  & 68    &   1&552		\\	
		\texttt{{real-sim}}       & {20}   &{72,309}  & {20,958}    &   {210}& {3615}		\\		
		\bottomrule
	\end{tabular}
\end{table}

\paragraph{Experiment 1: Fast convergence with variance reductions (extra details).}\label{subsecexp:ef21_page-ef21_sgd_ext}

The parameters $p_i$ of the PAGE estimator are set to $p_i = p \eqdef  \suminn\fr{\tau_i}{\tau_i + N_i}$, where $\tau_i$ is the batchsize for clients $i=1,\dots,n$ (see Table~\ref{tbl:page_p} for details). In our experiments, we assume that the sampling of Bernoulli random variable is performed on server side (which means that at each iteration for all clients $b_i^t=1$ or $b_i^t=0$). And if $b_i^t=0$, then in line $5$ of Algorithm~\ref{alg:EF21-PAGE} $I_i^t$ is sampled without replacement uniformly at random. Table~\ref{tbl:page_p} shows the selection of parameter $p$ for each experiment. 

For each batchsize from the set\footnote{By $50\%, 25\% $ (and so on) we refer to a batchsize, which is equals to $\lfloor0.5 N_i\rfloor$, $\lfloor0.25N_i\rfloor$ (and so on) for all clients $i=1,\dots,n$.} $$\cb{95 \%, 50\%,	25\%,12.5\%, 6.5\%,3\%},$$ we tune the stepsize multiplier for \algname{EF21-PAGE} within the set
$$\cb{0.25, 0.5, 1, 2, 4, 8, 16, 32, 64, 128, 256, 512, 1024, 2048}.$$

The best pair (batchsize, stepsize multiplier) is chosen in such a way that it gives the best convergence in terms of ${\rm \# bits}/n (C\rightarrow S)$. In the rest of the experiments, fine tuning is performed in a similar fashion.

\begin{table}[H]
	\caption{Summary of the parameter choice of  $p$. } 
	\label{tbl:page_p}
	\centering
	\begin{tabular}{l l l l }
		\toprule
		Data set  & $25\%$ & $12.5\%$ & $1.5\%$  \\
		\midrule 			
		\texttt{mushrooms} & 0.1992 &  0.1097   & 0.0146	\\ 
		\texttt{w8a}  				 & 0.1998  & 0.1108 &	0.0147\\
		\texttt{a9a} 				  &  0.2 &0.1109 &	0.0145\\
		\texttt{phishing}       &   0.2 & 0.1111& 0.0143 \\ 
		\texttt{real-sim}       & 0.1999   &  0.1109&   0.0147\\  			
		\bottomrule
	\end{tabular}
\end{table}

\begin{figure*}
	\centering
		\includegraphics[width=0.9\linewidth]{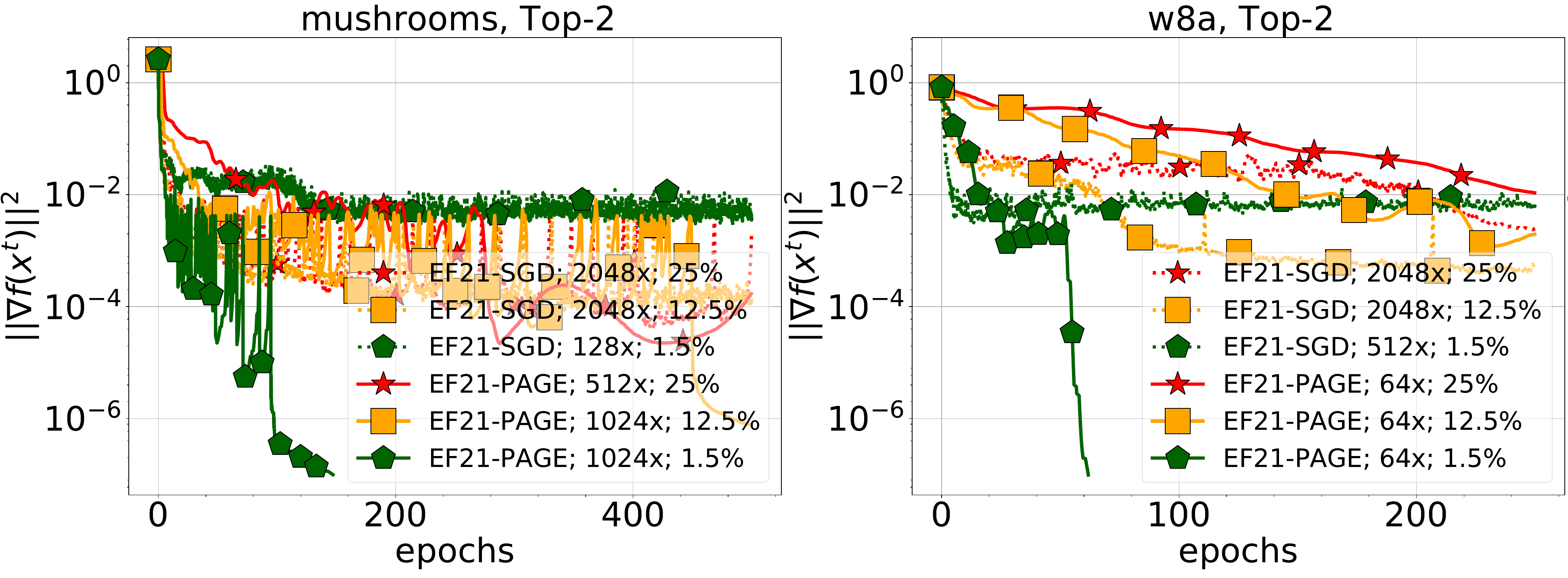}

\includegraphics[width=0.9\linewidth]{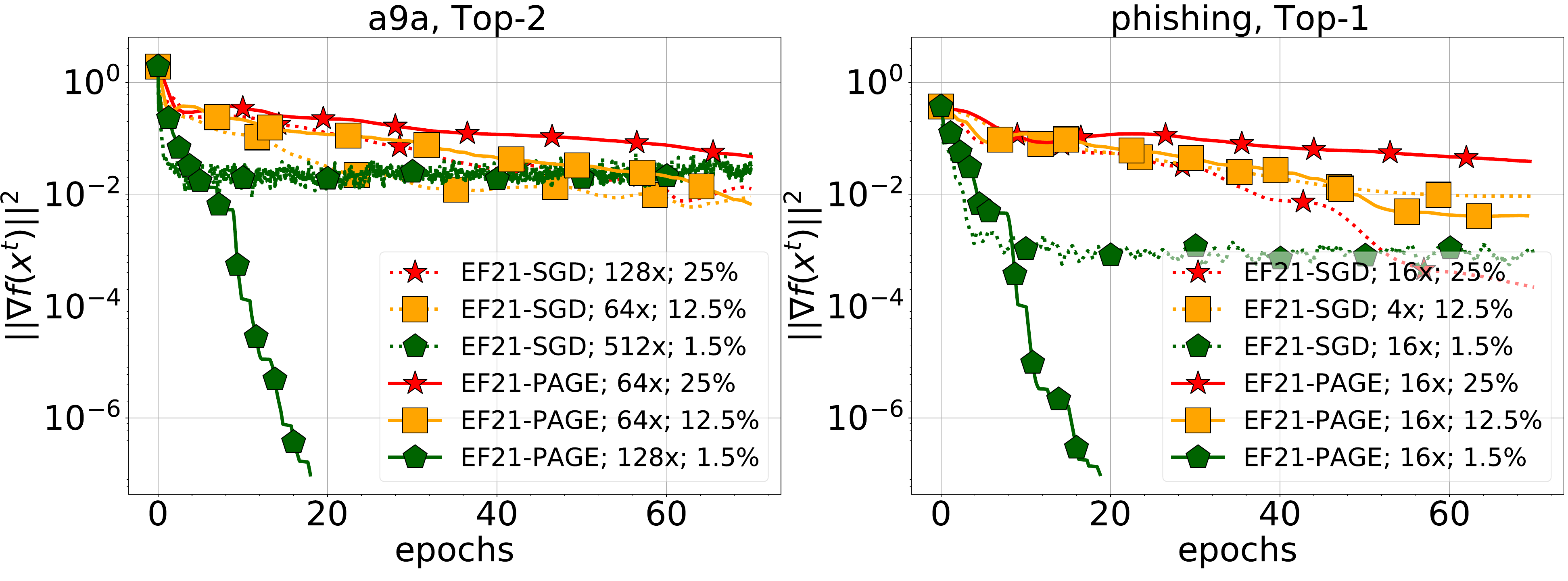}
	\caption{
		Comparison of \algname{EF21-PAGE} and \algname{EF21-SGD} with tuned step-sizes. By $1\times, 2\times, 4\times$ (and so on) we indicate that the stepsize was set to a multiple of  the largest stepsize predicted by theory for \algname{EF21}. By $25\%$, $12.5\% $ and $1.5\% $ we refer to batch-sizes equal $\lfloor0.25 N_i\rfloor$, $\lfloor0.125N_i\rfloor$ and $\lfloor0.015N_i\rfloor$ for all clients $i=1,\dots,n$, where $N_i$ denotes the size of local data set. }\label{fig:ef21_page_epochs}%
\end{figure*}

\paragraph{Experiment 2: On the effect of partial participation of clients (extra details).}\label{subsecexp:ef21_pp-ef21_fg_ext}
In this experiment, we consider $n=100$ and, therefore, a different data partitioning, see Table~\ref{tbl:datasets_summary_2} for the summary.
\begin{table}[h]
	\caption{Summary of the data sets and splitting of the data among clients for  Experiment 5. Here $N_i$ denotes the number of datapoints per client.} 
	\label{tbl:datasets_summary_2}
	\centering
	\begin{tabular}{l l r r r r}
		\toprule
		Data set  & $n$ & $N$ (total \# of datapoints) & $d$ (\# of features)  & k& $N_i$ \\
		\midrule 			
		\texttt{mushrooms} & 100  & 8,120    & 112  & 2&  81		\\ 
		\texttt{w8a}  				 & 100 &49,749  & 300  &   2&497		\\
		\texttt{a9a} 				  & 100  &32,560  & 123  &   2& 325 		\\
		\texttt{phishing}       & 100   &11,055  & 68    &  1 &110  \\  			
		\bottomrule
	\end{tabular}
\end{table}

We tune the stepsize multiplier for \algname{EF21-PP} within the following set:
$$\cb{0.125,0.25, 0.5, 1, 2, 4, 8, 16, 32, 64, 128, 256, 512, 1024, 2048, 4096}.$$

\begin{figure*}
	\centering
		\includegraphics[width=0.9\linewidth]{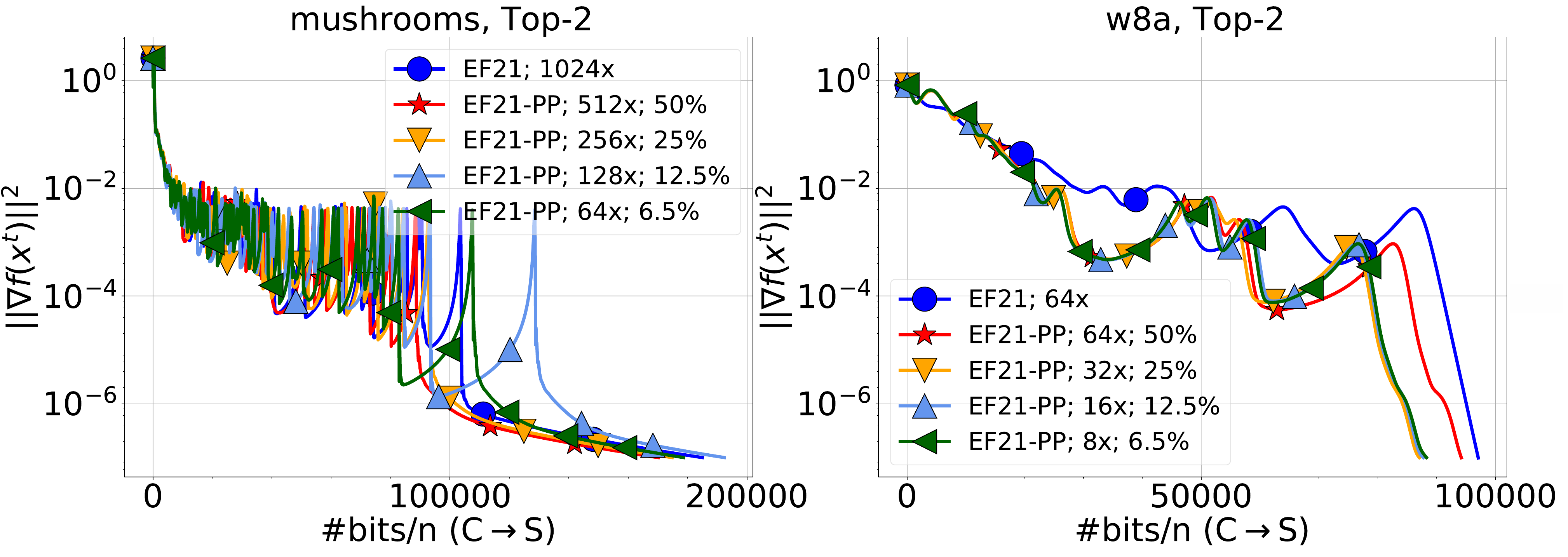}

		\includegraphics[width=0.9\linewidth]{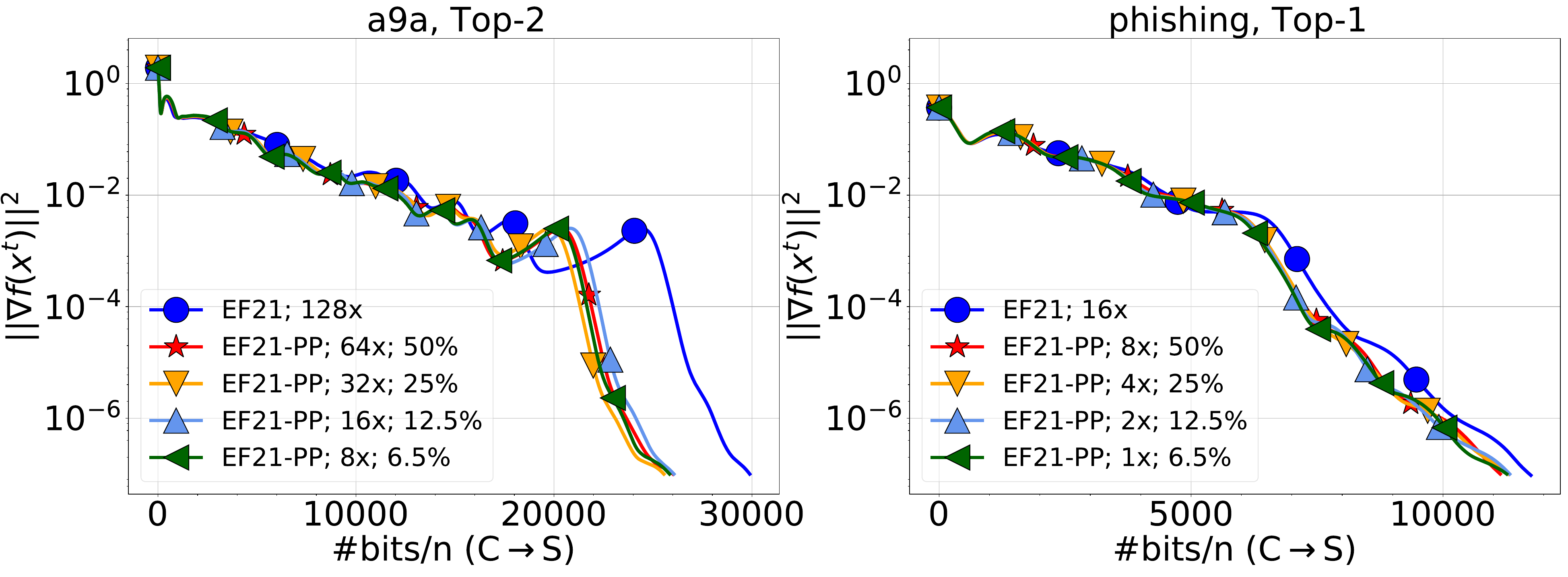} 
	\caption{
		Comparison of \algname{EF21-PP} and \algname{EF21} with tuned step-sizes. By $1\times, 2\times, 4\times$ (and so on) we indicate that the stepsize was set to a multiple of  the largest stepsize predicted by theory for \algname{EF21}. By $50\%$, $25\%$ , $12.5\%$ and $6.5\%$ we refer to a number of participating clients equal to $\lfloor0.5 n\rfloor$, $\lfloor0.25 n\rfloor$, $\lfloor0.125 n\rfloor$ and $\lfloor0.065 n\rfloor$. }\label{fig:ef21_pp_bits}%
\end{figure*}

\paragraph{Experiment 3: On the advantages of bidirectional biased compression (extra details).}\label{subsecexp:ef21_bc-ef21_fg}
Our next experiment demonstrates that the application of the \textbf{S}erver $\rightarrow$ \textbf{C}lients  compression in \algname{EF21-BC} (Alg.~\ref{alg:EF21-BC}) does not significantly slow down the convergence in terms of the communication rounds but requires much less bits to be transmitted. Indeed, Figure \ref{fig:ef21_bc_comm_realsim}, \ref {fig:ef21_bc_comm} illustrates that that it is sufficient to communicate only $5\% - 15\%$ of data to perform similarly to \algname{EF21} (Alg.~\ref{alg:EF21}).\footnote{The range $5\% - 15\%$  comes from the fractions $\nicefrac{k}{d}$ for each data set.} Note that \algname{EF21} communicates full vectors from the \textbf{S}erver $\rightarrow$ \textbf{C}lients, and, therefore, may have slower communication at each round. In Figure \ref{fig:ef21_bc_od_bits_realsim}, \ref{fig:ef21_bc_od_bits} we take into account only the number of bits sent from clients to the server, and therefore we observe the same behavior as in Figure \ref{fig:ef21_bc_comm}. However, if we consider the total number of bits (see Figure~\ref{fig:ef21_bc_bd_bits_realsim}, \ref{fig:ef21_bc_bd_bits}), then \algname{EF21-BC} considerably outperforms \algname{EF21} in all cases.

\begin{figure}[H]
	\centering
	\begin{subfigure}{0.48\textwidth}
		\includegraphics[width=0.9\linewidth]{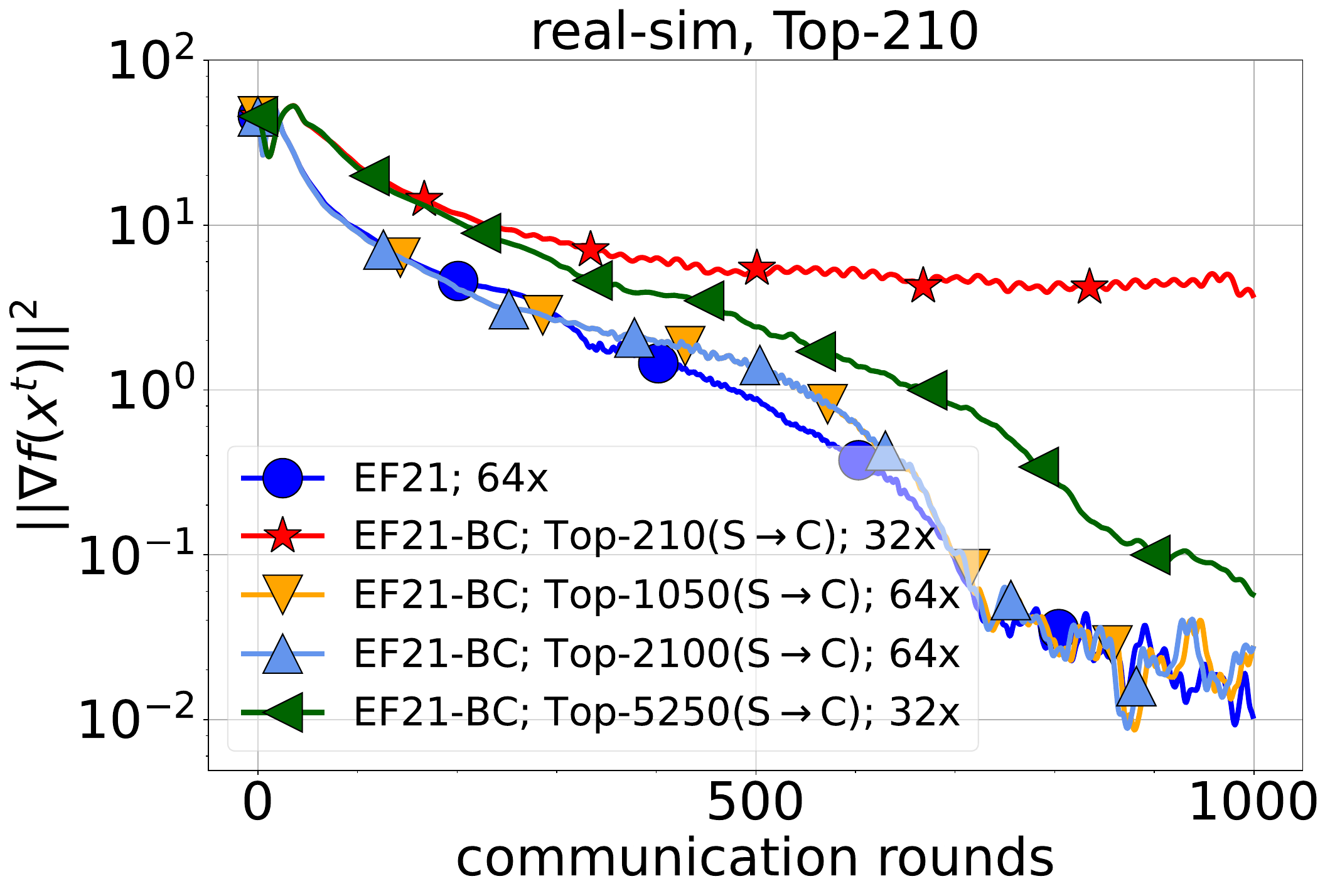}
		\caption{Convergence in communication rounds.\\${}$\\${}$\\${}$}
		\label{fig:ef21_bc_comm_realsim}
	\end{subfigure}
	\begin{subfigure}{0.48\textwidth}
		\includegraphics[width=0.9\linewidth]{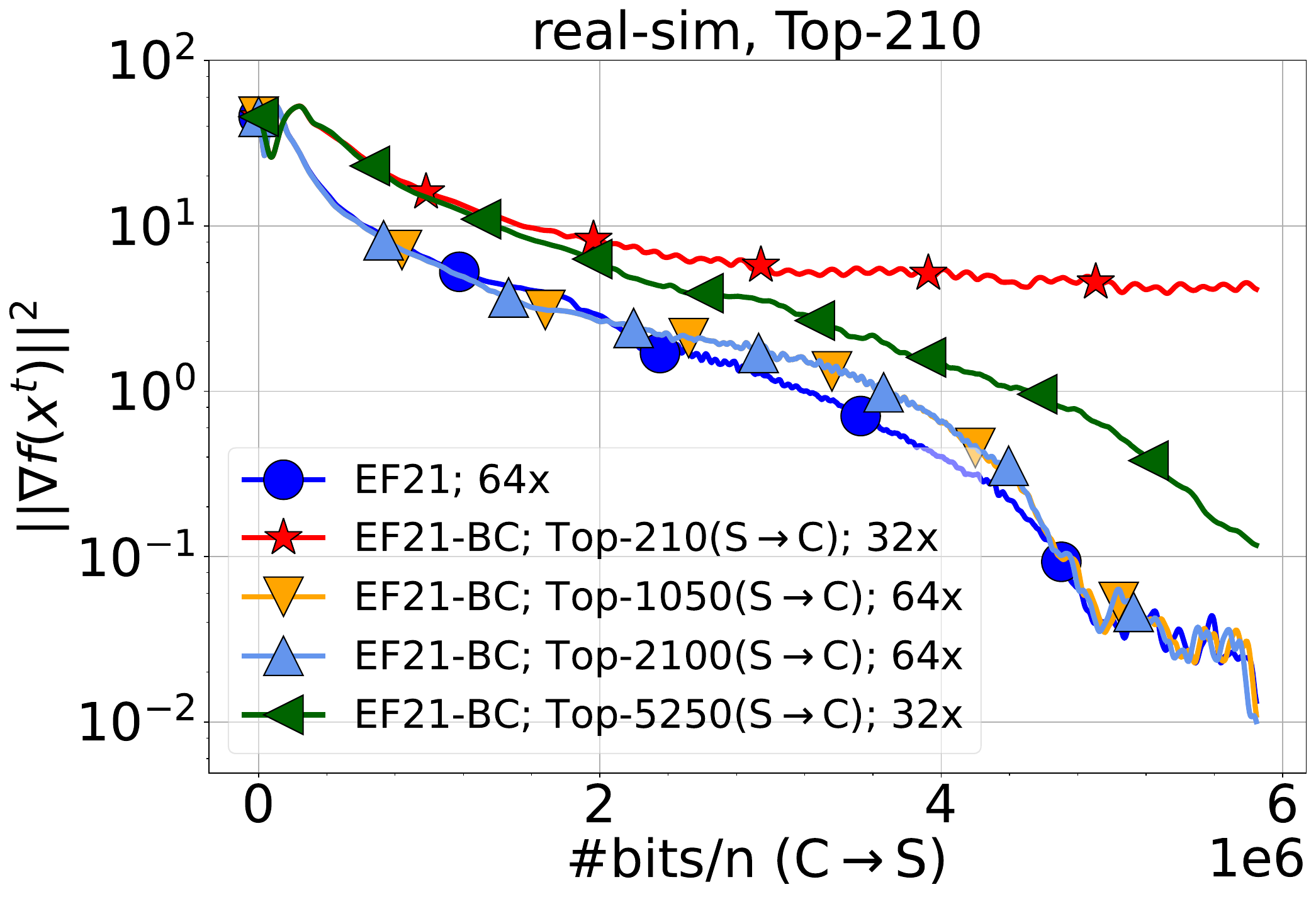} 
		\caption{Convergence in terms of  total number of bits sent from \textbf{C}lients to the \textbf{S}erver divided by $n$.\\${}$\\${}$}
		\label{fig:ef21_bc_od_bits_realsim}
	\end{subfigure}
	\caption{{Comparison of \algname{EF21-BC} and \algname{EF21} with tuned stepsizes .  By $1\times, 2\times, 4\times$ (and so on) we indicate that the stepsize was set to a multiple of  the largest stepsize predicted by theory for \algname{EF21} (see  the Theorem \ref{thm:main-distrib}).}}\label{fig:ef21_bc_0}
\end{figure}

\begin{figure}[H]
	\centering
	\begin{subfigure}{0.9\textwidth}
		\includegraphics[width=\linewidth]{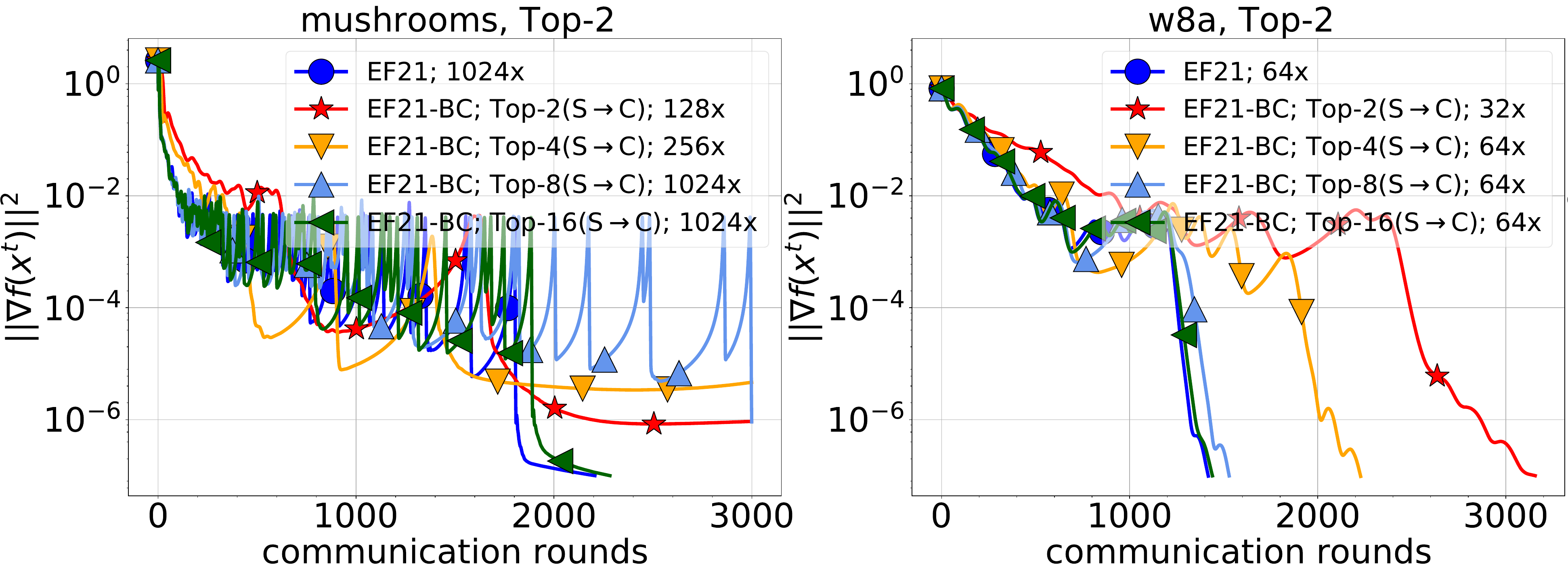}

		\includegraphics[width=\linewidth]{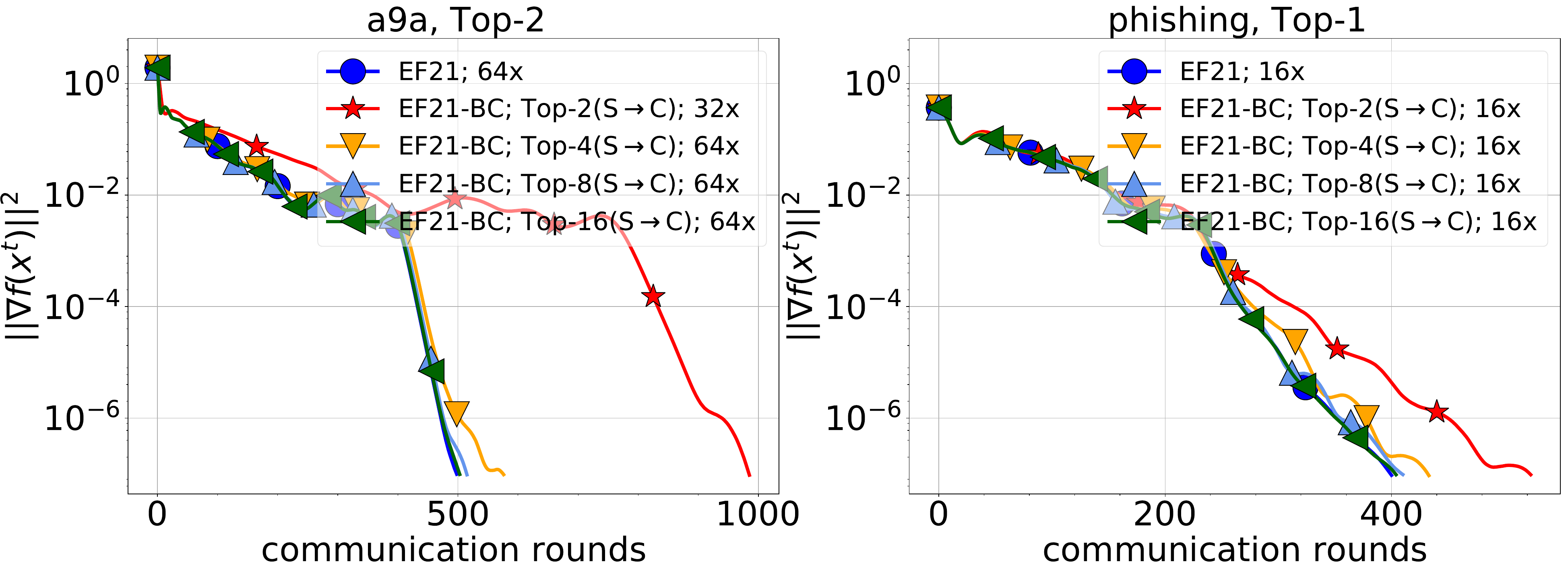}
		\caption{Convergence in communication rounds.}
		\label{fig:ef21_bc_comm}
	\end{subfigure}
	\hfill
	\begin{subfigure}{0.9\textwidth}
		\includegraphics[width=\linewidth]{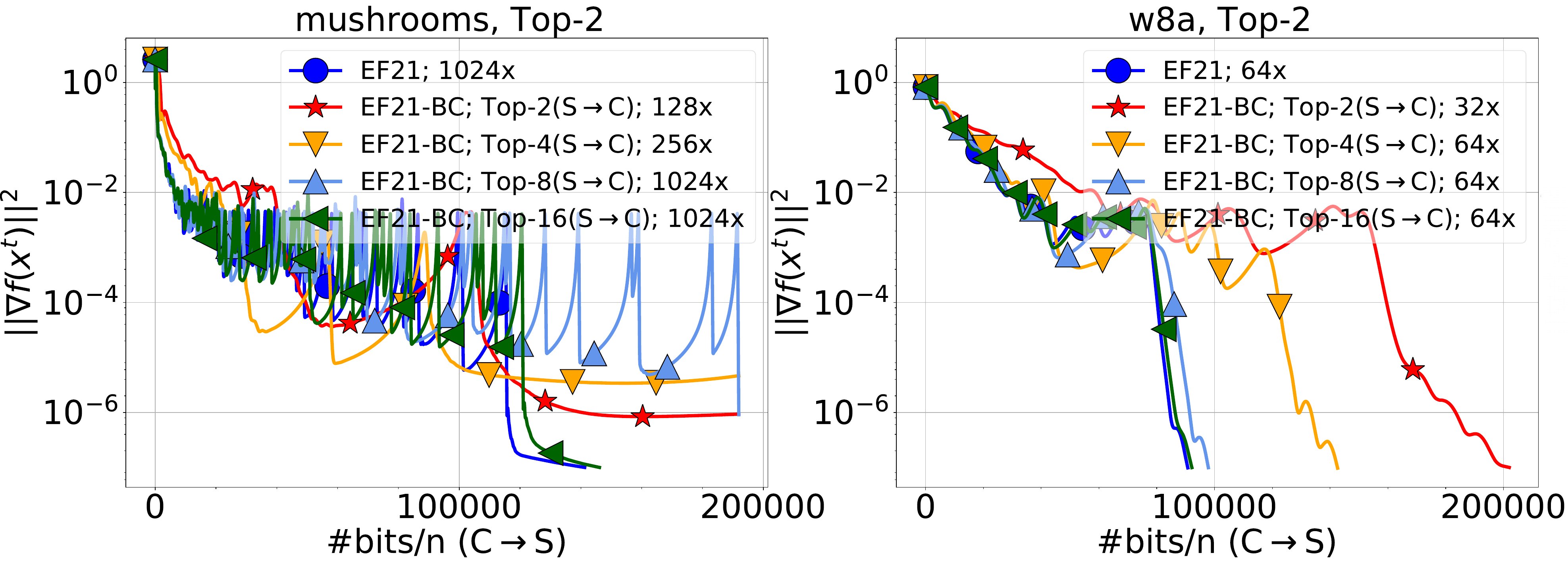}

		\includegraphics[width=\linewidth]{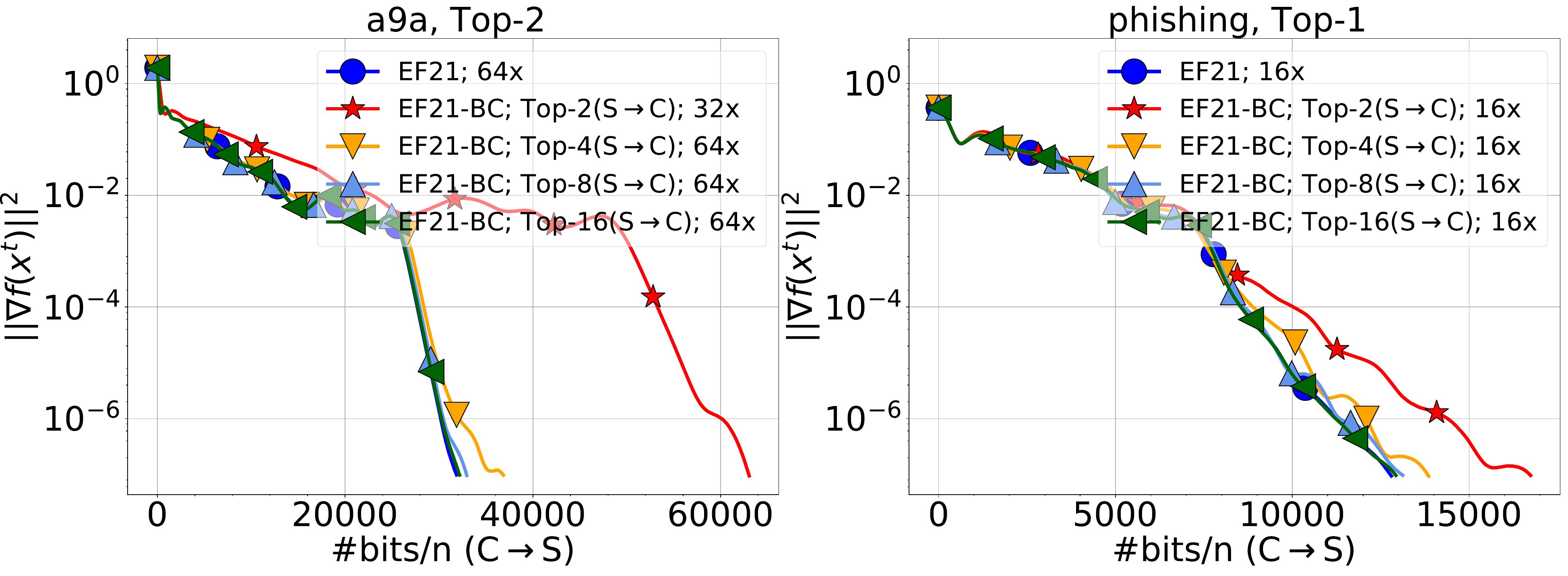}
		\caption{Convergence in terms of  total number of bits sent from \textbf{C}lients to the \textbf{S}erver divided by $n$.}
		\label{fig:ef21_bc_od_bits}
	\end{subfigure}
	\hfill
		\caption{Comparison of \algname{EF21-BC} and \algname{EF21} with tuned stepsizes .  By $1\times, 2\times, 4\times$ (and so on) we indicate that the stepsize was set to a multiple of  the largest stepsize predicted by theory for \algname{EF21} (see  the Theorem \ref{thm:main-distrib}) .}\label{fig:ef21_bc}
\end{figure}

\begin{figure}[H]
	\centering
	\begin{subfigure}{0.9\textwidth}
		\includegraphics[width=\linewidth]{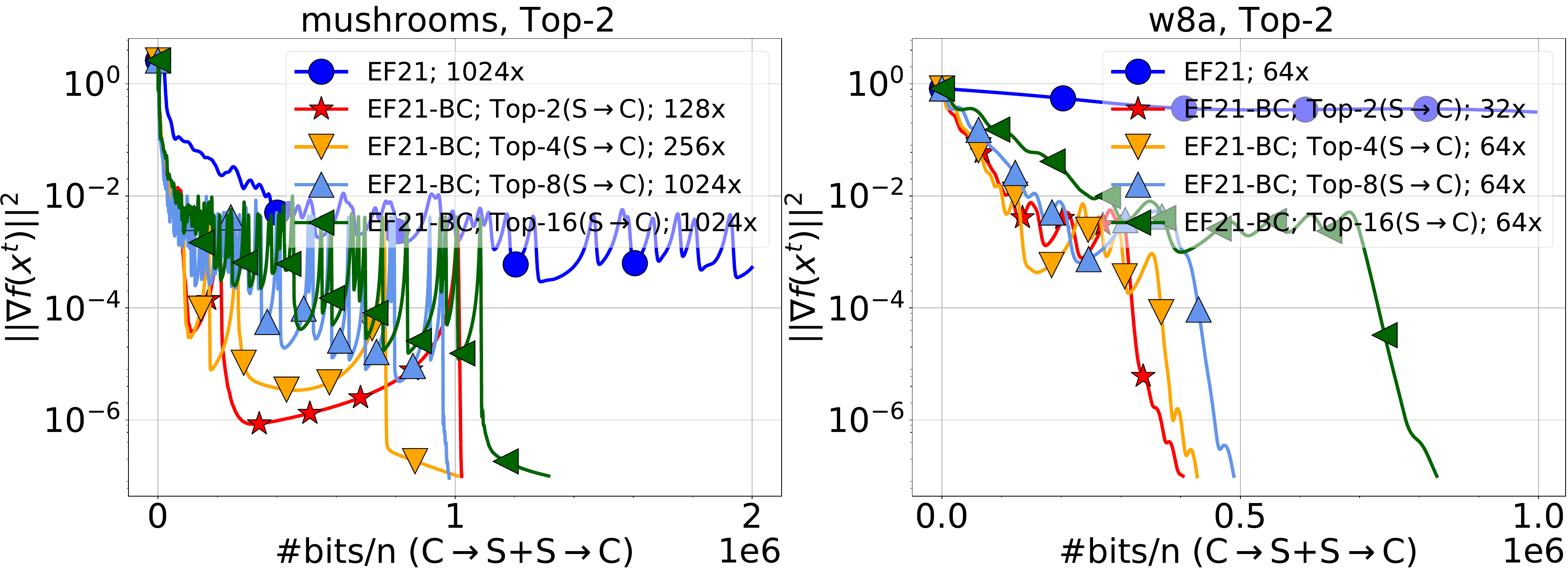}

		\includegraphics[width=\linewidth]{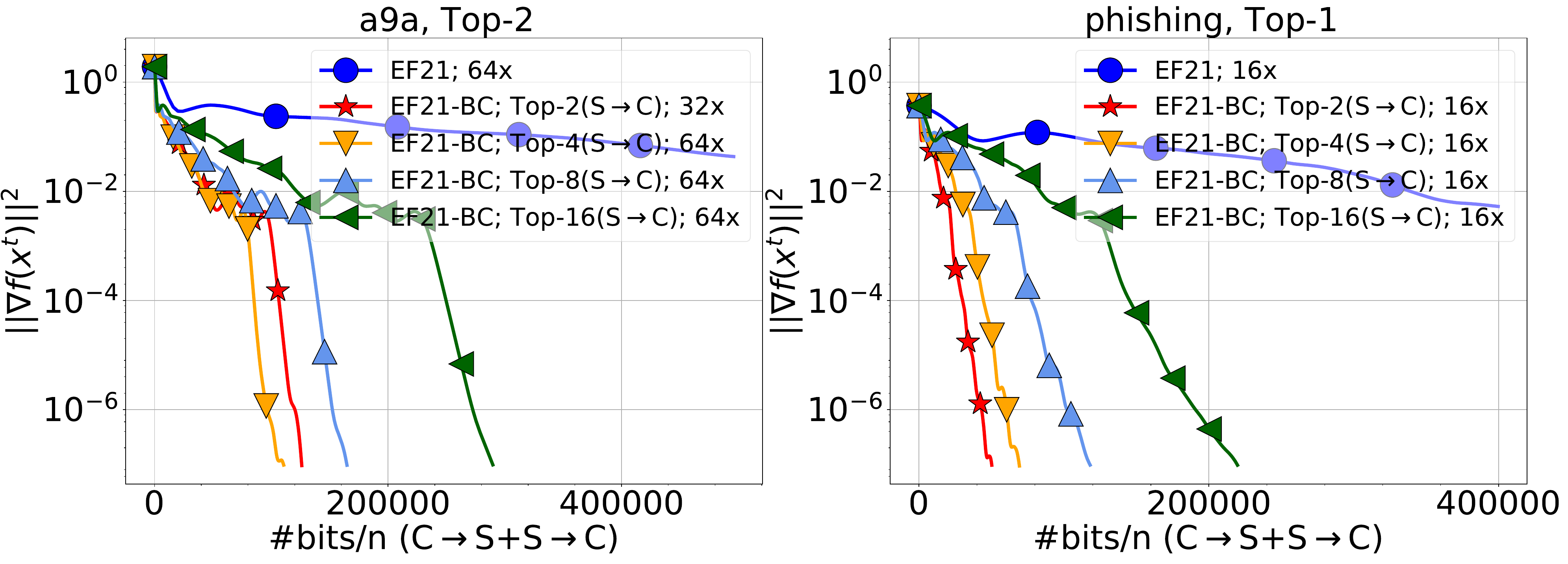}
		\caption{Convergence in terms of  total number of bits sent from \textbf{C}lients to the \textbf{S}erver plus the total number of bits broadcasted from \textbf{S}erver to \textbf{C}lients divided by $n$.}
		\label{fig:ef21_bc_bd_bits}
	\end{subfigure}
\caption{Comparison of \algname{EF21-BC} and \algname{EF21} with tuned stepsizes .  By $1\times, 2\times, 4\times$ (and so on) we indicate that the stepsize was set to a multiple of  the largest stepsize predicted by theory for \algname{EF21} (see  the Theorem \ref{thm:main-distrib}) .}\label{fig:ef21_bc_2}
\end{figure}

For each parameter $k$ in Server-Clients compression, we tune the stepsize multiplier for \algname{EF21-BC} within the following set:
$$\cb{0.125,0.25, 0.5, 1, 2, 4, 8, 16, 32, 64, 128, 256, 512, 1024, 2048}.$$

\paragraph{Experiment 4: On the cheaper computations via \algname{EF21-SGD}. }\label{subsecexp:ef21_sgd-ef21_fg} 
The fourth experiment (see Figure~\ref{fig:ef21_sgd_epochs}) illustrates that \algname{EF21-SGD}(Alg. \ref{alg:EF21-online}) is the more preferable choice than \algname{EF21}for the cases when full gradient computations are costly.

For each batchsize from the set\footnote{By $50\%, 25\% $ (and so on) we refer to a batchsize, which is equals to $\lfloor0.5 N_i\rfloor$, $\lfloor0.25N_i\rfloor$ (and so on) for all clients $i=1,\dots,n$.} $$\cb{95 \%, 50\%,	25\%,12.5\%,	6.5\%,3\%},$$ we tune the stepsize multiplier for \algname{EF21-SGD} within the following set:
$$\cb{0.25, 0.5, 1, 2, 4, 8, 16, 32, 64, 128, 256, 512, 1024, 2048}.$$

Figure \ref{fig:ef21_sgd_epochs} illustrates that \algname{EF21-SGD} is able to reach a moderate tolerance in $5-10$ epochs.
\begin{figure}[H]
	\centering
	\begin{subfigure}{0.9\textwidth}
		\includegraphics[width=\linewidth]{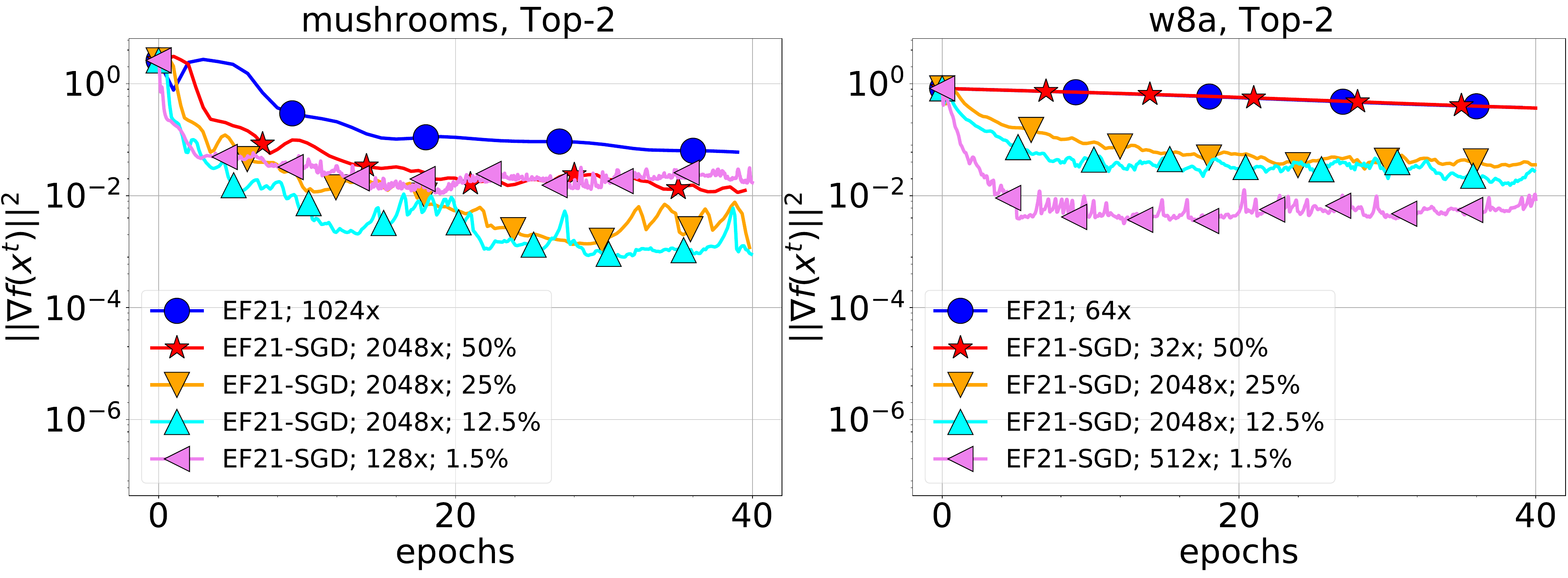}

		\includegraphics[width=\linewidth]{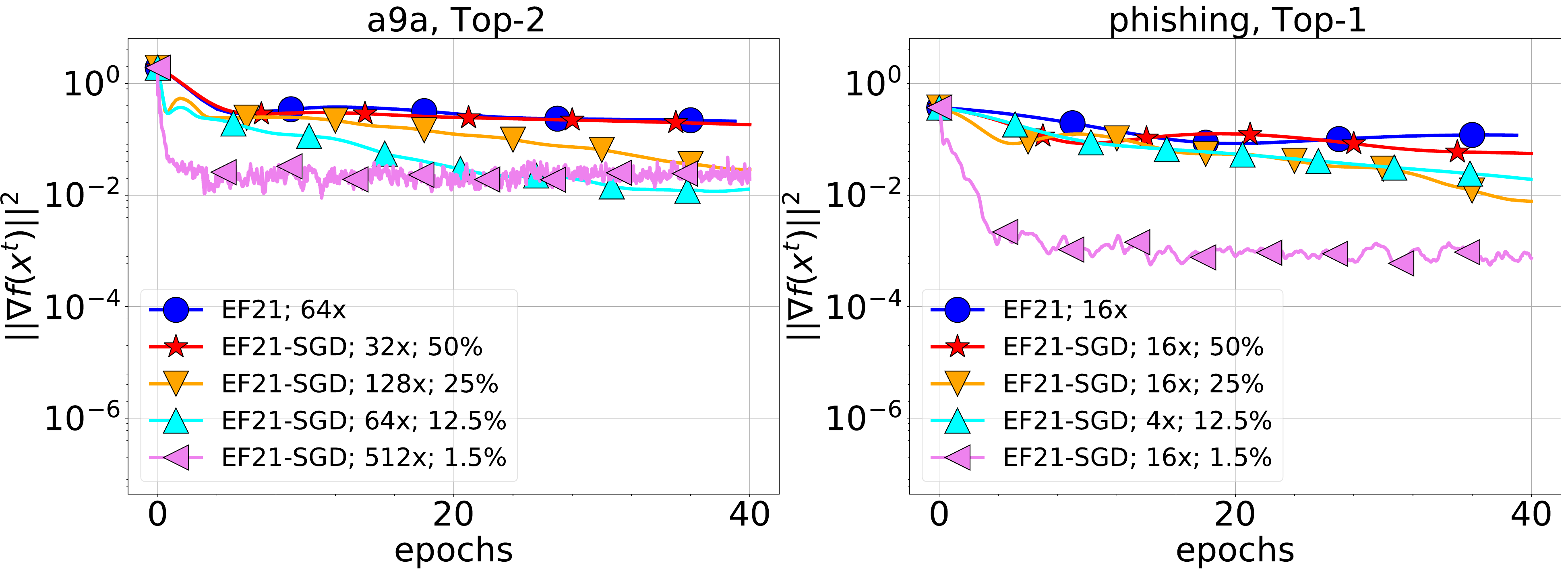}
		\caption{Convergence in epochs.}
		\label{fig:ef21_sgd_epochs}
	\end{subfigure}
	\hfill
	\begin{subfigure}{0.9\textwidth}
		\includegraphics[width=\linewidth]{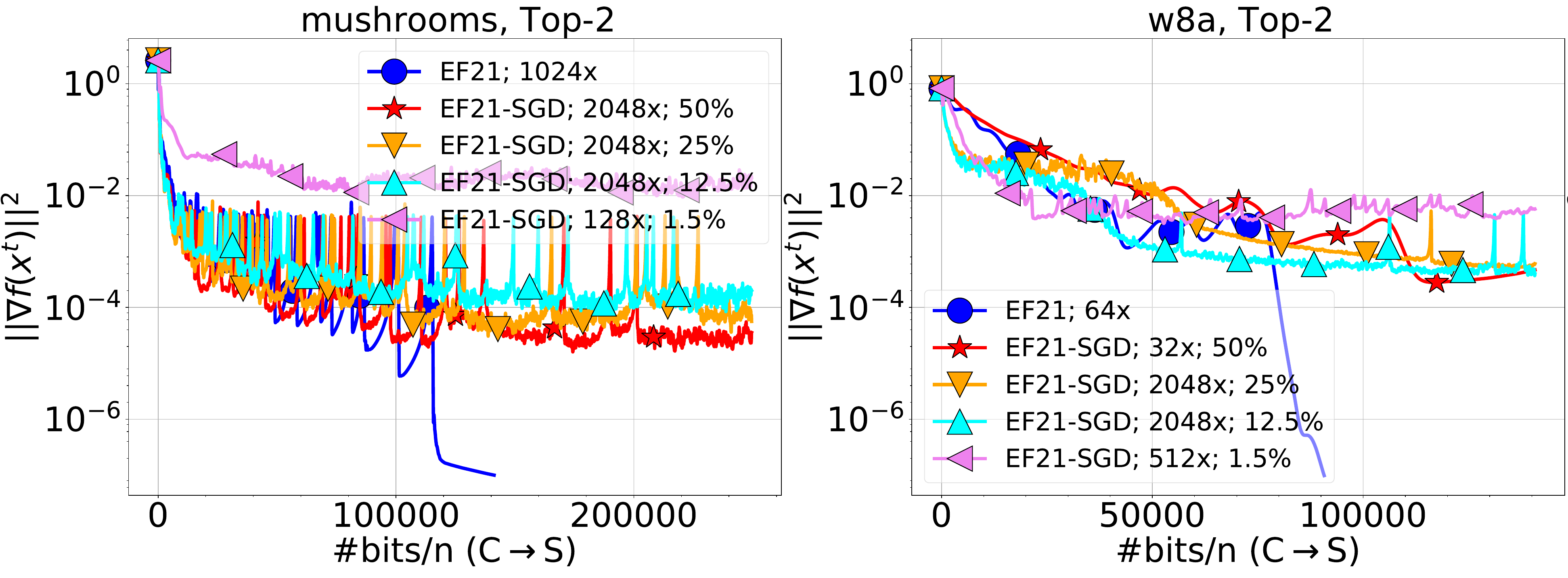}

		\includegraphics[width=\linewidth]{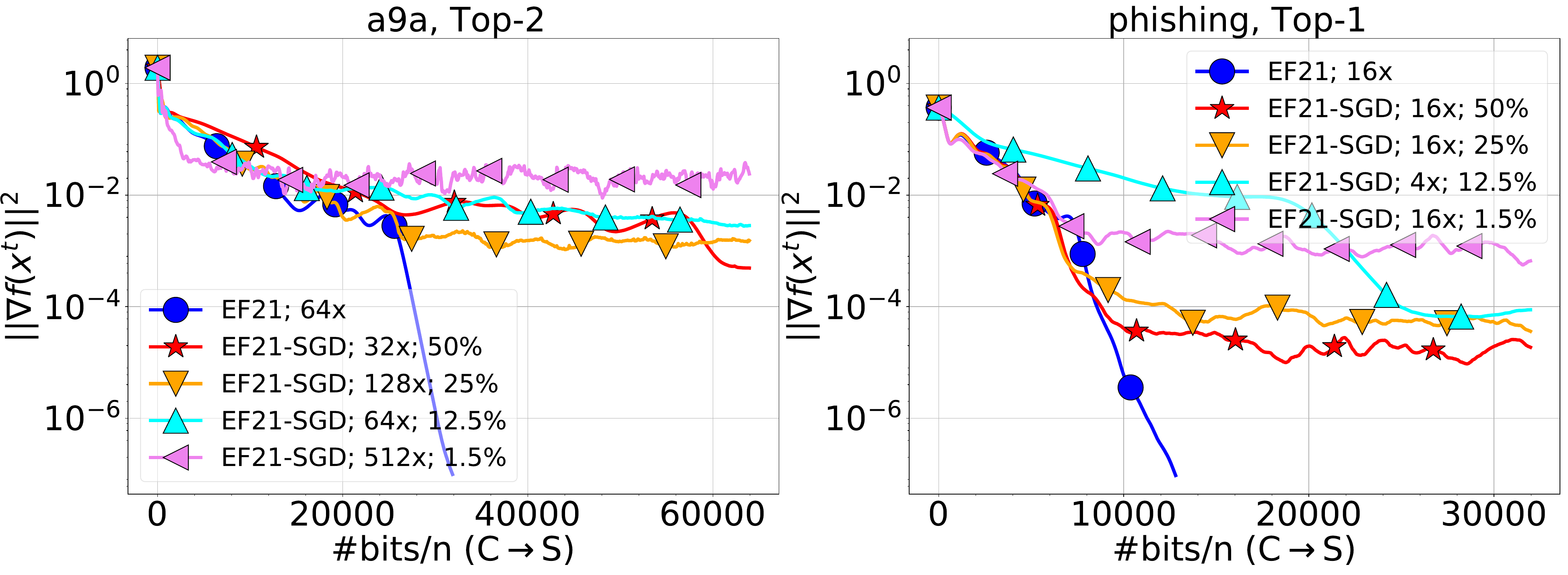} 
		\caption{Convergence in terms of the number of bits sent from \textbf{C}lients to the \textbf{S}erver by each client.}
		\label{fig:ef21_sgd_bits}
	\end{subfigure}
	\caption{
		Comparison of \algname{EF21-SGD} and \algname{EF21} with tuned stepsizes. By $1\times, 2\times, 4\times$ (and so on) we indicate that the stepsize was set to a multiple of  the largest stepsize predicted by theory for \algname{EF21}. By $50\%, 25\% $ (and so on) we refer to a batchsize, which is equals to $\lfloor0.5 N_i\rfloor$, $\lfloor0.25N_i\rfloor$ (and so on) for all clients $i=1,\dots,n$.}\label{fig:ef21_sgd}%
\end{figure}
However, due to the accumulated variance introduced by SGD, estimator \algname{EF21-SGD} is stuck at some accuracy level (see Figure \ref{fig:ef21_sgd_bits}), showing the usual behavior of the SGD observed in practice.

\paragraph{Experiment 5: On the effect of heavy ball momentum.}\label{subsecexp:ef21_hb-ef21_fg_ext}
In this experiment (see Figure~\ref{fig:ef21_hb}), we show that for the majority of the considered data sets heavy ball acceleration used in \algname{EF21-HB} (Alg. \ref{alg:EF21_HB}) improves the convergence of \algname{EF21} method. For every data set (and correspondingly chosen parameter $k$) we tune momentum parameter  $\eta$ in \algname{EF21-HB} by making a grid search over all possible parameter values  from $0.05$ to $0.99$ with the step $0.05$. Finally, for our plots we pick $\eta \in \cb{0.05, 0.2, 0.25, 0.4, 0.9}$ since the first four values shows the best performance and $\eta =0.9$ is a popular choice in practice.

For each parameter $\eta$ from the set 
$$\cb{0.05,0.1,0.15,0.2,0.25,0.3,0.35,0.4,0.45,0.5,0.55,0.6,0.65,0.7,0.75,0.8,0.85,0.9,0.95,0.99}.$$ 
we perform a grid search of stepsize multiplier within the powers of $2$:
$$\cb{0.125, 0.25, 0.5, 1, 2, 4, 8, 16, 32, 64, 128, 256, 512, 1024, 2048}.$$

\begin{figure}[H]
	\includegraphics[width=0.9\linewidth]{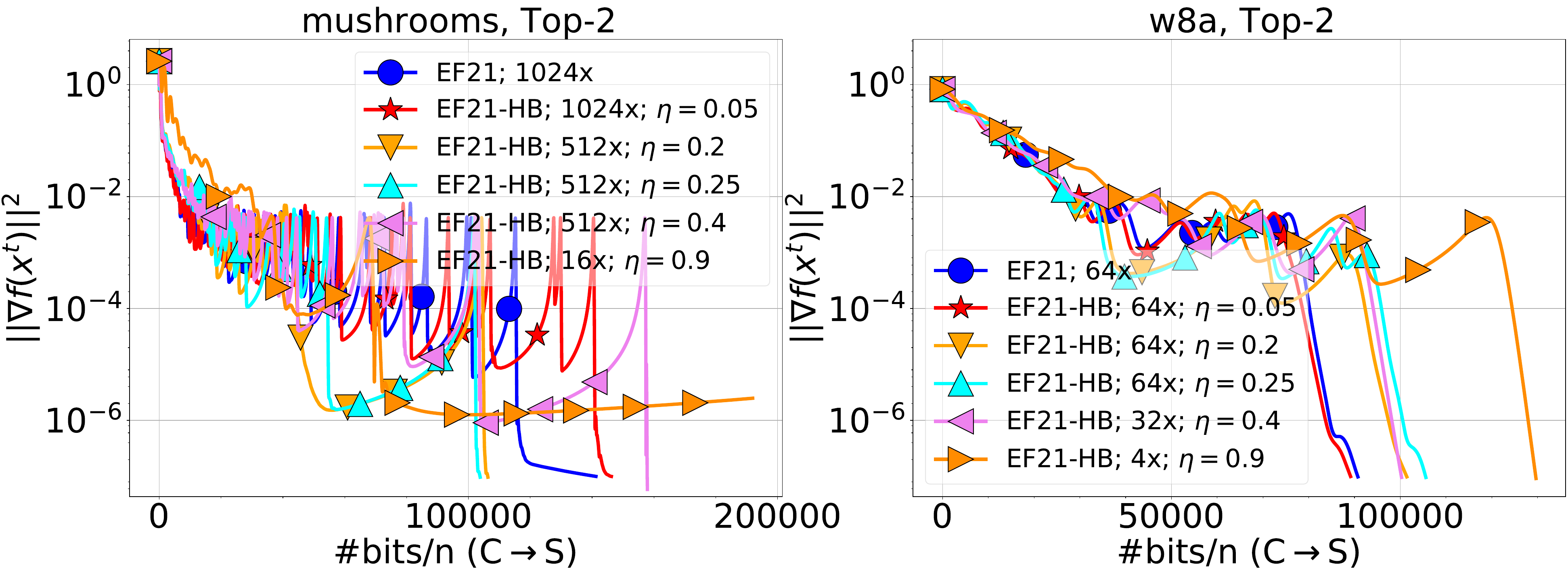}

	\includegraphics[width=0.95\linewidth]{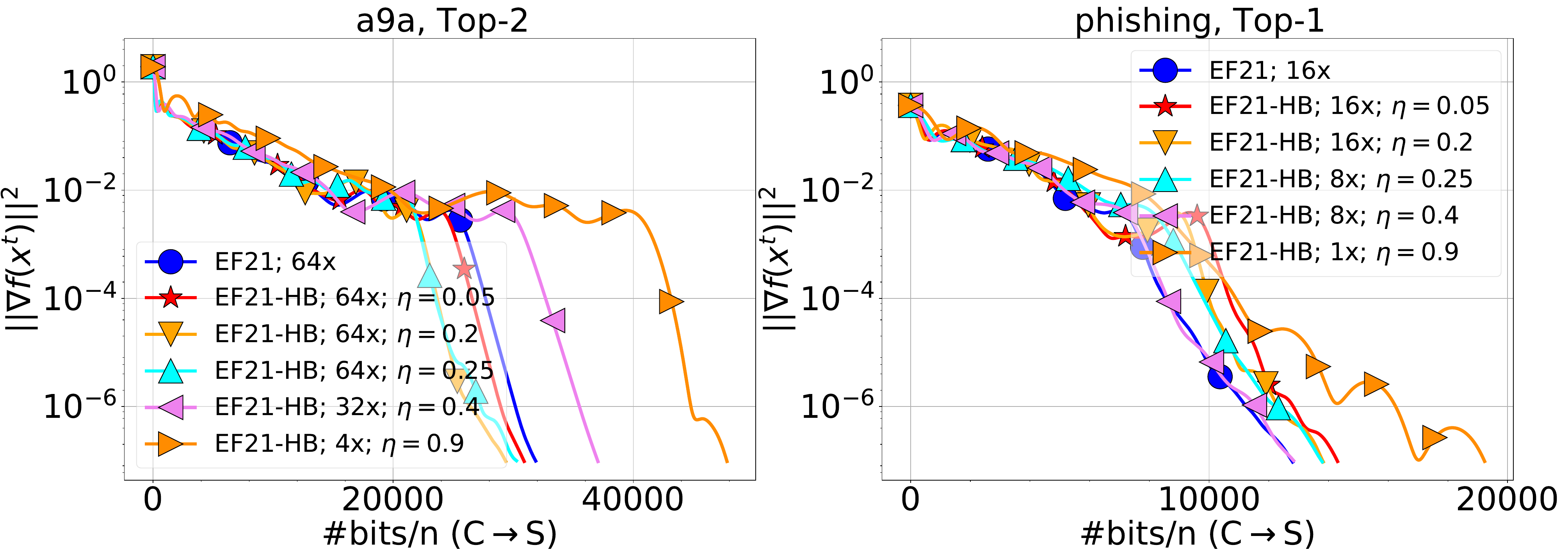} 
	\caption{Comparison of \algname{EF21-HB} and \algname{EF21} with  tuned parameters in terms of  total number of bits sent from \textbf{C}lients to the \textbf{S}erver divided by $n$.  By $1\times, 2\times, 4\times$ (and so on) we indicate that the stepsize was set to a multiple of  the largest stepsize predicted by theory for \algname{EF21} (see  the Theorem \ref{thm:main-distrib}) .}\label{fig:ef21_hb}
\end{figure}

\paragraph{{Comparison to non-compressed methods.}}  {In addition, we compare  \algname{EF21-PAGE} and \algname{EF21-SGD} to the baseline methods without compression: \algname {PAGE} (Figure \ref{fig:page_bits}) and \algname {SGD} (Figure \ref{fig:sgd_bits}). In these experiments, we observe that \algname{EF21-PAGE} and \algname{EF21-SGD} require much less information to transmit in order to achieve the same accuracy of the solution as the methods without compression (\algname{PAGE}, \algname{SGD}).}
\begin{figure}[H]
	\begin{subfigure}{0.9\textwidth}
		\includegraphics[width=\linewidth]{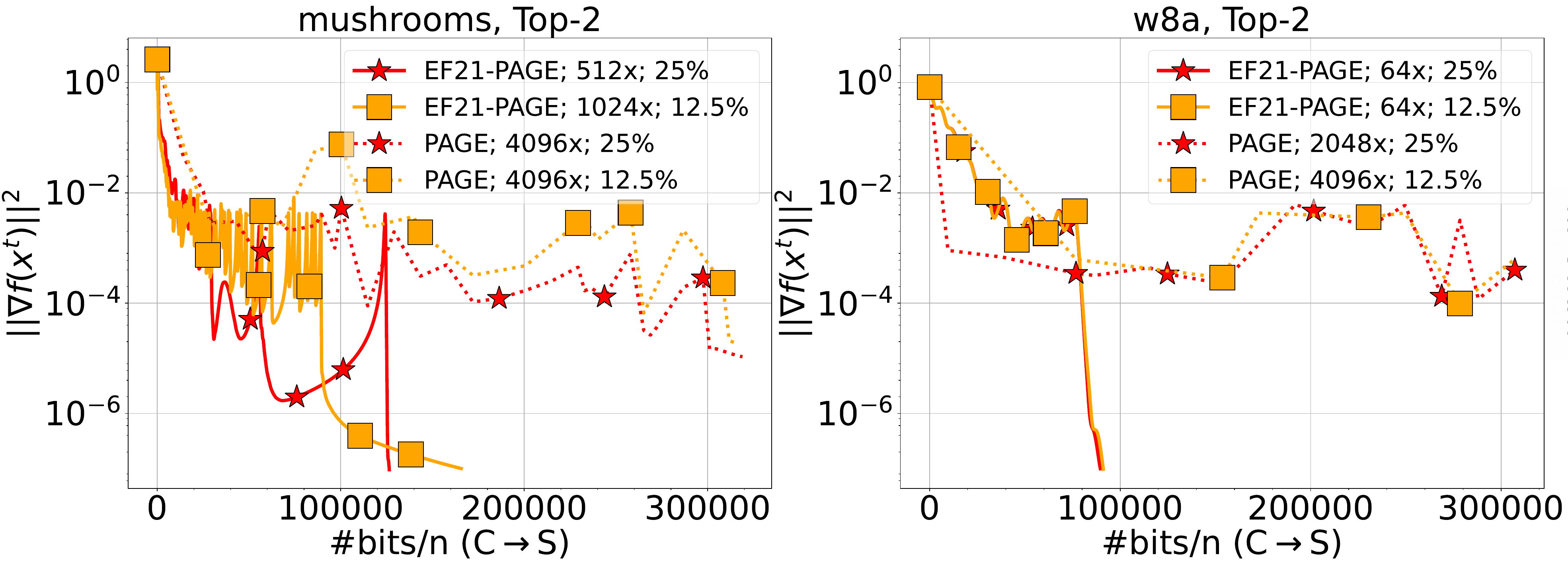}

		\includegraphics[width=\linewidth]{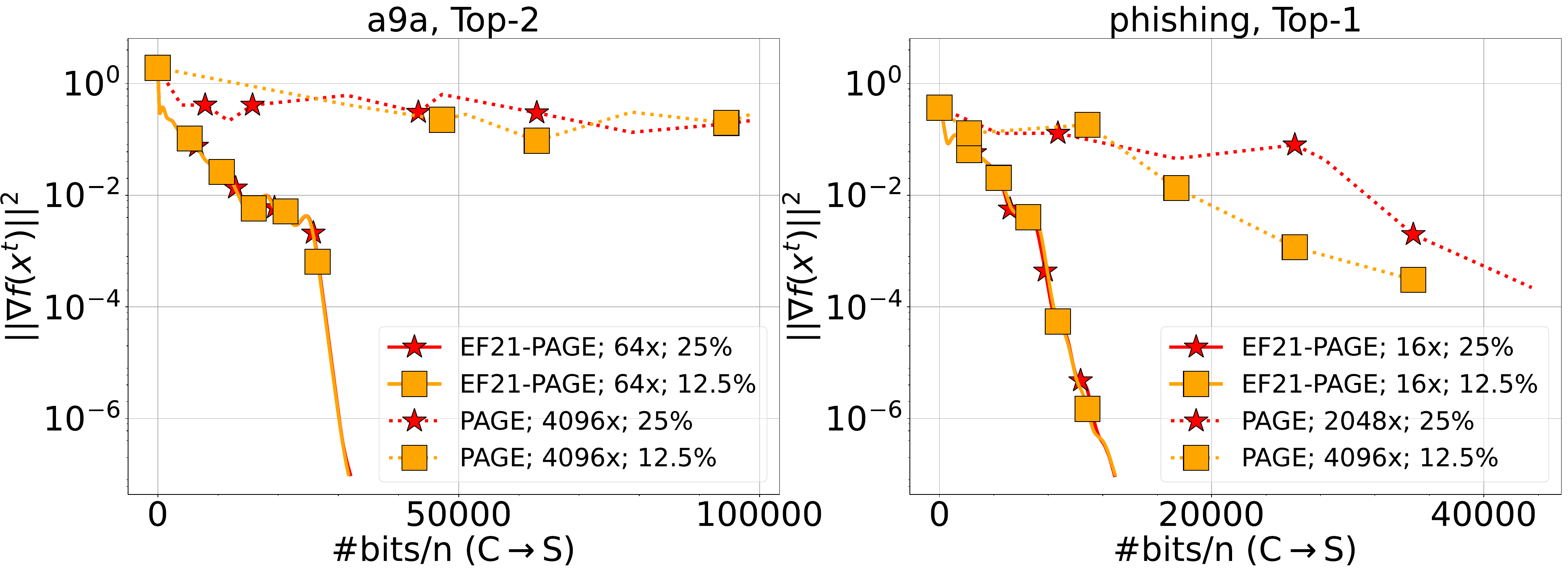} 
		\caption{Convergence in terms of  total number of bits sent from \textbf{C}lients to the \textbf{S}erver divided by $n$.}
		\label{fig:page_bits}
	\end{subfigure}
	\caption{
		{Comparison of \algname{EF21-PAGE} and \algname{PAGE} with tuned parameters. By $1\times, 2\times, 4\times$ (and so on) we indicate that the stepsize was set to a multiple of  the largest stepsize predicted by theory for \algname{EF21}. By $25\%$, $12.5\% $ and $1.5\% $ we refer to batchsizes equal $\lfloor0.25 N_i\rfloor$, $\lfloor0.125N_i\rfloor$ and $\lfloor0.015N_i\rfloor$ for all clients $i=1,\dots,n$, where $N_i$ denotes the size of local data set.}}\label{fig:ef21_page}%
\end{figure}

\begin{figure}[H]
	\begin{subfigure}{0.9\textwidth}
		\includegraphics[width=\linewidth]{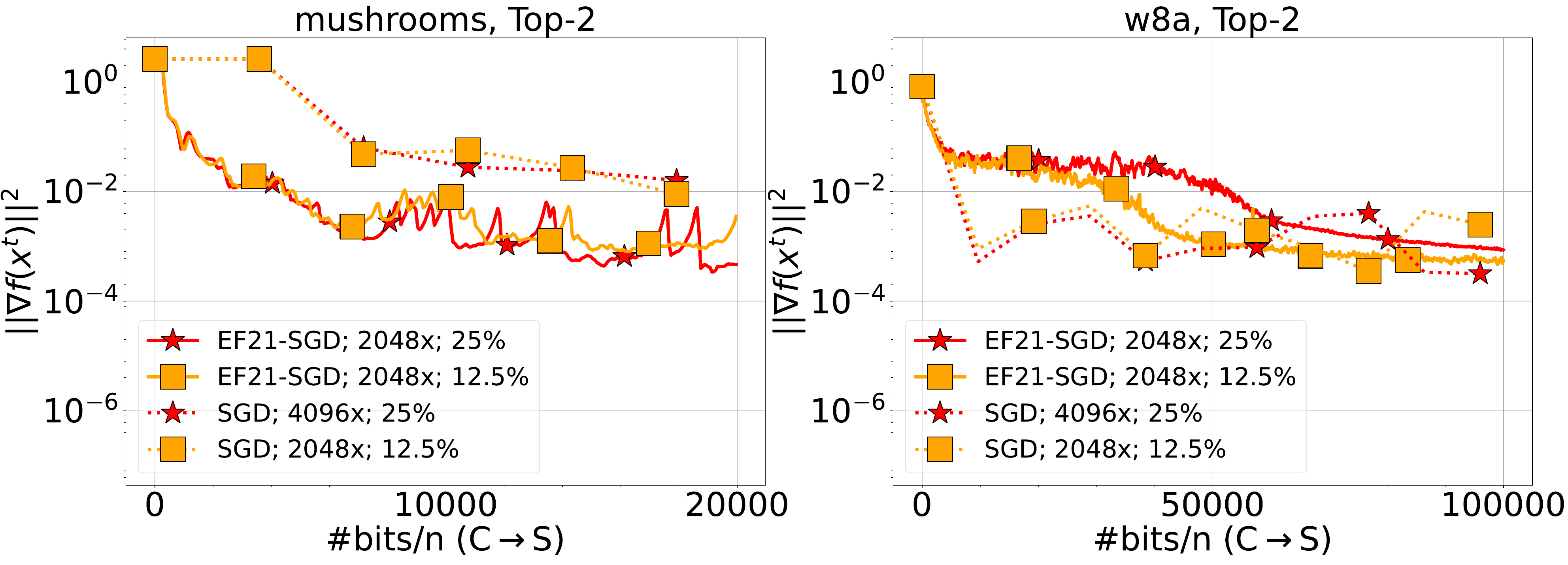} 
		
		\includegraphics[width=\linewidth]{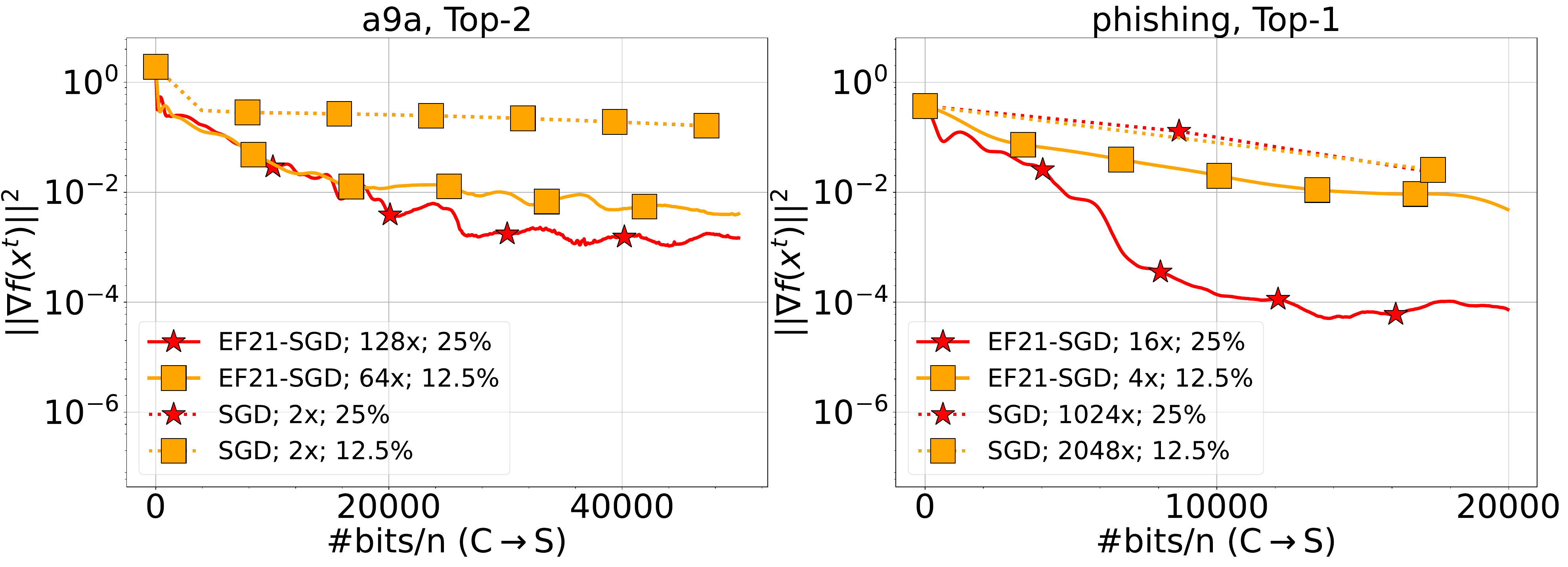}
		\caption{Convergence in terms of  total number of bits sent from \textbf{C}lients to the \textbf{S}erver divided by $n$.}
		\label{fig:sgd_bits}
	\end{subfigure}
	\caption{
		{Comparison of \algname{EF21-SGD} and \algname{SGD} with tuned parameters. By $1\times, 2\times, 4\times$ (and so on) we indicate that the stepsize was set to a multiple of  the largest stepsize predicted by theory for \algname{EF21}. By $25\%$, $12.5\% $ and $1.5\% $ we refer to batchsizes equal $\lfloor0.25 N_i\rfloor$, $\lfloor0.125N_i\rfloor$ and $\lfloor0.015N_i\rfloor$ for all clients $i=1,\dots,n$, where $N_i$ denotes the size of local data set.}}\label{fig:sgd_page}%
\end{figure}

\subsection{Deep Learning Experiments}\label{subsec:DL_exp}

\begin{figure}[H]
	\includegraphics[width=1\linewidth]{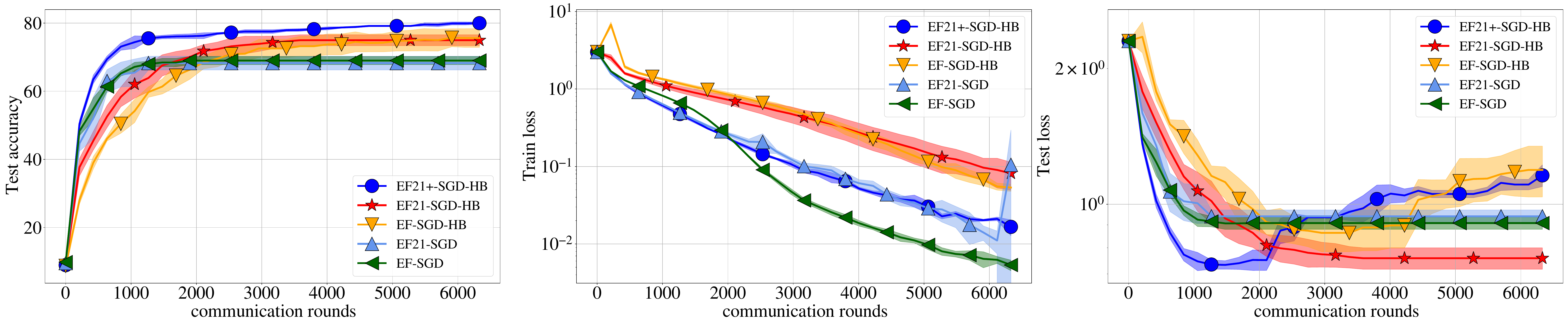} 
	\caption{Comparison of \algname{EF-SGD} and \algname{EF21-SGD} with  \algname{EF-SGD-HB}, \algname{EF21-SGD-HB}, and \algname{EF21+-SGD-HB} with tuned stepsizes applied to train ResNet18 on CIFAR10.}\label{fig:DL_HB}
\end{figure}

The main goal of this section is to compare the behavior of \algname{EF} based methods on a larger optimization problem: training a standard image classification model. 
In this set of experiments, the exact/full gradient $\nabla f_i(x^{k+1})$ in the algorithm \algname{EF21-HB} is replaced by its stochastic estimator (we later refer to this method as \algname{EF21-SGD-HB}). We compare the resulting method with some existing baselines on a popular deep learning multi-class image classification task. In particular, we compare our \algname{EF21-SGD-HB} method to 
\algname{EF21+-SGD-HB}\footnote {\algname{EF21+-SGD-HB} is the method obtained from \algname{EF21-SGD-HB} via replacing \algname{EF21} by \algname{EF21+} compressor} , 
\algname{EF-SGD-HB}\footnote {\algname{EF-SGD-HB} is the method obtained from \algname{EF21-SGD-HB} via replacing \algname{EF21} by \algname{EF} compressor}, 
\algname{EF21-SGD} and \algname{EF-SGD} on the problem of training ResNet18 \citep{he2016deep} model on CIFAR-10 \citep{krizhevsky2009learning} data set. For more details about the \algname{EF21+} and \algname{EF} type methods and their applications in deep learning we refer reader to \citep{EF21}. 
We implement the algorithms in PyTorch \citep{paszke2019pytorch} and run the experiments on a single GPU NVIDIA GeForce RTX 2080 Ti.
The data set is split into $n = 8$ equal parts. Total train set size for CIFAR-10 is  $50,000$. The test set for evaluation has $10,000$ data points. The train set is split into batches of size $\tau = 32$. The first seven workers own an equal number of batches of data, while the last worker gets the rest. 
In our experiments, we fix $k \approx 0.05d $, $\tau = 32$ and momentum parameter $\eta = 0.9$.\footnote{Here, $d$ is the number of model parameters. For ResNet18, $d = 11,511,784$.} We tune the stepsize $\g$ within the range $\cb{0.0625, 0.125, 0.25, 0.5, 1}$ and for each method we individually chose the one $\g$ giving the highest accuracy score on test. For momentum methods, the best stepsize was $0.5$, whereas for the non-momentum ones it was $0.125$. Note that in this experiment, we fix constant step-size $\gamma$ for all methods during training in order to focus on the effect that different \algname{EF} methods bring. However, in order to acheive even better performance in practice, these methods should be combined with appropriate step-size scheduling or adaptive step-size scheme. 

The experiments show (see Figure~\ref{fig:DL_HB}) that the train loss for momentum methods decreases slower than for the non-momentum ones, whereas for the test loss situation is the opposite. Finally, momentum methods show a considerable improvement in the accuracy score on the test set over the existing \algname{EF21-SGD} and \algname{EF-SGD}. Note that the achieved accuracies are below the highest current standards since we deactivate all augmentations and regularizations during training.

\subsection{Verifying tightness of rates for \algname{EF21-SGD} and \algname{EF21-PAGE} w.r.t. $n$}\label{subsec:tightness_n}
Consider the following toy problem $\min_x f(x) := \frac{1}{n} \sum_{i=1}^{n} f_i(x)$, where all local functions are the same and are defined by $f_i(x) = \frac{1}{2} \sqnorm{x} + \sum_{j=1}^{3} \langle z_j, x \rangle $, $x \in \R^2$, where
$
z_1 = \begin{pmatrix} 2 \\  0 \end{pmatrix} \sqrt{\frac{3 \sigma^2}{10}}, \quad z_2 = \begin{pmatrix} 0 \\ 1 \end{pmatrix} \sqrt{\frac{3 \sigma^2}{10}} , \quad  z_3 = \begin{pmatrix} -2 \\  -1 \end{pmatrix} \sqrt{\frac{3 \sigma^2}{10}} 
$
for some $\sigma > 0$. The stochastic gradients at each node $i = 1, \ldots, n$ are $\nabla f_i(x^{t}, \xi^{t}) = x + z_j$, where $z_j$ are sampled uniformly at random (and independently for each node) from the three datapoints $z_1$, $z_2$ and $z_3$. Notice that $\Exp{\nabla f_i(x^{t}, \xi_i^{t})} = \nabla f_i(x^{t})$, and  $\Exp{\sqnorm{\nabla f_i(x^{t}, \xi_i^{t}) - \nabla f_i(x^t)} } = \sigma^2 $. We select $\sigma = 1$ in our experiments and use Top-$1$ compressor. We run the algorithms with batch-size $\tau = 1$ and the same small constant step-sizes $\gamma = \frac{0.1}{\sqrt{T}}$ or $\gamma = \frac{0.9}{\sqrt{T}}$, where $T = 10000$. Here we select the same step-size across all algorithms for a fair comparison and to demonstrate the absence of improvement over $n$. \footnote{In fact, in \algname{EF21-PAGE} the step-size should be selected much larger, i.e. of order $\cO(1)$, to achieve faster convergence. See Section~\ref{sec:exp} for experiments with tuned step-sizes.} The presented plots show the median performance alongside the $25\%$ and $75\%$ quantiles over $10$ independent runs. 

We observe that \algname{EF21-SGD} and \algname{EF21-PAGE} \textit{do not have improvement} when $n$ is increased, while in the same setup (as it is expected) \algname{SGD} (\algname{EF21-SGD} without compression) does improve with $n$, see Figure~\ref{fig:sgd_toy}. Interestingly, we notice that for \algname{EF21-SGD} with $n \leq 5000$, increasing $n$ even hurts the convergence, however, when using larger $n$ the convergence rate almost does not change. 

These observations imply that our theoretical sample complexities for \algname{EF21-SGD} and \algname{EF21-PAGE} summarized in Corollary~\ref{cor:monster_corollary} \textbf{are tight} in terms of the dependence on $n$. 

\begin{figure}[H] 
	\begin{minipage}[t]{.95\linewidth}
		\centering
		\begin{subfigure}{.49\textwidth}
			\centering
			\includegraphics[width=1.0\textwidth]{./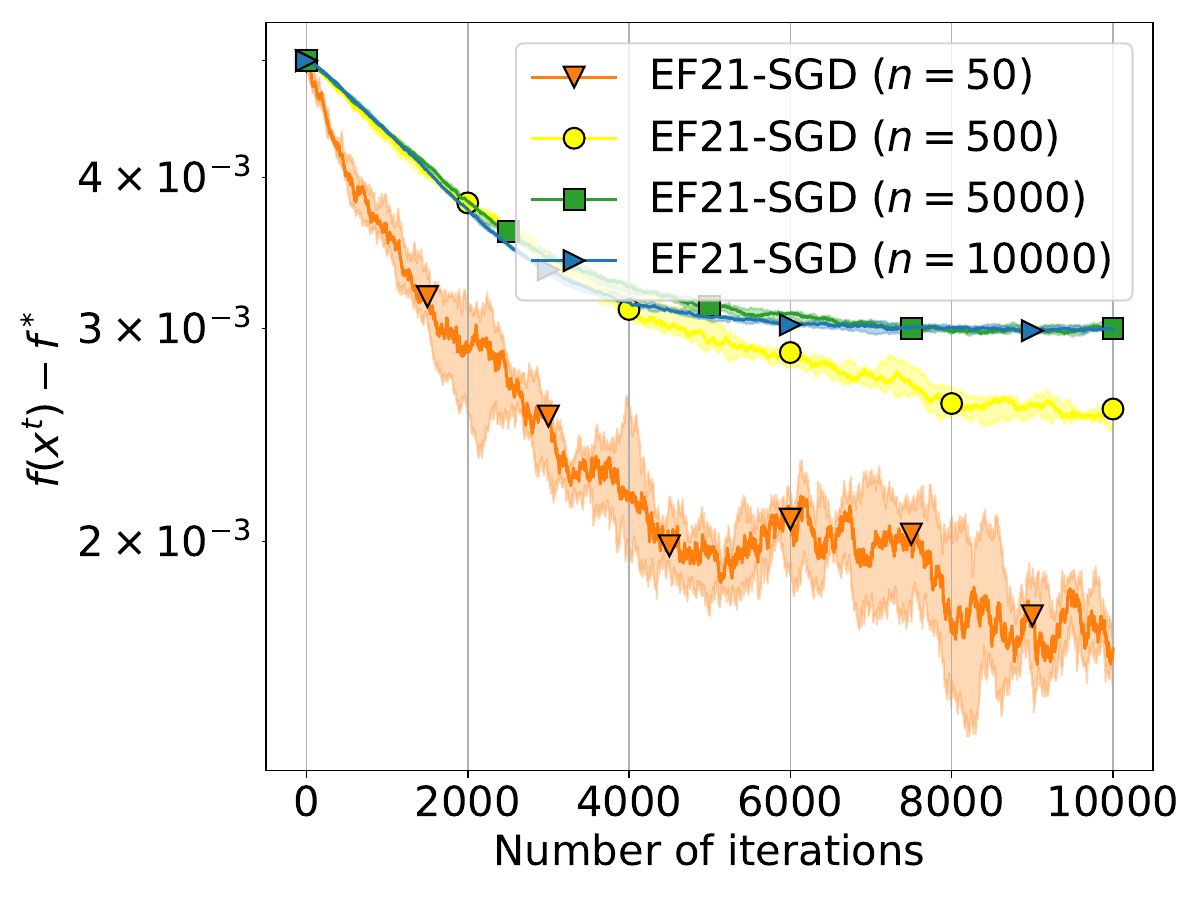}
			\caption{Step-size $\gamma = \frac{0.1}{\sqrt{T}}$.}
		\end{subfigure}
		\begin{subfigure}{.49\textwidth}
			\centering
			\includegraphics[width=1.0\textwidth]{./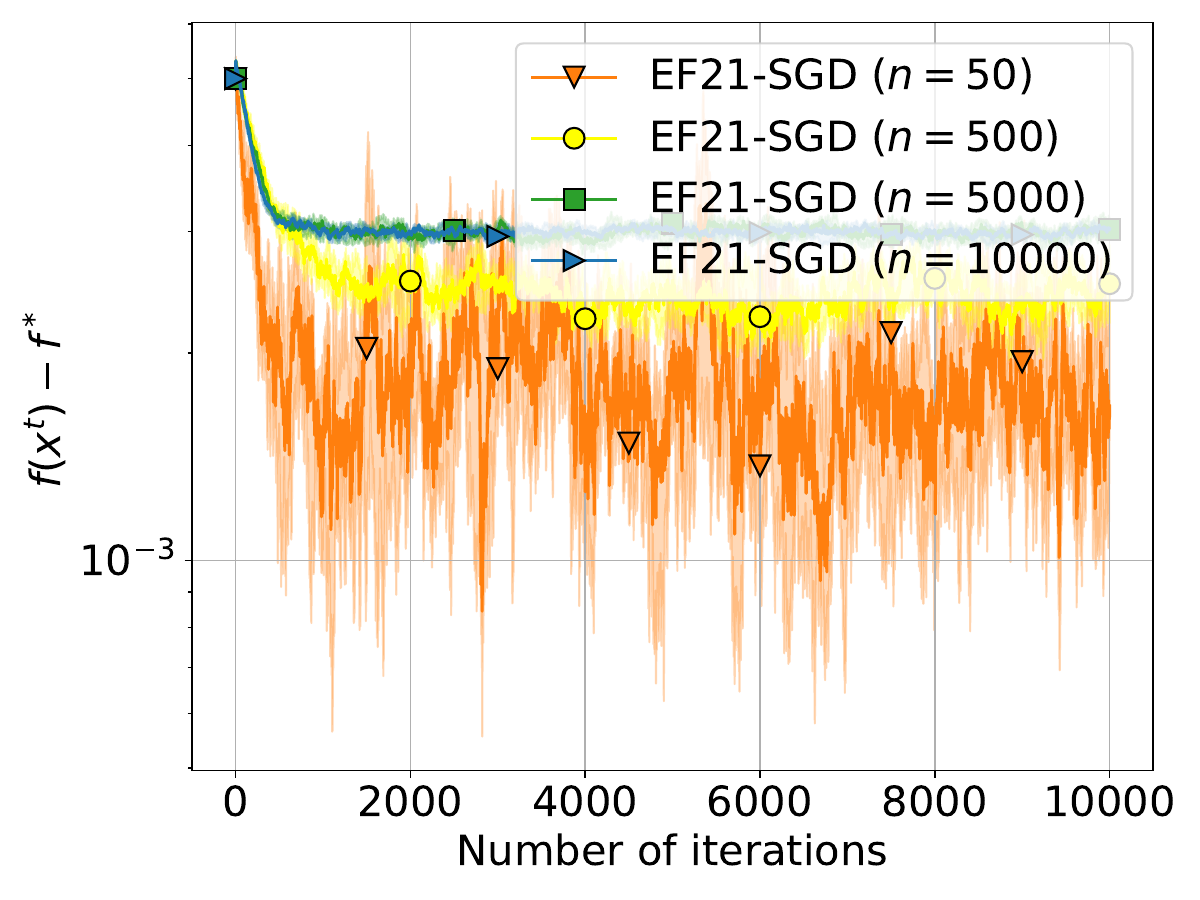}
			\caption{Step-size $\gamma = \frac{0.9}{\sqrt{T}}$.}
			
		\end{subfigure}
		\caption{{No improvement} with $n$ for \algname{EF21-SGD} in terms of the number of iterations. Note that by increasing $n$, the number of data samples used per iteration increases, and therefore, the method expected to have faster convergence. The absence of such improvement is in line with our theory for \algname{EF21-SGD} in Corollary~\ref{cor:monster_corollary}.}
		\label{fig:ef21_sgd_toy}
	\end{minipage}\hfill
	
\end{figure}

\begin{figure}[H]
	\begin{minipage}[t]{.95\linewidth}
		\centering
		\begin{subfigure}{.49\textwidth}
			\centering
			\includegraphics[width=1.0\textwidth]{./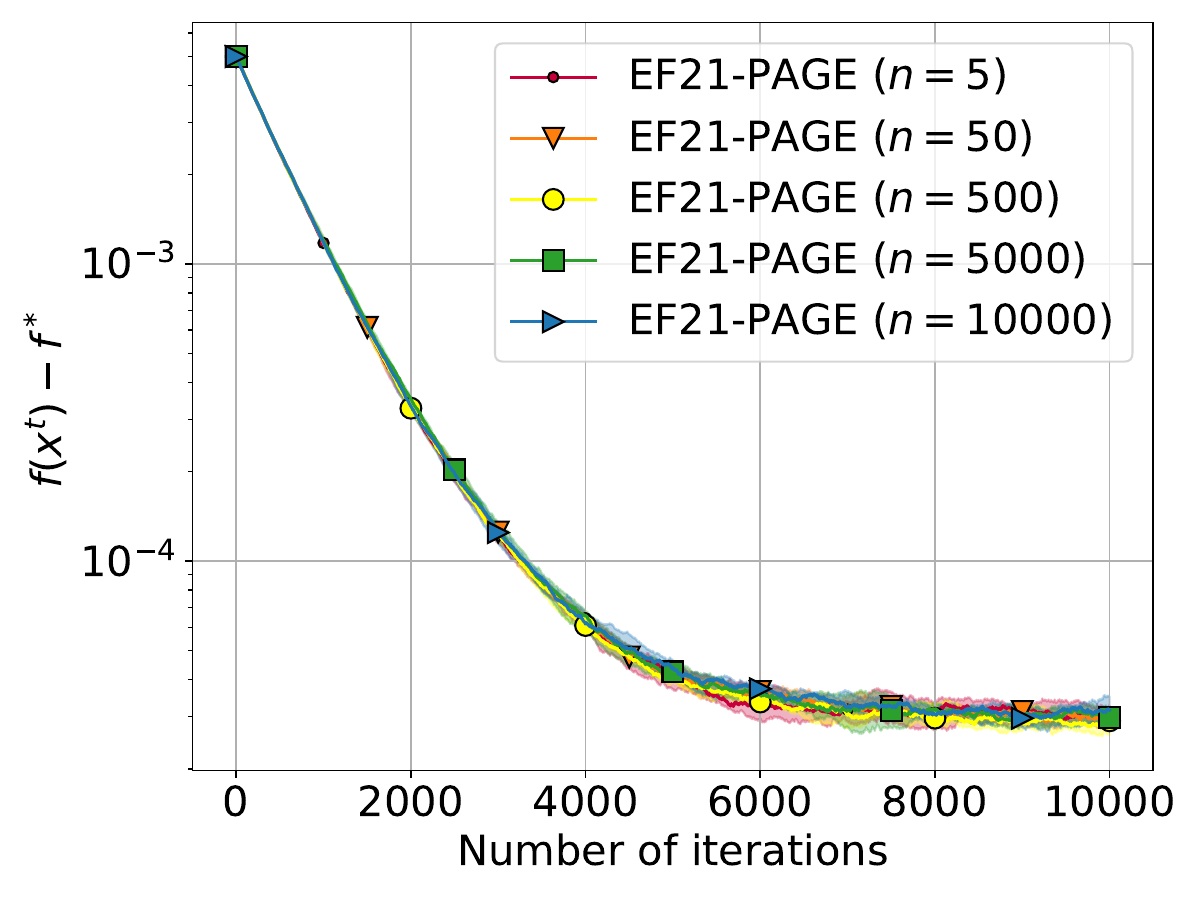}
			\caption{Step-size $\gamma = \frac{0.1}{\sqrt{T}}$.}
		\end{subfigure}
		\begin{subfigure}{.49\textwidth}
			\centering
			\includegraphics[width=1.0\textwidth]{./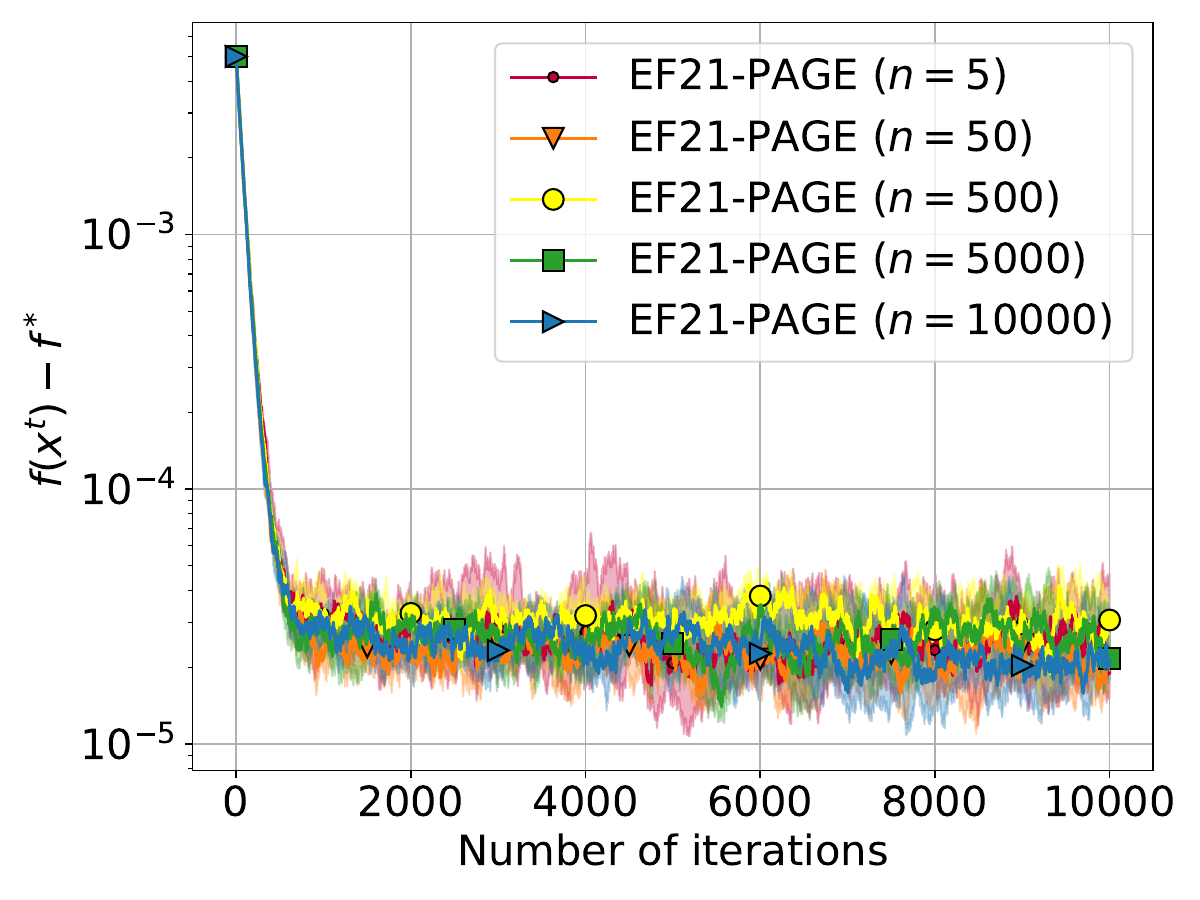}
			\caption{Step-size $\gamma = \frac{0.9}{\sqrt{T}}$.}
			
		\end{subfigure}
		\caption{{No improvement} with $n$ for \algname{EF21-PAGE} in terms of the number of iterations. Note that by increasing $n$, the number of data samples used per iteration increases, and therefore, the method is expected to have faster convergence. The absence of such improvement is in line with our theory for \algname{EF21-PAGE} in Corollary~\ref{cor:monster_corollary}. }
		\label{fig:ef21_page_toy}
	\end{minipage}\hfill
\end{figure}

\begin{figure}[H]
	\begin{minipage}[t]{0.95\linewidth}
		\centering
		\begin{subfigure}{.49\textwidth}
			\centering
			\includegraphics[width=1.0\textwidth]{./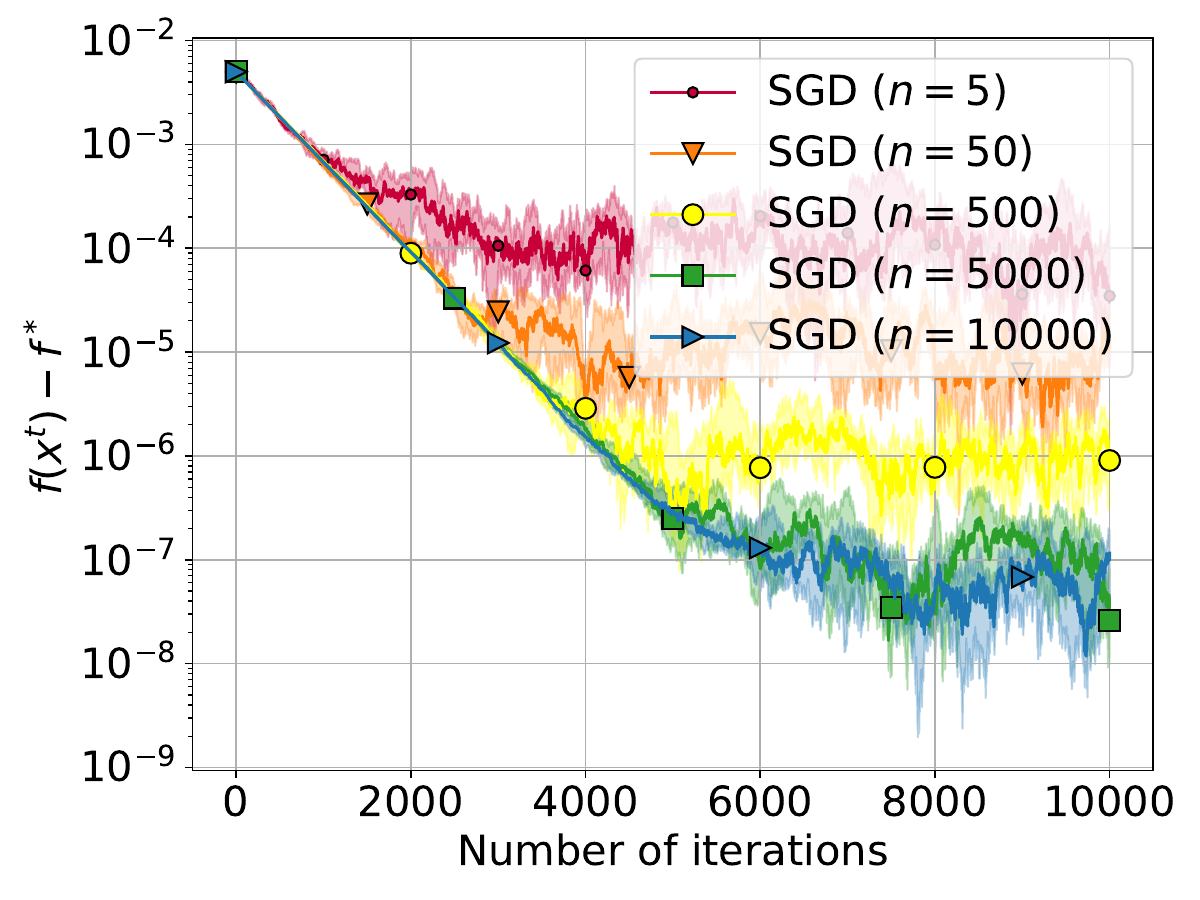}
			\caption{Step-size $\gamma = \frac{0.1}{\sqrt{T}}$.}
		\end{subfigure}
		\begin{subfigure}{.49\textwidth}
			\centering
			\includegraphics[width=1.0\textwidth]{./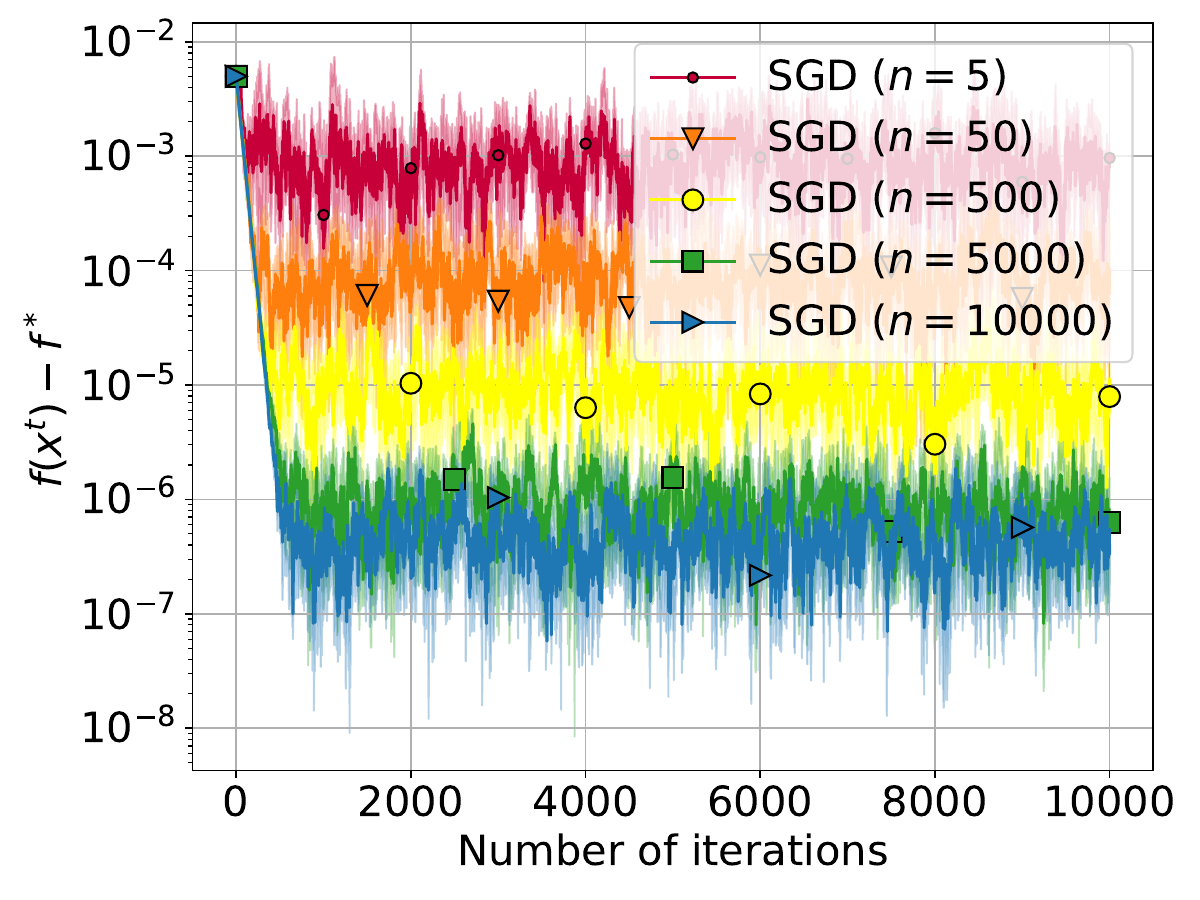}
			\caption{Step-size $\gamma = \frac{0.9}{\sqrt{T}}$.}
			
		\end{subfigure}
		\caption{Improvement with $n$ for \algname{SGD} without compression (inlcuded for a reference).  }
		\label{fig:sgd_toy}
	\end{minipage}\hfill
\end{figure}


\bibliographystyle{plainnat}
\bibliography{bibliography_ef21_extensions}

\end{document}
